\newcommand{\bR}{\mathbb{R}}
\newcommand{\hSigma}{\hat{\Sigma}}
\newcommand{\polylog}{\mathsf{polylog}}
\newcommand{\bP}{\mathbb{P}}
\newcommand{\bI}{\mathbb{I}}
\newcommand{\vC}{\mathbf{C}}
\newcommand{\kmax}{K^{\max}}
\newcommand{\vx}{\mathbf{x}}
\newcommand{\tvx}{\tilde{\mathbf{x}}}
\newcommand{\vy}{\mathbf{y}}
\newcommand{\vY}{\mathbf{Y}}
\newcommand{\vM}{\mathbf{M}}
\newcommand{\vH}{\mathbf{H}}
\newcommand{\vw}{\mathbf{w}}
\newcommand{\hvw}{\mathbf{\hat{w}}}
\newcommand{\cparam}{C^{\mathsf{par}}}
\newcommand{\vv}{\mathbf{v}}
\newcommand{\ve}{\mathbf{e}}
\newcommand{\vvu}{\mathbf{u}}
\newcommand{\vf}{\mathbf{f}}
\newcommand{\hvf}{\hat{\mathbf{f}}}
\newcommand{\vg}{\mathbf{g}}
\newcommand{\bE}{\mathbb{E}}
\newcommand{\dotp}[2]{\left\langle #1, #2 \right\rangle}
\newcommand{\cP}{\mathcal{P}}
\newcommand{\Otilde}{\Tilde{O}}
\newcommand{\Omegatilde}{\Tilde{\Omega}}
\newcommand{\Thetatilde}{\Tilde{\Theta}}
\newcommand{\vX}{\mathbf{X}}
\newcommand{\vA}{\mathbf{A}}
\newcommand{\vB}{\mathbf{B}}
\newcommand{\vI}{\mathbf{I}}
\newcommand{\vD}{\mathbf{D}}
\newcommand{\iidsim}{{\overset{\mathrm{iid}}{\sim}}}
\newcommand{\vS}{\mathbf{S}}
\newcommand{\hvS}{\hat{\mathbf{S}}}
\newcommand{\ccontr}{c^{\mathsf{con}}}
\newcommand{\wstar}{\vw^{*}}
\newcommand{\what}{\widehat{\vw}}
\newcommand{\fstar}{\vf^{*}}
\newcommand{\fhat}{\widehat{\vf}}
\newcommand{\fhatspec}{\fhat_\textrm{Spec}}
\newcommand{\fhatWLS}{\fhat_\textrm{WLS}}
\newcommand{\htheta}{\hat{\theta}}
\newcommand{\cdist}{C^{\mathsf{dist}}}
\newcommand{\wols}{\hvw_{\mathsf{OLS}}}
\newcommand{\wwls}{\hvw_{\mathsf{WLS}}}
\newcommand{\wmle}{\hvw_{\mathsf{MLE}}}
\newcommand{\lossmul}{\mathcal{L}^{\mathsf{mul}}}
\newcommand{\cN}{\mathcal{N}}
\newcommand{\cG}{\mathcal{G}}
\newcommand{\fspec}{\hvf_{\mathsf{S}}}
\newcommand{\mubar}{\bar{\mu}}
\newcommand{\KL}[2]{\mathsf{KL}\left(#1\bigr|\bigr|#2\right)}
\newcommand{\Law}{\mathrm{Law}}
\newcommand{\TV}{\mathsf{TV}}
\newcommand{\ouralg}{SymbLearn}
\newcommand{\ourmultalg}{Self-SymbLearn} 
\newcommand{\vwc}[1]{\vw^{\backslash #1}}
\newcommand{\argmin}{\textrm{argmin}}
\newcommand{\iprod}[2]{\langle #1, #2 \rangle}
 \newcommand{\ad}{}
\newcommand{\draftupdate}[1]{{#1}}
\title[Near-Optimal Heteroscedastic Regression with SymbLearn]{Near Optimal Heteroscedastic Regression with Symbiotic Learning}
\begin{document}

\maketitle

\begin{abstract}%
We consider the classical problem of heteroscedastic linear regression where, given $n$ i.i.d. samples $(\vx_i, y_i)$ drawn from the model $y_i = \dotp{\wstar}{\vx_i} + \epsilon_i \cdot \dotp{\fstar}{\vx_i}, \ \vx_i \sim \cN(0, \vI), \ \epsilon_i \sim \cN(0, 1)$, our aim is to \emph{estimate the regressor} $\wstar$ \emph{without prior knowledge of the noise parameter} $\fstar$. In addition to classical applications of such models in statistics~\citep{jobson1980least}, econometrics~\citep{harvey1976estimating}, time series analysis~\citep{engle1982autoregressive} etc., it is also particularly relevant in machine learning problems where data is collected from multiple sources of varying (but apriori unknown) quality, e.g., in the training of large models~\citep{devlin2019bert} on web-scale data. In this work, we develop an algorithm called \emph{\ouralg} (short for \emph{Symb}iotic \emph{Learn}ing) which estimates $\wstar$ in squared norm upto an error of $\Otilde(\| \fstar \|^2 \cdot (\nicefrac{1}{n} + (\nicefrac{d}{n})^2))$, and prove that this rate is minimax optimal modulo logarithmic factors. This represents a substantial improvement upon the previous best known upper bound of $\Otilde(\| \fstar \|^2 \cdot \nicefrac{d}{n})$. Our algorithm is essentially an alternating minimization procedure which comprises of two key subroutines 1. An adaptation of the classical weighted least squares heuristic to estimate $\wstar$ (dating back to at least~\citet{davidian1987variance}), for which our work presents the first non-asymptotic guarantee; 2. A novel non-convex pseudogradient descent procedure for estimating $\fstar$, which draws inspiration from the phase retrieval literature. As corollaries of our analysis, we obtain fast non-asymptotic rates for two important problems, linear regression with multiplicative noise, and phase retrieval with multiplicative noise, both of which could be of independent interest. Beyond this, the proof of our lower bound, which involves a novel adaptation of LeCam’s two point method for handling infinite mutual information quantities (thereby preventing a direct application of standard techniques such as Fano’s method), could also be of broader interest for establishing lower bounds for other heteroscedastic or heavy tailed statistical problems. \\
\end{abstract}

\begin{keywords}%
  Linear regression, heteroscedasticity, phase retrieval, alternating minimization
\end{keywords}
\section{Introduction}
A popular trend in machine learning (ML) in the recent years has been the shift from training models on carefully curated datasets such as ImageNet~\citep{deng2009imagenet}, PennTreeBank~\citep{marcus1993building} etc. to training on a much larger corpus of data collected from all over the web~\citep{devlin2019bert,brown2020language}. While this has enabled training of models using much larger amounts of data, the data, so collected from all over the web, also has large variations in quality. It is clear that one should consider the quality of different data points while training the model -- giving more importance to high quality data points (i.e., less noise) and vice versa, rather than giving equal importance to all the data points. However, the quality of a data point is not apriori known and needs to be learned from the data itself. This motivates the problem of learning with \emph{heteroscedastic} noise, which finds applications in several domains such as regression \citep{davidian1987variance}, large-scale classification \citep{collier2022massively} and deep learning \citep{patrini2017making}. \\

\noindent In this work, we consider a prototypical version of this problem: linear regression with heteroscedastic noise, where the noise variance is a rank 1 quadratic function of the covariates. That is, given $n$ independently and identically distributed (i.i.d) samples $(\vx_i, y_i) \in \bR^d \times \bR$ such that $\vx_i \sim \cN(0, \vI)$, $\epsilon_i \sim \cN(0,1)$, sampled independently of $\vx_i$\footnote{Our results can be extended to any well-conditioned covariance matrix $\Sigma$ of $\vx_i$ and any zero-mean subGaussian random variable $\epsilon_i$, but for ease of exposition, we only consider identity covariance and standard normal $\epsilon_i$.}, and
\begin{align}
\vy_i = \dotp{\wstar}{\vx_i} + \epsilon_i \dotp{\fstar}{\vx_i},\label{eqn:prob}
\end{align}
Our objective is to estimate $\wstar$ \emph{without apriori knowledge of the noise model $\fstar$}. The problem of linear regression with heteroscedastic noise has been widely studied in statistics~\citep{jobson1980least,davidian1987variance,carroll2017transformation}, econometrics~\citep{goldfeld1972nonlinear,harvey1976estimating}, and timeseries analysis~\citep{engle1982autoregressive,bollerslev1986generalized,nelson1991conditional}. Most of these classical works analyze maximum likelihood or other estimators and obtain asymptotic rates of convergence for a large class of noise models. The main message of these works is that, \emph{asymptotically}, it is possible to obtain dramatically improved rates of estimation of $\wstar$ in the heteroscedastic model~\eqref{eqn:prob}, compared to the homogeneous setting where $y_i = \iprod{\wstar}{\vx_i} + \norm{\fstar} \cdot \cN(0,1)$. While there have been empirical results suggesting that the rate of estimation for heteroscedastic setting can be improved over that of homogeneous setting even for small sample sizes $n$~\citep{jobson1980least}, it has been an open problem to obtain such improved \emph{non-asymptotic} rates of estimation.
In fact, even for the interesting special case of~\eqref{eqn:prob}, where $\fstar$ is parallel to $\wstar$, used to capture covariate uncertainty \citep{harvey1976estimating,xu2000least,xu2019improving}, no better results are known compared to the homogeneous setting.

\subsection{Contributions}
The main contribution of this work is to propose an algorithm -- \ouralg~(short for Symbiotic Learning) -- that obtains an estimate $\what$ for~\eqref{eqn:prob} satisfying $\norm{\what - \wstar}^2 = \Otilde\bigr(\norm{\fstar}^2\bigr(\frac{1}{n} + \left(\frac{d}{n}\right)^2\bigr)\bigr)$, and an information theoretic lower bound of $\tilde{\Omega}\bigr(\norm{\fstar}^2\bigr(\frac{1}{n} + \left(\frac{d}{n}\right)^2\bigr)\bigr)$, which matches the upper bound up to logarithmic factors. For $n > \Omegatilde\left(d\right)$, this is a strict improvement over the estimation rate for the homogeneous setting, which is $\order{\norm{\fstar}^2\cdot \frac{d}{n}}$. An informal version of our result is presented below.
\begin{theorem}[Main Result (Informal)]
Consider any $\delta \in (0, \tfrac{1}{2})$ and let $n \geq \Omegatilde(d)$. With probability at least $1 - \delta$, the output $\hvw$ of~\ouralg~(Algorithm \ref{alg:symbol}) satisfies $\norm{\hvw - \wstar}^2 = \Otilde\left(\norm{\fstar}^2\left(\nicefrac{1}{n} + \left(\nicefrac{d}{n}\right)^2\right)\right)$ for the heteroscedastic regression problem~\eqref{eqn:prob}. The rate achieved by \ouralg~is minimax optimal up-to poly-logarithmic factors. For the same problem, the Ordinary Least Squares estimator $\wols$ exhibits a sub-optimal error rate of $\norm{\wols - \wstar}^2 = \Thetatilde(\norm{\fstar}^2\nicefrac{d}{n})$.
\end{theorem}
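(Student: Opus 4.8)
The statement bundles three results --- the upper bound for \ouralg, a matching minimax lower bound, and the sub-optimality of OLS --- which I would establish in that order; the lower bound is where I expect the real work to lie.

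\emph{Sub-optimality of $\wols$.} Conditioned on $\vX=(\vx_1,\dots,\vx_n)$, the error $\wols-\wstar=(\vX^\top\vX)^{-1}\vX^\top\vz$ with $z_i:=\epsilon_i\dotp{\fstar}{\vx_i}$ is a centred Gaussian with covariance $(\vX^\top\vX)^{-1}\bigl(\sum_i \dotp{\fstar}{\vx_i}^2\vx_i\vx_i^\top\bigr)(\vX^\top\vX)^{-1}$, so $\bE[\norm{\wols-\wstar}^2\mid\vX]$ is the trace of this matrix. For $n\geq\Omegatilde(d)$, Gaussian matrix concentration gives $\vX^\top\vX = n\vI(1+o(1))$ and, via the fourth--moment identity $\bE[\dotp{\fstar}{\vx}^2\vx\vx^\top]=\norm{\fstar}^2\vI+2\fstar\fstar^\top$ together with matrix Bernstein, $\sum_i \dotp{\fstar}{\vx_i}^2\vx_i\vx_i^\top = n(\norm{\fstar}^2\vI+2\fstar\fstar^\top)(1+o(1))$; the trace is $\Thetatilde(\norm{\fstar}^2 d/n)$. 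Since the conditional covariance has effective rank $\Omegatilde(d)\gg 1$, concentration of the Gaussian quadratic form promotes this to $\norm{\wols-\wstar}^2=\Thetatilde(\norm{\fstar}^2 d/n)$ with high probability.

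\emph{Upper bound for \ouralg.} The algorithm alternates a spectral initialiser for $\fstar$ (from $\frac{1}{n}\sum_i y_i^2\vx_i\vx_i^\top$, whose expectation exposes $\fstar\fstar^\top$ up to the identifiable pieces) with $\order{\log n}$ rounds that iterate (a) an \emph{adapted} weighted least squares step --- minimising $\sum_i\omega_i(y_i-\dotp{\vw}{\vx_i})^2$ with data--dependent weights built from the current noise estimate, appropriately clipped and corrected --- to refresh $\wstar$, and (b) a non-convex pseudogradient step for $\fstar$ on the squared residuals $r_i^2=(y_i-\dotp{\what}{\vx_i})^2$, which satisfy $\bE[r_i^2\mid\vx_i]\approx\dotp{\fstar}{\vx_i}^2$ --- i.e.\ phase retrieval of $\fstar$ with multiplicative noise $\epsilon_i^2$. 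I would prove two one-step lemmas, each on a fresh sample batch so iterates stay independent of the data that update them: a WLS lemma bounding $\norm{\what-\wstar}^2$ by $\Otilde(\norm{\fstar}^2(\frac{1}{n}+(\frac{d}{n})^2))$ plus a term controlled by the current $\norm{\hvf-\fstar}^2$, and a pseudogradient lemma bounding the updated $\norm{\hvf-\fstar}^2$ by $\Otilde(\norm{\fstar}^2 d/n)$ plus a contracting multiple of the old error plus a term controlled by $\norm{\what-\wstar}^2$ (via the phase-retrieval template: the population pseudogradient contracts toward $\fstar$ on the relevant neighbourhood, and its empirical version concentrates). Composing, the error pair contracts geometrically to its floor, so after $\order{\log n}$ union-bounded rounds both reach $\Otilde(\norm{\fstar}^2(\frac{1}{n}+(\frac{d}{n})^2))$; $n\geq\Omegatilde(d)$ is what makes the initialiser accurate and the reweighted design invertible. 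The crux is the WLS step, for which this would be the first non-asymptotic guarantee: the variance-optimal weights $1/\dotp{\fstar}{\vx_i}^2$ have \emph{infinite} expectation, and the adaptation must be calibrated so that roughly the $d$ samples of smallest noise jointly condition $\wstar$ on the $(d-1)$-dimensional subspace orthogonal to $\fstar$ at precision $\asymp n^2/(d\norm{\fstar}^2)$ --- giving the $(d/n)^2$ floor there, with the $\frac{1}{n}$ piece coming from the $\fstar$-direction --- while remaining robust to the $\Otilde(\norm{\fstar}^2 d/n)$ error in $\hvf$. (That unknown $\fstar$ costs nothing to first order asymptotically --- the $(w,f)$ Fisher information is block-diagonal since $\bE[\epsilon(\epsilon^2-1)]=0$ --- is consistent with $(d/n)^2$ being a purely finite-sample correction.)

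\emph{Lower bound --- the main obstacle.} The $\Omegatilde(\norm{\fstar}^2/n)$ term is a routine one-dimensional two-point bound with $\fstar$ held fixed. For $\Omegatilde(\norm{\fstar}^2(d/n)^2)$ I would again use Le Cam's two-point method, but adapted. Take instances $(\wstar_1,\fstar_1)$, $(\wstar_2,\fstar_2)$ with $\norm{\wstar_1-\wstar_2}^2\asymp\norm{\fstar}^2(d/n)^2$, with $\fstar_1,\fstar_2$ carefully calibrated perturbations of a common $\vf_0$, and show the $n$-fold product laws satisfy $\TV(\bP_1^{\otimes n},\bP_2^{\otimes n})\leq\frac{1}{2}$. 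The obstruction: $\dotp{\fstar_b}{\vx}^2$ vanishes on a hyperplane, so $\KL{\bP_1}{\bP_2}=\infty$ --- the per-sample mutual information is infinite --- which rules out a direct KL bound and, with it, Fano/Assouad arguments. The remedy is a \emph{bounded} divergence plus truncation: on the event $\mathcal{E}=\{\dotp{\vf_0}{\vx}^2\gtrsim\tau\}$ the conditional observation laws are comparable Gaussians whose squared Hellinger distance is given by the explicit two-Gaussian formula and tensorises over $n$ samples, while off $\mathcal{E}$ the total-variation contribution is at most $n\,\bP(\mathcal{E}^c)$; balancing $\tau$ against $n$ and $d$ and choosing $(\fstar_1,\fstar_2)$ so the $\wstar$-difference is undetectable on $\mathcal{E}$ forces $\TV\leq\frac{1}{2}$ at the claimed separation, after which Le Cam's inequality concludes. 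The two parts I expect to be genuinely hard are (i) the precise choice of $(\fstar_1,\fstar_2)$ making the $\wstar$-signal undetectable off $\mathcal{E}$ while keeping the per-sample Hellinger cost $\order{1/n}$, and (ii) extracting the dimension factor $d$ --- normally a job for multi-point methods --- from this truncated two-point construction, tightly enough to match the upper bound up to poly-logarithmic factors.
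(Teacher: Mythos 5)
The main gap is in the lower bound, and it is exactly the part you flag as unresolved: your two-point-plus-truncation scheme cannot produce the dimension factor. Work out the calibration you propose: with a single direction $\vvu\perp\fstar$ and truncation level $\tau$, the off-event mass costs roughly $n\sqrt{\tau}/\norm{\fstar}$, forcing $\sqrt{\tau}\lesssim\norm{\fstar}/n$, while the on-event per-sample Hellinger cost scales like $\rho^2\,\bE[\dotp{\vvu}{\vx}^2/\dotp{\fstar}{\vx}^2;\dotp{\fstar}{\vx}^2\geq\tau]\asymp\rho^2/(\norm{\fstar}\sqrt{\tau})$, so the admissible separation is capped at $\rho^2\lesssim\norm{\fstar}^2/n^2$ --- a factor $d^2$ short of the target, and perturbing $\fstar_1,\fstar_2$ around a common $\vf_0$ does not help (the paper's bound holds with $\fstar$ fixed and known). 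The paper's Theorem~\ref{thm:heteroscedastic-minimax-lb} gets the $d^2$ by a genuinely multi-direction argument: lower-bound the minimax risk by the Bayes risk under the uniform spherical prior on the subspace orthogonal to $\fstar$, use coordinate-flip couplings (a refined Assouad step, Lemma~\ref{lem:coupling-lecam}) conditionally on the design, and --- crucially --- exploit the rotational symmetry of the prior to choose the orthonormal directions $\vvu_{1:d-1}$ as \emph{functions of the covariates}: the samples are bucketed by $\dotp{\fstar}{\vx_j}^2\in[2^k\gamma,2^{k+1}\gamma)$ with $\gamma\asymp n^{-2}$, and the $\vvu_i$ are taken (after Gram--Schmidt) from eigenvectors of the bucketed design matrices $\vH_k$, so that only the few directions attached to the smallest buckets pay the heavy-tailed $\Theta(n^2)$-type penalty while the remaining $\Theta(d)$ directions pay $\Otilde(n^2 2^{-k})$ or less; averaging over directions then permits $\alpha\asymp d/n$ up to $\log\log$ factors. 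Without this data-dependent allocation of directions to "quiet" samples (or some substitute multi-point device), the $\Omegatilde(\norm{\fstar}^2 d^2/n^2)$ term is not reachable, so your plan as written does not prove the minimax-optimality claim.

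A secondary calibration issue sits in your upper-bound sketch. If the pseudogradient step really had an error floor of $\Otilde(\norm{\fstar}^2 d/n)$ for $\hvf$, then the WLS step --- which must take $\lambda\gtrsim\norm{\hvf-\fstar}^2$ and pays $d\norm{\fstar}\sqrt{\lambda}/n$ --- could never do better than $\Otilde(\norm{\fstar}^2(d/n)^{3/2})$, so composing your two one-step lemmas stalls at exponent $3/2$. The paper's Theorem~\ref{thm:phase_r_mult} instead \emph{transfers} the current $\wstar$-error to the $\fstar$-error, with the old $\fstar$-error entering only through cross terms like $\tfrac{d}{m}\norm{\fstar}\,\|\Delta_{\hvf}\|$; consequently both error exponents improve geometrically ($1\to 3/2\to 7/4\to\cdots\to 2$, the $S_k$ recursion in Theorem~\ref{thm:yinyang-full-proof}) over $\Theta(\log n)$ alternations, with per-round parameters $\lambda_k,\mubar_k$ shrinking accordingly. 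Your OLS analysis and the overall alternating architecture are consistent with the paper, but the one-step guarantees need this error-transfer form, not a fixed $d/n$ floor, for the $(d/n)^2$ rate to emerge.
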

Conceptually, our algorithm is an iterative re-weighted least squares algorithm, where data points are weighted according to their estimated noise. This idea has a long history~\citep{jobson1980least} with impressive practical performance. To the best of our knowledge, our work presents the first non-asymptotic analysis of such an approach. Our key technical contributions are listed below: 

\paragraph{Upper bound}: We first show that the mere presence of structure in the noise as in~\eqref{eqn:prob} does not automatically improve the convergence rate of Ordinary Least Squares (OLS) estimator, which is computed as $\wols = \argmin_{\vw} 1/n \sum_{i=1}^n \left(\iprod{\vw}{\vx_i} - y_i\right)^2 = \left(\sum_{i=1}^{n} \vx_i \vx^T_i \right)^{-1}\bigr(\sum_{i=1}^{n} \vx_i y_i \bigr)$. This exhibits a rate of $\norm{\wols - \wstar}^2 = \Thetatilde\bigr(\norm{\fstar}^2\nicefrac{d}{n}\bigr)$ for this problem (Theorem~\ref{thm:ols-error-bound}). This motivates us to consider a weighted least squares (WLS) objective: $\wwls = \argmin_{\vw} 1/n \sum_{i=1}^n \alpha_i \cdot \left(\iprod{\vw}{\vx_i} - y_i\right)^2$, where the weight of an example $\alpha_i$ is inversely proportional to the variance of noise in that example i.e., $\alpha_i \propto \iprod{\fstar}{\vx_i}^{-2}$. However, this cannot be implemented since $\fstar$ is unknown. Our first step is to compute an estimate $\fhatspec$ of $\fstar$, using a standard spectral method, which guarantees that $\|\fhatspec - \fstar\|^2 = \Otilde\bigr(\norm{\fstar}^2 \cdot \frac{d}{n}\bigr)$. After selecting the weights $\alpha_i$ appropriately using $\fhatspec$, and accounting for the uncertainty in this estimate compared to $\fstar$, we first show that the resulting estimate $\wwls$ achieves a rate of $\|\wwls - \wstar\|^2 = \Otilde\bigr(\norm{\fstar}^2\bigr(\frac{1}{n} + \left(\frac{d}{n}\right)^{1.5}\bigr)\bigr)$, which is already significantly better than the rate achieved by OLS. To show this, we perform a fine-grained analysis of the design matrix of WLS, which involves establishing a strict spectral gap for certain heavy-tailed random matrices that have infinite expectation.
 This rate can be improved further by obtaining better estimators for $\fstar$ than the spectral method, and using it to come up with better weights $\alpha_i$ in WLS. Concretely, given $\wwls$ and $\fhatspec$, we hope to obtain a better estimate of $\fstar$ with $\fhatWLS = \argmin_\vf \frac{1}{n} \sum_{i=1}^n \beta_i \cdot {\left(\left(y_i- \iprod{\wwls}{\vx_i}\right)^2 - \iprod{\vf}{\vx_i}^2\right)^2}$, for an appropriately chosen $\beta_i$ depending on $\fhatspec$. We analyze a \emph{pseudogradient} descent algorithm on this objective and show that $\|\fhatWLS - \fstar\|^2 = \Otilde\bigr(\norm{\fstar}^2\bigr(\frac{1}{n} + \left(\frac{d}{n}\right)^{1.5}\bigr)\bigr)$, improving over $\fhatspec$. The overall algorithm, \ouralg, alternates between WLS estimation of $\wstar$ and pseudogradient descent estimation of $\fstar$, achieving a convergence rate of $\Otilde\bigr(\norm{\fstar}^2\bigr(\frac{1}{n} + \bigr(\frac{d}{n}\bigr)^{2}\bigr)\bigr)$ for both $\norm{\what - \wstar}^2$ and $\|\fhat - \fstar\|^2$. 

\paragraph{Lower bound}: We show that \ouralg~is near optimal by proving a minimax lower bound of $\tilde{\Omega}(\|\fstar \|^2(\nicefrac{1}{n} + \nicefrac{d^2}{n^2}))$ for $\bE\norm{\what - \wstar}^2$, which holds even when $\fstar$ is known. The $\frac{\| \fstar\|^2}{n}$ term which arises due to uncertainty of $\wstar$ in the direction parallel to $\fstar$ is straightforward to obtain. However the $\tilde{\Omega}\left(\| \fstar\|^2 (d/n)^2\right)$ term arising due to uncertainty in directions perpendicular to $\fstar$, is challenging to obtain.
The key obstacle is that that for any two instances of the heteroscedastic regression model with regressors $\wstar_1$ and $\wstar_2$ and a common noise model $\fstar$, the KL divergence between them can be infinite, precluding the direct application of standard techniques such as Fano's method or Assouad's lemma \citep{Tsybakov2009}.
Via a refined version of Assoud's lemma, we exploit the symmetry in Gaussian random variables to consider lower bounds for fixed designs (i.e, the covariates $\vx_{1},\dots,\vx_n$ are fixed). We then reduce this to the problem of obtaining lower bounds for `typical' fixed designs, which is then solved via an intricate covariate-bucketing argument which bounds certain heavy tailed random variables whose expectation is infinite. Our methods might be useful in establishing lower bounds in other statistical estimation problems with heteroscedastic or heavy-tailed noise. \\

\noindent As a byproduct of our analysis, we also obtain improved rates of estimation for both linear regression as well as phase retrieval with \emph{multiplicative noise} as we describe below. 

\paragraph{Linear Regression with Multiplicative Noise}: A special case of the heteroscedastic regression problem, when $\wstar =  \fstar$, corresponds to linear regression with multiplicative noise. The task is to estimate $\wstar$ given $n$ i.i.d samples $(\vx_i, y_i) \in \bR^d \times \bR$ such that $\vx_i \sim \cN(0, \vI)$ and $\vy_i = \dotp{\wstar}{\vx_i}(1 + \epsilon_i)$, with $\epsilon_i \sim \cN(0,1)$ drawn independently of $\vx_i$. While this is a classical model for regression with covariate uncertainty \citep{xu2000least}, but the best known rate of estimation in the literature is still $\Thetatilde(\norm{\wstar}^2\nicefrac{d}{n})$. While \ouralg~improves this to $\tilde{O}(\|\wstar\|^2(1/n + d^2/n^2))$, we also design a simpler algorithm, called \ourmultalg, which achieves this improved rate.
 \ourmultalg~is similar to Iteratively Reweighted Least Squares (IRLS)~\citep{govind-irls}, and our analysis constitutes the first \emph{nonasymptotic} analysis of such an algorithm for this problem. 
    
\paragraph{Phase Retrieval with Multiplicative Noise} The estimation of $\fstar$ (up-to a sign) is an important sub-routine in \ouralg. This problem can be reduced to estimating $\fstar$ with data of the form $(\vx_i, \epsilon_i^2 (\langle \fstar, \vx_i \rangle)^2 + \delta_i)$, where $\epsilon_i \sim \mathcal{N}(0,1)$ and $\abs{\delta_i} \leq \delta$. This corresponds to phase retrieval with multiplicative noise and a (small) additive adversarial error. Phase retrieval is a well-studied non-convex optimization problem encountered in physical sciences \citep{candes2015phase,shechtman2015phase} and is typically studied without noise (see \cite{chen2019gradient} and references therein). While additive noise has been considered in this setting~\citep{cai2016optimal}, multiplicative noise has not been studied in the literature to the best of our knowledge. This could help model covariate uncertainty in this setting as explained in~\cite{xu2000least}.

\subsection{Related Work}
Our work is most closely related to~\cite{anava2016heteroscedastic,chaudhuri2017active}, which obtain non-asymptotic rates of estimation of $\wstar$ in the heteroscedastic regression model~\eqref{eqn:prob}.
While~\cite{anava2016heteroscedastic} considers this problem through the lens of online learning,~\cite{chaudhuri2017active} considers the active learning setting. In the offline setting considered in this paper, neither of these works improve upon the OLS rate of $\norm{\widehat{\vw}-\wstar}^2 \leq \mathcal{O}\bigr(\norm{\fstar}^2\cdot \nicefrac{d}{n}\bigr)$ \\

\noindent Our work is an instance of \emph{statistical learning with a nuisance component}, with the regressor $\wstar$ being the \emph{target parameter} and the noise model $\fstar$ being a \emph{nuisance parameter}.
 Thus, our approach bears some resemblance to well-established methods for this problem, such as Double/Debiased Machine Learning \citep{cherno2018a, cherno2018b, fingerhut2022coordinated} and Orthogonal Statistical Learning (or OSL) \citep{mackey2018orthogonal, foster2019orthogonal, liu2022orthogonal}, where an estimate of the nuisance parameter can improve the estimation of the target parameter. These methods partition the data into disjoint subsets (sample splitting), using the first subset to estimate the nuisance parameter, and then estimating the target parameter using the nuisance estimate and the second subset. Similarly, \ouralg~uses separate subsets for estimating $\wstar$ and $\fstar$ in each iteration. However, our approach is cyclic and iterative, where \emph{a rough initial estimate of the target parameter can lead to improved nuisance parameter estimation, which in turn allows refined estimation of the target parameter and so on}. This leads to our alternating minimization-based approach where each step alternates between $\wstar$ estimation and $\fstar$ estimation. Typical realizations of OSL do not use any apriori target parameter estimate for nuisance estimation and is not iterative and cyclic (e.g. Meta-Algorithm 1 of \cite{foster2019orthogonal} estimates the nuisance as $\hat{g} = \textrm{Alg}(\mathcal{G}, S_1)$ and target as $\hat{\theta} = \textrm{Alg}(\mathcal{G}, S_2; \hat{g})$). In our case, such a single stage estimation achieves a sub-optimal rate of $\Otilde(\|\fstar\|^2(1/n+(d/n)^{1.5}))$ (see discussion after Theorem~\ref{thm:wls-error-bound}). Moreover, the OSL meta-algorithm treats $\textrm{Alg}(\mathcal{G}, S_1)$ and $\textrm{Alg}(\mathcal{G}, S_2; \hat{g})$ as blackbox subroutines with certain high-probability convergence guarantees, whereas we design such and analyze such subroutines for heteroscedastic regression.

\subsection{Organization}
In Section~\ref{sec:setup}, we present the problem setting and preliminaries. In Section~\ref{sec:partial}, we present algorithms to solve partial versions of the heteroscedastic problem, with additional information. By using these algorithms as subroutines, we derive our main algorithm~\ouralg~for the full heteroscedastic problem in Section~\ref{subsec:alg}. We present our main results for this algorithm as well as a matching lower bound in Section~\ref{sec:results} with a high level proof idea in Section~\ref{sec:proof-ideas}. We present some experimental results in Section~\ref{sec:exp} and conclude in Section~\ref{sec:conc}.

\subsection{Notation}
The boldface lower letters (e.g. $\vx$) represent vectors in $\bR^d$ and boldface capital letters (e.g. $\vA$) represent matrices in $\bR^{m \times n}$. $\vA_i$  denotes $i^{\textrm{th}}$ row of matrix $\vA$ and $x_j$ denotes  $j^{\textrm{th}}$ element of vector $\vx$.  For an indexed vector $\vx_i$; we use $x_{i,j}$ to denote the $j^{\textrm{th}}$ element of  $\vx_i$. We use $[n]$ to denote the set $\{1,2,\ldots,n\}$. The $\ell_{p}$ norm of a vector $\vv$ is denoted using $\norm{\vv}_{p}$. For any matrix $\vA$, $\norm{\vA}_2$ and $\norm{\vA}_F$ denote the spectral and Frobenius norms of $\vA$ respectively. We let $\vI$ denote the identity matrix, whose dimension is clear from the context. Unless otherwise specified, $\norm{\vv} = \norm{\vv}_2$ for any vector $\vv$ and $\norm{\vA} = \norm{\vA}_2$ for any matrix $\vA$. We use the $O$ notation to characterize the dependence of our error bounds on the number of samples $n$, the covariate dimension $d$ and the confidence level $\delta$, suppressing numerical constants. The $\Otilde$ notation suppresses polylogarithmic factors in $n, d$ and $\nicefrac{1}{\delta}$. We also assume that $ \delta \leq 1/2$ wherever it appears so we can write $\log(C/\delta) \leq C\log(1/\delta)$.  By $\polylog(x)$ we refer to some fixed poly-logarithmic function evaluated at $x$ which can be different when invoked in different bounds. We also hide $\polylog(dn/\delta)$ terms within the $\tilde{O}$ notation.

\section{Problem Formulation and Preliminaries}
\label{sec:setup}
Our data set consists of i.i.d samples $(\vx_i,y_i) \in \mathbb{R}^{d}\times\mathbb{R}$ of $i \in [n]$. We assume that $\vx_i \sim \cN(0,\vI)$ and $y_i = \langle \wstar,\vx_i \rangle + \epsilon_i \langle \fstar, \vx_i\rangle$ where $\epsilon_i \sim \cN(0,1)$ and independent of $\vx_i$, and $\wstar,\fstar \in \mathbb{R}^d$ are unknown vectors. Our task is to estimate the regressor $\wstar$ without prior knowledge of $\fstar$. For the rest of this paper, we assume that the data is sampled from this model.
\begin{remark}
 We can relax our assumptions to allow sub-Gaussian $\epsilon_i$. We can also consider $\vx_i \sim \cN(0,\Sigma)$ and `whiten' it by estimating $\hat{\Sigma} \approx \Sigma$ and considering $\hat{\Sigma}^{-1/2}\vx_i$ as the co-variates. We do not consider these scenarios for the sake of simplicity. 
\end{remark}
\subsection{Estimation of $\wstar$ using ordinary least squares (OLS)}
\label{sec:lstsq}
\begin{minipage}{ 0.45\linewidth}
\begin{algorithm}[H]
\caption{\texttt{OLS} } \label{alg:ols}
\vspace{0.25cm}
\textbf{Input}: $(\vx_1,y_1),\dots,(\vx_{n},y_n) \in \bR^d \times \bR$.
\vspace{0.20cm}
\begin{algorithmic}[1] 
\STATE
$\wols = \left(\sum_{i=1}^{n} \vx_i \vx^T_i \right)^{-1}\left( \sum_{i=1}^{n} \vx_i y_i \right) $
\STATE Output $\wols$
\end{algorithmic}
\end{algorithm}
\end{minipage}
\hfill
\begin{minipage}{0.50\textwidth}
 \begin{algorithm}[H] 
\caption{\texttt{Spectral Method} } \label{alg:spectral}
\textbf{Input}:  $\hvw \in \bR^d$, $(\vx_i, y_i)_{i=1}^{n} \in \bR^d\times \bR $
\begin{algorithmic}[1] 
\STATE 
$\hvS = \frac{1}{n} \sum_{i=1}^{n} \left(y_i - \dotp{\hvw}{\vx_i} \right)^2 \vx_i \vx^T_i$
\STATE $\vvu = \text{Top Eigenvector}(\hvS)$, $R = \sqrt{\frac{\|\hvS\|}{3}}$
\STATE Output $R \cdot \vvu$.
\end{algorithmic}
\end{algorithm}
\end{minipage}
\vspace{0.1cm}

A classical approach for estimating $\wstar$ is through ordinary least squares (OLS) (pseudocode in Algorithm~\ref{alg:ols}).
The following result, proved in Appendix \ref{proof:ols-error-proof}, gives the convergence rate of OLS.
\begin{theorem}[Ordinary Least Squares(OLS)]
\label{thm:ols-error-bound}
For any $\delta \in (0,\tfrac{1}{2})$, and $n \geq d\polylog(\tfrac{d}{\delta})$, the OLS estimator for the heteroscedastic linear regression problem satisfies $\norm{\wols - \wstar}^2 \leq \frac{d\|\fstar\|^2}{n} \polylog(\tfrac{nd}{\delta})$ with probability at least $1 - \delta$. Furthermore, $\norm{\wols - \wstar}^2 = \Theta(\nicefrac{d\|\fstar\|^2}{n})$ with probability at least $1 - d/n^c$, $c \geq 1$.
\end{theorem}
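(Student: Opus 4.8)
The plan is to reduce the whole statement to controlling a single conditionally-Gaussian quadratic form. Write $\vX \in \bR^{n\times d}$ for the design matrix with rows $\vx_i^\top$ and let $r_i := y_i - \dotp{\wstar}{\vx_i} = \epsilon_i \dotp{\fstar}{\vx_i}$ be the residuals, with $\mathbf{r} = (r_1,\dots,r_n)$. Then
\[
\wols - \wstar = (\vX^\top\vX)^{-1}\vX^\top \mathbf{r}, \qquad \text{so} \qquad \norm{\wols-\wstar}^2 = \mathbf{r}^\top\vX(\vX^\top\vX)^{-2}\vX^\top\mathbf{r}.
\]
On the standard event $\mathcal{E}_0 := \{\tfrac n2 \vI \preceq \vX^\top\vX \preceq \tfrac{3n}2 \vI\}$, which holds with probability $\geq 1-\delta/4$ once $n \geq d\,\polylog(\tfrac{d}{\delta})$, this is sandwiched by $\tfrac{4}{9n^2}\norm{\mathbf{u}}^2 \leq \norm{\wols-\wstar}^2 \leq \tfrac{4}{n^2}\norm{\mathbf{u}}^2$, where $\mathbf{u} := \vX^\top\mathbf{r} = \sum_i \epsilon_i\dotp{\fstar}{\vx_i}\vx_i \in \bR^d$. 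Conditionally on $\vX$, the $\epsilon_i$ are independent $\cN(0,1)$, so $\mathbf{u} \mid \vX \sim \cN(0, \vS)$ with $\vS := \sum_{i=1}^n \dotp{\fstar}{\vx_i}^2 \vx_i\vx_i^\top$, and $\norm{\mathbf{u}}^2 \mid \vX$ is a Gaussian quadratic form. Hence both directions of the theorem reduce to controlling $\mathrm{tr}(\vS)$ and $\norm{\vS}$ on a high-probability event in $\vX$ and then applying Laurent–Massart–type tail bounds in $\epsilon$.

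For the $\vX$-side I would work on $\mathcal{E}_0$ together with $\mathcal{E}_1 := \{\max_i \dotp{\fstar}{\vx_i}^2 \leq \norm{\fstar}^2\polylog(\tfrac n\delta)\}$ (a union bound over $n$ Gaussian scalars, probability $\geq 1-\delta/4$) and $\mathcal{E}_2 := \{c\,nd\norm{\fstar}^2 \leq \sum_i \dotp{\fstar}{\vx_i}^2\norm{\vx_i}^2 \leq C\,nd\norm{\fstar}^2\polylog(\tfrac{n}{\delta})\}$, whose summands are i.i.d.\ nonnegative sub-exponential-type variables with common mean $(d+2)\norm{\fstar}^2$ (using $\bE[\dotp{\fstar}{\vx}^2\vx\vx^\top] = \norm{\fstar}^2\vI + 2\fstar\fstar^\top$), so Bernstein gives the two-sided bound, the lower deviation being the robust direction. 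On $\mathcal{E}_0 \cap \mathcal{E}_1$, since the $a_i := \dotp{\fstar}{\vx_i}^2$ are nonnegative, $\vS \preceq (\max_i a_i)\,\vX^\top\vX$, hence $\norm{\vS} \leq n\norm{\fstar}^2\polylog(\tfrac n\delta)$; on $\mathcal{E}_2$, $\mathrm{tr}(\vS) = \sum_i \dotp{\fstar}{\vx_i}^2\norm{\vx_i}^2 \in [\,c\,nd\norm{\fstar}^2,\ C\,nd\norm{\fstar}^2\polylog(\tfrac n\delta)\,]$. The upper bound now follows: on $\mathcal{E}_0\cap\mathcal{E}_1\cap\mathcal{E}_2$, the Laurent–Massart bound with $t = \log(\tfrac C\delta)$ and $\norm{\vS}_F^2 \leq \norm{\vS}\,\mathrm{tr}(\vS)$ gives $\norm{\mathbf{u}}^2 \leq \mathrm{tr}(\vS) + C\sqrt{\mathrm{tr}(\vS)\,\norm{\vS}\,t} + C\norm{\vS}\,t \leq nd\norm{\fstar}^2\,\polylog(\tfrac{nd}\delta)$, so $\norm{\wols-\wstar}^2 \leq \tfrac{4}{n^2}\norm{\mathbf{u}}^2 \leq \tfrac{d\norm{\fstar}^2}{n}\polylog(\tfrac{nd}\delta)$ after a union bound over the three events.

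For the matching lower (tightness) claim I would use the lower tail of the same quadratic form. The key point is that, by the two bounds just established, $\vS$ has effective rank $\mathrm{tr}(\vS)/\norm{\vS} \gtrsim d/\polylog(n)$, so the Chernoff estimate gives $\bP(\norm{\mathbf{u}}^2 < \tfrac12\mathrm{tr}(\vS)\mid\vX) \leq \exp(-c\,\mathrm{tr}(\vS)/\norm{\vS}) \leq \exp(-c'd/\polylog(n))$; combined with $\norm{\wols-\wstar}^2 \geq \tfrac{4}{9n^2}\norm{\mathbf{u}}^2$ and $\mathrm{tr}(\vS) \geq c\,nd\norm{\fstar}^2$, this yields $\norm{\wols-\wstar}^2 \geq c\,d\norm{\fstar}^2/n$ outside an event of probability $\lesssim n^{-c} + \exp(-c'd/\polylog(n))$ (after integrating over $\vX$ and union bounding with $\mathcal{E}_0,\mathcal{E}_1,\mathcal{E}_2$). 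Together with the upper bound this gives $\norm{\wols-\wstar}^2 = \Theta(d\norm{\fstar}^2/n)$.

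The main obstacle is the lower bound, and within it the dimension-free control of the heavy-tailed random matrix $\vS = \sum_i \dotp{\fstar}{\vx_i}^2\vx_i\vx_i^\top$. Each summand has operator norm $\dotp{\fstar}{\vx_i}^2\norm{\vx_i}^2$, essentially a product of a $\chi^2_1$ and an (independent-part) $\chi^2_{d-1}$, whose tail is heavier than sub-exponential; a direct matrix Bernstein is therefore insufficient, and one must truncate at the level supplied by $\mathcal{E}_1$ (or control the higher moments of $\vS$ directly) to get $\norm{\vS} \leq n\norm{\fstar}^2\polylog$. The second delicate point is the anti-concentration step: because the lower tail of the Gaussian quadratic form is governed by the effective rank of $\vS$, the failure probability of the $\Omega$-direction is controlled by $d/\polylog(n)$, so the clean two-sided statement with probability $1-d/n^c$ needs $d$ to be at least poly-logarithmic in $n$ — harmless in the regime $n \geq d\,\polylog(\tfrac d\delta)$ of interest. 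By contrast the upper bound is routine, needing only the (easy) trace upper bound, the operator-norm bound, and Gaussian quadratic-form concentration.
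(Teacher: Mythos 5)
Your upper bound is essentially the paper's argument in a slightly different packaging: the paper writes $\wols-\wstar=\bigl(\sum_i\vx_i\vx_i^T\bigr)^{-1}\sum_i\epsilon_i\dotp{\fstar}{\vx_i}\vx_i$, conditions on the design, and combines (i) a coordinate-wise Gaussian/$\chi^2$ bound (its Lemma \ref{lem:normal-vec-norm}) giving $\norm{\wols-\wstar}^2\lesssim \Tr(\vM)\log(d/\delta)$ with $\vM=(\vX^\top\vX)^{-1}\vS(\vX^\top\vX)^{-1}$, (ii) the union bound $\max_i\dotp{\fstar}{\vx_i}^2\leq\norm{\fstar}^2\log(n/\delta)$, and (iii) Wishart concentration $\lambda_{\min}(\vX^\top\vX)\geq n/2$. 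Your events $\mathcal{E}_0,\mathcal{E}_1$ are exactly (ii)–(iii), and replacing the coordinate-wise bound by Laurent–Massart for the full quadratic form $\norm{\mathbf{u}}^2$ is a harmless variant; note you do not even need the Bernstein-based event $\mathcal{E}_2$ for the upper bound, since on $\mathcal{E}_0\cap\mathcal{E}_1$ one has $\Tr(\vS)\leq(\max_i\dotp{\fstar}{\vx_i}^2)\Tr(\vX^\top\vX)\lesssim nd\norm{\fstar}^2\polylog$, which sidesteps the heavy-tailed summands you worry about (your own truncation remark amounts to the same fix).

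For the second claim the routes genuinely diverge: the paper does not prove the tightness statement at all, it cites Theorem 1 of \cite{chaudhuri2017active}, whereas you give a self-contained argument via the lower tail of the conditionally Gaussian quadratic form. That argument is sound as far as it goes, but there is a real mismatch with the stated probability: the lower-tail bound for $\norm{\mathbf{u}}^2$ is controlled by the effective rank $\Tr(\vS)/\norm{\vS}$, which (even without the $\polylog$ loss from $\mathcal{E}_1$; a direct computation of $\Tr(\vS^2)$ gives the same order) is $\Theta(d)$ in the regime $n\gg d$, so your failure probability for the $\Omega$ direction is $e^{-cd/\polylog(n)}$, not $d/n^{c}$. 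Your closing remark that this is "harmless in the regime $n\geq d\,\polylog(d/\delta)$" is backwards: that regime \emph{permits} $d$ constant (or merely polylogarithmic in $n$) while $n\to\infty$, which is precisely where $e^{-cd/\polylog(n)}$ is far larger than $d/n^{c}$, and indeed for, say, $d=1$ the error is a single squared Gaussian whose lower tail cannot be beaten below constant probability at the $\Theta$-scale. So either you must restrict the tightness claim to $d\gtrsim\log n$ (or weaken the probability), or, as the paper does, import the guarantee from \cite{chaudhuri2017active}; as written, your argument does not deliver the theorem's $1-d/n^{c}$ statement for all admissible $d$.
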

\subsection{Estimation of $\fstar$ using spectral method}
\label{sec:spectral}

A standard approach to estimate $\fstar$, given an estimate $\hvw \approx \wstar$, is through the spectral method~\citep{chaudhuri2017active,chen2021spectral}, originally proposed in the context of phase retrieval~\citep{netrapalli2013phase}. A pseudocode is presented in Algorithm~\ref{alg:spectral}. The following theorem, which is proved in Appendix~\ref{sec:spec-method-proof}, gives a performance guarantee for the spectral method.
\begin{theorem}[Spectral Method]
\label{thm:spec-error-bound} 
Consider any $\delta \in (0, \nicefrac{1}{2})$. Suppose we have $\hvw$ satisfying $\norm{\hvw - \wstar}^2 \leq \epsilon$. Then, for $n \geq d \polylog(\tfrac{d}{\delta}) $, the output $\fspec$ of the spectral method satisfies:
\begin{align*}
    \|\fspec - \fstar\|^2 \leq \left( \left(\norm{\fstar}^2 + \epsilon\right)\frac{d}{n} + \frac{\epsilon^2}{\norm{\fstar}^2} \right)\polylog(\tfrac{nd}{\delta}).
\end{align*}
\end{theorem}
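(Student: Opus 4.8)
The plan is to analyze the matrix $\hvS = \frac{1}{n}\sum_{i=1}^n (y_i - \dotp{\hvw}{\vx_i})^2 \vx_i\vx_i^T$ by first computing its population version under an oracle that uses $\wstar$ in place of $\hvw$, and then controlling two sources of error: the statistical fluctuation of $\hvS$ around its expectation, and the perturbation caused by using $\hvw$ instead of $\wstar$. For the oracle residual $y_i - \dotp{\wstar}{\vx_i} = \epsilon_i \dotp{\fstar}{\vx_i}$, so the corresponding matrix is $\vS^{\mathsf{or}} = \frac{1}{n}\sum_i \epsilon_i^2 \dotp{\fstar}{\vx_i}^2 \vx_i\vx_i^T$, whose expectation is $\bE[\dotp{\fstar}{\vx}^2 \vx\vx^T] = \|\fstar\|^2\vI + 2\fstar\fstar^T$ (using $\bE\epsilon^2=1$ and the standard Gaussian fourth-moment identity). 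This matrix has top eigenvector $\fstar/\|\fstar\|$ with eigenvalue $3\|\fstar\|^2$ and all other eigenvalues equal to $\|\fstar\|^2$, so it has a spectral gap of $2\|\fstar\|^2$; the choice $R=\sqrt{\|\hvS\|/3}$ is designed so that $R\approx\|\fstar\|$ when $\|\hvS\|\approx 3\|\fstar\|^2$. First I would write $\hvw = \wstar + \vvu_\epsilon$ with $\|\vvu_\epsilon\|^2\le\epsilon$ and expand $(y_i - \dotp{\hvw}{\vx_i})^2 = (\epsilon_i\dotp{\fstar}{\vx_i} - \dotp{\vvu_\epsilon}{\vx_i})^2 = \epsilon_i^2\dotp{\fstar}{\vx_i}^2 - 2\epsilon_i\dotp{\fstar}{\vx_i}\dotp{\vvu_\epsilon}{\vx_i} + \dotp{\vvu_\epsilon}{\vx_i}^2$, giving $\hvS = \vS^{\mathsf{or}} + \vE_1 + \vE_2$ where $\vE_1$ is the cross term and $\vE_2 = \frac{1}{n}\sum_i \dotp{\vvu_\epsilon}{\vx_i}^2\vx_i\vx_i^T$.

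Next I would bound each piece in operator norm with high probability. For the concentration of $\vS^{\mathsf{or}}$ around $\|\fstar\|^2\vI + 2\fstar\fstar^T$, the summands $\epsilon_i^2\dotp{\fstar}{\vx_i}^2\vx_i\vx_i^T$ are heavy-tailed (products of squared Gaussians, hence only sub-exponential of order $\sim\|\fstar\|^2 d$ after a suitable truncation), so I would use a truncation-plus-matrix-Bernstein argument, or a covering-number / $\Psi_1$-norm bound as in the phase retrieval literature, to get $\|\vS^{\mathsf{or}} - \|\fstar\|^2\vI - 2\fstar\fstar^T\|_2 \le \|\fstar\|^2 \sqrt{d/n}\,\polylog(nd/\delta)$ for $n\ge d\,\polylog(d/\delta)$. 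For $\vE_2$, the same Gaussian-matrix fourth-moment concentration gives $\|\vE_2\|_2 \le (\|\vvu_\epsilon\|^2 + \text{fluctuation})\,(1 + d/n)\,\polylog \le \epsilon\cdot\polylog(nd/\delta)$ in the regime $n\gtrsim d$ (the leading term of $\bE[\dotp{\vvu_\epsilon}{\vx}^2\vx\vx^T] = \|\vvu_\epsilon\|^2\vI + 2\vvu_\epsilon\vvu_\epsilon^T$ has norm $3\|\vvu_\epsilon\|^2\le 3\epsilon$). For the cross term $\vE_1$, Cauchy--Schwarz in the matrix sense bounds $\|\vE_1\|_2 \lesssim \sqrt{\|\vS^{\mathsf{or}}\|_2 \cdot \|\vE_2\|_2} \le \sqrt{\|\fstar\|^2 \cdot \epsilon}\,\polylog$, which by AM--GM is absorbed into $(\|\fstar\|^2\frac{d}{n} + \epsilon)\polylog$ plus lower-order terms. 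Collecting, $\|\hvS - \|\fstar\|^2\vI - 2\fstar\fstar^T\|_2 \le \big((\|\fstar\|^2+\epsilon)\sqrt{d/n} + \epsilon\big)\polylog =: \Delta$.

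Finally I would convert the operator-norm closeness into an eigenvector guarantee via the Davis--Kahan $\sin\theta$ theorem: since $\|\fstar\|^2\vI + 2\fstar\fstar^T$ has eigengap $2\|\fstar\|^2$, as long as $\Delta \le \|\fstar\|^2$ the top eigenvector $\vvu$ of $\hvS$ satisfies $\min_{\pm}\|\vvu \mp \fstar/\|\fstar\|\|^2 \le C\Delta^2/\|\fstar\|^4$, and $|\,\|\hvS\|_2 - 3\|\fstar\|^2\,| \le \Delta$ so $|R^2 - \|\fstar\|^2| \le \Delta/3$, hence $|R - \|\fstar\|| \le \Delta/(3(R+\|\fstar\|)) \lesssim \Delta/\|\fstar\|$. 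Combining the angle error and the radius error by the triangle inequality, $\|R\vvu - \fstar\|^2 \lesssim \|\fstar\|^2 \cdot \Delta^2/\|\fstar\|^4 + \Delta^2/\|\fstar\|^2 \lesssim \Delta^2/\|\fstar\|^2$. Substituting $\Delta^2 \lesssim \big((\|\fstar\|^2+\epsilon)^2 \frac{d}{n} + \epsilon^2\big)\polylog$ and dividing by $\|\fstar\|^2$ yields a bound of the form $\big((\|\fstar\|^2 + \epsilon)\frac{d}{n}\cdot\frac{\|\fstar\|^2+\epsilon}{\|\fstar\|^2} + \frac{\epsilon^2}{\|\fstar\|^2}\big)\polylog$; since in the relevant regime $\epsilon \lesssim \|\fstar\|^2 d/n \le \|\fstar\|^2$ the factor $(\|\fstar\|^2+\epsilon)/\|\fstar\|^2$ is $O(1)$, this matches the claimed rate $\big((\|\fstar\|^2+\epsilon)\frac{d}{n} + \frac{\epsilon^2}{\|\fstar\|^2}\big)\polylog(nd/\delta)$. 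The main obstacle is the first concentration step: because $\epsilon_i^2\dotp{\fstar}{\vx_i}^2\vx_i\vx_i^T$ has entries with only sub-exponential (indeed heavy, since the expectation of $\|\vx_i\|^2\dotp{\fstar}{\vx_i}^2\epsilon_i^2$ scales like $\|\fstar\|^2 d$) tails, a naive matrix-Bernstein application loses polynomial factors in $d$; the fix is a careful truncation at level $\sim\|\fstar\|^2 d\,\polylog$ together with a bound on the truncation bias, which is exactly the kind of heavy-tailed matrix analysis the paper advertises and which I would carry out explicitly in the appendix.
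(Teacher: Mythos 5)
Your overall architecture matches the paper's proof (decompose $\hvS$ into the oracle matrix plus a quadratic term and a cross term, concentrate each piece in operator norm, then convert to a parameter bound via Davis--Kahan for the direction and a separate argument for the radius $R=\sqrt{\nicefrac{\norm{\hvS}}{3}}$), and your treatment of the oracle term and of $\vE_2$ is essentially what the paper does via conditioning plus matrix Bernstein with a bias correction. The gap is in the cross term $\vE_1 = \tfrac{2}{n}\sum_i \epsilon_i \dotp{\fstar}{\vx_i}\dotp{\hvw-\wstar}{\vx_i}\vx_i\vx_i^T$. Your matrix Cauchy--Schwarz bound $\norm{\vE_1}\lesssim\sqrt{\norm{\vS^{\mathsf{or}}}\cdot\norm{\vE_2}}\approx \norm{\fstar}\sqrt{\epsilon}\,\polylog$ is correct as an inequality, but the claim that it is ``absorbed by AM--GM'' into $(\norm{\fstar}^2\tfrac{d}{n}+\epsilon)\polylog$ is false: after squaring and dividing by $\norm{\fstar}^2$ in the Davis--Kahan step, this term contributes $\epsilon\cdot\polylog$ to $\norm{\fspec-\fstar}^2$, whereas the theorem's right-hand side is $(\norm{\fstar}^2+\epsilon)\tfrac{d}{n}+\tfrac{\epsilon^2}{\norm{\fstar}^2}$, which is $o(\epsilon)$ throughout the regime $\norm{\fstar}^2\tfrac{d}{n}\ll\epsilon\ll\norm{\fstar}^2$ (e.g.\ $\epsilon=\norm{\fstar}^2\sqrt{\nicefrac{d}{n}}$ gives $\sqrt{\norm{\fstar}^2\epsilon}=\norm{\fstar}^2(\nicefrac{d}{n})^{1/4}\gg \norm{\fstar}^2\sqrt{\nicefrac{d}{n}}+\epsilon$). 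So as written the argument does not establish the stated rate for general $\epsilon$; it only suffices in the special regime $\epsilon\lesssim\norm{\fstar}^2\nicefrac{d}{n}$ where the spectral method happens to be invoked inside \ouralg.

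The missing ingredient is that the Cauchy--Schwarz step discards exactly the cancellation that makes the cross term small: since $\epsilon_i$ is zero-mean and independent of $\vx_i$ and of $\hvw$, the summands of $\vE_1$ are i.i.d.\ mean-zero matrices (the paper verifies $\bE[\vB_i\mid E]=0$ on the conditioning event), so after truncating on the events $\{\abs{\epsilon_i},\ \abs{\dotp{\hvw-\wstar}{\vx_i}}/\norm{\hvw-\wstar},\ \abs{\dotp{\fstar}{\vx_i}}/\norm{\fstar}\lesssim\sqrt{\log(\nicefrac{n}{\delta})},\ \norm{\vx_i}^2\lesssim d+\log(\nicefrac{n}{\delta})\}$, matrix Bernstein yields $\norm{\vE_1}\leq \norm{\fstar}\sqrt{\epsilon}\,\sqrt{\nicefrac{d}{n}}\,\polylog(\nicefrac{nd}{\delta})$, i.e.\ an extra $\sqrt{\nicefrac{d}{n}}$ factor; its contribution to the final squared error is then $\epsilon\,\nicefrac{d}{n}$, which is dominated by $(\norm{\fstar}^2+\epsilon)\nicefrac{d}{n}$. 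With that replacement your proof goes through; as a minor remark, your closing caveat about the regime $\epsilon\lesssim\norm{\fstar}^2\nicefrac{d}{n}$ is unnecessary, since $(\norm{\fstar}^2+\epsilon)^2\nicefrac{d}{n}/\norm{\fstar}^2\leq(\norm{\fstar}^2+2\epsilon)\nicefrac{d}{n}+\nicefrac{\epsilon^2}{\norm{\fstar}^2}$ whenever $d\leq n$, so no regime restriction is needed once the cross term is fixed.
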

\textbf{Remark}: Note that $\fstar$ can be recovered only up to a sign i.e., both $\pm \fstar$ are valid solutions. For any estimate $\fhat$, by $\|\fhat-\fstar\|$ we mean $\min\left(\|\fhat - \fstar\|, \|\fhat + \fstar\|\right)$. Theorem \ref{thm:ols-error-bound} and \ref{thm:spec-error-bound} imply that running the spectral method with $\hvw = \wols$ gives us an estimate $\fspec$ that satisfies $\|\fspec - \fstar\|^2 \leq \Otilde(\| \fstar\|^2 \cdot \nicefrac{d}{n})$.
\section{Estimating with Partial Information}
In this section we discuss procedures to estimate $\wstar$ and $\fstar$ when partial information is known. We then combine these procedures to develop our Algorithm \ouralg~in Section~\ref{subsec:alg}. 
\label{sec:partial}
\subsection{Estimate $\wstar$ when $\fstar$ is approximately known}
 \label{sec:wls}
 \begin{algorithm}[H] 
\caption{\texttt{Weighted Least Squares (WLS)} } \label{alg:wls}
\textbf{Input}: $(\vx_i, y_i) \in \bR^d \times \bR$ for $i \in [n]$, noise model $\hvf \in \bR^d$, reg. parameter $\lambda$
\begin{algorithmic}[1] 
\STATE Output  $\hvw_{\hvf, \lambda}$ given by
$\hvw_{\hvf, \lambda} = \bigr(\sum_{i=1}^{n} \frac{\vx_i \vx^T_i}{{\dotp{\hvf}{\vx_i}}^2 + \lambda }\bigr)^{-1} \biggr(\sum_{i=1}^{n} \frac{\vx_i y_i}{{\dotp{\hvf}{\vx_i}}^2 + \lambda}\biggr)$.
\end{algorithmic}
\end{algorithm}
 Suppose $\hat{\vf}$ is known such that $\hat{\vf} \approx \fstar$. Then, the weighted least squares (WLS) estimator $\hvw_{\hvf, \lambda}$ is given in Algorithm~\ref{alg:wls},
where $\lambda \geq 0$ is a regularizer. The intuition for the weights $\bigr({\langle\hvf,\vx_i\rangle^2+\lambda}\bigr)^{-1}$ is that the variance of the observation $y_i$ conditioned on $\vx_i$ is $\langle\fstar,\vx_i\rangle^2$. The role of $\lambda$ is to ensure the weights don't become arbitrarily large when $\langle\fstar,\vx_i\rangle^2$ is very small as well as to account for the approximation error $\|\hvf - \fstar\|$. This is made precise in the following theorem.
\begin{theorem}[Weighted Least Squares (WLS)]
\label{thm:wls-error-bound} 
Consider any $\delta \in (0, \tfrac{1}{2})$.
Suppose we know $\hvf$ such that $\norm{\hvf - \fstar}^2 \leq \epsilon$. If $\lambda$ is chosen such that $\lambda \geq \max \{ \epsilon,  \nicefrac{\norm{\hvf}^2d^2}{n^2} \}\polylog(\tfrac{nd}{\delta})$, the weighted least squares estimator $\hvw_{\hvf, \lambda}$ satisfies the following with probability at least $1 - \delta$:
$$\bigr\|\hvw_{\hvf, \lambda} - \wstar\bigr\|^2 \leq \biggr[\frac{\norm{\fstar}^2}{n} + \frac{d \norm{\fstar} \sqrt{\lambda}}{n} + \frac{\epsilon}{n} + \frac{d \sqrt{\epsilon \lambda}}{n}\biggr]\polylog(\tfrac{nd}{\delta}).$$
\end{theorem}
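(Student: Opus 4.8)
The plan is to analyze the WLS estimator by writing $\hvw_{\hvf,\lambda} - \wstar$ explicitly and bounding it via a design-matrix argument. Let $a_i = \langle \hvf, \vx_i\rangle^2 + \lambda$ be the weights and $\vM = \frac{1}{n}\sum_i \vx_i \vx_i^T / a_i$ the (normalized) design matrix. Plugging in $y_i = \langle \wstar, \vx_i\rangle + \epsilon_i \langle \fstar, \vx_i\rangle$, we get the decomposition
\begin{align*}
\hvw_{\hvf,\lambda} - \wstar = \vM^{-1}\Bigr(\frac{1}{n}\sum_{i=1}^n \frac{\epsilon_i \langle \fstar, \vx_i\rangle}{a_i}\vx_i\Bigr).
\end{align*}
So $\|\hvw_{\hvf,\lambda} - \wstar\|^2 \leq \|\vM^{-1}\|^2 \cdot \|\vg\|^2$ where $\vg = \frac{1}{n}\sum_i \frac{\epsilon_i \langle \fstar, \vx_i\rangle}{a_i}\vx_i$. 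First I would establish a lower bound $\vM \succeq c\,\vI$ for a universal constant $c$ (hence $\|\vM^{-1}\| = O(1)$) with high probability. This is the step I expect to be the main obstacle: since $1/a_i$ can be as large as $1/\lambda$ and $\langle\hvf,\vx_i\rangle^{-2}$ is heavy-tailed with infinite expectation, one cannot naively apply matrix Bernstein to $\vx_i\vx_i^T/a_i$. The idea is to split the sum over samples into "buckets" according to the magnitude of $|\langle\hvf,\vx_i\rangle|$ (equivalently, truncate at various scales), control each bucket's contribution to the minimum eigenvalue separately using a truncated matrix concentration inequality, and show that a constant fraction of the mass survives — in particular the bucket where $|\langle\hvf,\vx_i\rangle|^2 \approx \|\hvf\|^2$ already contributes $\Omega(1/\|\hvf\|^2)$ to $\vM$ in every direction; the lower bound on $\lambda$ (of order $\|\hvf\|^2 d^2/n^2 \cdot \polylog$) is precisely what is needed to make the concentration argument go through uniformly over directions while keeping the heavy tail from inflating the variance proxy. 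The condition $n \geq d\,\polylog$ ensures each relevant bucket has enough samples.

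Next I would bound $\bE\|\vg\|^2$ and then $\|\vg\|^2$ with high probability. Conditioning on the $\vx_i$'s, $\vg$ is a Gaussian vector (linear in $\epsilon_i$) with conditional covariance $\frac{1}{n^2}\sum_i \frac{\langle\fstar,\vx_i\rangle^2}{a_i^2}\vx_i\vx_i^T$. The expected squared norm is therefore $\frac{1}{n^2}\sum_i \frac{\langle\fstar,\vx_i\rangle^2}{a_i^2}\|\vx_i\|^2$. Now $\langle\fstar,\vx_i\rangle^2 / a_i^2 \leq \langle\fstar,\vx_i\rangle^2/(\langle\hvf,\vx_i\rangle^2+\lambda)^2$. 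I would split $\fstar = \hvf + (\fstar - \hvf)$ so that $\langle\fstar,\vx_i\rangle^2 \leq 2\langle\hvf,\vx_i\rangle^2 + 2\langle\fstar-\hvf,\vx_i\rangle^2$; the first piece gives $\sum_i \frac{2\langle\hvf,\vx_i\rangle^2}{a_i^2}\|\vx_i\|^2 \lesssim \sum_i \frac{\|\vx_i\|^2}{a_i} \lesssim nd$ (using $\|\vx_i\|^2 \approx d$ and $\frac{1}{n}\sum 1/a_i = \mathrm{tr}(\vM)/d \cdot$ something $= O(d)$... more carefully $\mathrm{tr}(\vM) = \frac1n\sum \|\vx_i\|^2/a_i$, which concentrates), contributing $O(\|\fstar\|^2 d/n)$ — wait, need to recheck: the dominant term should actually give $\frac{\norm{\fstar}^2}{n}$ from the parallel direction and $\frac{d\norm{\fstar}\sqrt\lambda}{n}$ from the cross terms. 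The cleaner route: bound $\frac{\langle\hvf,\vx_i\rangle^2}{a_i^2} \le \frac{1}{a_i} \le \frac{1}{\lambda}$ and also $\le \frac{\langle\hvf,\vx_i\rangle^2}{a_i^2}$; use Cauchy–Schwarz and the fact that $\frac{1}{n}\sum_i \frac{\langle\hvf,\vx_i\rangle^2}{a_i}\vx_i\vx_i^T \preceq \vM$ to extract the $\langle\fstar,\hvf\rangle$-aligned part giving the $\|\fstar\|^2/n$ term, while the perpendicular part is controlled by $\frac{d\|\fstar\|\sqrt\lambda}{n}$ via $\frac{1}{\sqrt\lambda}$-type bounds. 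The $\epsilon/n$ and $d\sqrt{\epsilon\lambda}/n$ terms arise identically from the $\langle\fstar-\hvf,\vx_i\rangle^2 \le \epsilon\|\vx_i\|^2$ piece.

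To make the above rigorous I would (i) use standard Gaussian concentration ($\|\vx_i\|^2 = d(1 \pm o(1))$, $\langle\fstar,\vx_i\rangle, \langle\hvf,\vx_i\rangle$ Gaussian, with a $\polylog$-sized high-probability envelope and a union bound over $i\in[n]$), (ii) prove the matrix concentration statements $\frac1n\sum_i \frac{\langle\hvf,\vx_i\rangle^2}{a_i}\vx_i\vx_i^T \approx \bE[\cdots]$ and $\vM \approx \bE[\vM]$ up to constants via the bucketing/truncation argument sketched above, and (iii) convert the bound on $\bE[\|\vg\|^2 \mid \{\vx_i\}]$ into a high-probability bound using Gaussian Lipschitz concentration (the map $\{\epsilon_i\}\mapsto\|\vg\|$ is Lipschitz with constant controlled by the operator norm of the conditional covariance's square root, which is $O(\sqrt{d/(n\lambda)})$-ish and contributes only lower-order $\polylog$ terms). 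Combining $\|\vM^{-1}\| = O(1)$ with the high-probability bound on $\|\vg\|^2$ and absorbing all the $\polylog(nd/\delta)$ factors yields the claimed rate. The genuinely novel ingredient, and the bottleneck, is establishing the strict spectral gap for the heavy-tailed design matrix $\vM$ — the "certain heavy-tailed random matrices that have infinite expectation" referenced in the introduction — for which the choice $\lambda \gtrsim \max\{\epsilon, \|\hvf\|^2 d^2/n^2\}\polylog$ is tuned exactly so the truncated second-moment (variance proxy) matches the truncated first moment up to constants.
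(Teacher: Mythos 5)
Your skeleton has a genuine gap at the combination step. You propose to bound $\norm{\hvw_{\hvf,\lambda}-\wstar}^2 \leq \norm{\vM^{-1}}^2\,\norm{\vg}^2$ after establishing an isotropic bound $\vM \succeq c\,\vI$ with $\norm{\vM^{-1}} = O(1)$. First, no universal-constant isotropic bound exists: along $\ve=\hvf/\norm{\hvf}$ one has $\ve^T\vM\ve \leq 1/\norm{\hvf}^2$, while orthogonal to $\hvf$ the eigenvalues are of order $1/(\norm{\hvf}\sqrt{\lambda})$, so the correct lower bound on $\vM$ is necessarily anisotropic, and at the optimal $\lambda \asymp \norm{\hvf}^2 d^2/n^2$ the two blocks differ by a factor of order $n/d$. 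Second, and decisively, even with sharp spectral control the product of worst-case norms destroys exactly the cancellation that produces the rate: the conditional variance of $\vg$ lives almost entirely in the directions orthogonal to $\hvf$ (total of order $d/(n\norm{\fstar}\sqrt{\lambda}) \asymp 1/\norm{\fstar}^2$ at the optimal $\lambda$, with $\hvf=\fstar$), whereas $\norm{\vM^{-1}}$ is attained along $\hvf$ and is of order $\norm{\hvf}^2$. Multiplying the two gives a bound of order $\norm{\fstar}^2$ with no decay in $n$ at all; using instead your intermediate estimate $\bE[\norm{\vg}^2\mid\vx_{1:n}] \lesssim \Tr(\vM)/n$ recovers at best the OLS-type $d/n$ rate. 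The improvement over OLS comes precisely from pairing, direction by direction, the small eigenvalues of $\vM^{-1}$ (orthogonal to $\hvf$) with the large components of $\vg$, and a norm-times-norm bound throws this pairing away. Your "cleaner route" remark gestures at the parallel/perpendicular split, but you apply it only to $\norm{\vg}^2$ and still conclude by combining with $\norm{\vM^{-1}}=O(1)$, so the lossy step remains.

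The paper avoids this by never separating the inverse design from the noise: writing $\hvw_{\hvf,\lambda}-\wstar=\sum_i\epsilon_i\vv_i$ with $\vv_i=\frac{\dotp{\fstar}{\vx_i}}{\dotp{\hvf}{\vx_i}^2+\lambda}\vX^{-1}\vx_i$, where $\vX=\sum_i\frac{\vx_i\vx_i^T}{\dotp{\hvf}{\vx_i}^2+\lambda}$, Gaussian concentration conditional on the covariates bounds the squared error by $\Tr\bigl(\vX^{-1}\vS\vX^{-1}\bigr)\polylog(\tfrac{nd}{\delta})$ with $\vS=\sum_i\frac{\dotp{\fstar}{\vx_i}^2}{(\dotp{\hvf}{\vx_i}^2+\lambda)^2}\vx_i\vx_i^T$; the requirement $\lambda \gtrsim \epsilon\log(n/\delta)$ gives $\dotp{\fstar}{\vx_i}^2 \leq 2\bigl(\dotp{\hvf}{\vx_i}^2+\lambda\bigr)$ for all $i$ simultaneously, hence $\vS \preceq 2\vX$ and the error collapses to $\Tr(\vX^{-1})$, which automatically weights each direction of the noise by the matching eigenvalue of $\vX^{-1}$. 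The heavy-tailed spectral statement actually needed (and the one your bucketing idea should target) is the two-block lower bound $\vX \succeq \frac{C_1 n}{\norm{\hvf}^2+\lambda}\ve\ve^T + \frac{C_2 n}{\norm{\hvf}\sqrt{\lambda}}\bigl(\vI-\ve\ve^T\bigr)$, proved via tail bounds for rational functions of Gaussians plus a covering argument, which yields $\Tr(\vX^{-1}) \lesssim \norm{\hvf}^2/n + d\norm{\hvf}\sqrt{\lambda}/n$ and, after expanding $\norm{\hvf} \leq \norm{\fstar}+\sqrt{\epsilon}$, the four terms of the theorem; here the condition $\lambda \gtrsim \norm{\hvf}^2 d^2/n^2\,\polylog$ is what makes the covering/union-bound step go through. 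So the repair of your plan is to replace the $\norm{\vM^{-1}}\cdot\norm{\vg}$ factorization with a trace bound of the form $\Tr(\vX^{-1}\vS\vX^{-1})$ together with $\vS \preceq 2\vX$, and to prove the anisotropic, not isotropic, spectral lower bound.
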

The WLS estimator obtains a better estimate of $\wstar$ than OLS using an estimate $\hvf$ of $\fstar$. In particular, for $\epsilon = 0$ (i.e., $\hvf = \fstar$), WLS attains a rate of $\tilde{O}( \| \fstar\|^2 (\frac{1}{n}+ \frac{d^2}{n^2}))$. This is in contrast to OLS, which does not utilize the knowledge of $\fstar$, and consequently achieves a suboptimal rate of $\Otilde\bigr(\norm{\fstar}^2 \cdot \frac{d}{n}\bigr)$. We refer to Appendix~\ref{subsec:wls_pf} for a proof of Theorem~\ref{thm:wls-error-bound}. 

\draftupdate{\noindent \paragraph{Obtaining $\Otilde(\|\fstar\|^2(\tfrac{1}{n} + (\tfrac{d}{n})^{1.5}))$ Rates} From Theorems \ref{thm:ols-error-bound} and \ref{thm:spec-error-bound}, we know  that the output of the spectral method $\fspec$ (computed using an OLS estimate) satisfies  $\|\fspec - \fstar\|^2 = \Otilde(\|\fstar\|^2 d/n)$. Moreover, from Theorem \ref{thm:wls-error-bound} we note that the WLS estimate computed using $\fspec$ $\hvw = \hvw_{\fspec, \lambda}$ (where $\lambda = \Thetatilde(\|\fspec\|d/n)$) satisfies $\|\hvw - \wstar\|^2 = \Otilde(\|\fstar\|^2(\tfrac{1}{n} + (\tfrac{d}{n})^{1.5}))$. This rate is an improvement upon the OLS estimate but strictly worse than the minimax optimal rate attained by \ouralg~. We show in Section \ref{sec:exp}, the empirical performance of this strategy is also worse than that of \ouralg.}

\subsection{Estimate $\fstar$ when $\wstar$ is approximately known}
\label{sec:pr}
\begin{algorithm}[ht] 
\caption{\texttt{Phase Retrieval with Multiplicative Noise} } \label{alg:phase_retrieve}
\textbf{Input}: samples $(\vx_1, y_1), \dots, (\vx_n, y_n)$. Estimates $\hvw, \hvf$, relaxation parameter $\mubar$, step sizes $\alpha_0, \alpha_1$ and number of steps $K$. 
\begin{algorithmic}[1] 
\STATE Divide the data into $K$ batches of size $m = \left\lfloor{\nicefrac{n}{K}}\right\rfloor$ as $\left \{ (\vx^{(t)}_1, y^{(t)}_1), \dots, (\vx^{(t)}_m, y^{(t)}_m)   \right \}_{t=1}^{K}$
\STATE Set $\hvf_0 = \hvf$,  matrix valued step size $\vD = \alpha_0 \frac{\hvf \hvf^T}{\norm{\hvf}^2} + \alpha_1 \left(\vI - \frac{\hvf \hvf^T}{\norm{\hvf}^2} \right)$
\FOR{$t \in \{0,\ldots,K-1\}$}
    \STATE Set pseudo stochastic gradient $\vg_t$ as
    \small
        \begin{align*}
         \vg_t \leftarrow \frac{1}{m} \sum_{i=1}^{m} \langle\hvf,\vx^{(t)}_i\rangle\vx^{(t)}_i\mathbbm{1}\bigr(\bigr|\langle\hvf,\vx^{(t)}_i\rangle\bigr| \geq \mubar \bigr) \frac{{\langle\vf_k,\vx^{(t)}_i}\rangle^2 - \bigr(y^{(t)}_i - {\langle\hvw,\vx^{(t)}_i}\rangle\bigr)^2}{{\langle\hvf,\vx^{(t)}_i}\rangle^4}
        \end{align*}
        \normalsize
    \STATE  Perform the pseudo-SGD update $\vf_{t+1} = \vf_t - \vD \vg_t$
\ENDFOR
\STATE Output $\vf_K$
\end{algorithmic}
\end{algorithm}
\noindent We now wish to refine our estimate of $\fstar$ using estimates $\what$ and $\fhat$ of $\wstar$ and $\fstar$ respectively. For this, we design a weighted phase retrieval algorithm which adapts to the quality of $\what$ and $\fhat$. Suppose we are given $\what = \wstar$, then we observe that $y_i - \iprod{\what}{\vx_i} = \epsilon_i \cdot \iprod{\fstar}{\vx_i}$. Since $\epsilon_i$ is a zero-mean symmetric random variable, this is equivalent to observing $(y_i - \langle \wstar,\vx_i\rangle)^2 = \epsilon_i^2\langle\fstar,\vx_i\rangle^2 = \left(1 + \zeta_i \right)\langle\fstar,\vx_i\rangle^2$, where $\zeta_i = \epsilon_i^2-1$ is a zero mean subexponential random variable. Thus, estimating $\pm \fstar$ given $(\vx_i,\left(1 + \zeta_i \right)\langle\fstar,\vx_i\rangle^2)$ is phase retrieval with multiplicative noise.

\noindent When expressed as $\epsilon_i^2\langle \fstar (\fstar)^{\intercal},\vx_i(\vx_i)^{\intercal}\rangle_{\mathsf{HS}}$ ( $\iprod{\cdot}{\cdot}_\mathsf{HS}$ is the Hilbert-Schmidt inner product), this reduces to linear regression with multiplicative noise in the space of rank-1 matrices. Thus, we consider estimating $\fstar$ by (roughly) solving for the minimizer of the weighted loss:
\begin{equation}\label{eq:weighted_sq_loss_phase}
\lossmul(\vf) = \frac{1}{n}\sum_{i=1}^{n}\frac{((y_i - \langle\hvw,\vx_i\rangle)^2-\langle \vf,\vx_i\rangle^2)^2}{\langle \hat{\vf},\vx_i\rangle^4}\end{equation}

\noindent The term $\langle \hvf,\vx_i\rangle^4$ appears in the denominator since $(y_i - \langle\hvw,\vx_i\rangle)^2$ has a variance of the order $\langle \fstar,\vx_i\rangle^4$ conditioned on $\vx_i$ and we re-weight just like in Algorithm~\ref{alg:wls}. To this end, we design Algorithm~\ref{alg:phase_retrieve} to minimize $\lossmul(\vf)$ via an approximate gradient descent procedure
The following theorem presents the performance guarantee for Algorithm~\ref{alg:phase_retrieve}.
Its proof is presented in Appendix~\ref{sec:phs_rtr_pf}.
\begin{theorem}[Phase retrieval with multiplicative noise]
\label{thm:phase_r_mult}  
In Algorithm~\ref{alg:phase_retrieve},
Assume $\bar{\mu} < \|\hat{\vf}\|$, let $r := \bigr\lceil \log(\tfrac{\|\hat{\vf}\|_2}{\bar{\mu}})\bigr\rceil$, $\Delta_{\hvw} := \wstar - \hat{\vw}$, $\Delta_{\hvf}:= \fstar - \hat{\vf}$ and $\delta \in (0,1/2)$. Let the step sizes be $\alpha_0 = c \|\hvf\|^2$ and $\alpha_1 = c\bar{\mu}\|\hvf\|$ for some small enough constant $c$. Suppose for large enough constant $\cparam$:
\begin{equation*}
\bar{\mu} > \cparam\max(\|\Delta_{\hvf}\|,\|\Delta_{\hvw}\|)\log \tfrac{m}{\delta} ;\quad
 m \geq \cparam \max\bigr(\tfrac{\|\hat{\vf}\|}{\bar{\mu}}\log(\tfrac{r}{\delta}),\tfrac{\|\hat{\vf}\|d\log^4(\tfrac{m}{\delta})}{\bar{\mu}}, \tfrac{\|\hat{\vf}\|^2\log^4(\tfrac{m}{\delta})}{\bar{\mu}^2}\bigr).
\end{equation*}
Then, the output $\vf_K$ of K steps of Algorithm~\ref{alg:phase_retrieve} satisfies the following with probability at least $1 - K\delta$:
\begin{equation*}
{\|\vf_{K}-\fstar\| \leq \exp(-\gamma K)\|\Delta_{\hvf}\|} + C\left[\frac{\|\hat{\vf}\|\log(\tfrac{1}{\delta})}{\sqrt{m}} + \frac{\|\Delta_{\hvw}\|^2}{\bar{\mu}} + \log(\tfrac{d}{\delta})\sqrt{\frac{d\bar{\mu}\|\hat{\vf}\|}{m}} \right].
\end{equation*}
{where $\gamma > 0$} is a universal constant. In particular, setting {$K =\Theta(\log(m))$  and $\mubar = \Thetatilde\left(\max\{\|\Delta_{\hvf}\|, \|\Delta_{\hvw}\|\}\right)$}  we have:
\begin{align*}
    {\norm{\vf_K - \fstar}^2} &\leq \Otilde\biggr(\frac{\norm{\fstar}^2}{m} + \|\Delta_{\hvw}\|^2 + \|\Delta_{\hvf}\|^2 \cdot \frac{d }{m}  + \frac{d}{m}\norm{\fstar} \cdot \left( \|\Delta_{\hvw}\| + \|\Delta_{\hvf}\| \right)\biggr)
\end{align*}
\end{theorem}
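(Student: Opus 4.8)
The plan is to regard Algorithm~\ref{alg:phase_retrieve} as a \emph{preconditioned} stochastic gradient scheme supplied with fresh independent samples at every iteration, and to prove a one-step contraction of $\|\vf_t - \fstar\|$ towards a noise floor. Write $\vh_t := \vf_t - \fstar$ and condition on batches $1,\dots,t-1$, so that $\vf_t$ is fixed and batch $t$ carries independent randomness (this is exactly why the data is split into $K$ disjoint batches, and why the failure probability is $K\delta$). Using $y_i^{(t)} - \iprod{\hvw}{\vx_i^{(t)}} = \iprod{\Delta_{\hvw}}{\vx_i^{(t)}} + \epsilon_i^{(t)}\iprod{\fstar}{\vx_i^{(t)}}$ and $\bE[\epsilon^2\mid\vx]=1$ (and reading the $\vf_k$ in the algorithm's display as the current iterate $\vf_t$), the conditional mean of the pseudo-gradient works out to
\begin{equation*}
\bE[\vg_t \mid \vf_t] = \bE\!\left[\frac{\vx\,\mathbbm{1}(|\iprod{\hvf}{\vx}|\geq\mubar)}{\iprod{\hvf}{\vx}^{3}}\Bigl(2\iprod{\fstar}{\vx}\iprod{\vh_t}{\vx} + \iprod{\vh_t}{\vx}^{2} - \iprod{\Delta_{\hvw}}{\vx}^{2}\Bigr)\right],\qquad \vx\sim\cN(0,\vI).
\end{equation*}
The first term is a \emph{linear drift} $\vA\vh_t$, where $\vA := 2\,\bE[\iprod{\fstar}{\vx}\iprod{\hvf}{\vx}^{-3}\mathbbm{1}(|\iprod{\hvf}{\vx}|\geq\mubar)\,\vx\vx^{\top}]$; the other two terms are lower-order bias. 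I would establish this identity first, then analyze $\vA$ and the bias, then the stochastic fluctuation, and finally unroll the recursion.

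The heart of the argument is an exact evaluation of $\vA$, and of the bias, in the orthonormal basis whose first vector is $\hvf/\|\hvf\|$. By rotational symmetry of the Gaussian, after replacing $\iprod{\fstar}{\vx}$ by $\iprod{\hvf}{\vx}$ (which costs a \emph{relative} error of order $\|\Delta_{\hvf}\|/\|\hvf\|$, controllable since $\|\Delta_{\hvf}\|\ll\mubar\ll\|\hvf\|$), the matrix $\vA$ becomes essentially diagonal, with eigenvalue of order $\|\hvf\|^{-2}\,\bP(|g|\geq\mubar/\|\hvf\|)$ along $\hvf$ and eigenvalue of order $\|\hvf\|^{-2}\,\bE[g^{-2}\mathbbm{1}(|g|\geq\mubar/\|\hvf\|)] \asymp (\|\hvf\|\mubar)^{-1}$ on the orthogonal complement, for $g\sim\cN(0,1)$ and $\mubar\ll\|\hvf\|$. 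This is precisely why the algorithm preconditions with $\vD = \alpha_0\frac{\hvf\hvf^{\top}}{\|\hvf\|^2} + \alpha_1(\vI-\frac{\hvf\hvf^{\top}}{\|\hvf\|^2})$ at $\alpha_0 = c\|\hvf\|^2$, $\alpha_1 = c\mubar\|\hvf\|$: these choices make $\vD\vA \approx c'\vI$ for a small constant $c'$, hence $\|\vI-\vD\vA\|\leq 1-\gamma$, \emph{provided} the relative perturbations (the $\|\Delta_{\hvf}\|/\|\hvf\|$ mismatch and the truncation deficit $1-\bP(|g|\geq\mubar/\|\hvf\|)$) are $o(c')$, which the hypotheses $\mubar > \cparam\max(\|\Delta_{\hvf}\|,\|\Delta_{\hvw}\|)\log\frac{m}{\delta}$ and ``$m$ large'' ensure. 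The same diagonal computation bounds the bias by $\|\vD\,\bE[\iprod{\hvf}{\vx}^{-3}\mathbbm{1}\,\iprod{\vh_t}{\vx}^{2}\vx]\| \lesssim \|\vh_t\|^2/\mubar$ and $\|\vD\,\bE[\iprod{\hvf}{\vx}^{-3}\mathbbm{1}\,\iprod{\Delta_{\hvw}}{\vx}^{2}\vx]\| \lesssim \|\Delta_{\hvw}\|^2/\mubar$; the first is $o(\|\vh_t\|)$ so long as $\|\vh_t\|\ll\mubar$, an invariant I would propagate by induction — true at $t=0$ since $\|\vh_0\|=\|\Delta_{\hvf}\|\ll\mubar$, and preserved because the eventual noise floor is itself $\ll\mubar$ under the stated conditions.

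Next I would bound the fluctuation $\vg_t - \bE[\vg_t\mid\vf_t]$, an average of $m$ i.i.d.\ vectors given $\vf_t$. The truncation $\mathbbm{1}(|\iprod{\hvf}{\vx}|\geq\mubar)$ is indispensable: it caps $\iprod{\hvf}{\vx}^{-3}$ at $\mubar^{-3}$, so the otherwise infinite-variance heavy tails become tractable, while the Gaussian-polynomial and subexponential ($\zeta_i = \epsilon_i^2-1$) factors are handled by Bernstein-type tail bounds together with a dyadic peeling of $|\iprod{\hvf}{\vx}|$ over $[\mubar 2^{j},\mubar 2^{j+1}]$, $0\le j\le r$ — the source of the $\log(r/\delta)$, $\log^4(m/\delta)$ factors in the sample-size hypotheses and of the $\log(1/\delta)$, $\log(d/\delta)$ factors in the conclusion. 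A second-moment computation, again split along $\hvf$ versus its complement, then yields with high probability $\|\vD(\vg_t - \bE[\vg_t\mid\vf_t])\| \lesssim \frac{\|\hvf\|\log(1/\delta)}{\sqrt{m}} + \log(\tfrac{d}{\delta})\sqrt{\tfrac{d\mubar\|\hvf\|}{m}}$ — the first term being the ``phase-retrieval noise floor'' from the $\zeta_i\iprod{\fstar}{\vx_i}^2$ noise in the $\hvf$-direction, the second the orthogonal fluctuation (the $\Delta_{\hvw}$-cross-term fluctuation turning out to be dominated by these). Combining with the drift analysis gives, with probability at least $1-\delta$ over batch $t$,
\begin{equation*}
\|\vh_{t+1}\| = \|\vh_t - \vD\vg_t\| \leq (1-\gamma)\|\vh_t\| + C\Bigl[\tfrac{\|\hvf\|\log(1/\delta)}{\sqrt{m}} + \tfrac{\|\Delta_{\hvw}\|^2}{\mubar} + \log(\tfrac{d}{\delta})\sqrt{\tfrac{d\mubar\|\hvf\|}{m}}\Bigr].
\end{equation*}

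A union bound over the $K$ batches (failure probability $\leq K\delta$) and unrolling this linear recursion produce the first displayed inequality of the theorem, with $\exp(-\gamma K)\|\Delta_{\hvf}\|$ as the transient term; the ``in particular'' statement then follows by choosing $K = \Theta(\log m)$ to annihilate the transient and $\mubar = \Thetatilde(\max\{\|\Delta_{\hvf}\|,\|\Delta_{\hvw}\|\})$ to balance $\|\Delta_{\hvw}\|^2/\mubar$ against the orthogonal term, then squaring and using $\|\hvf\|\asymp\|\fstar\|$ (which holds since $\|\Delta_{\hvf}\|\ll\|\hvf\|$). I expect the main obstacle to be the fine-grained control of the two heavy-tailed objects — the drift matrix $\vA$ together with its mismatch relative to $\vD^{-1}$, and the per-step variance — all of whose naive moments are infinite: one must track the exact interplay of $\mubar$, $\|\hvf\|$, $d$, $m$ so that \emph{simultaneously} (i) the truncation does not destroy the $\Omega(1)$ contraction, (ii) the $\iprod{\vh_t}{\vx}^{2}$ self-interaction stays $o(\|\vh_t\|)$, and (iii) the invariant $\|\vh_t\|\ll\mubar$ survives all $K$ iterations; a secondary but necessary point is maintaining the conditional-independence structure (which is what makes $\bE[\vg_t\mid\vf_t]$ meaningful) via the disjoint fresh batches.
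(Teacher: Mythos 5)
Your plan is correct in outline and shares the paper's core architecture: fresh independent batches so that $\vf_t$ is fixed relative to batch $t$ (hence the $1-K\delta$ union bound), a contraction-plus-noise reading of the pseudogradient, the preconditioner $\vD$ equalizing the two very different curvatures along $\hvf$ and its orthogonal complement, truncation at $\mubar$ with dyadic peeling of $|\langle\hvf,\vx_i\rangle|$, the bias terms $\|\Delta_{\hvw}\|^2/\mubar$ and $\|\vf_t-\fstar\|^2/\mubar$, the inductively propagated invariant $\|\vf_t-\fstar\|\ll\mubar$, and the final unrolling with $K=\Theta(\log m)$ and $\mubar=\Thetatilde(\max\{\|\Delta_{\hvf}\|,\|\Delta_{\hvw}\|\})$. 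Where you genuinely diverge is the bookkeeping: the paper never passes through a population drift matrix. It uses the \emph{empirical} identity $\vg_t = H_t(\vf_t)(\vf_t-\fstar)+N_t$ with $H_t(\vf)=\frac{1}{m}\sum_i \frac{\mathbbm{1}(|\langle\hvf,\vx_i\rangle|\ge\mubar)}{\langle\hvf,\vx_i\rangle^4}\langle\vf+\fstar,\vx_i\rangle\langle\hvf,\vx_i\rangle\,\vx_i\vx_i^{\intercal}$, sandwiches $H_t$ between truncated empirical second-moment matrices (Lemma~\ref{lem:psd_bounds}), bounds its quadratic forms and cross terms along $P=\hvf\hvf^{\intercal}/\|\hvf\|^2$ and $Q=\vI-P$ via the bucketing estimates (Lemmas~\ref{lem:conc_trunc_inv} and~\ref{lem:irritating_bounds}), and then tracks the two error coordinates through a $2\times 2$ recursion $\kappa_{t+1}\le\Theta\kappa_t+\beta$ with $\|\Theta\|\le 1-\gamma$ by diagonal dominance (Lemmas~\ref{lem:P_contraction},~\ref{lem:Q_contraction}). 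You instead take the conditional mean, diagonalize the resulting population matrix $\vA$ by rotational symmetry, and contract the full error norm in one shot via $\|\vI-\vD\vA\|\le 1-\gamma$; this gives a cleaner explanation of why $\alpha_0=c\|\hvf\|^2$ and $\alpha_1=c\mubar\|\hvf\|$ are the right step sizes, while the paper's route trades that for scalar quadratic-form bounds that avoid any matrix-level empirical-vs-population comparison.

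One step you must tighten for the argument to close: the fluctuation $\vg_t-\bE[\vg_t\mid\vf_t]$ is \emph{not} bounded by the noise floor $\frac{\|\hvf\|\log(1/\delta)}{\sqrt{m}}+\log(\tfrac{d}{\delta})\sqrt{\tfrac{d\mubar\|\hvf\|}{m}}$ alone, because it contains the deviation of the empirical linear drift from $\vA(\vf_t-\fstar)$, a heavy-tailed term proportional to $\|\vf_t-\fstar\|$ (initially of order $\|\Delta_{\hvf}\|$, which can far exceed that floor). You need an additional bound on $\vD(\hat{\vA}-\vA)$ acting on the current error — equivalently on the cross and diagonal blocks $P\hat{\vA}Q$, $Q\hat{\vA}P$, $Q\hat{\vA}Q$ — and must absorb it into the contraction factor, which is exactly where the hypotheses $m\ge \cparam\|\hvf\|d\log^4(\tfrac{m}{\delta})/\mubar$ and $m\ge\cparam\|\hvf\|^2\log^4(\tfrac{m}{\delta})/\mubar^2$ enter, and exactly the content of the paper's control of $H_t$ in Lemma~\ref{lem:irritating_bounds}. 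You gesture at this ("mismatch of $\vA$ relative to $\vD^{-1}$") but attribute the whole fluctuation to the additive noise; with that piece made explicit (again via truncation plus bucketing, since these deviations have infinite naive moments), your proof goes through and yields the stated bounds.
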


\paragraph{Quality Adaptivity of Noisy Phase Retrieval} If $\hvf$ is the spectral method estimate of $\fstar$, and $\hvw = \hvw_{\hvf, \lambda}$ (output of WLS) for appropriately chosen $\lambda = \Tilde{\Theta}(\nicefrac{\norm{\hvf}^2 d}{n})$, then $\norm{\hvw - \wstar}^2 = \Otilde\bigr(\| \fstar\|^2 \bigr(\nicefrac{1}{n} + \left(\nicefrac{d}{n}\right)^{\nicefrac{3}{2}}\bigr)\bigr)$ is lower order compared to $\|\hvf - \fstar\|^2 = \Otilde\left(\| \fstar\|^2 \cdot \nicefrac{d}{n}\right)$. In this case, $\|\vf_K - \fstar\|^2 \leq \Otilde(\norm{\hvw - \wstar}^2)$. This \emph{transfers the error in estimation of $\wstar$ to the error in estimation of $\fstar$}. That is, a better estimate of $\wstar$ leads to a better estimate of $\fstar$, which in turn improves $\wstar$ estimation via WLS. Having developed quality-adaptive algorithms for estimating both $\wstar$ and $\fstar$, the \ouralg~algorithm naturally follows by iteratively alternating between the two. 

\section{Algorithm}
\label{subsec:alg}
Combining our observations in Section~\ref{sec:partial} we present the main algorithm of our work. In the general case, $\fstar$ is arbitrary and unknown. 
For the sake of simplicity, we will assume that we can divide the dataset into $2K$ disjoint parts. We illustrate our algorithm in Figure~\ref{fig:algo}
\begin{algorithm}
\caption{\texttt{\ouralg} } \label{alg:symbol}
\textbf{Require}: $(\vx_1,y_1),\dots,(\vx_{n},y_n) \in \bR^d \times \bR$.  Number of steps $K$. \draftupdate{Number of phase retrieval steps $K_p$}. Weights $\lambda_1, \dots, \lambda_K$, $\mubar_1,\ldots,\mubar_K$ and step sizes $(\alpha_0^1,\alpha_1^1),\ldots,(\alpha_0^K,\alpha_1^K)$.
\begin{algorithmic}[1] 
\STATE Divide the data into $2K$ disjoint parts
\STATE Compute $\hat{\vw}^{(1)}$ with OLS (Algorithm \ref{alg:ols}) and $1^{\text{st}}$ data partition
\STATE Compute $\hat{\vf}^{(1)}$ using the spectral method with input $\hat{\vw}^{(1)}$ (Algorithm \ref{alg:spectral}) and $2^{\text{nd}}$ data partition
\FOR{$k \in \{1,\ldots,K-1 \}$}
    \STATE Compute $\hat{\vw}^{(k+1)}$ using WLS (Algorithm \ref{alg:wls}) with input parameters $\lambda_k$ and $\hat{\vf}^{k}$ and data from $(2k+1)^{\text{th}}$ partition
    \STATE Compute $\hat{\vf}^{(k+1)}$ using Phase Retrieval (Algorithm \ref{alg:phase_retrieve}) \draftupdate{for $K_p$ steps} with inputs  $(\hat{\vw}^{(k)},\hat{\vf}^{(k)},\bar{\mu}_k,\alpha_0^k,\alpha_1^k)$ and data from $(2k+2)^{\text{th}}$ partition
\ENDFOR
\STATE Output $\hat{\vw}^{K}$.
\end{algorithmic}
\end{algorithm}

\begin{figure}[h]
\centering
\includegraphics[width=0.9\linewidth]{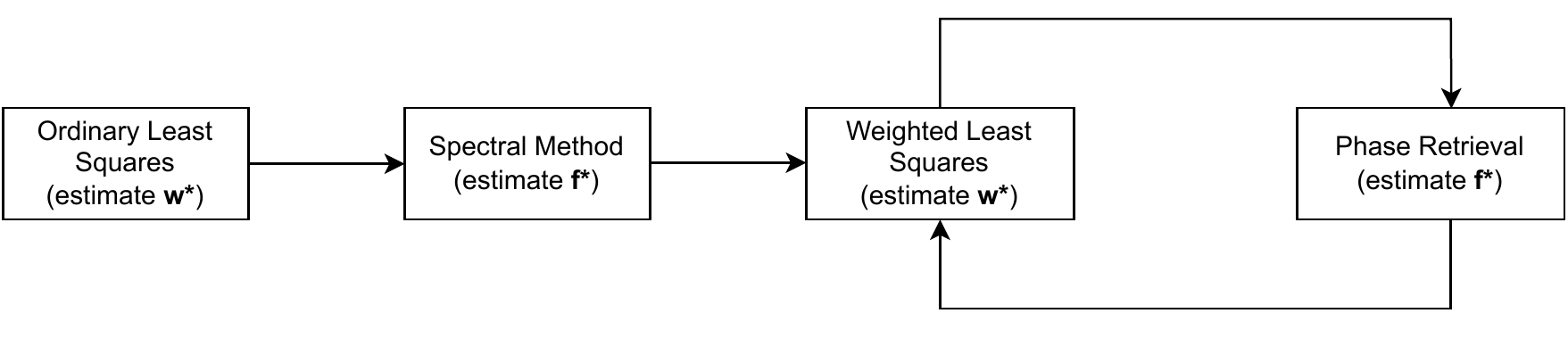}
\caption{An Illustration of \ouralg}
\label{fig:algo}
\end{figure}

\noindent \textbf{Regression with Multiplicative Noise} When $\wstar =   \pm \fstar$, the problem coincides with linear regression with multiplicative noise. For this special case, we design a much simpler algorithm (\ourmultalg~, see Appendix~\ref{app:ourmultalg}), which given an estimate of $\hvw \approx \wstar$, iteratively applies the weighted least squares procedure (Algorithm~\ref{alg:wls}) to get a better estimate with $\hvf = \hvw $.

\section{Main Results}
\label{sec:results}
Our first main result stated below establishes the estimation guarantee for $\wstar$ via \ouralg. The proof of this theorem is presented in Appendix~\ref{proof:yinyang-full-proof}.
\begin{theorem}[\ouralg]
\label{thm:yinyang-full-proof}
Consider any $\delta \in (0, \nicefrac{1}{2})$. Let {$S_k = \sum_{j=0}^{k} \nicefrac{1}{2^j}$}. Then, for $n \geq d\polylog(d/\delta)$, $K = \lceil\log_2(n)\rceil$ and \draftupdate{$K_p = \Theta(\log(n/K))$}, if we run \ouralg~(Algorithm~\ref{alg:symbol}) with $\mubar_{k} = \|\hvf_k\|$ $\cdot \sqrt{\max(\tfrac{k}{m},\tfrac{(k)d^2}{m^2}) + \left(\nicefrac{d}{m}\right)^{S_k}}\polylog(\tfrac{nd}{\delta})$, $\lambda_{k} =  \|\hvf_k\|^2 \left(\max(\tfrac{k}{m},\tfrac{(k)d^2}{m^2}) + \left(\nicefrac{d}{m}\right)^{S_k}\right)\polylog(\tfrac{nd}{\delta})$
then the output $\hvw_K$ satisfies the following with probability at least $1 - \delta$:
$$\norm{\hvw_K - \wstar}^2 \leq \Otilde\left(\norm{\fstar}^2 \left(\nicefrac{1}{n} + \nicefrac{d^2}{n^2}\right)\right),$$
\end{theorem}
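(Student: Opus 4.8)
The plan is to analyze \ouralg~by induction over its $K=\lceil\log_2 n\rceil$ outer iterations, tracking the two running squared errors $\epsilon^w_k:=\|\hvw^{(k)}-\wstar\|^2$ and $\epsilon^f_k:=\|\hvf^{(k)}-\fstar\|^2$ (the latter modulo sign, as in the remark following Theorem~\ref{thm:spec-error-bound}). Put $m=\lfloor n/(2K)\rfloor$, so $m=\Thetatilde(n)$, and let $e_k:=\max(k/m,\,kd^2/m^2)+(d/m)^{S_k}$, so that the prescribed parameters obey $\lambda_k,\ \mubar_k^2\in\Theta\!\left(\|\hvf^{(k)}\|^2 e_k\right)$ up to the fixed $\polylog(nd/\delta)$ multiplier $P$ built into their definitions. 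The invariant I maintain is: on a single high-probability event, $\max(\epsilon^w_k,\epsilon^f_k)\le\|\fstar\|^2 e_k P$ for all $k\le K$, and in particular $\|\hvf^{(k)}\|=\Theta(\|\fstar\|)$ throughout (so the weights $(\langle\hvf^{(k)},\vx_i\rangle^2+\lambda_k)^{-1}$ and the step sizes in Algorithm~\ref{alg:phase_retrieve} stay in valid ranges). Granting this at $k=K$ finishes the proof: since $S_K=2-2^{-\lceil\log_2 n\rceil}\ge 2-1/n$ we get $(d/m)^{S_K}=\Theta((d/m)^2)=\Thetatilde((d/n)^2)$, while $\max(K/m,Kd^2/m^2)=\Otilde(1/n+d^2/n^2)$; hence $e_K=\Otilde(1/n+d^2/n^2)$ and $\|\hvw_K-\wstar\|^2=\Otilde(\|\fstar\|^2(1/n+d^2/n^2))$.

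For the base case, Theorem~\ref{thm:ols-error-bound} on the first data partition gives $\epsilon^w_1=\Otilde(\|\fstar\|^2 d/m)$, and Theorem~\ref{thm:spec-error-bound} with $\hvw=\hvw^{(1)}$ then gives $\epsilon^f_1=\Otilde(\|\fstar\|^2 d/m)$ (the $\epsilon^2/\|\fstar\|^2$ term there is lower order, as $d/m\le1$), matching $e_1$ in its leading exponent. For the inductive step, assume the invariant at iteration $k$. First I check that the prescribed $\lambda_k$ meets the hypothesis $\lambda_k\ge\max\{\epsilon^f_k,\ \|\hvf^{(k)}\|^2 d^2/m^2\}\,P$ of Theorem~\ref{thm:wls-error-bound} (using $\epsilon^f_k\le\|\fstar\|^2 e_k P$ and $(d/m)^{S_k}\ge(d/m)^2$), and apply that theorem; dropping the two lower-order terms (those carrying $\epsilon^f_k\le\|\fstar\|^2 e_k\le\|\fstar\|^2$) yields $\epsilon^w_{k+1}\le C_1 P_1\,\|\fstar\|^2\big(1/m+(d/m)\sqrt{e_k P}\big)$, where $C_1,P_1$ are the constant and $\polylog$ from Theorem~\ref{thm:wls-error-bound}. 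Next I check the hypotheses of Theorem~\ref{thm:phase_r_mult} for the phase-retrieval call (run for $K_p=\Theta(\log m)$ inner steps with the current WLS estimate $\hvw^{(k+1)}$ and noise estimate $\hvf^{(k)}$): $\mubar_k<\|\hvf^{(k)}\|$ and $\mubar_k\ge\cparam\max(\|\Delta_{\hvw}\|,\|\Delta_{\hvf}\|)\log(m/\delta)$ follow from the invariant and the choice of $\mubar_k$; the sample-size requirements $m\ge\cparam\max\!\big(\tfrac{\|\hvf^{(k)}\|}{\mubar_k}\log(m/\delta),\ \tfrac{\|\hvf^{(k)}\| d}{\mubar_k}\log^4(m/\delta),\ \tfrac{\|\hvf^{(k)}\|^2}{\mubar_k^2}\log^4(m/\delta)\big)$ follow by casing on whether $m\ge d^2$ or $m<d^2$, using $\mubar_k\ge\|\hvf^{(k)}\|\max(\sqrt{1/m},\,d/m)\,P$, the assumption $n\ge d\polylog(d/\delta)$, and taking $P\ge\cparam\log^4(nd/\delta)$. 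Since $K_p=\Theta(\log m)$, the term $\exp(-\gamma K_p)\|\Delta_{\hvf}\|$ is negligible, and Theorem~\ref{thm:phase_r_mult} gives, after substituting $\|\Delta_{\hvw}\|^2=\epsilon^w_{k+1}$, $\|\Delta_{\hvf}\|^2=\epsilon^f_k$ and the invariant, $\epsilon^f_{k+1}\le C_2 P_2\,\|\fstar\|^2\big(1/m+(d/m)\sqrt{e_k P}\big)$ up to cross terms, each of which carries an extra shrinking factor ($\sqrt{d/m}$ or $\sqrt{e_{k+1}}$). Closing the induction reduces to the deterministic inequality $1/m+(d/m)\sqrt{e_k}\le e_{k+1}$, which rests on $1+S_k/2=S_{k+1}$ (so $(d/m)\cdot(d/m)^{S_k/2}=(d/m)^{S_{k+1}}$), on $(d/m)\sqrt{\max(k/m,kd^2/m^2)}\le\max((k+1)/m,(k+1)d^2/m^2)$ (from $d\le m$), and on $1/m\le\max((k+1)/m,(k+1)d^2/m^2)$; choosing the single fixed $P$ large enough that $P\ge(C_1 C_2 P_1 P_2)^2$ and $\polylog(d/\delta)\ge(C_1 C_2 P_1 P_2)^2$ (this is exactly the role of the $\polylog$ inside $\lambda_k,\mubar_k$) then gives $\max(\epsilon^w_{k+1},\epsilon^f_{k+1})\le\|\fstar\|^2 e_{k+1}P$. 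Finally, a union bound over the $2+2(K-1)$ subroutine invocations (OLS, spectral, and per iteration one WLS and one phase-retrieval call, the latter itself a union over $K_p$ batches), each run at confidence $\delta'=\delta/\Theta(KK_p)=\delta/\polylog(n)$, gives overall success probability at least $1-\delta$, with all $\polylog(1/\delta')=\polylog(nd/\delta)$ dependencies absorbed into $\Otilde$.

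I expect the main difficulty to be twofold. First, keeping the invariant consistent with the geometrically-shrinking schedules $\{\lambda_k\},\{\mubar_k\}$: because these decay toward $\Thetatilde(1/n+(d/n)^2)$, the hypotheses of Theorems~\ref{thm:wls-error-bound} and~\ref{thm:phase_r_mult} become tighter at every step — in particular the phase-retrieval sample requirements scale like $\|\hvf^{(k)}\|/\mubar_k$ and $\|\hvf^{(k)}\|^2/\mubar_k^2$, which grow as $\mubar_k\to0$ — so one must check they never bite given only $n\ge d\polylog(d/\delta)$, and one must propagate a single fixed $\polylog$ factor across all $K$ iterations so the subroutines' own constants and $\polylog$ factors do not compound (the cross terms carry shrinking factors that absorb them, whereas the two leading terms pin down $P$ once and for all). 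Second, showing the coupled recursion actually contracts to the right fixed point: WLS turns a better $\fstar$-estimate into a better $\wstar$-estimate, phase retrieval turns that into a better $\fstar$-estimate, and so on, and one must show the two error sequences jointly reach $\Thetatilde(\|\fstar\|^2(1/n+(d/n)^2))$ rather than stalling at the intermediate $\Thetatilde(\|\fstar\|^2(1/n+(d/n)^{3/2}))$ rate of a single WLS pass — this is exactly the exponent bookkeeping through $S_k$, which has to be carried through both subroutine bounds including their cross terms.
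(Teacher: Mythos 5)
Your proposal is correct and follows essentially the same route as the paper's own proof: an induction that alternates Theorem~\ref{thm:wls-error-bound} (WLS) and Theorem~\ref{thm:phase_r_mult} (phase retrieval with $K_p = \Theta(\log m)$ inner steps), with the exponent bookkeeping $S_{k+1} = 1 + \nicefrac{S_k}{2}$, the geometric-mean bound $(\nicefrac{d}{m})^{3/2} \leq \max(\nicefrac{1}{m}, \nicefrac{d^2}{m^2})$, the side invariant $\|\hvf_k\| = \Theta(\norm{\fstar})$, a single $\polylog$ factor fixed independently of $k$ so the subroutine constants do not compound, and a union bound over the $O(\log n)$ iterations, ending with $S_K \geq 2 - \nicefrac{1}{n}$. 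The only quibble is an off-by-one in your base-case labeling: the OLS/spectral errors $\Otilde(\norm{\fstar}^2 \nicefrac{d}{m})$ match $e_0 = \nicefrac{d}{m}$ rather than $e_1$ (which carries $S_1 = \nicefrac{3}{2}$ and can be much smaller when $m \gg d^2$), but this is harmless since your recursion started from $e_0$ yields the identical endpoint.
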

Our second result below establishes the minimax optimality of \ouralg~with a lower bound construction. Let $P_{\vw, \vf}$ denote the heteroscedastic regression model parameterized by $\vw$ and  $\vf$, i.e., $P_{\vw, \vf}$ is a probability measure supported on $\bR^d \times \bR$ such that $(\vx, y) \sim P_{\vw, \vf}$ implies $\vx \sim \cN(0, \vI)$ and $y \ | \  \vx \ \sim \cN(\dotp{\vw}{\vx}, {\dotp{\vf}{\vx}}^2)$. We refer to Appendix~\ref{sec:lb_proof} for the proof.
\begin{theorem}[Minimax Lower Bound for Heteroscedastic Regression]
\label{thm:heteroscedastic-minimax-lb}
Consider any $\fstar \in \bR^d$, and let $\hvw : \left(\bR^d \times \bR\right)^n \to \bR^d$ denote any arbitrary measurable function that estimates $\wstar$ given i.i.d samples $(\vx_i, y_i)_{i \in [n]} \ \iidsim \ P_{\wstar, \fstar}$. Furthermore, let $\mathcal{W}$ denote the family of estimators $\hvw$. Then, 
\begin{align*}
    \inf_{\hvw \in \mathcal{W}} \sup_{\wstar \in \bR^d} \bE_{(\vx_i, y_i)_{i \in [n]} \ \iidsim \ P_{\wstar, \fstar}} \left[ \norm{\hvw - \wstar}^2 \right] \geq \Omegatilde\left(\tfrac{\norm{\fstar}^2}{n} + \tfrac{\norm{\fstar}^2 d^2}{n^2}\right).
\end{align*}
\end{theorem}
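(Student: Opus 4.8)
\textbf{Proof proposal for Theorem~\ref{thm:heteroscedastic-minimax-lb}.}

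The plan is to split the lower bound into two additive pieces, $\Omega(\norm{\fstar}^2/n)$ and $\Omegatilde(\norm{\fstar}^2 d^2/n^2)$, and prove each separately via a variant of Le Cam's / Assouad's method, the union (up to constants) giving the claimed bound. The first term is the easy one: restrict attention to $\wstar \in \{\fstar t : t \in \bR\}$, the one-dimensional family along the direction $\fstar$. Along this direction the observation model is, conditionally on the design, a Gaussian location problem with heteroscedastic but \emph{finite} variances, and a standard two-point argument (or the van Trees/Cramér–Rao inequality) gives $\Omega(\norm{\fstar}^2/n)$; here nothing pathological happens because moving $\wstar$ parallel to $\fstar$ only shifts a Gaussian whose variance stays bounded away from $0$ on a constant fraction of samples. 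The real work is the $\Omegatilde(\norm{\fstar}^2 d^2/n^2)$ term, coming from the $d-1$ directions orthogonal to $\fstar$, and this is where the obstacle advertised in the introduction bites: for two regressors $\wstar_1 \neq \wstar_2$ differing in a direction $\vv \perp \fstar$, the likelihoods $P_{\wstar_1,\fstar}$ and $P_{\wstar_2,\fstar}$ have \emph{infinite} KL divergence (on the event $\iprod{\fstar}{\vx}=0$, the conditional variances vanish while the means differ), so Fano and vanilla Assouad are unavailable.

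To handle this I would use the ``refined Assouad'' route sketched in the introduction. First, by the rotational symmetry of $\cN(0,\vI)$ and of the problem, condition on the design $X = (\vx_1,\dots,\vx_n)$ and prove a lower bound for a \emph{typical} fixed design, then integrate back. Fix the hypothesis cube: for $\tau \in \{\pm 1\}^{d-1}$ set $\wstar_\tau = \frac{\fstar}{\norm{\fstar}} \cdot 0 + \rho \sum_{j=1}^{d-1} \tau_j \ve_j^\perp$ (an orthonormal basis of $\fstar^\perp$), with separation scale $\rho$ to be tuned to $\rho^2 \asymp \norm{\fstar}^2 d/n^2$ so that $\norm{\wstar_\tau - \wstar_{\tau'}}^2 \asymp \rho^2 \cdot \mathrm{Ham}(\tau,\tau')$ and the target risk is $\asymp (d-1)\rho^2 \asymp \norm{\fstar}^2 d^2/n^2$. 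The key per-coordinate quantity in Assouad is the testing affinity between $P_{\wstar_\tau,\fstar}$ and $P_{\wstar_{\tau \oplus j},\fstar}$; conditionally on the design this is a product over $i\in[n]$ of Gaussian affinities (Hellinger/TV), and for the $i$-th sample the KL contribution is $\tfrac{1}{2}\rho^2 (x_{i,j}^\perp)^2 / \iprod{\fstar}{\vx_i}^2$. Summing over $i$ gives a random variable $Z_j := \tfrac{1}{2}\rho^2 \sum_{i=1}^n (x_{i,j}^\perp)^2/\iprod{\fstar}{\vx_i}^2$ whose expectation is $+\infty$ because $\bE[1/\iprod{\fstar}{\vx}^2]=\infty$. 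The crucial step is therefore to show that for a design that is ``typical'' (an event of probability $\geq 1 - 1/\polylog$ or so), one nonetheless has $\sum_{i=1}^n (x_{i,j}^\perp)^2/\iprod{\fstar}{\vx_i}^2 = \Otilde(n^2/\norm{\fstar}^2)$, so that with the above choice of $\rho$ the KL stays $\Otilde(1)$ — small enough for Assouad to yield the $\Omega$ bound.

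The heavy-tailed sum bound is the main obstacle, and I would attack it by the ``covariate bucketing'' argument hinted at in the introduction: partition the samples according to the dyadic scale of $|\iprod{\fstar}{\vx_i}|$, i.e. bucket $B_\ell = \{i : |\iprod{\fstar}{\vx_i}|/\norm{\fstar} \in [2^{-\ell-1}, 2^{-\ell})\}$ for $\ell = 0,1,\dots,L$ with $L \asymp \log n$, plus a tail bucket for extremely small $|\iprod{\fstar}{\vx_i}|$. Within $B_\ell$ the weight $1/\iprod{\fstar}{\vx_i}^2$ is $\asymp 4^\ell/\norm{\fstar}^2$, and since $\bP(i \in B_\ell) \asymp 2^{-\ell}$ the bucket has $|B_\ell| = \Otilde(n 2^{-\ell})$ samples with high probability; moreover the $(x_{i,j}^\perp)^2$ are, conditionally on $\iprod{\fstar}{\vx_i}$, just $\chi^2_1$ variables independent of the weights (orthogonality of $\ve_j^\perp$ to $\fstar$), so $\sum_{i\in B_\ell}(x_{i,j}^\perp)^2 = \Otilde(|B_\ell|) = \Otilde(n 2^{-\ell})$. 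Hence bucket $\ell$ contributes $\Otilde((4^\ell/\norm{\fstar}^2)\cdot n 2^{-\ell}) = \Otilde(n 2^\ell/\norm{\fstar}^2)$, and the dominant bucket is $\ell \asymp L \asymp \log n$ giving $\Otilde(n \cdot n /\norm{\fstar}^2) = \Otilde(n^2/\norm{\fstar}^2)$ after checking that the extreme-tail bucket (where $|B_\ell|$ might be $0$ or $1$) contributes a comparable amount — this is exactly where one pays the extra logarithmic factor and where the argument must be done carefully, controlling simultaneously over all $d-1$ coordinates $j$ and over $\tau$ by a union bound (this costs only $\polylog$ since $d \leq n$). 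With this high-probability design event in hand, Assouad's lemma over the cube — using that on the good event every per-coordinate KL is $\Otilde(1)$ and hence every per-coordinate TV affinity is bounded below by a constant — delivers $\inf_{\hvw}\sup_\tau \bE[\norm{\hvw - \wstar_\tau}^2 \mid X] \gtrsim (d-1)\rho^2/\polylog = \Omegatilde(\norm{\fstar}^2 d^2/n^2)$ on the good design event, and taking expectation over $X$ (the bad event contributes a lower-order negative correction) completes the second term. Adding the two terms finishes the proof.
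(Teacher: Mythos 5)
Your overall architecture matches the paper — the $\Omega(\norm{\fstar}^2/n)$ term via a two-point argument along the direction $\fstar$ is exactly the paper's coarse lemma, and your dyadic bucketing of $|\dotp{\fstar}{\vx_i}|$ to tame the heavy-tailed sums is the right tool, with the correct conclusion that for any \emph{fixed} unit $u\perp\fstar$ one typically has $\sum_{i}\dotp{u}{\vx_i}^2/\dotp{\fstar}{\vx_i}^2=\Otilde(n^2/\norm{\fstar}^2)$. The gap is in the step where you claim this makes the per-coordinate KL ``$\Otilde(1)$'' at separation $\rho^2\asymp\norm{\fstar}^2 d/n^2$: plugging your own bound in gives $Z_j\asymp\rho^2\cdot n^2/\norm{\fstar}^2\asymp d$, not $\Otilde(1)$. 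This is not repairable by sharper concentration. The one or two near-degenerate samples with $\dotp{\fstar}{\vx_{i^*}}^2\asymp\norm{\fstar}^2/n^2$ satisfy $\norm{Q\vx_{i^*}}^2\approx d$, so their mass spreads over \emph{all} coordinates of any fixed orthonormal basis of $\fstar^{\perp}$; a coordinate escapes the blow-up only if it is nearly orthogonal to every sample in the low buckets, which happens for a vanishing fraction of a fixed basis. Consequently, with a fixed cube almost every per-coordinate affinity is exponentially small at your choice of $\rho$, and restoring $O(1)$ per-coordinate KL forces $\rho^2\lesssim\norm{\fstar}^2/n^2$, so Assouad in this form tops out at $\Omegatilde(\norm{\fstar}^2 d/n^2)$ — a factor $d$ short of the theorem. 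This is precisely the failure mode the paper flags: with fixed basis vectors, all the KL terms become moderately large and the bound is vacuous.

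The missing ingredient is a \emph{data-dependent change of basis}, which your ``typical fixed design'' reduction nearly sets up but never uses. Because the prior can be taken to be the uniform spherical distribution on $\fstar^{\perp}$ (rotationally invariant), one may condition on $\vx_{1:n}$ and choose the orthonormal directions $\vvu_1,\dots,\vvu_{d-1}$ entering the coordinate-flip coupling as measurable functions of the design: bucket the samples by $\dotp{\fstar}{\vx_j}^2\in[2^k\gamma,2^{k+1}\gamma)$ with $\gamma\asymp 1/n^2$, and build the basis by Gram--Schmidt on eigenvectors of the per-bucket matrices $\vH_k=\sum_{j\in B_k}\vx_j\vx_j^{\intercal}$, starting from the smallest bucket. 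Then the near-singular samples inflate the KL along only $\Otilde(1)$ coordinates, coordinates assigned to bucket $k$ receive no contribution from buckets $l<k$, and the aggregate $\sum_{i}\sqrt{\mathsf{KL}_i}$ is smaller by a factor of roughly $\sqrt{d}$ than what any fixed basis yields. Combined with a per-coordinate term of the form $\delta_i^2\bigl(1-\sqrt{2\,\mathsf{KL}_i}\bigr)^{+}$ — so that the few coordinates with enormous KL simply contribute nothing instead of destroying the bound — this permits $\alpha\asymp d/n$ and delivers $\Omegatilde(\norm{\fstar}^2 d^2/n^2)$. Without this device (or an equivalent way of localizing the information carried by the near-singular samples into $\Otilde(1)$ directions), your plan proves only the weaker $d/n^2$ rate; also note that the union bound ``over $\tau$'' you invoke would cost a factor exponential in $d$, not $\polylog$, though this is moot once the basis is chosen as above.
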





\section{Key Ideas behind Proofs}
\label{sec:proof-ideas}

\subsection{Weighted Least Squares (WLS) : Theorem~\ref{thm:wls-error-bound}}
For simplicity, consider the setting where $\hvf = \fstar$ and $\lambda = 0$ in Algorithm~\ref{alg:wls}. The empirical covariance matrix $\vH = \frac{1}{n}\sum_{i=1}^{n}\frac{\vx_i\vx_i^{\intercal}}{\langle \vx_i,\fstar\rangle^2}$ has infinite expectation. We show via a straight forward calculation that the expected squared error $\bE\left[\norm{\wwls - \wstar}^2\right] = \frac{1}{n}\mathbb{E} \Tr(\vH^{-1})$. The main technical difficulty is to precisely upper bound this trace. Note that $(\fstar)^{\intercal}\vH\fstar = 1$ and for $\vvu \perp \fstar$, we have that $\langle\vx_i,\vvu\rangle$ is independent of $\langle \vx_i,\fstar\rangle$. Note that $\vvu^{\intercal}\vH\vvu = \frac{1}{n}\sum_{i=1}^{n}\frac{\langle\vvu,\vx_i\rangle^2}{\langle\fstar,\vx_i\rangle^2} $. We show that, typically there exist $O(1)$ samples $i$ such that $|\langle\vx_i,\fstar\rangle|$ has the value $\tilde{\Theta}(\frac{1}{n})$ and the corresponding value of $\langle \vvu,\vx_i\rangle$ is $\tilde{\Theta}(1)$. We show that such extreme values contribute most to $\frac{1}{n}\sum_{i=1}^{n}\frac{\langle\vvu,\vx_i\rangle^2}{\langle\fstar,\vx_i\rangle^2} $, allowing us to show that $\vvu^{\intercal}\vH\vvu = \tilde{\Theta}(n)$ followed by a covering argument to show that $\vH \succeq \norm{\fstar}^{-2} \left[\ve \ve^T + \frac{n}{d}\left(\vI - \ve \ve^T\right)\right]$ with high probability ($\ve := \nicefrac{\fstar}{\norm{\fstar}}$, the extra $1/d$ appearing due to the covering argument). This concludes the proof of Theorem~\ref{thm:wls-error-bound} in this setting. Handling $\fhat \neq \fstar$ and $\lambda > 0$ are straight forward and is described in the full proof of Theorem~\ref{thm:wls-error-bound} in Appendix~\ref{subsec:wls_pf}.
\subsection{Phase retrieval with multiplicative noise : Theorem~\ref{thm:phase_r_mult}}
Algorithm~\ref{alg:phase_retrieve} estimates $\fstar$ given prior estimates of $\fstar$ and $\wstar$ via pseudogradient descent on a carefully designed nonconvex squared loss, which draws motivation from similar approaches in the noise-free phase retrieval literature~\citep{chen2019gradient, netrapalli2013phase}. Consider the hypothetical scenario in which we have apriori access to both $\fstar$ and $\wstar$, and iid samples $(\vx_i, y_i)_{i \in [n]}$. We observe that $(y_i - \langle \wstar,\vx_i\rangle)^2 = \epsilon_i^2 \langle \fstar,\vx_i\rangle^2$, where
$\mathbb{E}\left[\epsilon_i^2 \langle \fstar,\vx_i\rangle^2|\vx_i\right] = \langle \fstar,\vx_i\rangle^2$ and $\mathsf{var}(\epsilon_i^2 \langle \fstar,\vx_i\rangle^2|\vx_i) =  3\langle \fstar,\vx_i\rangle^4$. Hence, $\fstar$ is the (expected) minimizer of the following nonconvex quartic loss function which cannot be computed from samples.
\begin{align*}
    \lossmul(\vf) = \frac{1}{m}\sum_{i=1}^{m}\frac{\left[(y_i - \langle \wstar,\vx_i\rangle)^2 - \langle \vf,\vx_i\rangle^2\right]^2}{\langle \fstar,\vx_i\rangle^4}.
\end{align*}
The choice of this loss is also motivated by the effectiveness of the WLS procedure for estimating $\wstar$. In fact, writing $\vA=\vf \vf^{\intercal}$, $\lossmul$ can be interpreted as a weighted least squares objective on the manifold of rank 1 matrices, equipped with the Hilbert Schmidt inner product:
\begin{align*}
    \lossmul(\vA) = \frac{1}{m}\sum_{i=1}^{m}\frac{\left[(y_i - \langle \wstar,\vx_i\rangle)^2 - \dotp{\vA}{\vx_i \vx^T_i}_{\mathsf{HS}}\right]^2}{{\dotp{\vA}{\vx_i \vx^{T}_i}}^{2}_{{\mathsf{HS}}}}.
\end{align*}
In a practical setting, where we only have access to estimates $\hvf$ and $\hvw$, we consider the following computable approximation of $\lossmul$:
\begin{align*}
    \bar{\mathcal{L}}(\vf) &= \frac{1}{m}\sum_{i=1}^{m}\mathbbm{1}(|\langle\hat{\vf},\vx_i\rangle| \geq \bar{\mu})\frac{\left[(y_i - \langle \hvw,\vx_i\rangle)^2 - \langle \vf,\vx_i\rangle^2\right]^2}{\langle \hvf,\vx_i\rangle^4},
\end{align*}
where the `regularization' term $\mathbbm{1}(|\langle\hat{\vf},\vx_i\rangle| \geq \bar{\mu})$ ensures that the Hessian of this loss is well-defined. The \emph{pseudogradient} $\cG$, which is an approximation of $\nabla \bar{\mathcal{L}}(\vf)$, is then defined as follows: 
$$\cG(\vf) := \frac{1}{m}\sum_{i=1}^{m}\mathbbm{1}(|\langle\hat{\vf},\vx_i\rangle| \geq \bar{\mu})\frac{\left[\langle \vf,\vx_i\rangle^2 - (y_i - \langle \hat{\vw},\vx_i\rangle)^2 \right]\langle \hat{\vf},\vx_i\rangle \vx_i}{\langle \hat{\vf},\vx_i\rangle^4}$$
We show in Appendix \ref{sec:phs_rtr_pf} that $\mathcal{G}(\vf)$ behaves like the gradient of a strongly convex function whenever $\vf$ is initialized sufficiently close to $\fstar$. As a consequence, we show that the pseudogradient descent procedure $\vf_{t+1} = \vf_t - \vD \cG(\vf_t)$, initialized with $\vf_0 = \hvf$ under appropriately chosen matrix valued step-sizes $\vD$ (where the matrix valued step-size `pre-conditions' the gradient similar to the modified Newton method), converges exponentially fast to a local neighborhood of $\fstar$.
\subsection{\ouralg~Guarantees : Theorem~\ref{thm:yinyang-full-proof}}
In order to prove Theorem~\ref{thm:yinyang-full-proof} (full proof in Appendix~\ref{proof:yinyang-full-proof}), we use the fact that the OLS (Algorithm~\ref{alg:ols}) estimate $\hvw_0$ satisfies $\norm{\hvw_0-\wstar}^2 = \Otilde(\| \fstar \|^2 \cdot \nicefrac{d}{n})$. The spectral method (Algorithm~\ref{alg:spectral}) then takes this as the input and estimates $\|\hvf_0 - \fstar\|^2 = \Otilde(\| \fstar \|^2 \cdot \nicefrac{d}{n})$. Running WLS (Algorithm~\ref{alg:wls}) with input $\hvf_0$ and appropriately chosen $\lambda_0$ gives $\|\hvw_1 - \wstar\|^2 = \tilde{O}\bigr(\tfrac{1}{n} + (\tfrac{d}{n})^{\tfrac{3}{2}}\bigr)$. The phase retrieval algorithm with input $\hvw_1$ and $\hvf_0$ then transfers this error bound to $\hvf_1$, giving $\|\hvf_1-\fstar\|^2 = \tilde{O}\bigr(\tfrac{1}{n} + (\tfrac{d}{n})^{\nicefrac{3}{2}}\bigr)$. This iterative procedure gives us $\|\hvf_K-\fstar\|^2, \|\hvw_K-\wstar\|^2 = \tilde{O}\left(\tfrac{1}{n} + (\tfrac{d}{n})^{2}\right)$ for $K = \lceil\log_2(n)\rceil$.




\subsection{Lower Bound : Theorem~\ref{thm:heteroscedastic-minimax-lb}}
The key technical challenge in the lower bound analysis is that given any $\fstar \in \bR^d$ and two instances of the heteroscedastic regression problem, $P_{\vw_1, \fstar}$ and $P_{\vw_2, \fstar}$, $\KL{P_{\vw_1, \fstar}}{P_{\vw_2, \fstar}} = \infty $ unless $\vw_1 - \vw_2$ is parallel to $\fstar$. This precludes the direct use of standard techniques such as LeCam's method or Assoud's Lemma. We first obtain a coarse lower bound by considering instances $P_{\vw_1, \fstar}$ and $P_{\vw_2, \fstar}$ such that $\vw_1 - \vw_2$ is parallel to $\fstar$. Here, $\KL{P_{\vw_1, \fstar}}{P_{\vw_2, \fstar}} = \nicefrac{\norm{\vw_1 - \vw_2}^2}{\norm{\fstar}^2}$, and a direct application of LeCam's method obtains a lower bound of $\Omega(\nicefrac{\norm{\fstar}^2}{n})$. To obtain the finer $\Omegatilde(\nicefrac{\norm{\fstar}^2 d^2}{n^2})$ lower bound, we develop a refined version of Assoud's lemma for heavy tailed random variables. First, we lower bound the minimax risk by the Bayes risk over the uniform spherical distribution supported over the orthogonal subspace of $\fstar$ as follows.
\begin{align*}
    \sup_{\vw \in \bR^d} \bE_{(\vx_i, y_i)_{i \in [n]} \ \iidsim \ P_{\wstar, \vf}}\left[\norm{\hvw - \wstar}^2\right] &\geq \bE_{\vx_{1:n}} \left[ \bE_{\vw \sim S_{\alpha}(\ve_{1:d-1})}\left[\bE_{y_{1:n} | \vx_{1:n}}\left[\norm{\hvw - \wstar}^2\right]\right]\right]
\end{align*}
Here, $S_{\alpha}(\ve_{1:d-1})$ represents the uniform spherical distribution in the subspace orthogonal to $\fstar$. We now exploit the rotational symmetry of this distribution to perform a \emph{data-dependent change of basis}, i.e., we choose basis vectors $\vvu_1, \ldots, \vvu_{d}$ such that $\vvu_d = \nicefrac{\fstar}{\norm{\fstar}}$ and $\vvu_{1:d-1}$ are functions of $\vx_{1:n}$ to be chosen later. As a consequence of the rotational symmetry, $S_{\alpha}(\ve_{1:d-1})$ is equal in distribution to $S_{\alpha}(\vvu_{1:d-1})$. Combining this data-dependent change of basis with a careful coupling argument allows us to (approximately) lower bound the Bayes risk by a fixed-design hypothesis testing problem between $\wstar$ and $\wstar + \alpha \vvu_i$, defined as follows.\small
\begin{align}
\label{eqn:fixed-design-test-lower-bound}
    \bE_{\vw \sim S_{\alpha}(\ve_{1:d-1})}\left[\bE_{y_{1:n} | \vx_{1:n}}\left[\norm{\hvw - \wstar}^2\right]\right] &\geq \frac{\alpha^2}{d}\sum_{i=1}^{d-1} \bE_{\wstar \sim S_{\alpha}(\vvu_{1:d-1})}\left[1 - \sqrt{\KL{P_{\wstar + \alpha \vvu_i, \fstar}}{P_{\wstar, \fstar}|\vx_{1:n}}}\right]
\end{align}
\normalsize
We choose the vectors $\vvu_{1:d-1}$ such that $\sum_{i=1}^{d} \sqrt{\KL{P_{\wstar + \alpha \vvu_i, \fstar}}{P_{\wstar, \fstar}|\vx_1,\dots,\vx_n}}$ is as small as possible. The key challenge here lies in the fact that $\KL{P_{\wstar + \alpha \vvu_i, \fstar}}{P_{\wstar, \fstar}|\vx_1,\dots,\vx_n}$ has infinite expectation for fixed $\vvu_i$. Thus, when $\vvu_i$ are fixed basis vectors, we obtain a vacuous lower bound since all the KL terms can become moderately large. We resolve this with a careful bucketing argument to choose $\vvu_i$ such that in the sum $\sum_{i=1}^{d} \sqrt{\KL{P_{\wstar + \alpha \vvu_i, \fstar}}{P_{\wstar, \fstar}|\vx_1,\dots,\vx_n}}$, the $\mathsf{KL}$ terms corresponding to small $i$ are very large and those corresponding $i$ close to $d$ are small. The properties of the square root ensures that this gives a better upper bound. This is achieved by partitioning the data into buckets $B_k$ defined as,
\begin{align*}
    B_k = \{j \in [n]: \dotp{\fstar}{\vx_j}^2 \in [2^{k}\gamma,2^{k+1}\gamma)\}, k \leq K
\end{align*}
where $\gamma = c/n^2$ and $K = O\left(\log(nd)\right)$. The design matrix is accordingly split as $\vH_k = \sum_{j \in B_k} \vx_j \vx^{\intercal}_j$. The basis vectors $\vvu_{1:d}$ are then chosen via Gram-Schmidt orthogonalization on the eigenvectors of $\vH_k$ with non-zero eigenvalues starting from $k =0$ until we obtain $d-1$ such vectors. Under this choice of $\vvu_{1:d}$, each term of the summation in the hypothesis testing problem~\eqref{eqn:fixed-design-test-lower-bound} can be lower bounded as $1 - \frac{\alpha n}{d} \polylog(nd)$. Plugging this into~\eqref{eqn:fixed-design-test-lower-bound} and choosing $\alpha$ appropriately gives the desired $\Omegatilde\bigr(\norm{\fstar}^2\frac{d^2}{n^2}\bigr)$ lower bound. The detailed proof is presented in Appendix~\ref{sec:lb_proof}.

\section{Experimental Results} \label{sec:exp}
\begin{figure}
\centering
\label{fig:experiment}
    \subfigure{
    \centering
    \includegraphics[width =0.21 \linewidth]{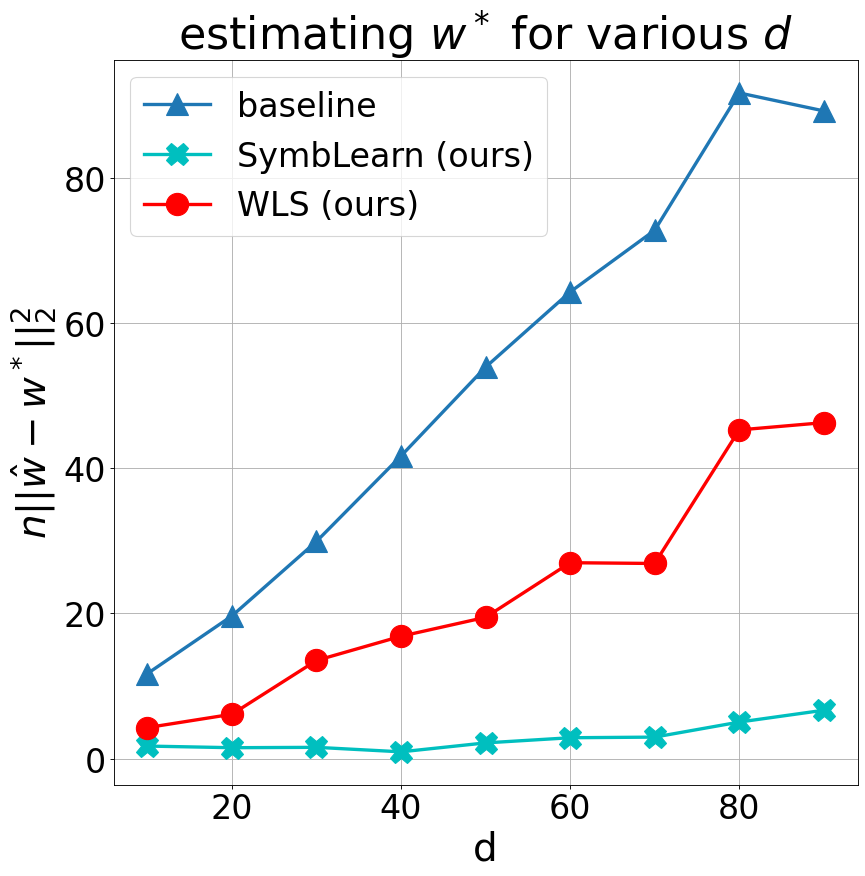}\label{fig:w_vs_d}}
    \hfill
    \subfigure{
    \centering
    \includegraphics[width =0.23 \linewidth]{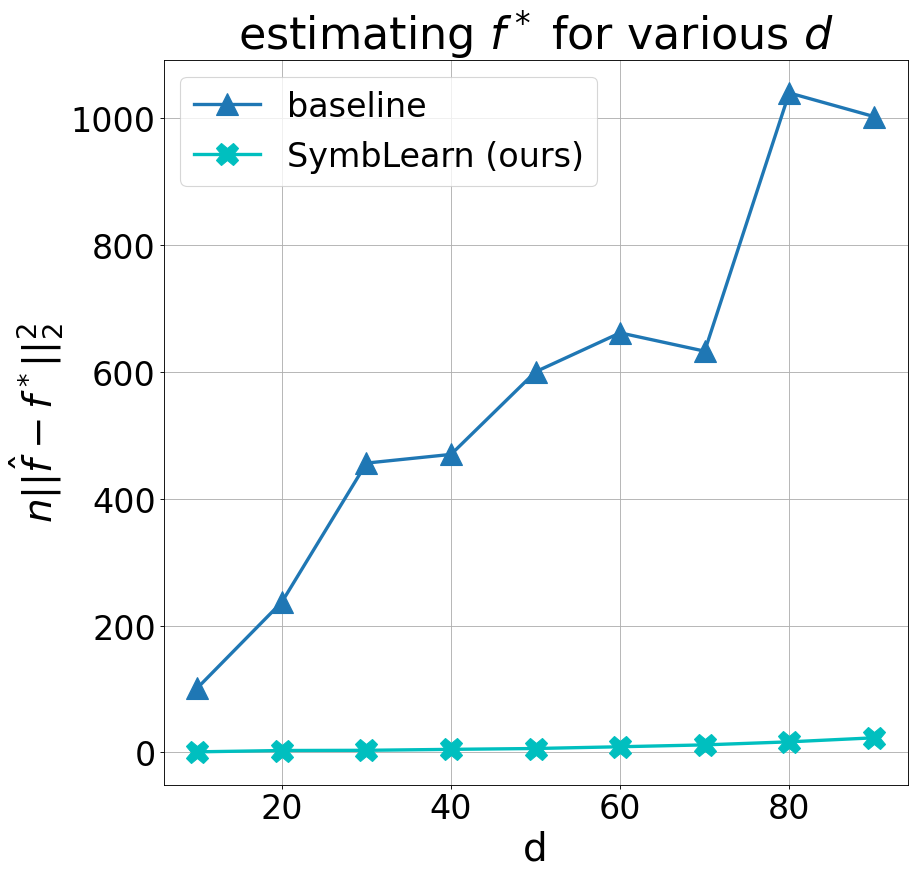}
    \label{fig:f_vs_d}
    }
    \hfill
   \subfigure{
    \centering
    \includegraphics[width =0.23\linewidth]{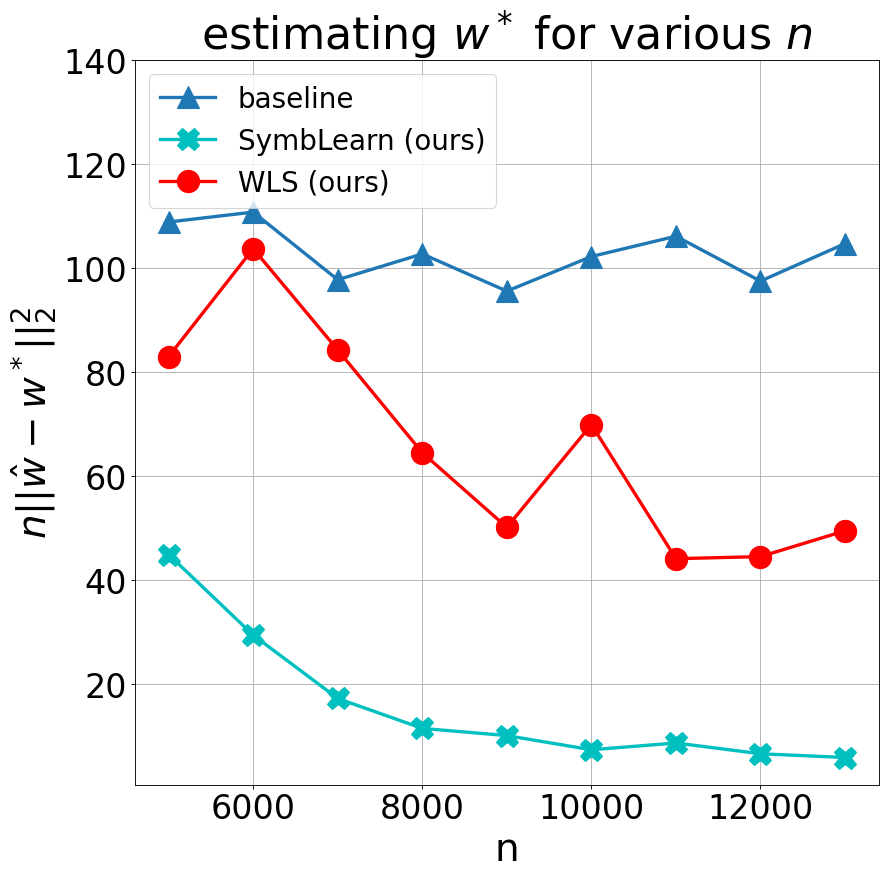} \label{fig:w_vs_n}}
    \hfill
    \subfigure{
    \centering
    \includegraphics[width =0.23 \linewidth]{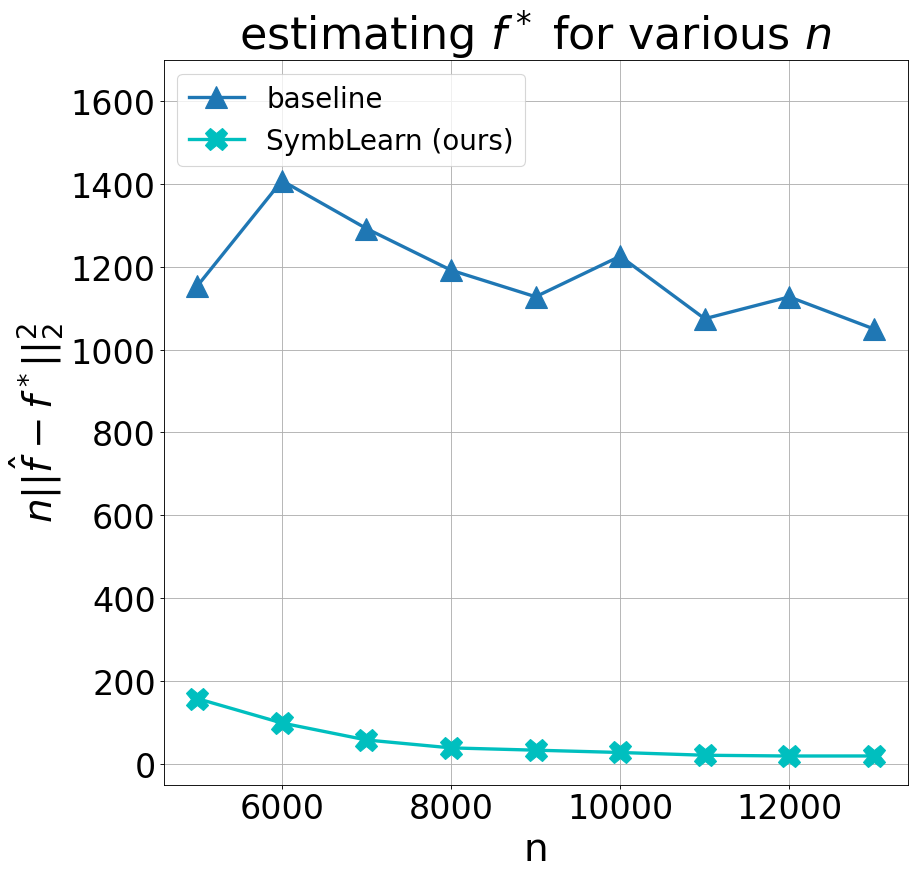} \label{fig:f_vs_n}
    }
    \caption{Estimation of $\wstar$ and $\fstar$ for various $d$ with $n = 10000$ (Figs.~\ref{fig:w_vs_d},~\ref{fig:f_vs_d}) and for various $n$ with $d = 100$ (Figs.~\ref{fig:w_vs_n},~\ref{fig:f_vs_n}). Baseline refers to OLS (Algorithm~\ref{alg:ols}) for $\wstar$ estimation and to the spectral method (Algorithm~\ref{alg:spectral}) for $\fstar$ estimation. WLS is the weighted least squares (Algorithm~\ref{alg:wls}) using spectral estimator of $\fstar$ (Algorithm~\ref{alg:spectral}). \ouralg~(Algorithm~\ref{alg:symbol}) significantly outperforms the baselines for both $\wstar$ and $\fstar$ estimation.}
\end{figure}
We generate data according to the model \eqref{eqn:prob} and estimate $\wstar$ and $\fstar$ with various algorithms and compare it to \ouralg. Figure~\ref{fig:experiment} shows the cumulative errors $n \|\hvw  - \wstar\|_2^2$ and $n\|\hvf -  \fstar\|_2^2$, averaged over $5$ trials for various algorithms. Since we can only recover $\fstar$ up to a sign, we consider the sign that is closest to the estimate $\hvf$.
For \ouralg~(Algorithm~\ref{alg:symbol}), we run multiple epochs of the weighted least squares and phase retrieval on the \emph{entire} data to get a final estimate of $\wstar$ and $\fstar$. \draftupdate{We observe that \ouralg~not only outperforms the baseline OLS estimator, but is also superior to the WLS estimator (computed using the spectral method) discussed in Section \ref{sec:wls}.} 

\section{Conclusion and Discussion}
\label{sec:conc}
In this work, we considered heteroscedastic linear regression with Gaussian design and obtained near sample optimal and computationally efficient algorithms. In future work, we plan to explore noise of the form $\epsilon_i \sigma(\langle \fstar,\vx_i\rangle)$ for more general functions $\sigma$. Developing kernel regression versions of this problem is also an interesting direction. We also hope to study the applicability of our methods in heteroscedastic versions of more challenging problems such as linear contextual bandits and reinforcement learning with linear function approximation.

\bibliography{references}
\clearpage
\appendix
\clearpage
\section{Analysis of MLE, WLS and \ourmultalg}
In this section, we present our analysis of MLE, WLS and \ourmultalg.
\subsection{Technical Lemmas}
\begin{lemma}[Gaussian Linear Combinations of Random Vectors]
\label{lem:normal-vec-norm}
Let $\vv = \sum_{i=1}^{n} \epsilon_i \vy_i$ where $\epsilon_i \iidsim \cN(0, 1)$ and $\vy_i$ are arbitrary random vectors independent of $\epsilon_i$. Then, for any $\delta \in (0, 1)$, the following holds with probability at least $1 - \delta$.
\begin{align*}
    \norm{\vv}^2 \leq \Tr(\vY)\left[1 + \max \left\{ 8\log(\nicefrac{d}{\delta}), \sqrt{8\log(\nicefrac{d}{\delta})}  \right\} \right]
\end{align*}
where $\vY = \sum_{i=1}^{n} \vy_i \vy^{T}_i$.
\end{lemma}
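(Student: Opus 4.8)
The plan is to condition on the random vectors $\vy_1,\dots,\vy_n$ and exploit that, conditionally, $\vv$ is a centered Gaussian vector. Since $\epsilon_1,\dots,\epsilon_n \iidsim \cN(0,1)$ are independent of $(\vy_i)_{i\in[n]}$, conditioning on $(\vy_i)_{i\in[n]}$ turns $\vv=\sum_{i=1}^n \epsilon_i\vy_i$ into a Gaussian vector with covariance $\sum_{i=1}^n \vy_i\vy_i^\top = \vY$. In particular, for each coordinate $j\in[d]$, the conditional law of $v_j$ given $(\vy_i)_{i\in[n]}$ is $\cN(0, Y_{jj})$, where $Y_{jj}=\sum_{i=1}^n y_{i,j}^2 \geq 0$ denotes the $(j,j)$ entry of $\vY$.

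I would then bound $\norm{\vv}^2 = \sum_{j=1}^d v_j^2$ coordinate-wise. For a single $g\sim\cN(0,1)$, the $\chi^2_1$ tail bound (e.g., the Laurent--Massart inequality with a single summand) gives $\bP\bigl(g^2 \geq 1 + 2\sqrt{t}+2t\bigr)\leq e^{-t}$ for every $t>0$. Applying this to $v_j/\sqrt{Y_{jj}}$ for each $j$ with $Y_{jj}>0$ (the coordinates with $Y_{jj}=0$ being zero almost surely) and union bounding over $j\in[d]$ with $t=\log(d/\delta)$ yields that, conditionally on $(\vy_i)_{i\in[n]}$, with probability at least $1-\delta$,
\[
\norm{\vv}^2 = \sum_{j=1}^d v_j^2 \leq \Bigl(\sum_{j=1}^d Y_{jj}\Bigr)\bigl(1 + 2\sqrt{\log(d/\delta)} + 2\log(d/\delta)\bigr) = \Tr(\vY)\bigl(1 + 2\sqrt{\log(d/\delta)} + 2\log(d/\delta)\bigr).
\]
Since $2\sqrt{t}+2t \leq \max\{8t,\sqrt{8t}\}$ for all $t\geq 0$ (the two pieces crossing over at $t=\tfrac19$), the right-hand side is at most $\Tr(\vY)\bigl[1+\max\{8\log(d/\delta),\sqrt{8\log(d/\delta)}\}\bigr]$, which is the claimed bound.

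Finally, since this estimate holds with probability at least $1-\delta$ over $(\epsilon_i)_{i\in[n]}$ for \emph{every} realization of $(\vy_i)_{i\in[n]}$, it also holds unconditionally after integrating over the law of $(\vy_i)_{i\in[n]}$, completing the argument. There is no real obstacle here; the only subtlety worth flagging is that $v_1,\dots,v_d$ are dependent unless $\vY$ is diagonal, which precludes applying a single $\chi^2_d$-type concentration bound to $\norm{\vv}^2$ directly and is what forces the coordinate-wise union bound (hence the $\log d$ factor in the statement). Alternatively, one could diagonalize $\vY$ and apply Laurent--Massart to $\sum_j \lambda_j g_j^2$ with $\{\lambda_j\}$ the eigenvalues of $\vY$, using $\sqrt{\sum_j\lambda_j^2}\leq\Tr(\vY)$ and $\lambda_{\max}\leq\Tr(\vY)$; this would remove the $\log d$ factor, but the coordinate-wise version stated here already suffices for all downstream applications.
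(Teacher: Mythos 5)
Your proof is correct and follows essentially the same route as the paper's: condition on $\vy_{1:n}$, note each coordinate of $\vv$ is conditionally Gaussian with variance $\ve_j^T\vY\ve_j$, apply a $\chi^2_1$ tail bound per coordinate, union bound over $j\in[d]$, and sum to get $\Tr(\vY)$. The only differences are cosmetic (you derive the constant form $\max\{8t,\sqrt{8t}\}$ from Laurent--Massart, and your parenthetical crossover point $t=\tfrac19$ should be $t=\tfrac18$ for the two pieces of the max, though the inequality you need does hold for all $t\geq 0$).
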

\begin{proof}
Let $\ve_j, \ j \in [d]$ denote the standard basis of $\bR^d$. Since $\epsilon_i \iidsim \cN(0,1)$, it follows that,
\begin{align*}
    \bE[ e^{\mu \dotp{\vv}{\ve_j}}  | \vy_{1:n}] &= \prod_{i=1}^{n} \bE\left[e^{\mu \epsilon_i \dotp{\vy_i}{\ve_j}} |  \vy_{1:n}\right] = \prod_{i=1}^{n} e^{\nicefrac{\mu^2{\dotp{\vy_i}{\ve_j}}^2}{2}} \\
    &= \exp\left( \frac{\mu^2 \ve^T_j \left( \sum_{i=1}^{n} \vy_i \vy^T_i \right) \ve_j}{2} \right) = e^{\nicefrac{\mu^2 \ve^T_j \vY \ve_j}{2}}
\end{align*}
Thus, we infer that $\dotp{\vv}{\ve_j}$ is a zero-mean Gaussian random variable conditioned on $\vy_{1:n}$. Thus, by $\chi^2$ concentration, the following holds with probability at least $1 - \nicefrac{\delta}{d}$
\begin{align*}
    \dotp{\vv}{\ve_j}^2 \leq (\ve^T_j \vY \ve_j) \left[1 + \max \left\{ 8\log(\nicefrac{d}{\delta}), \sqrt{8\log(\nicefrac{d}{\delta})}  \right\} \right]
\end{align*}
Taking a union bound, we conclude that
\begin{align*}
    \norm{\vv}^2 = \sum_{j=1}^d \dotp{\vv}{\ve_j}^2 &\leq ( \sum_{j=1}^{d}\ve^T_j \vY \ve_j) \left[1 + \max \left\{ 8\log(\nicefrac{d}{\delta}), \sqrt{8\log(\nicefrac{d}{\delta})}  \right\} \right] \\
    &\leq \Tr(\vY)\left[1 + \max \left\{ 8\log(\nicefrac{d}{\delta}), \sqrt{8\log(\nicefrac{d}{\delta})}  \right\} \right]
\end{align*}
\end{proof}
\begin{lemma}[Lower Tail Bound for Binomial Random Variables]
\label{lem:bin-rv-tail}
Let $X$ be a $\textrm{Binomial}(n, p)$ random variable. Then, 
\begin{align*}
    \bP \left \{ X \geq \nicefrac{np}{2} \right \} \geq 1 - e^{\nicefrac{-np}{8}} 
\end{align*}
\end{lemma}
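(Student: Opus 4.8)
The plan is to obtain the equivalent upper-tail statement $\bP\{X < np/2\} \le e^{-np/8}$ via the exponential Markov (Chernoff) method, specialized to the multiplicative factor $\tfrac{1}{2}$. Write $X = \sum_{i=1}^{n} X_i$ with $X_i \iidsim \mathrm{Bernoulli}(p)$ and set $\mu := \bE[X] = np$. For every $t > 0$, Markov's inequality applied to the nonnegative variable $e^{-tX}$ gives
\begin{align*}
\bP\{X < np/2\} \le \bP\{X \le np/2\} = \bP\{e^{-tX} \ge e^{-t\mu/2}\} \le e^{t\mu/2}\,\bE[e^{-tX}] = e^{t\mu/2}\bigl(1 - p + pe^{-t}\bigr)^{n},
\end{align*}
and then the elementary bound $1 + x \le e^{x}$ applied with $x = p(e^{-t}-1)$ yields $\bP\{X \le np/2\} \le \exp\bigl(\tfrac{t\mu}{2} + \mu(e^{-t}-1)\bigr)$.

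Next I would optimize the exponent over $t > 0$; the minimizer is $t = \ln 2$, at which $e^{-t} = \tfrac{1}{2}$ and the exponent equals $\tfrac{\mu}{2}\ln 2 - \tfrac{\mu}{2} = -\tfrac{\mu}{2}(1-\ln 2)$. Since $1 - \ln 2 > \tfrac{1}{4}$, this gives $\bP\{X \le np/2\} \le e^{-\mu(1-\ln 2)/2} \le e^{-\mu/8} = e^{-np/8}$, and passing to complements gives the claim $\bP\{X \ge np/2\} \ge 1 - e^{-np/8}$. Equivalently, one may simply cite the textbook multiplicative Chernoff bound $\bP\{X \le (1-\delta)\mu\} \le e^{-\mu\delta^2/2}$ for $\delta \in (0,1)$ and put $\delta = \tfrac{1}{2}$; the only nontrivial ingredient behind that form is the scalar inequality $e^{-\delta}(1-\delta)^{-(1-\delta)} \le e^{-\delta^2/2}$ on $(0,1)$, which the direct choice $t=\ln 2$ avoids.

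There is essentially no obstacle here: this is a textbook Chernoff estimate, and the only points requiring any attention are purely arithmetic — confirming $1 - \ln 2 > \tfrac{1}{4}$ so that the constant in the exponent can be rounded down to the convenient value $\tfrac{1}{8}$ used elsewhere in the paper, and noting that the bound is non-vacuous precisely because $np/2 < \mu$. Neither of these presents any difficulty, so the proof will be a few lines.
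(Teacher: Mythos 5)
Your proof is correct. The paper's own argument is a one-liner: it writes $X=\sum_i Y_i$ with $Y_i\iidsim\mathrm{Bernoulli}(p)$ and invokes the one-sided Bernstein inequality for sums of nonnegative variables, $\bP\{X\le \bE X - t\}\le \exp\bigl(-t^2/(2\sum_i \bE Y_i^2)\bigr)$, which with $t=np/2$ and $\sum_i\bE Y_i^2=np$ gives exactly $e^{-np/8}$. You instead run the exponential-Markov/Chernoff computation from scratch, optimizing at $t=\ln 2$ to get the exponent $-\tfrac{np}{2}(1-\ln 2)\approx -0.153\,np$ and then rounding down using $1-\ln 2>\tfrac14$; this is a genuinely self-contained route (no concentration inequality cited as a black box) and in fact yields a slightly sharper constant before the rounding, whereas the paper's route trades that self-containedness for brevity and lands on the constant $\tfrac18$ exactly. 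Your arithmetic checks out: the MGF bound $(1-p+pe^{-t})^n\le e^{\mu(e^{-t}-1)}$, the minimizer $t=\ln 2$, and the comparison $e^{-\mu(1-\ln 2)/2}\le e^{-\mu/8}$ are all valid, and the alternative of citing the multiplicative Chernoff bound with $\delta=\tfrac12$ also gives $e^{-\mu/8}$ directly. No gaps.
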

\begin{proof}
Decomposing $X = \sum_{i=1}^{n} Y_i$ where $Y_i \ \iidsim \ \textrm{Bernoulli}(p)$ and observing that $Y_i$ are non-negative random variables, the result follows by applying the one-sided Bernstein inequality. 
\end{proof}\begin{lemma}[Tail Bound for Rational Functions of $\chi^2$ RVs - I]
\label{lem:chisq-rational-term1-bound}
Let $\zeta_i \ \iidsim \ \cN(0, 1)$ and $\sigma > 0$. Then, for any $\lambda > 0$,
\begin{align*}
    \bP \left \{ \sum_{i=1}^{n} \frac{\sigma^2 \zeta^2_i}{\lambda + \sigma^2 \zeta^2_i} \geq \frac{n \sigma^2 p_0}{2(\sigma^2 + \lambda)} \right \} \geq 1 - e^{\nicefrac{-np_0}{8}}
\end{align*}
where $p_0 = 1 - \sqrt{\nicefrac{2}{\pi}} \int_{0}^{1} e^{\nicefrac{-x^2}{2}} dx = \textrm{erfc}(\nicefrac{1}{\sqrt{2}})$
\end{lemma}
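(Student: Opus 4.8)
The plan is to reduce the heavy-tailed sum to a Binomial random variable via a monotonicity-plus-thresholding argument, and then invoke Lemma~\ref{lem:bin-rv-tail}. First I would observe that for any fixed $\sigma, \lambda > 0$ the map $t \mapsto \tfrac{\sigma^2 t}{\lambda + \sigma^2 t}$ is increasing on $[0, \infty)$. Consequently, on the event $\{\zeta_i^2 \geq 1\}$ we have $\tfrac{\sigma^2 \zeta_i^2}{\lambda + \sigma^2 \zeta_i^2} \geq \tfrac{\sigma^2}{\lambda + \sigma^2}$, and on its complement the summand is still nonnegative. Hence, writing $Y_i := \mathbbm{1}(|\zeta_i| \geq 1)$, one obtains the pointwise lower bound
$$\sum_{i=1}^{n} \frac{\sigma^2 \zeta_i^2}{\lambda + \sigma^2 \zeta_i^2} \ \geq\ \frac{\sigma^2}{\lambda + \sigma^2} \sum_{i=1}^{n} Y_i.$$

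Next I would identify the law of $\sum_i Y_i$. Since $\zeta_i \iidsim \cN(0,1)$, the $Y_i$ are i.i.d. Bernoulli random variables with success probability $\bP(|\zeta_1| \geq 1) = 1 - \bP(|\zeta_1| < 1) = 1 - \sqrt{2/\pi}\int_0^1 e^{-x^2/2}\,dx$. The substitution $t = x/\sqrt{2}$ in the integral representation of the complementary error function shows this equals $\mathrm{erfc}(1/\sqrt{2}) = p_0$. Therefore $X := \sum_{i=1}^{n} Y_i \sim \mathrm{Binomial}(n, p_0)$, and by Lemma~\ref{lem:bin-rv-tail} we have $\bP\{X \geq np_0/2\} \geq 1 - e^{-np_0/8}$.

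Finally, on the event $\{X \geq np_0/2\}$, combining with the pointwise lower bound above gives $\sum_{i=1}^{n} \tfrac{\sigma^2 \zeta_i^2}{\lambda + \sigma^2 \zeta_i^2} \geq \tfrac{\sigma^2}{\lambda + \sigma^2} \cdot \tfrac{np_0}{2} = \tfrac{n \sigma^2 p_0}{2(\sigma^2 + \lambda)}$, which is exactly the claimed bound. There is no genuine obstacle here; the only mild design choice is placing the threshold at $\zeta_i^2 = 1$, which makes the surviving Bernoulli parameter a clean, $\sigma$- and $\lambda$-independent constant (any fixed positive threshold would work, but this one yields the stated $p_0 = \mathrm{erfc}(1/\sqrt{2})$), and the only small computation to double-check is the identification $\bP(|\zeta_1| \geq 1) = \mathrm{erfc}(1/\sqrt 2)$.
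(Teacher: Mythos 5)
Your proposal is correct and follows essentially the same route as the paper's proof: both use monotonicity of $t \mapsto \sigma^2 t/(\lambda + \sigma^2 t)$ to lower bound each summand by $\tfrac{\sigma^2}{\sigma^2+\lambda}\mathbbm{1}\{\zeta_i^2 \geq 1\}$, identify the resulting sum as $\mathrm{Binomial}(n, p_0)$ with $p_0 = \mathrm{erfc}(1/\sqrt{2})$, and conclude via Lemma~\ref{lem:bin-rv-tail}. No gaps.
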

\begin{proof}
Let $\lambda = \beta \sigma^2$. Then, 
\begin{align*}
    \frac{\sigma^2 \zeta^2_i}{\lambda + \sigma^2 \zeta^2_i} &\geq \frac{1}{1 + \beta} \bI \left \{ \frac{\zeta^2_i}{\beta + \zeta^2_i} \geq \frac{1}{1 + \beta} \right \} = \frac{1}{1 + \beta} \bI \left \{ \zeta^2_i \geq 1 \right \},
\end{align*}
where the last equality follows from the fact that $t \rightarrow \nicefrac{t^2}{t^2 + \beta}$ is monotonic for $t \geq 0$. Hence, 
\begin{align*}
    \sum_{i=1}^{n} \frac{\zeta^2_i}{\beta + \zeta^2_i} &\geq \frac{1}{1 + \beta} \sum_{i=1}^{n} \bI \left \{ \zeta^2_i \geq 1 \right \} = \frac{1}{1 + \beta} \textrm{Bin}(n, p_0),
\end{align*}
where $p_0 = \bP \left \{ \zeta^2_i \geq 1 \right \} = \textrm{erfc}(\nicefrac{1}{\sqrt{2}}) $. Hence, by Lemma \ref{lem:bin-rv-tail}, it follows that, 
\begin{align*}
    \bP \left \{ \sum_{i=1}^{n} \frac{\zeta^2_i}{\lambda + \zeta^2_i} \geq \frac{n \sigma^2 p_0}{2(\sigma^2 + \lambda)} \right \} \geq 1 - e^{\nicefrac{-np_0}{8}}
\end{align*}
\end{proof}
\begin{lemma}[Tail Bound for Rational Functions of $\chi^2$ RVs - II]
\label{lem:chisq-rational-term3-bound}
Let $\gamma_i, \zeta_i \ \iidsim \ \cN(0, 1)$ and let $\sigma > 0$. Then, for any $\lambda \leq \sigma^2$
\begin{align*}
    \bP \left \{ \sum_{i=1}^{n} \frac{\zeta^2_i}{\lambda + \sigma^2 \gamma^2_i} \geq \frac{np_0}{16 \sigma \sqrt{ \lambda}} \right \} \geq 1 - \exp\left(-\frac{np_0 \sqrt{\lambda}}{32 \sigma}\right)
\end{align*}
\end{lemma}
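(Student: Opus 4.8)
The plan is to follow the template of Lemma~\ref{lem:chisq-rational-term1-bound}: throw away all but the ``useful'' summands and reduce to a binomial tail bound. Since $\lambda \le \sigma^2$, set $t := \sqrt{\lambda}/\sigma \le 1$ and define the events $E_i := \{\zeta_i^2 \ge 1\} \cap \{\sigma^2\gamma_i^2 \le \lambda\} = \{\zeta_i^2 \ge 1\}\cap\{|\gamma_i| \le t\}$. First I would observe that on $E_i$ the denominator obeys $\lambda + \sigma^2\gamma_i^2 \le 2\lambda$ while the numerator is at least $1$, so $\frac{\zeta_i^2}{\lambda + \sigma^2\gamma_i^2} \ge \frac{1}{2\lambda}$, and hence $\sum_{i=1}^n \frac{\zeta_i^2}{\lambda+\sigma^2\gamma_i^2} \ge \frac{1}{2\lambda}\sum_{i=1}^n \bI(E_i)$. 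Thus it suffices to show that $\sum_{i=1}^n \bI(E_i)$ is not too small.

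Next I would lower bound $q := \bP(E_i)$. By independence of $\zeta_i$ and $\gamma_i$, $q = \bP(\zeta_i^2 \ge 1)\,\bP(|\gamma_i|\le t) = p_0\,\bP(|\gamma_i|\le t)$, with $p_0 = \textrm{erfc}(1/\sqrt 2)$ exactly as in Lemma~\ref{lem:chisq-rational-term1-bound}. For the small-ball factor, since $t \le 1$ I would use the crude bound $e^{-x^2/2}\ge 1 - x^2/2 \ge 1/2$ on $[0,t]$ to get $\bP(|\gamma_i| \le t) = \sqrt{2/\pi}\int_0^t e^{-x^2/2}\,dx \ge \sqrt{2/\pi}\cdot \tfrac{t}{2} \ge \tfrac{t}{4}$, whence $q \ge \tfrac{p_0 t}{4} = \tfrac{p_0 \sqrt\lambda}{4\sigma}$. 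This is precisely the place where the hypothesis $\lambda \le \sigma^2$ is used.

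Finally, since the pairs $(\zeta_i,\gamma_i)$ are i.i.d., $\sum_{i=1}^n \bI(E_i) \sim \textrm{Binomial}(n, q)$, so Lemma~\ref{lem:bin-rv-tail} gives that with probability at least $1 - e^{-nq/8}$ one has $\sum_{i=1}^n \bI(E_i) \ge nq/2$. On this event $\sum_{i=1}^n \frac{\zeta_i^2}{\lambda+\sigma^2\gamma_i^2} \ge \frac{nq}{4\lambda} \ge \frac{n p_0}{16\sigma\sqrt\lambda}$ by the lower bound on $q$, and the claimed failure probability follows from $e^{-nq/8} \le \exp\!\big(-\tfrac{np_0\sqrt\lambda}{32\sigma}\big)$. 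There is no real obstacle here: the only care required is choosing the thinning events so that the surviving terms are each $\gtrsim 1/\lambda$ while still occurring with probability $\Omega(\sqrt\lambda/\sigma)$, and then tracking constants so that the threshold $\tfrac{np_0}{16\sigma\sqrt\lambda}$ and the exponent $\tfrac{np_0\sqrt\lambda}{32\sigma}$ come out exactly as stated.
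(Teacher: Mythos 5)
Your proposal is correct and follows essentially the same route as the paper: truncate each summand to the event $\{\zeta_i^2\ge 1\}\cap\{\sigma^2\gamma_i^2\le\lambda\}$, lower bound the surviving terms by $\tfrac{1}{2\lambda}$, bound the success probability below by $\tfrac{p_0\sqrt\lambda}{4\sigma}$, and finish with the binomial tail bound of Lemma~\ref{lem:bin-rv-tail}, yielding exactly the stated threshold and exponent. The only (immaterial) difference is the elementary estimate used for $\bP(|\gamma_i|\le\sqrt\lambda/\sigma)$.
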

\begin{proof}
Let $\lambda = \beta \sigma^2 $. Using the fact that $\gamma_i$ and $\zeta_i$ are independent.
\begin{align*}
    \frac{\zeta^2_i}{\lambda + \sigma^2 \gamma^2_i} &\geq \frac{1}{2\lambda} \bI \left \{ \zeta^2_i \geq 1 \right \} \bI \left \{ \frac{1}{\beta + \gamma^2_i} \geq \frac{1}{2\beta} \right \} = \frac{1}{2 \lambda} \bI \left \{ \zeta^2_i \geq 1 \right\} \bI \left \{ \gamma^2_i \leq \beta \right\}, \\
    \sum_{i=1}^{n} \frac{\zeta^2_i}{\lambda + \sigma^2 \gamma^2_i} &\geq 
    \frac{1}{2\lambda} \sum_{i=1}^{n} \bI \left \{ \zeta^2_i \geq 1 \right\} \bI \left \{ \gamma^2_i \leq \beta \right\} = \frac{1}{2\lambda} \textrm{Bin}(n, p_0 p_1) ,
\end{align*}
where,
\begin{align*}
    p_0 &= \bP \left \{ \zeta^2_i \geq 1 \right \} = \textrm{erfc}(\nicefrac{1}{\sqrt{2}}),\\
    p_1 &= \bP \left \{ \gamma^2_i \leq \beta \right \} = \sqrt{\frac{2}{\pi}}  \int_{0}^{\sqrt{\beta}} e^{\nicefrac{-x^2}{2}} dx \geq \sqrt{\frac{2\beta}{\pi}} e^{-\nicefrac{\sqrt{\beta}}{2}} \geq \frac{\sqrt{\beta}}{4} \geq \frac{\sqrt{\lambda}}{4 \sigma}
\end{align*}
where the last inequality follows since $\beta = \nicefrac{\lambda}{\sigma^2} \leq 1$. Thus, by Lemma \ref{lem:bin-rv-tail}, 
\begin{align*}
    \bP \left\{ \sum_{i=1}^{n} \frac{\zeta^2_i}{\lambda + \sigma^2 \gamma^2_i} \geq \frac{n p_0 p_1}{4\lambda}\right\} \geq 1 - e^{\nicefrac{-np_0 p_1}{8}}
\end{align*}
Since $p_1 \geq \frac{\sqrt{\lambda}}{4 \sigma}$, we conclude that, 
\begin{align*}
    \bP \left \{ \sum_{i=1}^{n} \frac{\zeta^2_i}{\lambda + \sigma^2 \gamma^2_i} \geq \frac{np_0}{16 \sigma \sqrt{ \lambda}} \right \} \geq 1 - \exp\left(-\frac{np_0 \sqrt{\lambda}}{32 \sigma}\right)
\end{align*}
\end{proof}
\begin{lemma}[Tail Bound for Rational Functions of Gaussian RVs]
\label{lem:chisq-rational-term2-bound}
Let $\gamma_i, \zeta_i \ \iidsim \ \cN(0, 1)$. Then, for any $\sigma > 0, \lambda > 0$ and $A \geq 0$,
\begin{align*}
    \bP \left\{ \sum_{i=1}^{n} \frac{\sigma \gamma_i \zeta_i}{\lambda + \sigma^2 \gamma^2_i} \geq -\frac{n \sqrt{2A}}{\lambda^{\nicefrac{1}{4}}} \right\} \geq 1 - e^{-An\sqrt{\lambda}}
\end{align*}
\end{lemma}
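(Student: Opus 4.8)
The plan is to prove this lower-tail bound by conditioning on the Gaussians $\gamma_1,\dots,\gamma_n$ and exploiting the fact that, in the remaining randomness, the sum is exactly Gaussian. Write $S := \sum_{i=1}^n \frac{\sigma\gamma_i\zeta_i}{\lambda+\sigma^2\gamma_i^2}$. Since the $\zeta_i$ are i.i.d.\ $\cN(0,1)$ and independent of $\gamma_{1:n}$, conditioned on $\gamma_{1:n}$ the variable $S$ is a zero-mean Gaussian with (conditional) variance
\begin{align*}
V \;:=\; \sum_{i=1}^n \frac{\sigma^2\gamma_i^2}{(\lambda+\sigma^2\gamma_i^2)^2}.
\end{align*}

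The key observation, analogous to the AM--GM trick used in Lemmas~\ref{lem:chisq-rational-term1-bound} and~\ref{lem:chisq-rational-term3-bound}, is that $V$ admits a \emph{deterministic} upper bound: by $(\lambda+\sigma^2\gamma_i^2)^2 \geq 4\lambda\sigma^2\gamma_i^2$, each summand is at most $\nicefrac{1}{4\lambda}$, so $V \leq \nicefrac{n}{4\lambda}$ regardless of the realization of $\gamma_{1:n}$. Then I would apply the standard Gaussian lower-tail bound conditionally: with $t := \frac{n\sqrt{2A}}{\lambda^{1/4}}$,
\begin{align*}
\bP\!\left(S \leq -t \,\middle|\, \gamma_{1:n}\right) \;\leq\; \exp\!\left(-\frac{t^2}{2V}\right) \;\leq\; \exp\!\left(-\frac{2\lambda t^2}{n}\right) \;=\; \exp\!\left(-4An\sqrt{\lambda}\right) \;\leq\; \exp\!\left(-An\sqrt{\lambda}\right),
\end{align*}
where the middle inequality uses $V \leq \nicefrac{n}{4\lambda}$ and the equality is the arithmetic $\frac{2\lambda t^2}{n} = \frac{2\lambda}{n}\cdot\frac{2An^2}{\sqrt{\lambda}} = 4An\sqrt{\lambda}$. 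Taking expectation over $\gamma_{1:n}$ and passing to the complement event gives $\bP\!\left(S \geq -t\right) \geq 1 - \exp(-An\sqrt{\lambda})$, which is exactly the claim (and in fact with a slightly better exponent, so the stated bound has room to spare).

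There is no real obstacle here: the only ``trick'' is recognizing that one should condition on $\gamma_{1:n}$ so that the numerator becomes a Gaussian linear combination, and that the resulting conditional variance is uniformly bounded by $\nicefrac{n}{4\lambda}$ via AM--GM. Everything else is a routine Gaussian tail estimate followed by tower-rule/union over the conditioning. If one wanted to be careful about $\bP(S \geq -t)$ versus $\bP(S > -t)$, note $\bP(S \geq -t) = 1 - \bP(S < -t) \geq 1 - \bP(S \leq -t)$, which is harmless.
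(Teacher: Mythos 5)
Your proof is correct and follows essentially the same route as the paper: both condition on $\gamma_{1:n}$, exploit that the sum is conditionally Gaussian in the $\zeta_i$, and use a deterministic bound on the conditional variance before applying a Gaussian/subgaussian tail estimate (the paper does this through the unconditional MGF with the bound $\tfrac{\sigma^2\gamma_i^2}{(\lambda+\sigma^2\gamma_i^2)^2}\leq \tfrac{1}{\lambda}$, you do it via the conditional tail with the slightly sharper AM--GM bound $\tfrac{1}{4\lambda}$). The factor-$4$ improvement in your exponent is harmless extra slack; otherwise the arguments coincide.
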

\begin{proof}
We show that $\sum_{i=1}^{n} \frac{\gamma_i \zeta_i}{\lambda + \gamma^2_i}$ is a subgaussian random variable. Since $\gamma_i, \zeta_i \ \iidsim \ \cN(0, 1)$, it follows that $\bE\left[\sum_{i=1}^{n} \frac{\sigma \gamma_i \zeta_i}{\lambda + \sigma^2 \gamma^2_i}\right] = 0$. Moreover, for any $\mu \in \bR$,
\begin{align*}
    \bE\left[ \exp\left( \mu \sum_{i=1}^{n} \frac{\sigma \gamma_i \zeta_i}{\lambda + \sigma^2 \gamma^2_i}\right) \right] &= \prod_{i=1}^{n} \bE\left[ \exp\left( \mu \frac{\sigma \gamma_i \zeta_i}{\lambda + \sigma^2 \gamma^2_i}\right) \right] \\
    &= \prod_{i=1}^{n} \bE\left[\bE\left[ \exp\left( \mu \frac{\sigma \gamma_i \zeta_i}{\lambda + \sigma^2 \gamma^2_i}\right) \ | \ \gamma_i \right]\right] \\
    &= \prod_{i=1}^{n} \bE\left[ \exp\left(\frac{\mu^2 \sigma^2 \gamma^2_i}{2(\lambda + \sigma^2 \gamma^2_i)^2}\right) \right] \\
    &\leq \prod_{i=1}^{n} \bE\left[ \exp\left(\frac{\mu^2}{2\lambda}\frac{\sigma^2 \gamma^2_i}{\lambda + \sigma^2 \gamma^2_i}\right) \right] \leq\prod_{i=1}^{n} \exp\left(\frac{\mu^2}{2\lambda}\right) = \exp\left(\frac{\mu^2}{2}\frac{n}{\lambda}\right)
\end{align*}
Hence, for any $t \geq 0$, 
\begin{align*}
    \bP\left\{ \sum_{i=1}^{n} \frac{\sigma \gamma_i \zeta_i}{\lambda + \sigma^2 \gamma^2_i} \geq -t \right\} \geq 1 - e^{-\nicefrac{\lambda t^2}{2n}}
\end{align*}
Setting $\nicefrac{\lambda t^2}{2n} = A n \sqrt{\lambda}$, we get,
\begin{align*}
    \bP \left\{ \sum_{i=1}^{n} \frac{\sigma \gamma_i \zeta_i}{\lambda + \sigma^2 \gamma^2_i} \geq -\frac{n \sqrt{2A}}{\lambda^{\nicefrac{1}{4}}} \right\} \geq 1 - e^{-An\sqrt{\lambda}}
\end{align*}
\end{proof}

\begin{lemma}
\label{lem:reweighted-design-conc}
Let $\vf$ be any arbitrary random vector and let $\vx_1, \dots, \vx_n$ be i.i.d samples from $\cN(0, \vI)$ that are independent of $\vf$. Then, for any $\delta \in (0, 1/2)$, $n \geq d \polylog(\nicefrac{d}{\delta})$ and any $\norm{\vf}^2 \geq \lambda \geq \Omega\left(\norm{\vf}^2 \frac{d^2}{n^2} \log(\nicefrac{nd}{\delta})^2\right)$, the following holds with probability at least $1 - \delta$,
\begin{align*}
    \Tr\left(\left[ \sum_{i=1}^{n} \frac{\vx_i \vx^T_i}{{\dotp{\vf}{\vx_i}}^2 + \lambda} \right]^{-1}\right) \leq O\left(\nicefrac{\norm{\vf}^2}{n} + \nicefrac{d\norm{\vf}\sqrt{\lambda}}{n}\right)
\end{align*}
\end{lemma}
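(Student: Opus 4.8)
Write $\vX_\lambda := \sum_{i=1}^{n}\vx_i\vx_i^\intercal / \big(\dotp{\vf}{\vx_i}^2+\lambda\big)$ and $\ve := \vf/\norm{\vf}$. The plan is to prove the matrix lower bound
\[
\vX_\lambda \ \succeq \ c_1\,\frac{n}{\norm{\vf}^2}\,\ve\ve^\intercal \ +\ c_2\,\frac{n}{\norm{\vf}\sqrt{\lambda}}\,\big(\vI-\ve\ve^\intercal\big)
\]
for absolute constants $c_1,c_2>0$ with probability at least $1-\delta$; inverting and taking the trace then gives $\Tr(\vX_\lambda^{-1})\le \frac{\norm{\vf}^2}{c_1 n}+(d-1)\frac{\norm{\vf}\sqrt{\lambda}}{c_2 n}=O\big(\frac{\norm{\vf}^2}{n}+\frac{d\norm{\vf}\sqrt{\lambda}}{n}\big)$, as claimed. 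Since $\vf$ is independent of $\vx_{1:n}$ and the statement is equivariant under rotations of $\bR^d$, we condition on $\vf$ and assume $\vf=\norm{\vf}\ve_1$ with $\ve_1$ the first standard basis vector (so $\ve=\ve_1$); then $\dotp{\vf}{\vx_i}=\norm{\vf}\,x_{i,1}$, and for every unit $\vvu\perp\ve_1$ the coordinate $\dotp{\vvu}{\vx_i}\sim\cN(0,1)$ is independent of $x_{i,1}$. Throughout, $p_0>0$ denotes the absolute constant appearing in Lemmas~\ref{lem:chisq-rational-term1-bound}--\ref{lem:chisq-rational-term2-bound}.

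\textbf{Diagonal quadratic forms and covering.} Writing $\ve_1^\intercal\vX_\lambda\ve_1=\norm{\vf}^{-2}\sum_i \norm{\vf}^2 x_{i,1}^2/\big(\norm{\vf}^2 x_{i,1}^2+\lambda\big)$ and applying Lemma~\ref{lem:chisq-rational-term1-bound} with $\sigma^2=\norm{\vf}^2$, the hypothesis $\lambda\le\norm{\vf}^2$ yields $\ve_1^\intercal\vX_\lambda\ve_1\ge \frac{p_0 n}{4\norm{\vf}^2}$ with probability $\ge 1-e^{-p_0 n/8}$. For a fixed unit $\vvu\perp\ve_1$, $\vvu^\intercal\vX_\lambda\vvu=\sum_i \dotp{\vvu}{\vx_i}^2/\big(\norm{\vf}^2 x_{i,1}^2+\lambda\big)$, so Lemma~\ref{lem:chisq-rational-term3-bound} with $\sigma^2=\norm{\vf}^2$ (valid since $\lambda\le\norm{\vf}^2$) gives $\vvu^\intercal\vX_\lambda\vvu\ge\frac{p_0 n}{16\norm{\vf}\sqrt{\lambda}}$ with probability $\ge 1-e^{-p_0 n\sqrt{\lambda}/(32\norm{\vf})}$. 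To upgrade this to a uniform bound over the unit sphere $S$ of $\ve_1^\perp$, I use the crude estimate $\norm{\vX_\lambda}_2\le \lambda^{-1}\norm{\sum_i\vx_i\vx_i^\intercal}_2\le \frac{3n}{2\lambda}$, valid with probability $\ge 1-\delta/5$ for $n\ge d\,\polylog(d/\delta)$ by standard Gaussian covariance concentration, together with an $\epsilon$-net $N$ of $S$ at scale $\epsilon=c'\sqrt{\lambda}/\norm{\vf}\in(0,\tfrac{1}{2}]$ and the elementary bound $\inf_{\vvu\in S}\vvu^\intercal\vX_\lambda\vvu\ge \min_{\vz\in N}\vz^\intercal\vX_\lambda\vz-3\epsilon\norm{\vX_\lambda}_2$. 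For a small enough absolute constant $c'$ the correction $3\epsilon\norm{\vX_\lambda}_2\le \frac{9c'n}{2\norm{\vf}\sqrt{\lambda}}$ is at most half the per-direction bound, so a union bound over $|N|\le(3/\epsilon)^{d-1}$ gives $\inf_{\vvu\in S}\vvu^\intercal\vX_\lambda\vvu\ge \frac{p_0 n}{32\norm{\vf}\sqrt{\lambda}}$ with probability $\ge 1-(3/\epsilon)^{d-1}e^{-p_0 n\sqrt{\lambda}/(32\norm{\vf})}-\delta/5$. Here the hypothesis $\lambda\ge\Omega\big(\norm{\vf}^2 d^2\log^2(nd/\delta)/n^2\big)$ with a sufficiently large absolute constant forces $\frac{p_0 n\sqrt{\lambda}}{32\norm{\vf}}\ge (d-1)\log(3/\epsilon)+\log(10/\delta)$, making the net failure probability $\le\delta/5$; equivalently this bounds the lower block $\vC:=(\vI-\ve_1\ve_1^\intercal)\vX_\lambda(\vI-\ve_1\ve_1^\intercal)\succeq \frac{p_0 n}{32\norm{\vf}\sqrt{\lambda}}(\vI-\ve_1\ve_1^\intercal)$.

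\textbf{Off-diagonal block and assembly.} Let $\mathbf{b}:=(\vI-\ve_1\ve_1^\intercal)\vX_\lambda\ve_1\in\ve_1^\perp$. Each fixed unit $\vvu\perp\ve_1$ satisfies $\vvu^\intercal\vX_\lambda\ve_1=\sum_i \dotp{\vvu}{\vx_i}\,x_{i,1}/\big(\norm{\vf}^2 x_{i,1}^2+\lambda\big)$, so Lemma~\ref{lem:chisq-rational-term2-bound} with $\sigma^2=\norm{\vf}^2$, applied in both signs by the symmetry of $\dotp{\vvu}{\vx_i}$, bounds $|\vvu^\intercal\vX_\lambda\ve_1|\le \frac{n\sqrt{2A_0}}{\norm{\vf}\lambda^{1/4}}$ with probability $\ge 1-2e^{-A_0 n\sqrt{\lambda}}$. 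Running the same net argument (with $\norm{\mathbf{b}}_2\le 2\max_{\vz\in N}|\vz^\intercal\mathbf{b}|$) gives $\norm{\mathbf{b}}_2\le \frac{2n\sqrt{2A_0}}{\norm{\vf}\lambda^{1/4}}$ uniformly; we choose the scale parameter $A_0=\Theta(1/\norm{\vf})$ small enough that $\frac{2}{A}\norm{\mathbf{b}}_2^2\le \frac{1}{2}\cdot\frac{p_0 n}{32\norm{\vf}\sqrt{\lambda}}$, where $A:=\ve_1^\intercal\vX_\lambda\ve_1\ge\frac{p_0 n}{4\norm{\vf}^2}$ (this is possible since the left side is $O(A_0 n/(\norm{\vf}\sqrt{\lambda}))$ while the right side is $\Omega(n/(\norm{\vf}\sqrt{\lambda}))$ with absolute constants); since then $A_0 n\sqrt{\lambda}=\Theta(n\sqrt{\lambda}/\norm{\vf})=\Omega(d\log(nd/\delta))$ by the $\lambda$-hypothesis, the net failure probability here is again $\le\delta/5$. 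Writing $\vX_\lambda$ in block form relative to $\mathrm{span}(\ve_1)\oplus\ve_1^\perp$ with corner $A$, off-diagonal vector $\mathbf{b}$, and lower block $\vC$, a Schur-complement computation gives $\vX_\lambda\succeq \frac{A}{2}\ve_1\ve_1^\intercal+\big(\vC-\frac{2}{A}\mathbf{b}\mathbf{b}^\intercal\big)$. Combining $A\ge\frac{p_0 n}{4\norm{\vf}^2}$, the bound on $\vC$ from paragraph two, and $\frac{2}{A}\mathbf{b}\mathbf{b}^\intercal\preceq \frac{p_0 n}{64\norm{\vf}\sqrt{\lambda}}(\vI-\ve_1\ve_1^\intercal)$ yields $\vX_\lambda\succeq \frac{p_0 n}{8\norm{\vf}^2}\ve_1\ve_1^\intercal+\frac{p_0 n}{64\norm{\vf}\sqrt{\lambda}}(\vI-\ve_1\ve_1^\intercal)$. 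A union bound over the $O(1)$ events above gives probability $\ge 1-\delta$, and the trace bound follows as in the first paragraph with $c_1=p_0/8$, $c_2=p_0/64$.

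\textbf{Main obstacle.} The crux is the covering argument: the per-direction deviation probabilities in Lemmas~\ref{lem:chisq-rational-term3-bound} and~\ref{lem:chisq-rational-term2-bound} decay only like $\exp(-\Omega(n\sqrt{\lambda}/\norm{\vf}))$, so a net of cardinality $(3/\epsilon)^{d-1}$ can be absorbed only when $n\sqrt{\lambda}/\norm{\vf}\gtrsim d\log(\norm{\vf}/\sqrt{\lambda})$, i.e. precisely the regime $\lambda\gtrsim \norm{\vf}^2 d^2\,\polylog(nd/\delta)/n^2$ in the hypothesis; a weaker lower bound on $\lambda$ would leave anomalous directions along which the quadratic form of $\vX_\lambda$ is too small and $\Tr(\vX_\lambda^{-1})$ is not controlled. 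Choosing the net scale $\epsilon$ so that it is simultaneously compatible with the spectral correction $\epsilon\norm{\vX_\lambda}_2\le \epsilon\cdot\frac{3n}{2\lambda}$ and with the net cardinality is exactly what ties together the three hypotheses $\lambda\le\norm{\vf}^2$, $\lambda\gtrsim\norm{\vf}^2 d^2\,\polylog(nd/\delta)/n^2$, and $n\ge d\,\polylog(d/\delta)$.
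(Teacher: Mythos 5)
Your proof is correct, and it rests on the same core ingredients as the paper's argument — the three tail bounds for rational functions of Gaussians (Lemmas~\ref{lem:chisq-rational-term1-bound}, \ref{lem:chisq-rational-term3-bound}, \ref{lem:chisq-rational-term2-bound}), a crude operator-norm bound on $\vX_\lambda$, and a covering argument — but you organize it differently. The paper proves $\vX_\lambda \succeq \vB_\lambda$ with $\vB_\lambda = \frac{C_1 n}{\norm{\vf}^2+\lambda}\ve\ve^T + \frac{C_2 n}{\norm{\vf}\sqrt{\lambda}}(\vI-\ve\ve^T)$ via a single $\epsilon$-net over the \emph{whole} unit sphere at the very fine scale $\epsilon = \Theta(1/(n^2d^2))$, decomposing each net direction as $\alpha\ve + \beta\vg$ and absorbing the cross term \emph{per net point} through Lemma~\ref{lem:chisq-rational-term2-bound} with an AM--GM split into the two diagonal contributions; the fine scale is needed because it only uses the loose bound $\norm{\vX_\lambda} \leq n(d+1)/\lambda$. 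You instead exploit rotation invariance to set $\vf \propto \ve_1$, control the three blocks separately — the $(1,1)$ entry, the $\vf^{\perp}$ block via a net over the $(d-1)$-sphere at the coarser scale $\Theta(\sqrt{\lambda}/\norm{\vf})$ (made admissible by your sharper $\norm{\vX_\lambda}\le 3n/(2\lambda)$ from Wishart concentration), and the off-diagonal column $\mathbf{b}$ uniformly via a vector-net bound — and then assemble everything with a completing-the-square/Schur-complement inequality. What each buys: the paper's single-net argument is more compact and avoids block algebra entirely, while yours isolates the off-diagonal contribution explicitly, uses a much smaller net, and makes the role of each hypothesis ($\lambda \le \norm{\vf}^2$, $\lambda \gtrsim \norm{\vf}^2 d^2\log^2(nd/\delta)/n^2$, $n \gtrsim d\,\polylog$) visible in separate places. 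One cosmetic slip: in your choice of $A_0$ the quantity $\frac{2}{A}\norm{\mathbf{b}}^2$ is $O(A_0 n/\sqrt{\lambda})$ (the factor $1/\norm{\vf}$ only appears after substituting $A_0 = \Theta(1/\norm{\vf})$), but the conclusion that $A_0 \le c\,p_0^2/\norm{\vf}$ suffices, and that $A_0 n\sqrt{\lambda} = \Omega(d\log(nd/\delta))$ under the $\lambda$-hypothesis, is unaffected.
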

\begin{proof}
Let $\vX_{\lambda} = \sum_{i=1}^{n} \frac{\vx_i \vx^T_i}{{\dotp{\vf}{\vx_i}}^2 + \lambda}$. Furthermore, define $\ve = \nicefrac{\vf}{\norm{\vf}}$ and the matrix $\vB_{\lambda}$ as follows,
\begin{align*}
    \vB_{\lambda} = \frac{C_1 n}{\norm{\vf}^2 + \lambda} \ve \ve^T + \frac{C_2 n}{\norm{\vf}\sqrt{\lambda}} \left(\vI - \ve \ve^T \right)
\end{align*}
where $C_1 \leq C_2$ are numerical constants to be chosen later. We note that, since $\vB_{\lambda}$ is symmetric and positive definite and $\lambda \leq \norm{\vf}^2$,
\begin{align*}
    \Tr\left(\vB^{-1}_{\lambda}\right) \leq O\left(\frac{\norm{\vf}^2 + \lambda}{n} + \frac{(d-1)\norm{\vf}\sqrt{\lambda}}{n}\right) \leq O\left(\frac{\norm{\vf}^2}{n} + \frac{d\norm{\vf}\sqrt{\lambda}}{n}\right)
\end{align*}
Hence, it suffices to prove that $\vX_{\lambda} \succeq \vB_{\lambda}$ holds with high probability. We establish this by means of a covering argument. To this end, consider any $\epsilon \in (0, 1)$ and let $C_{\epsilon}$ denote an $\epsilon$ cover of the unit ball in $\bR^d$. Applying a standard discretization argument, we get,
\begin{align*}
    \inf_{\norm{\vx} = 1} \vx^{T} \left(\vX_{\lambda} - \vB_{\lambda}\right) \vx &\geq \inf_{\vv \in C_\epsilon} \vv^{T} \left(\vX_{\lambda} - \vB_{\lambda}\right) \vv - 2 \epsilon \norm{\vX_{\lambda}} - 2 \epsilon \norm{\vB_{\lambda}}
\end{align*}
Since $\lambda \leq \norm{\vf}^2$, $\lambda + \norm{\vf}^2 \geq \norm{\vf}\sqrt{\lambda}$. Hence, $\norm{\vB_{\lambda}} = \frac{C_2 n}{\norm{\vf}\sqrt{\lambda}}$. Moreover,
\begin{align*}
    \norm{\vX_{\lambda}} &\leq \sum_{i=1}^{n} \frac{\norm{\vx_i \vx^T_i}}{\lambda + {\dotp{\vf}{\vx_i}}^2} \leq \frac{n}{\lambda} \left(\frac{1}{n} \sum_{i=1}^{n} \norm{\vx_i}^2\right)
\end{align*}
Since $\vx_i \ \iidsim \ \cN(0, \vI)$, the following holds with probability $1 - \nicefrac{\delta}{2}$ due to $\chi^2$ concentration,
\begin{align*}
    \norm{\vX_{\lambda}} &\leq \frac{n}{\lambda} \left[d + \max \left\{\frac{8 \log(\nicefrac{2}{\delta})}{n}, \sqrt{\frac{8 d \log(\nicefrac{2}{\delta})}{n}} \right\}\right] \leq \frac{n}{\lambda} (d+1)
\end{align*}
where the last inequality follows from the fact that $n \geq d \polylog(\nicefrac{d}{\delta})$. Hence, the following holds with probability at least $1 - \nicefrac{\delta}{2}$,
\begin{align}
\label{eqn:design-eigenvalue-bound}
    \inf_{\norm{\vx} = 1} \vx^{T} \left(\vX_{\lambda} - \vB_{\lambda}\right) \vx &\geq \inf_{\vv \in C_\epsilon} \vv^{T} \left(\vX_{\lambda} - \vB_{\lambda}\right) \vv - \frac{2\epsilon n}{\lambda}(d+1) -  \frac{2 \epsilon C_2 n}{\norm{\vf}\sqrt{\lambda}}
\end{align}
Now consider any $\vv \in C_{\epsilon}$. Since $\norm{\vv} = 1$, there exists a unit vector $\vg$ orthogonal to $\vf$ such that $\vv = \alpha \ve + \beta \vg$, with $\alpha^2 + \beta^2 = 1$. We note that,
\begin{align*}
    \vv^T \vB_\lambda \vv &= \frac{\alpha^2 n C_1}{\norm{\vf}^2 + \lambda} + \frac{\beta^2 n C_2}{\norm{\vf}\sqrt{\lambda}} \\
    \vv^T \vX_\lambda \vv &= \frac{\alpha^2}{\norm{\vf}^2} \sum_{i=1}^{n} \frac{{\dotp{\vf}{\vx_i}}^2}{\lambda + {\dotp{\vf}{\vx_i}}^2} + \beta^2 \sum_{i=1}^{n} \frac{{\dotp{\vg}{\vx_i}}^2}{\lambda + {\dotp{\vf}{\vx_i}}^2} + \frac{2\alpha \beta}{\norm{\vf}} \sum_{i=1}^{n} \frac{\dotp{\vf}{\vx_i}\dotp{\vg}{\vx_i}}{\lambda + {\dotp{\vf}{\vx_i}}^2}
\end{align*}
Since $\vf$ and $\vg$ are orthogonal, $\dotp{\vf}{\vx_i}$ and $\dotp{\vg}{\vx_i}$ are independent. From Lemmas \ref{lem:chisq-rational-term1-bound}, \ref{lem:chisq-rational-term3-bound} and \ref{lem:chisq-rational-term2-bound}, and the fact that $\lambda \leq \norm{\vf}^2$, we conclude that the following holds with probability at least $1 - e^{-\nicefrac{n p_0}{8}} - e^{-\frac{np_0 \sqrt{\lambda}}{32 \norm{\vf}}} - e^{-\frac{A_\alpha A_\beta n \sqrt{\lambda}}{\norm{\vf}}}$,
\begin{align*}
    \frac{\alpha^2}{\norm{\vf}^2} \sum_{i=1}^{n} \frac{{\dotp{\vf}{\vx_i}}^2}{\lambda + {\dotp{\vf}{\vx_i}}^2} &\geq \frac{\alpha^2 n p_0}{2\lambda + 2\norm{\vf}^2} \\
    \beta^2 \sum_{i=1}^{n} \frac{{\dotp{\vg}{\vx_i}}^2}{\lambda + {\dotp{\vf}{\vx_i}}^2} &\geq \frac{\beta^2 n p_0 \sqrt{\lambda}}{16 \norm{\vf}} \\
    \frac{2\alpha \beta}{\norm{\vf}} \sum_{i=1}^{n} \frac{\dotp{\vf}{\vx_i}\dotp{\vg}{\vx_i}}{\lambda + {\dotp{\vf}{\vx_i}}^2} &\geq -\frac{2\alpha \beta}{\norm{\vf}} \frac{n \sqrt{2 A_\alpha A_\beta}}{\norm{\vf}^{\nicefrac{1}{2}}\lambda^{\nicefrac{1}{4}}} \geq -\frac{2\sqrt{2} \alpha^2 n A_\alpha}{\lambda + \norm{\vf}^2} - \frac{\sqrt{2}\beta^2 n A_\beta}{\norm{\vf}\sqrt{\lambda}}
\end{align*}
Setting $C_1 = C_2 = \nicefrac{p_0}{64}$ and $A_\alpha = A_\beta = \nicefrac{p_0}{64 \sqrt{2}}$, we conclude that the following holds with probability at least $1 - 3 e^{-\nicefrac{np^2_0 \sqrt{\lambda}}{512 \norm{\vf}}}$
\begin{align*}
    \vv^T \left(\vX_\lambda - \vB_\lambda \right) \vv &\geq  \frac{n \alpha^2}{\lambda + \norm{\vf}^2} \left(\nicefrac{p_0}{2} - C_1 - 2\sqrt{2} A_\alpha\right) + \frac{n \beta^2}{\norm{\vf}\sqrt{\lambda}} \left(\nicefrac{p_0}{16} - C_2 - \sqrt{2} A_\beta\right) \\
    &\geq \frac{\nicefrac{n p_0}{64}}{\lambda + \norm{\vf}^2}
\end{align*}
The second inequality follows since $\alpha^2 + \beta^2 = 1$ and $\lambda \leq \norm{\vf}^2$ implies $\norm{\vf}\sqrt{\lambda} \leq \norm{\vf}^2 \leq \lambda + \norm{\vf}^2$. Taking a union bound over $C_\epsilon$, using the fact that $|C_\epsilon| \leq (\nicefrac{3}{\epsilon})^d$, we conclude that the following holds with probability at least $1 - \exp(\ln 3 + d \ln\left(\nicefrac{3}{\epsilon}\right) -\nicefrac{np^2_0 \sqrt{\lambda}}{512 \norm{\vf}})$,
\begin{align}
\label{eqn:covering-bound-hp}
    \inf_{\vv \in C_\epsilon} \vv^T \left(\vX_\lambda - \vB_\lambda \right) \vv &\geq \frac{\nicefrac{n p_0}{64}}{\lambda + \norm{\vf}^2}
\end{align}
To ensure that the above event holds with probability at least $1 - \nicefrac{\delta}{2}$, we require $\lambda$ to be lower bounded as,
\begin{align}
\label{eqn:wls-lambda-lower-bound}
\frac{\sqrt{\lambda}}{\norm{\vf}} \geq \frac{512}{np^2_0} \left(\ln(6) + d \ln\left(\nicefrac{3}{\epsilon}\right) + \ln\left(\nicefrac{2}{\delta}\right) \right)
\end{align}
Suppose $\lambda$ and $\epsilon$ appropriately chosen (to be specified later) such that equation \eqref{eqn:wls-lambda-lower-bound} is satisfied. Then, from equations \eqref{eqn:design-eigenvalue-bound} and \eqref{eqn:covering-bound-hp}, we conclude that the following holds with probability at least $1 - \delta$,
\begin{align*}
    \inf_{\norm{\vx} = 1} \vx^{T} \left(\vX_{\lambda} - \vB_{\lambda}\right) \vx &\geq \frac{\nicefrac{n p_0}{64}}{\lambda + \norm{\vf}^2} - \frac{2\epsilon n (d+1)}{\lambda} -  \frac{\epsilon n p_0}{32 \norm{\vf}\sqrt{\lambda}} \\
    &\geq \frac{\nicefrac{n p_0}{64}}{\lambda + \norm{\vf}^2} - \frac{\epsilon n}{\lambda} \left(2d + 2 + \nicefrac{p_0}{32}\right)
\end{align*}
To ensure that the RHS is non-negative, $\epsilon$ must satisfy the following,
\begin{align}
\label{eqn:wls-epsilon-lower-bound}
\nicefrac{1}{\epsilon} \geq \frac{64}{p_0}\left(1 + \nicefrac{\norm{\vf}^2}{\lambda}\right)\left(2d + 2 + \nicefrac{p_0}{32}\right)
\end{align}
Without loss of generality, assume $\epsilon \leq e^{-1}$. Then, we note that equation \eqref{eqn:wls-lambda-lower-bound} is satisfied if,
\begin{align}
\label{eqn:wls-lambda-finer-bound}
    \frac{\sqrt{\lambda}}{\norm{\vf}} \geq \frac{\tau_1 d}{n} \left[\ln(\nicefrac{d}{\delta}) + \ln(\nicefrac{2}{\epsilon})\right]
\end{align}
where $\tau_1 > 10^4$ is a universal constant. It follows that, $\nicefrac{\norm{\vf}^2}{\lambda} \leq \nicefrac{n^2}{d^2 \tau_1^2}$. We observe that, for this choice of $\lambda$, equation \eqref{eqn:wls-epsilon-lower-bound} is satisfied if $\nicefrac{1}{\epsilon} = \tau_2 d^2 n^2$ for some absolute constant $\tau_2 \geq 10^5$. Substituting this choice of $\nicefrac{1}{\epsilon}$ into equation \eqref{eqn:wls-lambda-finer-bound}, we note that equation \eqref{eqn:wls-lambda-finer-bound} is satisfied if,
\begin{align*}
    \frac{\sqrt{\lambda}}{\norm{\vf}} \geq \frac{\tau_1 d}{n} \left[\ln\left(\nicefrac{d}{\delta}\right) + 2 \tau_2 \ln(d) + 2 \tau_2 \ln(n) \right]
\end{align*}
Hence, there exists a universal constant $\tau_3$ such that the above is satisfied when,
\begin{align*}
    \sqrt{\lambda} \geq \tau_3 \norm{\vf} \nicefrac{d}{n} \log(\nicefrac{n d}{\delta})
\end{align*}
Hence, we conclude that, setting $\nicefrac{1}{\epsilon} = \Theta(n^2 d^2)$ and $\norm{\vf}^2 \geq \lambda \geq \Omega(\norm{\vf}^2\frac{d^2}{n^2} \log(\nicefrac{nd}{\delta})^2)$ is sufficient to ensure that $\inf_{\norm{\vx} = 1} \vx^{T} \left(\vX_{\lambda} - \vB_{\lambda}\right) \vx \geq 0$  with probability at least $1 - \delta$, i.e., $\vX_\lambda \succeq \vB_\lambda$ with probability at least $1 - \delta$. Furthermore, the condition $n \geq d \polylog(\nicefrac{d}{\delta})$ ensures that $\lambda \leq \norm{\vf}^2$ and $\lambda \geq \Omega(\norm{\vf}^2\frac{d^2}{n^2} \log(\nicefrac{nd}{\delta})^2)$ can be simultaneously satisfied. 
\end{proof}

\subsection{Proof of Theorem \ref{thm:wls-error-bound}}
\label{subsec:wls_pf}
\begin{proof}
We recall that, given any relaxation parameter $\lambda$ and approximate noise model $\hvf$, the WLS estimator $\hvw_{\hvf, \lambda}$ is defined as follows.
\begin{align*}
    \hvw_{\hvf, \lambda} = \left[\sum_{i=1}^{n} \frac{\vx_i \vx^T_i}{{\dotp{\hvf}{\vx_i}}^2 + \lambda }\right]^{-1} \left[\sum_{i=1}^{n} \frac{\vx_i y_i}{{\dotp{\hvf}{\vx_i}}^2 + \lambda}\right]
\end{align*}
We use $\vX_{\lambda, \hvf}$ to denote the design matrix of WLS, which is defined as,
\begin{align*}
    \vX_{\lambda, \hvf} = \left[\sum_{i=1}^{n} \frac{\vx_i \vx^T_i}{{\dotp{\hvf}{\vx_i}}^2 + \lambda }\right]
\end{align*}
Using $\vy_i = \dotp{\wstar}{\vx_i} + \dotp{\fstar}{\vx_i}\epsilon_i$, we observe that,
\begin{align*}
    \hvw_{\hvf, \lambda} - \wstar &= \vX^{-1}_{\lambda, \hvf} \left[\sum_{i=1}^{n} \frac{\epsilon_i \dotp{\fstar}{\vx_i} \vx_i}{{\dotp{\hvf}{\vx_i}}^2 + \lambda}\right] = \sum_{i=1}^{n} \epsilon_i \vv_i,
\end{align*}
where $\vv_i$ is defined as,
\begin{align*}
    \vv_i &= \left[\frac{\dotp{\fstar}{\vx_i} }{{\dotp{\hvf}{\vx_i}}^2 + \lambda}\right] \vX^{-1}_{\lambda, \hvf} \vx_i
\end{align*}
We now define the matrix $\vM$ as follows
\begin{align*}
    \vM &= \sum_{i=1}^{n} \vv_i \vv^T_i = \vX^{-1}_{\lambda, \hvf} \left[ \sum_{i=1}^{n} \frac{{\dotp{\fstar}{\vx_i}}^2 \vx_i \vx^T_i}{\left[{\dotp{\hvf}{\vx_i}}^2 + \lambda\right]^2} \right] \vX^{-1}_{\lambda, \hvf}
\end{align*}
We note that, since $\vX_{\lambda, \hvf}$ is a symmetric PSD matrix, so is $\vM$. Furthermore, since $\epsilon_i$ are independent of $\vv_i$, we conclude that the following holds with probability at least $1 - \nicefrac{\delta}{3}$,
\begin{equation}
\label{eqn:wls-proof-parameter-error-Tr-M}
    \norm{\hvw_{\hvf, \lambda} - \wstar}^2 \leq \Tr(\vM)\left[1 + \max \left\{ 8\log(\nicefrac{3d}{\delta}), \sqrt{8\log(\nicefrac{3d}{\delta})}  \right\} \right]
\end{equation}
Motivated by the fact that Lemma \ref{lem:reweighted-design-conc} allows us to upper bound the trace of $\vX_{\lambda, \hvf}^{-1}$, but directly controlling $\Tr(\vM)$ might be cumbersome, we now aim to establish that $\vM \preceq 2 \vX_{\lambda, \hvf}^{-1}$ holds with high probability, and consequently, so does $\Tr(\vM) \leq 2 \Tr\left(\vX_{\lambda, \hvf}^{-1}\right)$. To this end, we recall that $\norm{\hvf - \fstar}^2 \leq \epsilon$ and $\vx_1 \dots, \vx_n$ are independent of $\vx_i$. Thus,  $\left(\hvf - \fstar\right)^{T} \vx_i \ \iidsim \cN(0, \epsilon)$. Furthermore, by concentration of suprema of Gaussian random variables, the following holds with probability at least $1 - \nicefrac{\delta}{3}$,
\begin{align*}
    \abs{\left(\hvf - \fstar\right)^{T} \vx_i} \leq \beta = \sqrt{2 \epsilon \log\left(\nicefrac{6n}{\delta}\right)} \ \ \forall \ i \in [n]
\end{align*}
Conditioned on the above event occuring, the following inequalities hold uniformly for every $i \in [n]$ with probability 1.
\begin{align*}
    \abs{\left(\hvf^{T} \vx_i\right)^2 - \left((\fstar)^{T} \vx_i\right)^2} &= \abs{\left(\hvf + \fstar \right)^{T} \vx_i} \abs{\left(\hvf + \fstar \right)^{T} \vx_i} \\
    &\leq \beta \abs{2 \left({\fstar}^{T} \vx_i\right) + \beta} \\
    &\leq 2 \beta \abs{\left(({\fstar})^{T} \vx_i\right)} + \beta^2 \\
    &\leq \frac{\left(({\fstar})^{T} \vx_i\right)^2}{2} + 6 \epsilon \log\left(\nicefrac{2n}{\delta}\right) \ \ \forall \ i \in [n]
\end{align*}
Furthermore, 
\begin{align*}
    \left((\fstar)^T \vx_i\right)^2 &\leq \left((\hvf)^T \vx_i\right)^2 + \abs{\left((\hvf)^{T} \vx_i\right)^2 - \left(({\fstar})^{T} \vx_i\right)^2} \\
    &\leq \left((\hvf)^T \vx_i\right)^2 + \lambda + \frac{\left(({\fstar})^{T} \vx_i\right)^2}{2} + 6 \epsilon \log\left(\nicefrac{6n}{\delta}\right) \ \ \forall \ i \in [n]
\end{align*}
Setting $\lambda \geq \max \{ \epsilon,  \nicefrac{\norm{\hvf}^2d^2}{n^2} \} \polylog(\nicefrac{nd}{\delta}) \geq 6 \epsilon \log\left(\nicefrac{6n}{\delta}\right)$, we conclude that the following holds with probability at least $1 - \nicefrac{\delta}{3}$
\begin{align}
\label{eqn:wls-proof-algebraic-trick}
    \frac{\left((\fstar)^T \vx_i\right)^2}{\left((\hvf)^T \vx_i\right)^2 + \lambda} \leq 2 \ \ \forall \ i \in [n]
\end{align}
Since $\vX_{\lambda, \hvf}$ and $\vM$ are both PSD matrices, we conclude that the following holds with probability at least $1 - \nicefrac{\delta}{3}$,
\begin{align}
\label{eqn:wls-proof-M-psd-bound}
    \vM &= \vX^{-1}_{\lambda, \hvf} \left[ \sum_{i=1}^{n} \frac{{\dotp{\fstar}{\vx_i}}^2 \vx_i \vx^T_i}{\left[{\dotp{\hvf}{\vx_i}}^2 + \lambda\right]^2} \right] \vX^{-1}_{\lambda, \hvf} \nonumber \\
    &\preceq 2 \vX^{-1}_{\lambda, \hvf} \left[ \sum_{i=1}^{n} \frac{\vx_i \vx^T_i}{{\dotp{\hvf}{\vx_i}}^2 + \lambda} \right] \vX^{-1}_{\lambda, \hvf} \nonumber \\
    &\preceq 2 \vX^{-1}_{\lambda, \hvf}
\end{align}
where the second PSD inequality follows from \eqref{eqn:wls-proof-algebraic-trick}. Furthermore, applying Lemma \ref{lem:reweighted-design-conc} to $\vX_{\lambda, \hvf}$, we ensure that the following holds with probability $1 - \nicefrac{\delta}{3}$
\begin{align}
\label{eqn:wls-proof-design-trace-bound}
    \Tr\left(\vX^{-1}_{\lambda, \hvf}\right) \leq \left(\nicefrac{\norm{\hvf}^2}{n} + \nicefrac{d \norm{\hvf} \sqrt{\lambda}}{n} \right) \polylog(\nicefrac{nd}{\delta})
\end{align}
Finally, from \eqref{eqn:wls-proof-parameter-error-Tr-M}, \eqref{eqn:wls-proof-M-psd-bound} and \eqref{eqn:wls-proof-design-trace-bound}, we conclude that the following guarantee holds with probability at least $1 - \delta$,
\begin{align*}
    \norm{\hvw_{\hvf, \lambda} - \wstar}^2 &\leq \left(\nicefrac{\norm{\hvf}^2}{n} + \nicefrac{d \norm{\hvf} \sqrt{\lambda}}{n}\right )\polylog(\nicefrac{nd}{\delta}) \\
    &\leq \left(\nicefrac{\norm{\fstar}^2}{n} + \nicefrac{d \norm{\fstar} \sqrt{\lambda}}{n} + \nicefrac{\epsilon}{n} + \nicefrac{d \sqrt{\epsilon \lambda}}{n}\right)\polylog(\nicefrac{nd}{\delta})
\end{align*}
\end{proof}
\subsection{Algorithm and results for linear regression with multiplicative noise}
\label{app:ourmultalg}
In this section, we present a simpler algorithm, \ourmultalg, in Algorithm~\ref{alg:self_symbol} for linear regression with multiplicative noise, that achieves an improved rate compared to OLS.
\begin{algorithm}
\caption{\texttt{\ourmultalg} } \label{alg:self_symbol}
\textbf{Require}: $(\vx_1, y_1), \dots, (\vx_n, y_n)$.  Steps $K$. Weights $\lambda_1, \dots, \lambda_K$.
\begin{algorithmic}[1] 
\STATE Divide the data into $K + 1$ partitions of size $m = \left\lfloor{\nicefrac{n}{K}}\right\rfloor$
\STATE Let $\hvw_0$ be the OLS estimate $\wols$ computed on the first data partition
\FOR{$k \in \{1,\ldots,K \}$}
    \STATE Compute $\hvw_k$ to be the WLS estimator (Algorithm \ref{alg:wls}) computed on the $(k+1)^{\textrm{th}}$ data partition using regularization weight $\lambda_k$ and noise model estimate $\hvw_{k-1}$, i.e, $\hvw_k = \hvw_{\hvw_{k-1}, \lambda_k}$
\ENDFOR
\STATE Output $\hvw_K$
\end{algorithmic}
\end{algorithm}
\begin{theorem}[\ourmultalg] 
\label{thm:yinyin-mult-regression-bound}In the heteroscedastic regression model, we take $\wstar = \pm \fstar$. 
Consider any $\delta \in (0, \nicefrac{1}{2})$ and let $(\vx_1, y_1), \dots, (\vx_n, y_n) \in \bR^d \times \bR$ be i.i.d samples from the multiplicative linear regression model. Then, for $n \geq d \polylog(d)$ and $K = \Theta(\log(n))$, there exists an appropriate choice of weights $\lambda_1, \dots, \lambda_K$ such that the output $\hvw_K$ of the \ourmultalg~algorithm satisfies the following with probability at least $1 - \delta$:
$$\norm{\vw_K - \wstar}^2 \leq \Otilde\left(\nicefrac{\norm{\wstar}^2}{n} + \nicefrac{\norm{\wstar}^2 d^2}{n^2}\right)$$
\end{theorem}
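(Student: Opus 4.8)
The plan is to run the induction suggested by the quality-adaptive nature of WLS: each application of Algorithm~\ref{alg:wls} turns a current error bound $\norm{\hvw_{k-1}-\wstar}^2 \le \epsilon_{k-1}$ into a better one for $\hvw_k$, and we iterate until the recursion reaches the fixed point $\Otilde(\norm{\wstar}^2(1/n + d^2/n^2))$. The crucial structural observation specific to the multiplicative-noise case $\wstar = \pm\fstar$ is that the noise model equals the regressor, so any estimate $\hvw_{k-1}$ of $\wstar$ is \emph{simultaneously} an estimate of $\fstar$; thus we can legitimately plug $\hvf = \hvw_{k-1}$ into Theorem~\ref{thm:wls-error-bound} with $\epsilon = \epsilon_{k-1}$. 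First I would establish the base case: each partition has size $m = \lfloor n/K\rfloor = \Thetatilde(n)$ since $K = \Theta(\log n)$, so by Theorem~\ref{thm:ols-error-bound} the OLS estimate on partition~1 satisfies $\norm{\hvw_0 - \wstar}^2 \le \epsilon_0 := \Otilde(\norm{\wstar}^2 d/m) = \Otilde(\norm{\wstar}^2 d/n)$ with probability $1-\delta/(K+1)$ (absorbing the $\polylog$ blow-up from $m\to n$ into the $\Otilde$). Note $\norm{\hvw_0}^2 \le 2\norm{\wstar}^2(1 + o(1))$ on this event, which we carry forward.

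Next I would set up the recursion. Suppose inductively that $\norm{\hvw_{k-1}-\wstar}^2 \le \epsilon_{k-1}$ and $\norm{\hvw_{k-1}}^2 \le 4\norm{\wstar}^2$ on an event of probability $\ge 1 - k\delta/(K+1)$. Since $\hvw_{k-1}$ is computed from disjoint data, it is independent of the $(k+1)$-th partition, so Theorem~\ref{thm:wls-error-bound} applies with $\hvf = \hvw_{k-1}$, $\epsilon = \epsilon_{k-1}$, and the choice $\lambda_k = \max\{\epsilon_{k-1}, \norm{\hvw_{k-1}}^2 d^2/m^2\}\polylog(\tfrac{md}{\delta})$. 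This yields, with probability $\ge 1-\delta/(K+1)$ additionally,
\begin{align*}
\norm{\hvw_k - \wstar}^2 &\le \Bigl[\tfrac{\norm{\fstar}^2}{m} + \tfrac{d\norm{\fstar}\sqrt{\lambda_k}}{m} + \tfrac{\epsilon_{k-1}}{m} + \tfrac{d\sqrt{\epsilon_{k-1}\lambda_k}}{m}\Bigr]\polylog(\tfrac{md}{\delta}).
\end{align*}
Using $\norm{\fstar} = \norm{\wstar}$, $m = \Thetatilde(n)$, and $\sqrt{\lambda_k} \le \bigl(\sqrt{\epsilon_{k-1}} + \tfrac{\norm{\wstar}d}{n}\bigr)\polylog$, the dominant new contribution is $\tfrac{d\norm{\wstar}\sqrt{\epsilon_{k-1}}}{n}\polylog$ together with the irreducible floor $\bigl(\tfrac{\norm{\wstar}^2}{n} + \tfrac{\norm{\wstar}^2 d^2}{n^2}\bigr)\polylog$. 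Hence the recursion has the form
\begin{align*}
\epsilon_k \;\le\; C\,\Bigl(\tfrac{d}{n}\Bigr)\norm{\wstar}\sqrt{\epsilon_{k-1}}\,\polylog + C\,\norm{\wstar}^2\Bigl(\tfrac{1}{n} + \tfrac{d^2}{n^2}\Bigr)\polylog
\end{align*}
for a $\polylog(nd/\delta)$ factor. Writing $\epsilon_k = \norm{\wstar}^2 \eta_k$, this is $\eta_k \le C\tfrac{d}{n}\sqrt{\eta_{k-1}}\polylog + C(\tfrac1n + \tfrac{d^2}{n^2})\polylog$; starting from $\eta_0 = \Otilde(d/n)$, the map $\eta\mapsto \tfrac dn\sqrt{\eta}\polylog$ contracts (its square-root shrinks the exponent of $d/n$ geometrically: $\eta_k$ behaves like $(d/n)^{S_k}$ with $S_k = \sum_{j\le k} 2^{-j} \to 2$), so after $K = \lceil\log_2 n\rceil$ steps $\eta_K \le \Otilde(1/n + d^2/n^2)$, i.e.\ the floor dominates. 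I would also check along the way that $\epsilon_k \le \norm{\wstar}^2$ and $\norm{\hvw_k}^2 \le 4\norm{\wstar}^2$ persist, so that the hypothesis $\norm{\hvf}^2 \ge \lambda$ needed by Lemma~\ref{lem:reweighted-design-conc} (via Theorem~\ref{thm:wls-error-bound}) stays valid; these follow from $n \ge d\polylog(d)$, which makes $d/n$ and the $\polylog$-inflated floor both $\ll 1$.

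Finally, a union bound over the $K+1 = O(\log n)$ stages gives total failure probability $\le \delta$, and since $K = \Theta(\log n)$ the $\polylog$ factors accumulated over the stages remain $\polylog(nd/\delta)$, absorbed into $\Otilde$. The main obstacle I anticipate is bookkeeping the recursion cleanly — in particular verifying that at every step the WLS precondition $\lambda_k \ge \max\{\epsilon_{k-1}, \norm{\hvw_{k-1}}^2 d^2/m^2\}\polylog$ is met \emph{with the $\lambda_k$ we actually prescribe}, and that the $\polylog$ factors (which differ per stage and get squared under the $\sqrt{\lambda_k}$ manipulations) do not secretly grow with $k$; a careful choice of $\eta_k$ as an explicit decreasing sequence dominating the true iterates, together with the observation $S_k < 2$ for all finite $k$ so the exponent never overshoots, handles this. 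Everything else is a direct invocation of Theorems~\ref{thm:ols-error-bound} and~\ref{thm:wls-error-bound} with $\hvf = \hvw_{k-1}$.
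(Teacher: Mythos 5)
Your proposal is correct and follows essentially the same route as the paper's proof: an OLS base case, iterated WLS with $\hvf = \hvw_{k-1}$ and $\lambda_k$ at the current error level, the contraction $\epsilon_k \lesssim \tfrac{d}{n}\norm{\wstar}\sqrt{\epsilon_{k-1}} + \norm{\wstar}^2(\tfrac{1}{n} + \tfrac{d^2}{n^2})$ tracked through the exponent sequence $S_k = \sum_{j\le k} 2^{-j} \to 2$ over $K = \Theta(\log n)$ stages, and a union bound with a $k$-independent $\polylog$ factor. The bookkeeping issue you flag (keeping the per-stage $\polylog$ from growing with $k$, and the norm/$\lambda$ preconditions of Theorem~\ref{thm:wls-error-bound}) is exactly what the paper handles via its events $E_k(L)$ with a fixed large $L$, so no gap remains.
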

\begin{proof}[Proof of Theorem \ref{thm:yinyin-mult-regression-bound}]
For ease of exposition, assume $n = mK$ where $K = \lceil\log_2(n)\rceil$ and $m \geq \Omegatilde(d)$. For $k \in \{0, \dots, K \}$, let $e_k = \norm{\vw_k - \wstar}^2$. Since $\hvw_0$ is the OLS estimate computed on $m$ samples, we know that with probability at least $1 - \nicefrac{\delta}{(K+1)}$, $e_0 = \Otilde(\norm{\wstar}^2\nicefrac{d}{m})$. Set $\lambda_1 = \Otilde(\norm{\hvw_0}^2 \nicefrac{d}{m})$. Since $\hvw_1 = \hvw_{\hvw_0, \lambda_1}$, it follows from Theorem \ref{thm:wls-error-bound} and a union bound that the following holds with probability at least $1 - \nicefrac{2\delta}{(K+1)}$,
\begin{align*}
    e_1 = \norm{\hvw_1 - \wstar}^2 &\leq \Otilde\left(\frac{\norm{\hvw_0}^2}{m} + \frac{d \norm{\hvw_0} \sqrt{\lambda_1}}{m} \right) \\
    &\leq \Otilde\left( \norm{\hvw_0}^2 \left(\nicefrac{1}{m} + \left(\nicefrac{d}{m}\right)^{1.5}\right) \right) \\
    &\leq \Otilde\left(\norm{\wstar}^2\left(1 + \nicefrac{d}{m}\right)\left(\nicefrac{1}{m} + \left(\nicefrac{d}{m}\right)^{1.5}\right)\right) \\
    &\leq \Otilde\left(\norm{\wstar}^2 \left(\nicefrac{1}{m} + \left(\nicefrac{d}{m}\right)^{1.5}\right)\right) \nonumber \\
    &= \left(\norm{\wstar}^2 \left(\nicefrac{1}{m} + \left(\nicefrac{d}{m}\right)^{1.5}\right)\right) \polylog(\tfrac{nd}{\delta})
\end{align*}
where the last inequality follows from the fact that $m \geq \Omegatilde(d)$. We now prove the required convergence guarantee via induction. To this end, we define $S_k = \sum_{j=0}^{k} \nicefrac{1}{2^j}$ Clearly, $1 \leq S_k \leq 2$ and $S_{k+1} = 1 + \nicefrac{S_k}{2}$. Let $L = \polylog(\nicefrac{nd}{\delta}) > 1$ be a large enough polylog factor independent of $k$. We now define the event $E_k(L)$ as follows: 
\begin{enumerate}
    \item $\|\hvw_l\| \leq 2 \|\wstar\|$ for every $1\leq l\leq k$
    \item $e_{l} \leq \norm{\wstar}^2 \left(\max(\frac{l}{m},\frac{ld^2}{m^2}) + (\frac{d}{m})^{S_l}\right)L$ for every $1\leq l\leq k$
\end{enumerate}

We set, $\lambda_{k+1} = \norm{\hvw_k}^2 \left(\max(\frac{k}{m},\frac{kd^2}{m^2}) + (\frac{d}{m})^{S_k}\right)L\polylog(\frac{nd}{\delta})$. Note that $\hvw_{k+1} = \hvw_{\hvw_k, \lambda_{k+1}}$, we conclude from Theorem \ref{thm:wls-error-bound} that whenever $\polylog$ is large enough (independent of $k,K$), when conditioned on the event $E_k(L)$ the following holds with probability at least $1 - \nicefrac{\delta}{(K+1)}$, with $\polylog$ factor being independent of $k$:

\begin{align*}
    e_{k+1} &\leq \left(\frac{\norm{\wstar}^2}{m} + \frac{d \norm{\wstar} \sqrt{\lambda_k}}{m}\right)\polylog(nd/\delta) \\
    &\leq \left(\frac{\norm{\wstar}^2}{m} + \norm{\wstar}^2\max\left(\frac{d\sqrt{k}}{m^{3/2}},\frac{d^2\sqrt{k}}{m^{2}}\right)+\norm{\wstar}^2\left(\frac{d }{m}\right)^{1+S_k/2}\right)\sqrt{L}\polylog(nd/\delta) \\
    &\leq \left(\norm{\wstar}^2\max\left(\frac{k+1}{m},\frac{d^2(k+1)}{m^{2}}\right)+\norm{\wstar}^2\left(\frac{d }{m}\right)^{1+S_k/2}\right)\sqrt{L}\polylog(nd/\delta)
\end{align*}

In the second step, we have used the fact that $\sqrt{x+y} \leq \sqrt{x} + \sqrt{y}$. In the third step, we use the fact that $\sqrt{k}\leq k$ and $\frac{d}{m^{3/2}} \leq \max(\frac{1}{m},\frac{d^2}{m^2})$, since it is the geometric mean of $1/m$ and $d^2/m^2$. Picking $L$ to be a large enough $\polylog(\tfrac{nd}{\delta})$ (independent of $k$), we conclude that conditioned on $E_k(L)$, we must have with probability at-least $1-\delta/(K+1)$:
$$e_{k+1} \leq \left(\norm{\wstar}^2\max\left(\frac{k+1}{m},\frac{d^2(k+1)}{m^{2}}\right)+\norm{\wstar}^2\left(\frac{d }{m}\right)^{1+S_k/2}\right)L$$

If we take $n > d \polylog(d/\delta)$ for large enough $\polylog()$ as in the statement of the theorem, the above equation when combined with the triangle inequality also implies that:

$$\|\hvw_{k+1}\| \leq 2\|\wstar\|\,.$$ 

Therefore, we conclude:
$\bP(E_{k+1}|E_{k}) \geq 1-\frac{\delta}{K+1}$. Applying a union bound on this and using the fact that $\bP(E_1) \geq 1-\frac{2\delta}{K+1}$, we conclude:

$$\bP(E_{K+1}) \geq 1-\delta$$

When $K \geq \log n$, we have $2\geq S_{K+1} \geq 2-1/n$. Thus, we have under the event $E_{K+1}$:

\begin{align*}
e_{K+1} &\leq \norm{\wstar}^2 \left(\max(\frac{1}{m},\frac{d^2}{m^2}) + \left(\frac{d}{m}\right)^{2-\tfrac{1}{n}}\right)\polylog(\tfrac{nd}{\delta}) \nonumber \\
&\leq \norm{\wstar}^2 \left(\max(\frac{1}{m},\frac{d^2}{m^2}) + \left(\frac{d}{m}\right)^{2}\right)\polylog(\tfrac{nd}{\delta}) \end{align*}

Which proves the result.
\end{proof}
\subsection{Proof of Theorem \ref{thm:ols-error-bound}}
\label{proof:ols-error-proof}
To derive the high probability upper bound, consider any $\delta \in (0, \nicefrac{1}{2})$. We note that the OLS estimator for the heteroscedastic regression problem satisfies
\begin{align*}
    \wols - \wstar = \left(\sum_{i=1}^{n} \vx_i \vx^T_i \right)^{-1} \left(\sum_{i=1}^{n} \dotp{\fstar}{\vx_i} \epsilon_i \vx_i \right)
\end{align*}
Applying Lemma \ref{lem:normal-vec-norm}, we note that the following holds with probability at least $1 - \nicefrac{\delta}{3}$
\begin{align*}
    \norm{\wols - \wstar} &\leq \Tr(\vM) \left(1 + 8 \log(\nicefrac{3d}{\delta})\right) \\
    \vM &= \left(\sum_{i=1}^{n} \vx_i \vx^T_i \right)^{-1} \left(\sum_{i=1}^{n} {\dotp{\fstar}{\vx_i}}^2 \vx_i \vx^T_i\right)\left(\sum_{i=1}^{n} \vx_i \vx^T_i \right)^{-1}
\end{align*}
By suprema of subgaussian random variables, the following holds with probability at least $1 - \nicefrac{\delta}{3}$
\begin{align*}
    {\dotp{\fstar}{\vx_i}}^2 \leq \norm{\fstar}^2 \log(\nicefrac{3n}{\delta}) \ \forall i \in [n]
\end{align*}
Furthermore, by concentration of Wishart matrices \citep{vershynin2010introduction}, the following holds with probability at least $1 - \nicefrac{\delta}{3}$
\begin{align*}
    \lambda_{\min}\left(\sum_{i=1}^{n} \vx_i \vx^T_i\right) \geq n\left(1 - \sqrt{\frac{d\log(\nicefrac{3d}{\delta})}{n}}\right) \geq \nicefrac{n}{2}
\end{align*}
where we use the fact that $n \geq \Otilde(d)$. 
Hence, by a union bound, it follows that,
\begin{align*}
    \Tr\left(\vM\right) &\leq \norm{\fstar}^2 \log(\nicefrac{3n}{\delta}) \Tr\left(\left(\sum_{i=1}^{n} \vx_i \vx^T_i \right)^{-1}\right) \\
    &\leq \norm{\fstar}^2 \log(\nicefrac{3n}{\delta}) \frac{d}{\lambda_{\min}\left(\sum_{i=1}^{n} \vx_i \vx^T_i\right)} \\
    &\leq \norm{\fstar}^2 \log(\nicefrac{3n}{\delta}) \frac{2d}{n}
\end{align*}
Then, it follows that,
\begin{align*}
    \norm{\wols - \wstar}^2 \leq \norm{\fstar}^2\frac{d}{n} \polylog(\nicefrac{nd}{\delta})
\end{align*}
Finally, the guarantee that $\norm{\wols - \wstar}^2 = \Theta(\nicefrac{d\|\fstar\|^2}{n})$ with probability at least $1 - d/n^c$, $c \geq 1$ follows from Theorem 1 of \cite{chaudhuri2017active}.

\section{Analysis of Spectral Method -- Proof of Theorem~\ref{thm:spec-error-bound}}
\label{sec:spec-method-proof}
We shall use the fact that for any positive semidefinite matrix $\vA$, and event $E$ that occurs with probability at least $1 - \delta$ for any $\delta \leq \nicefrac{1}{2}$, $\bE\left[\vA | E \right] \preceq \frac{\bE[\vA]}{1 - \delta} \preceq 2 \bE[\vA]$. $\hvw$ of $\wstar$ and let $\Delta = \wstar - \hvw$. Furthermore, let $\ve = \nicefrac{\fstar}{\norm{\fstar}}$ and define the matrices $\hvS$, $\vS$ and $\Sigma$ as follows,
\begin{align*}
    \hvS &= \frac{1}{n} \sum_{i=1}^{n} \left(y_i - \dotp{\hvw}{\vx_i}\right)^2 \vx_i \vx^T_i \\
    \vS &= \frac{1}{n} \sum_{i=1}^{n} {\dotp{\fstar}{\vx_i}}^2 \vx_i \vx^T_i  \\
    \Sigma &= 3 \norm{\fstar}^2 \ve \ve^T + \norm{\fstar}^2 (\vI - \ve \ve^T)
\end{align*}
Using $y_i = \dotp{\wstar}{\vx_i} + \epsilon_i \dotp{\fstar}{\vx_i}$, and writing $\epsilon^2_i = 1 + z_i$, we expand $\hvS$ as follows,
 \begin{align*}
     \hvS &= \frac{1}{n} \sum_{i=1}^{n} \left(\dotp{\Delta}{\vx_i} + \epsilon_i \right)^2 \vx_i \vx^T_i \\
     &= \vS +\frac{1}{n} \sum_{i=1}^{n} {\dotp{\Delta}{\vx_i}}^2 \vx_i \vx^T_i + \frac{2}{n} \sum_{i=1}^{n}\epsilon_i \dotp{\Delta}{\vx_i} \dotp{\fstar}{\vx_i} \vx_i \vx^T_i 
     + \frac{1}{n}\sum_{i=1}^{n} z_i {\dotp{\fstar}{\vx_i}}^2 \vx_i \vx^T_i 
 \end{align*}
We note that $\fstar$ is the top eigenvector of $\Sigma$ with eigenvalue $3\norm{\fstar}^2$. Moreover, by definition $\fspec$ is the top eigenvector of $\hSigma$ with norm $\sqrt{\nicefrac{\norm{\hSigma}}{3}}$. To this end, our proof is divided into three distinct parts, namely, controlling $\norm{\vS - \Sigma}$ via Matrix Bernstein, bounding $\norm{\hvS - \vS}$ as a function of $\norm{\Delta}$, and finally, bounding $\norm{\fspec - \fstar}$ as a function of $\norm{\hvS - \Sigma}$ via Davis-Kahan theorem. 

\subsection{Controlling $\norm{\vS - \Sigma}$}
 Since $\vx_i \ \iidsim \ \cN(0, \vI)$, we note that $\bE[\hSigma]= \Sigma$. Thus we control $\norm{\hSigma - \Sigma}$ via Matrix Bernstein's inequality. To this end, define the matrix $\vA_i = {\dotp{\fstar}{\vx_i}}^2\vx_i \vx^T_i$. Let $E_1$ and $E_2$ denote the events $E_1 = \left\{ {\dotp{\fstar}{\vx_i}}^2 \leq \norm{\fstar}^2 \log(\nicefrac{4n}{\delta}) \ \forall \ i \in [n] \right\}$, $E_2 = \left\{ \norm{\vx_i}^2 \leq d + \log(\nicefrac{4n}{\delta}) \right\}$, and let $E = E_1 \cap E_2$.  Gaussian concentration implies $\bP(E) \geq 1 - \delta$. Moreover, conditioned on the event $E$, the following holds,
\begin{align*}
     \norm{\vA_i} &\leq {\dotp{\fstar}{\vx_i}}^2 \norm{\vx_i}^2 \leq \norm{\fstar}^2 \log(\nicefrac{4n}{\delta}) \left[d + \log(\nicefrac{4n}{\delta})\right] \leq \norm{\fstar}^2 d \polylog(\nicefrac{nd}{\delta})\\
     \bE\left[\vA_i \vA^T_i | E\right] &= \bE\left[{\dotp{\fstar}{\vx_i}}^4 \norm{\vx_i}^2 \vx_i \vx^T_i | E \right] \preceq \norm{\fstar}^4 \log^{2}(\nicefrac{4n}{\delta}) \left(d + \log(\nicefrac{4n}{\delta})\right) \bE\left[\vx_i \vx_i^T | E\right] \\
     &\preceq \vI d \norm{\fstar}^4 \polylog(\nicefrac{nd}{\delta})
\end{align*}
Notice that conditioned on $E$, notice that $\vA_i$ are still i.i.d random matrices. Hence, by Matrix Bernstein inequality, $P(E_3 | E) \geq 1 - \delta$ where $E_3$ is given by,
\begin{align*}
    E_3 &= \left\{ \norm{\hSigma - \bE\left[\vA_i | E \right]} \leq  \norm{\fstar}^2 \sqrt{\frac{d}{n}}\polylog(\nicefrac{nd}{\delta}) \right\}
\end{align*}
It follows that
\begin{align}
    \bP \left\{ \norm{\hSigma - \bE\left[\vA_i | E \right]} \leq \norm{\fstar}^2 \sqrt{\frac{d}{n}}\polylog(\nicefrac{nd}{\delta}) \right\} \geq 1 - 2\delta \label{eq:second_term_bound}
\end{align}
We now need to show that $\mathbb{E}\left[\vA_i|E\right] \approx \mathbb{E}\left[\vA_i\right]$. Consider:
\begin{align*}
&\|\bE\left[\vA_i|E\right] - \bE\left[\vA_i\right]\| = \norm{\frac{\bE\left[\vA_i\mathbbm{1}(E)\right]}{\bP(E)} - \bE\left[\vA_i\right]} = \norm{\frac{\bE(\vA_i) -\bE\left[\vA_i\mathbbm{1}(E^{\complement})\right]}{\bP(E)} - \bE\left[\vA_i\right]} \\
&\leq \frac{\bP(E^{\complement})\norm{\bE[\vA_i]}}{\bP(E)} + \frac{\norm{\bE[\vA_i \mathbbm{1}(E^{\complement})]}}{\bP(E)} \leq \frac{\bP(E^{\complement})\norm{\bE[\vA_i]}}{\bP(E)} + \frac{\sqrt{\norm{\bE[\vA_i^2]}\bP(E^{\complement})}}{\bP(E)} \\
&\leq C\|\fstar\|^2[\delta + \sqrt{d\delta}]
\end{align*}
The second inequality follows from the Cauchy-Schwarz inequality. The last inequality follows from the fact that $\|\mathbb{E}[\vA_i]\| = 3\|\fstar\|^2$ and $\|\mathbb{E}\vA_i^2\| \leq C \|\fstar\|^4 d$ for some constant $C > 0$. Replacing $\delta$ above with $\delta^2/\mathsf{poly}(nd)$, we conclude from Equation~\eqref{eq:second_term_bound} that:

\begin{align}
\label{eqn:spec-conc-bound-1}
    \bP \left\{ \norm{\hSigma - \Sigma} \leq \norm{\fstar}^2\sqrt{\frac{d}{n}}\polylog(\nicefrac{nd}{\delta}) \right\} \geq 1 - \delta
\end{align}
\subsection{Controlling $\norm{\hvS - \vS}$}
We control each term in $\norm{\hvS - \vS}$ as follows,
\subsubsection{Bounding $\norm{\nicefrac{1}{n} \sum_{i=1}^{n} {\dotp{\Delta}{\vx_i}}^2 \vx_i \vx^T_i}$} 
Let $E_1$ denote the event $E_1 = \left\{ {\dotp{\Delta}{\vx_i}}^2 \leq \norm{\Delta}^2 \log(\nicefrac{4n}{\delta}) \right\}$. Then, by suprema of Gaussian random variables, we know that $\bP(E_1) \geq 1 - \nicefrac{\delta}{2}$. Furthermore, define the event $E_2$ as follows,
 \begin{align*}
     E_2 = \left\{ \norm{\frac{1}{n} \sum_{i=1}^{n} \vx_i \vx^T_i - \vI} \leq 2 \sqrt{\frac{d}{n}} + 2t + \left(\sqrt{\frac{d}{n}} + t\right)^2\right\}
 \end{align*}
 where $t = \sqrt{\frac{2 \log(\nicefrac{4}{\delta})}{n}}$. Taking a union bound over $E_1$ and $E_2$, and using the concentration properties of Wishart matrices, we conclude that the following must hold with probability at least $1 - \delta$.
 \begin{align}
 \label{eqn:spec-conc-bound-2}
     \frac{1}{n} \sum_{i=1}^{n} {\dotp{\Delta}{\vx_i}}^2 \vx_i \vx^T_i &\preceq \frac{\norm{\Delta}^2 \log(\nicefrac{4n}{\delta})}{n} \sum_{i=1}^{n} \vx_i \vx^T_i \nonumber \\
     \norm{\frac{1}{n} \sum_{i=1}^{n} {\dotp{\Delta}{\vx_i}}^2 \vx_i \vx^T_i} &\leq \norm{\Delta}^2 \log(\nicefrac{4n}{\delta}) \left[1 + 2 \sqrt{\frac{d}{n}} + 2\sqrt{\frac{2 \log(\nicefrac{4}{\delta})}{n}} + \left(\sqrt{\frac{d}{n}} + \sqrt{\frac{2 \log(\nicefrac{4}{\delta})}{n}}\right)^2\right] \nonumber \\
     &\leq \norm{\Delta}^2 \polylog(\nicefrac{nd}{\delta})
 \end{align}
 where the last inequality uses the fact that $n \geq d \polylog(\nicefrac{nd}{\delta})$
 
\subsubsection{Bounding $\norm{\frac{2}{n} \sum_{i=1}^{n}\epsilon_i \dotp{\Delta}{\vx_i} \dotp{\fstar}{\vx_i} \vx_i \vx^T_i}$}
Define the events $E_1, E_2, E_3, E_4$ as,
\begin{align*}
    E_1 &= \left\{ \abs{\epsilon_i} \leq \sqrt{\log(\nicefrac{8n}{\delta})} \right\} \\
    E_2 &= \left\{ \abs{\dotp{\Delta}{\vx_i}} \leq \norm{\Delta}\sqrt{\log(\nicefrac{8n}{\delta})} \right\} \\
    E_3 &= \left\{ \abs{\dotp{\fstar}{\vx_i}} \leq \norm{\fstar}\sqrt{\log(\nicefrac{8n}{\delta})} \right\} \\
    E_4 &= \left\{\norm{\vx_i}^2 \leq d + \log(\nicefrac{8n}{\delta}) \right\}
\end{align*}
Let $E = E_1 \cap E_2 \cap E_3 \cap E_4$. It follows from Gaussian concentration that $P(E) \geq 1 - \delta$. We now follow the same steps as the above. In particular, let $\vB_i = \epsilon_i \dotp{\Delta}{\vx_i} \dotp{\fstar}{\vx_i} \vx_i \vx^T_i$. Then, conditioned on $E$,
\begin{align*}
    \norm{\vB_i} &\leq \abs{\epsilon_i} \abs{\dotp{\Delta}{\vx_i}} \norm{\vx_i}^2 \\
    &\leq \norm{\Delta} \norm{\fstar} \log^{\nicefrac{3}{2}}\left(\nicefrac{8n}{\delta}\right) \left(d + \log(\nicefrac{8n}{\delta})\right) \leq \norm{\fstar} \norm{\Delta} d \polylog(\nicefrac{nd}{\delta}) \\
    \bE\left[\vB_i \vB^T_i | E\right] &= \bE\left[\epsilon^2_i {\dotp{\Delta}{\vx_i}}^2 {\dotp{\fstar}{\vx_i}}^2 \vx_i \vx^T_i | E\right] \\
    &\preceq \norm{\Delta}^2 \norm{\fstar}^2 \log^3(\nicefrac{8n}{\delta}) \left(d + \log(\nicefrac{8n}{\delta})\right) \bE\left[\vx_i \vx^T_i | E\right] \\
    &\preceq \norm{\Delta}^2 \norm{\fstar}^2 \polylog(\nicefrac{nd}{\delta}) \vI
\end{align*}
Notice that $B_i$ are still i.i.d. when conditioned on $E$. Hence, by the matrix Bernstein inequality, $\frac{2}{n} \sum_{i=1}^{n}\epsilon_i \dotp{\Delta}{\vx_i} \dotp{\fstar}{\vx_i} \vx_i \vx^T_i$
\begin{align*}
    \bP \left\{ \norm{\frac{1}{n} \sum_{i=1}^{n}\epsilon_i \dotp{\Delta}{\vx_i} \dotp{\fstar}{\vx_i} \vx_i \vx^T_i - \bE\left[\vB_i | E \right]} \leq \norm{\fstar}\norm{\Delta}\sqrt{\frac{d}{n} }\polylog(\nicefrac{d}{\delta}) \right\} \geq 1 - 2\delta
\end{align*}
Note that $\mathbb{E}[\vB_i|E] = 0$ since $\mathbb{E}[\epsilon_i|E] = 0$. This allows us to conclude:

\begin{align}
\label{eqn:spec-conc-bound-3}
    \bP \left\{ \norm{\frac{1}{n} \sum_{i=1}^{n}\epsilon_i \dotp{\Delta}{\vx_i} \dotp{\fstar}{\vx_i} \vx_i \vx^T_i } \leq \norm{\fstar}\norm{\Delta}\sqrt{\frac{d}{n} } \polylog(\nicefrac{nd}{\delta}) \right\} \geq 1 - 2\delta
\end{align}
\subsubsection{Bounding $\norm{\nicefrac{1}{n}\sum_{i=1}^{n}z_i {\dotp{\fstar}{\vx_i}}^2 \vx_i \vx^T_i}$}
Let $\vC_i = z_i {\dotp{\fstar}{\vx_i}}^2 \vx_i \vx^T_i$. Define the events $E_1, E_2, E_3$ as
\begin{align*}
    E_1 &= \{ \abs{z_i} \leq \log(\nicefrac{6n}{\delta}) \} \\
    E_2 &= \left\{ {\dotp{\fstar}{\vx_i}}^2 \leq \norm{\fstar}^2 \log(\nicefrac{6n}{\delta}) \right\} \\
    E_3 &= \left\{ \norm{\vx_i}^2 \leq d + \log(\nicefrac{6n}{\delta}) \right\} 
\end{align*}
Let $E = E_1 \cap E_2 \cap E_3$. Then $P(E) \geq 1 - 3\delta$. Conditioning on $E$ and following the same steps as before,
\begin{align*}
    \norm{\vC_i} &\leq \norm{\fstar}^2 \log^2(\nicefrac{6n}{\delta})\left[d + \log(\nicefrac{6n}{\delta})\right] \\
    \bE\left[\vC_i \vC_i | E\right] &= \bE\left[z^2_i {\dotp{\fstar}{\vx_i}}^4 \norm{\vx_i}^2 \vx_i \vx^T_i | E\right] \\
    &\preceq 2 \norm{\fstar}^4 \log^4(\nicefrac{6n}{\delta})\left[d + \log(\nicefrac{6n}{\delta})\right] \vI 
\end{align*}
Hence, applying the matrix Bernstein inequality in a similar way as above, we obtain:
\begin{align}
    \bP\left\{ \norm{\frac{1}{n} \sum_{i=1}^{n} z_i  {\dotp{\fstar}{\vx_i}}^2 - \bE\left[\vC_i|E\right]} \leq \norm{\fstar}^2 \sqrt{\frac{d}{n}} \polylog(\nicefrac{nd}{\delta})\right\} \geq 1 - 2\delta \label{eq:fourth_term_bound}
\end{align}

Now, it remains to bound $\|\mathbb{E}[\vC_i|E]\|$. Notice that, when conditioned on $E$, $z_i$ and ${\dotp{\fstar}{\vx_i}}^2 \vx_i \vx^T_i$ are independent. Therefore, we conclude:

\begin{align*}
&\|\mathbb{E}[\vC_i|E]\| \lesssim |\mathbb{E}[z_i|E]| \|\fstar\|^2d\log(\tfrac{6n}{\delta})^2 = \frac{\|\fstar\|^2d\log(\tfrac{6n}{\delta})^2}{\bP(E)}|\mathbb{E}[z_i\mathbbm{1}(E)]| \nonumber \\
&= \frac{\|\fstar\|^2d\log(\tfrac{6n}{\delta})^2}{\bP(E)}|\mathbb{E}[z_i\mathbbm{1}(E^{\complement})]| \leq \frac{\|\fstar\|^2d\log(\tfrac{6n}{\delta})^2}{\bP(E)}\sqrt{\mathbb{E}z_i^2\bP(E^{\complement})} \nonumber \\
&\lesssim \frac{\|\fstar\|^2d\log(\tfrac{6n}{\delta})^2}{\bP(E)}|\mathbb{E}[z_i\mathbbm{1}(E^{\complement})]| \leq \|\fstar\|^2d\log(\tfrac{6n}{\delta})^2\sqrt{\delta} 
\end{align*}
In the second line we have used the fact that $\bE[z_i] = 0$ and hence $\bE[z_i \mathbbm{1}(E)] = - \bE[z_i \mathbbm{1}(E^{\complement})]$. We have also used the Cauchy-Scwharz inequality. Therefore, replacing $\delta$ with $\delta^2/\mathsf{poly}(nd)$ in the above discussion and using Equation~\eqref{eq:fourth_term_bound}, we conclude:
\begin{align}
\label{eqn:spec-conc-bound-4}
    \bP\left\{ \norm{\frac{1}{n} \sum_{i=1}^{n} z_i \dotp{\Delta}{\vx_i} \dotp{\fstar}{\vx_i} } \leq \norm{\fstar}^2 \sqrt{\frac{d}{n}} \polylog(\nicefrac{nd}{\delta})\right\} \geq 1 - \delta
\end{align}
From \eqref{eqn:spec-conc-bound-1}, \eqref{eqn:spec-conc-bound-2}, \eqref{eqn:spec-conc-bound-3} and \eqref{eqn:spec-conc-bound-4}, we finally conclude that with probability at least $1 - \delta$,
\begin{align}
\label{eqn:spec-conc-bound-5}
    \norm{\hvS - \Sigma} \leq \norm{\hvS - \vS} + \norm{\vS - \Sigma} \leq \left(\norm{\Delta}^2 + \left(\norm{\fstar}^2 + \norm{\fstar}\norm{\Delta}\right) \sqrt{\frac{d}{n}}\right)\polylog(\nicefrac{nd}{\delta})
\end{align}
\subsection{Controlling $\norm{\hvf - \fstar}^2$}
Let $\fspec$ be the top eigenvector of $\hvS$ with $\norm{\fspec} = \sqrt{\nicefrac{\norm{\hvS}}{3}}$. Let $\theta$ be the angle between $\fspec$ and $\fstar$. We assume $\fspec$ is aligned with $\fstar$, i.e., $\theta \in [-\nicefrac{\pi}{2}, \nicefrac{\pi}{2}]$. This assumption is without loss of generality because the heteroscedastic regression model is invariant to the sign of $\fstar$. Let $\vg$ be a unit vector orthogonal to $\fstar$ such that $\fspec = \norm{\fspec} \cos{\theta} \ve + \norm{\fspec} \sin{\theta} \vg$. Moreover, let $\vv = \norm{\fstar} \cos{\theta} \ve + \norm{\fstar} \sin{\theta} \vg$. Since $\Sigma$ has a spectral gap of $2 \norm{\fstar}^2$, it follows from the Davis Kahan theorem that,
\begin{align}
\label{eqn:spec-conc-bound-6}
    \norm{\vv - \fstar} \leq \frac{\sqrt{2} \norm{\hvS - \Sigma}}{\norm{\fstar}^2}
\end{align}
Furthermore, since $\|\hvS\| = 3\|\fspec\|^2$, $\|\hSigma\| = 3\|\fstar\|^2$ and $\norm{\hvS} \leq \norm{\Sigma} + \norm{\hvS - \Sigma}$, we conclude that, 
\begin{align*}
    \norm{\fspec}^2 &\leq \norm{\fstar}^2 \left(1 + \frac{\norm{\hvS - \Sigma}}{3\norm{\fstar}^2}\right) \\
    \norm{\fspec} &\leq \norm{\fstar} \left(1 + \frac{\norm{\hvS - \Sigma}}{3\norm{\fstar}^2}\right)^{\nicefrac{1}{2}} \\
    &\leq \norm{\fstar} \left(1 + \frac{\norm{\hvS - \Sigma}}{3\norm{\fstar}^2}\right) \\
    \norm{\fspec} - \norm{\fstar} &\leq \frac{\norm{\hvS - \Sigma}}{3\norm{\fstar}}
\end{align*}
where we use the fact that $\sqrt{1+x} \leq 1+x$ for any $x \geq 0$. We follow similar steps to lower bound $\norm{\fspec} - \norm{\fstar}$. In particular, since $\norm{\hvS} \geq \norm{\Sigma} - \norm{\hvS - \Sigma}$, we infer that,
\begin{align*}
    \norm{\fspec}^2 &\geq \norm{\fstar}^2 \left(1 - \frac{\norm{\hvS - \Sigma}}{3\norm{\fstar}^2}\right) 
\end{align*}
Since $\sqrt{1-t} \geq 1-t \ \forall \ t \in [0, 1]$, and $\|\hSigma\|,\|\fspec\| \geq 0$, we conclude from the above inequality that the following must hold
\begin{align*}
    \norm{\fspec} &\geq \norm{\fstar} \left(1 - \frac{\norm{\hvS - \Sigma}}{3\norm{\fstar}^2}\right) \\
    \norm{\fspec} - \norm{\fstar} &\geq -\frac{\norm{\hvS - \Sigma}}{3\norm{\fstar}}
\end{align*}
Hence, $\norm{\hvf - \vv} = \abs{\norm{\fspec} - \norm{\fstar}} \leq \frac{\norm{\hvS - \Sigma}}{3\norm{\fstar}}$. From \eqref{eqn:spec-conc-bound-5} and \eqref{eqn:spec-conc-bound-6}, we obtain the following with probability at least $1 - \delta$
\begin{align*}
    \norm{\hvf - \fstar}^2 &\leq \frac{5 \norm{\hvS - \Sigma}^2}{\norm{\fstar}^2} \\
    &\leq \left(\frac{\norm{\Delta}^4}{\norm{\fstar}^2} + \left(\norm{\fstar}^2 + \norm{\Delta}^2\right) \frac{d}{n}\right)\polylog(\nicefrac{nd}{\delta}) \\
    &\leq \left(\frac{\epsilon^2}{\norm{\fstar}^2} + \left(\norm{\fstar}^2 + \epsilon\right) \frac{d}{n}\right)\polylog(\nicefrac{nd}{\delta})
\end{align*}

\section{Phase Retrieval with Multiplicative Noise}
\label{sec:phs_rtr_pf}
\subsection{Derivation of the Pseudo Gradient}

First, we will derive the Pseudo Graident $\cG$ from the fictitious square loss. Suppose we had access to $\wstar$ and $\fstar$. Then $(y_i - \langle \wstar,\vx_i\rangle)^2 = \epsilon_i^2 \langle \fstar,\vx_i\rangle^2$. 
$\mathbb{E}\left[\epsilon_i^2 \langle \fstar,\vx_i\rangle^2|\vx_i\right] = \langle \fstar,\vx_i\rangle^2$ and $\mathsf{var}(\epsilon_i^2 \langle \fstar,\vx_i\rangle^2|\vx_i) =  2\langle \fstar,\vx_i\rangle^4$. The (fictitious) square loss function which for recovering $f$ would be:

$$\lossmul(\vf) = \frac{1}{m}\sum_{i=1}^{m}\frac{\left[(y_i - \langle \wstar,\vx_i\rangle)^2 - \langle \vf,\vx_i\rangle^2\right]^2}{\langle \fstar,\vx_i\rangle^4}$$

Now, the actual gradient of this fictitious loss is: $$\nabla \lossmul(\vf) = \frac{2}{m}\sum_{i=1}^{m}\frac{\left[\langle \vf,\vx_i\rangle^2 - (y_i - \langle \wstar,\vx_i\rangle)^2 \right]\langle \vf,\vx_i\rangle \vx_i}{\langle \fstar,\vx_i\rangle^4}$$

Note that this loss function cannot be computed. Assuming we have a good enough estimate $\hat{\vf} \approx \fstar$ and $\hat{\vw}\approx \wstar$, and that $\vf \approx \hat{\vf}$, we make replace $\fstar$ with $\hat{\vf}$ and $\wstar$ with $\hat{\vw}$ and for the sake of convenience $\langle f,\vx_i\rangle$ with $\langle\hat{\vf},\vx_i\rangle$. With these approximation, we obtain:

$$\bar{\cG}(\vf) := \frac{1}{m}\sum_{i=1}^{m}\frac{\left[\langle \vf,\vx_i\rangle^2 - (y_i - \langle \hat{\vw},\vx_i\rangle)^2 \right]\langle \hat{\vf},\vx_i\rangle \vx_i}{\langle \hat{\vf},\vx_i\rangle^4}$$

In order to prevent the denominator from exploding, we add the `regularization' $\mathbbm{1}(|\langle\hat{\vf},\vx_i\rangle| \geq \bar{\mu})$ to obtain the defined pseudo-gradient:

$$\cG(\vf) := \frac{1}{m}\sum_{i=1}^{m}\mathbbm{1}(|\langle\hat{\vf},\vx_i\rangle| \geq \bar{\mu})\frac{\left[\langle \vf,\vx_i\rangle^2 - (y_i - \langle \hat{\vw},\vx_i\rangle)^2 \right]\langle \hat{\vf},\vx_i\rangle \vx_i}{\langle \hat{\vf},\vx_i\rangle^4}$$

\subsection{Noise-Contraction Decomposition of the Pseudo Gradient}
We now give the noise-contraction decomposition of the pseudo-gradient $\cG_t(\vf)$, where we write $\cG_t(\vf) = H_t(\vf)(\vf-\fstar) +N_t$ where $H_t(\vf)$ is a PSD matrix whenever $\vf$ is close to $\fstar$ and $N_t$ is the noise term. For the sake of clarity, only in this derivation, take $\gamma(\vx_i) := \frac{\mathbbm{1}(|\langle\hat{\vf},\vx_i\rangle|\geq \bar{\mu})}{\langle \hat{\vf},\vx_i\rangle^4}$.

\begin{align}
&\cG_t(\vf) = \frac{1}{m}\sum_{i=1}^{m}\gamma(\vx^{(t)}_i)(\langle \vf,\vx^{(t)}_i\rangle^2 - (Y_i-\langle\hat{\vw},\vx_i\rangle)^2)\vx_i \langle \hat{\vf},\vx^{(t)}_i \rangle \nonumber \\
&= \frac{1}{m}\sum_{i=1}^{m}\gamma(\vx^{(t)}_i)(\langle \vf,\vx^{(t)}_i\rangle^2 - \langle \fstar,\vx^{(t)}_i\rangle^2)\vx^{(t)}_i \langle \hat{\vf},\vx^{(t)}_i \rangle \nonumber \\&\quad + \frac{1}{m}\sum_{i=1}^{m}\gamma(\vx^{(t)}_i)(\langle \fstar,\vx^{(t)}_i\rangle^2 - (Y_i-\langle\hat{\vw},\vx^{(t)}_i\rangle)^2)\vx^{(t)}_i \langle \hat{\vf},\vx^{(t)}_i \rangle \label{eq:pseud_grad}
\end{align}

Define $N_t = \frac{1}{m}\sum_{i=1}^{m}\gamma(\vx^{(t)}_i)(\langle \fstar,\vx^{(t)}_i\rangle^2 - (Y_i-\langle\hat{\vw},\vx^{(t)}_i\rangle)^2 )\vx^{(t)}_i \langle \hat{\vf},\vx^{(t)}_i \rangle$. Now, consider the first term in Equation~\eqref{eq:pseud_grad}. We have:
\begin{align}
   &\frac{1}{m}\sum_{i=1}^{m}\gamma(\vx^{(t)}_i)(\langle \vf,\vx^{(t)}_i\rangle^2 - \langle \fstar,\vx^{(t)}_i\rangle^2)\vx^{(t)}_i \langle \hat{\vf},\vx^{(t)}_i \rangle \nonumber \\ &=  \frac{1}{m}\sum_{i=1}^{m}\gamma(\vx^{(t)}_i)\langle \vf - \fstar,\vx^{(t)}_i\rangle \langle f+ \fstar,\vx^{(t)}_i\rangle \vx^{(t)}_i \langle \hat{\vf},\vx^{(t)}_i \rangle\nonumber \\
 &=: H_t(\vf) (\vf - \fstar)
\end{align}
Where we define $H_t(\vf) := \frac{1}{m}\sum_{i=1}^{m}\gamma(\vx^{(t)}_i) \langle \vf+ \fstar,\vx^{(t)}_i\rangle \langle \hat{\vf},\vx^{(t)}_i \rangle \vx^{(t)}_i (\vx^{(t)}_i)^{\intercal}$. Plugging these into Equation~\eqref{eq:pseud_grad}, we conclude that:

\begin{equation}\label{eq:noise_contr}
    \cG_t(\vf) = H_t(\vf)(\vf-\fstar) + N_t
\end{equation}

\subsection{Bounding the Contraction Matrix and the Noise Vector}
In this section we will state results regarding the matrix $H_t(\vf)$ and the vector $N_t$ which will aid us in proving the convergence bounds. In 
this subsection only, for the sake of clarity, we drop the dependence on $t$ and let $(\vx_1,y_1),\dots,(\vx_m,y_m)$ to be derived from the model specified in Section~\ref{sec:setup}. We let $\Delta = \wstar - \hat{\vw}$ and $\Gamma = \max\left(\|\fstar - \hat{\vf}\|, \|\vf - \hat{\vf}\|\right)$. 

The following lemma, which we state without proof, follows from Gaussian concentration.
    
\begin{lemma}\label{lem:close_conc}
For any fixed vector $v \in \mathbb{R}^d$, we have $\bP(\sup_i |\langle v,\vx_i\rangle| > t\|v\|) \leq 2m \exp(-\tfrac{t^2}{2})$.
\end{lemma}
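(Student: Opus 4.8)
The plan is to combine the exact one-dimensional Gaussian marginal of each linear functional $\dotp{v}{\vx_i}$ with a Chernoff tail bound and a union bound over the $m$ samples. First I would dispose of degenerate cases: if $v = 0$ then $\sup_i |\dotp{v}{\vx_i}| = 0 = t\|v\|$, so the event has probability zero; and if $t \le 0$, or more generally if $2m\exp(-t^2/2) \ge 1$, the claimed bound is trivially true. Hence we may assume $v \neq 0$ and that $t$ is large enough that $2m\exp(-t^2/2) < 1$.

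Next, fix $i$ and set $Z_i := \dotp{v}{\vx_i}/\|v\|$. Since $\vx_i \sim \cN(0, \vI)$, the scalar $\dotp{v}{\vx_i}$ is a centered Gaussian with variance $\|v\|^2$, so $Z_i \sim \cN(0,1)$. The one step that needs to be written out is the subgaussian tail estimate $\bP(|Z_i| > t) \le 2\exp(-t^2/2)$: for any $\lambda > 0$, Markov applied to $e^{\lambda Z_i}$ gives $\bP(Z_i > t) \le e^{-\lambda t}\,\bE[e^{\lambda Z_i}] = e^{-\lambda t + \lambda^2/2}$, and optimizing over $\lambda$ at $\lambda = t$ yields $\bP(Z_i > t) \le e^{-t^2/2}$; the symmetry of the standard Gaussian then upgrades this to the two-sided bound.

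Finally I would take a union bound over $i \in [m]$ (which requires no independence assumption on the $\vx_i$):
\[
\bP\!\left(\sup_{i} |\dotp{v}{\vx_i}| > t\|v\|\right) \;=\; \bP\!\left(\bigcup_{i=1}^{m}\{|Z_i| > t\}\right) \;\le\; \sum_{i=1}^{m} \bP(|Z_i| > t) \;\le\; 2m\exp(-t^2/2),
\]
which is exactly the claim. There is no genuine obstacle here — the lemma is routine Gaussian concentration — and the only points worth a moment's care are (i) that the $\vx_i$ need not be independent for the union bound, and (ii) that the marginal of $\dotp{v}{\vx_i}$ is exactly $\cN(0,\|v\|^2)$ precisely because $\vx_i$ has identity covariance (for $\vx_i \sim \cN(0,\Sigma)$ one would instead normalize by $\sqrt{v^\top \Sigma v}$), which is immediate in the present setting.
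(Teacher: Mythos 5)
Your proof is correct and is exactly the argument the paper has in mind: the paper states this lemma without proof, remarking only that it "follows from Gaussian concentration," and your Chernoff bound for the $\cN(0,1)$ marginal of $\dotp{v}{\vx_i}/\|v\|$ followed by a union bound over $i \in [m]$ is that standard argument. Your side remarks (degenerate cases, no independence needed for the union bound, the role of identity covariance) are accurate but not essential.
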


First, we turn our attention to the noise term. For the sake of clarity, we will take $\gamma(\vx_i) := \frac{\mathbbm{1}(|\langle\hat{\vf},\vx_i\rangle|\geq \bar{\mu})}{\langle \hat{\vf},\vx_i\rangle^4} $. The noise term can be written as:

\begin{align}
    N &= \frac{1}{m}\sum_{i=1}^{m}\gamma(\vx_i)(\langle \fstar,\vx_i\rangle^2(1 -\epsilon_i^2)\vx_i \langle \hat{\vf},\vx_i \rangle + \frac{2}{m} \sum_{i=1}^{m}\gamma(\vx_i)\epsilon_i\langle \fstar,\vx_i\rangle \langle\Delta,\vx_i\rangle \vx_i \langle \hat{\vf},\vx_i \rangle \nonumber \\
    &\quad - \frac{1}{m} \sum_{i=1}^{m}\gamma(\vx_i) \langle\Delta,\vx_i\rangle^2 \vx_i \langle \hat{\vf},\vx_i \rangle
\end{align}

Consider \begin{align}
\bar{N}_1 := \frac{1}{m}\sum_{i=1}^{m}\gamma(\vx_i)(\langle \fstar,\vx_i\rangle^2(1 -\epsilon_i^2)\vx_i \langle \hat{\vf},\vx_i \rangle\nonumber \\
\bar{N}_2 := \frac{2}{m} \sum_{i=1}^{m}\gamma(\vx_i)\epsilon_i\langle \fstar,\vx_i\rangle \langle\Delta,\vx_i\rangle \vx_i \langle \hat{\vf},\vx_i \rangle \nonumber \\
\bar{N}_3 := \frac{1}{m} \sum_{i=1}^{m}\gamma(\vx_i) \langle\Delta,\vx_i\rangle^2 \vx_i \langle \hat{\vf},\vx_i \rangle 
 \label{eq:noise_decomp}
 \end{align}

Lemmas~\ref{lem:N1_bound},~\ref{lem:N2_bound} and~\ref{lem:N3_bound} bound this quantity. We refer to Section~\ref{sec:tech_lem_proof} for their proof. 

\begin{lemma}\label{lem:N1_bound}
Suppose $u \in \mathbb{R}^d$ and $\|\hat{\vf}\| > \bar{\mu}$:
\begin{enumerate}
    \item Let $h_1(X,u) = \frac{1}{m}\sum_{i=1}^{m} \mathbbm{1}(|\langle \hat{\vf},\vx_i\rangle|>\bar{\mu})\frac{\langle \fstar,\vx_i\rangle^4 \langle \vx_i,u\rangle^2}{\langle \hat{\vf},\vx_i\rangle^6}$
    $$\bP\left(|\langle \bar{N}_1,u\rangle|>c_0 t \sqrt{\tfrac{h_1(X,u)}{m}}\biggr|\vx_1,\dots,\vx_m\right) \leq \exp(-t) $$
    \item $$\bP\left(h_1(X,\hat{\vf}) > C(1 +  \tfrac{\Gamma^4}{\bar{\mu}^4}\log^2(\tfrac{m}{\delta}))\right) \leq \delta$$
    \item Suppose $u \perp \hat{\vf}$ and $\|u\|=1$. Let $r = \bigr\lceil \log(\tfrac{\|\hat{\vf}\|_2}{\bar{\mu}})\bigr\rceil$.  Assume $m \geq C\frac{\|\hat{\vf}\|}{\bar{\mu}}\log(\tfrac{r}{\delta})$ for some large enough constant $C$. Then:
    $$\bP\left(h_1(X,u) > \tfrac{C}{\bar{\mu}\|\hat{\vf}\|} \bigr(1 +  \tfrac{\Gamma^4}{\bar{\mu}^4}\log^2(\tfrac{m}{\delta})\bigr)\right) \leq \delta$$
\end{enumerate}
\end{lemma}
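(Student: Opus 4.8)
The plan is to prove all three parts by conditioning on the covariates $\vx_1,\dots,\vx_m$ (which, via the sample-splitting in Algorithm~\ref{alg:symbol}, are independent of $\hvf$, $\hvw$, $\fstar$, $\wstar$), and then exploiting that $\langle u,\vx_i\rangle$ and $\langle\hvf,\vx_i\rangle$ are \emph{independent} Gaussians when $u\perp\hvf$. Part~1 will be a conditional Bernstein inequality for a weighted sum of the i.i.d.\ centered sub-exponential variables $1-\epsilon_i^2$; Parts~2 and~3 will be (near-)deterministic bounds on the random quantity $h_1(X,u)$ after a preliminary Gaussian truncation.

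For \textbf{Part 1}, conditioning on $\vx_{1:m}$ we may write $\langle\bar N_1,u\rangle=\sum_{i=1}^m a_i(1-\epsilon_i^2)$ with deterministic coefficients $a_i=\tfrac1m\mathbbm{1}(|\langle\hvf,\vx_i\rangle|\ge\bar\mu)\,\langle\fstar,\vx_i\rangle^2\langle\vx_i,u\rangle/\langle\hvf,\vx_i\rangle^3$. By construction $\sum_i a_i^2=h_1(X,u)/m$, and trivially $\max_i a_i^2\le\sum_i a_i^2$. Since $1-\epsilon_i^2$ is mean-zero with $O(1)$ sub-exponential norm, the sub-exponential Bernstein inequality gives $\bP(|\sum_i a_i(1-\epsilon_i^2)|>s\mid\vx_{1:m})\le 2\exp(-c\min\{s^2/\sum_i a_i^2,\;s/\max_i|a_i|\})$. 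Substituting $s=c_0t\sqrt{h_1(X,u)/m}=c_0t\sqrt{\sum_i a_i^2}$ makes the first term $c_0^2t^2$ and the second term at least $c_0t$, so for $t\gtrsim 1$ and $c_0$ a large enough absolute constant the bound is at most $e^{-t}$, as claimed.

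For \textbf{Parts 2 and 3} I would first record the deterministic reduction replacing $\fstar$ by $\hvf$. Writing $\fstar=\hvf+(\fstar-\hvf)$ with $\norm{\fstar-\hvf}\le\Gamma$, Lemma~\ref{lem:close_conc} gives an event of probability $\ge 1-\delta$ on which $|\langle\fstar-\hvf,\vx_i\rangle|\le\Gamma\sqrt{2\log(4m/\delta)}$ for all $i$; on this event, whenever $|\langle\hvf,\vx_i\rangle|\ge\bar\mu$ we get the pointwise bound $\langle\fstar,\vx_i\rangle^4/\langle\hvf,\vx_i\rangle^4\le 8+8\langle\fstar-\hvf,\vx_i\rangle^4/\bar\mu^4\le 8+C\Gamma^4\log^2(m/\delta)/\bar\mu^4$. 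For Part~2 this is \emph{immediate}: since $\langle\vx_i,\hvf\rangle^2/\langle\hvf,\vx_i\rangle^6=1/\langle\hvf,\vx_i\rangle^4$, each summand of $h_1(X,\hvf)$ is bounded by $8+C\Gamma^4\log^2(m/\delta)/\bar\mu^4$, hence so is their average. For Part~3, the same reduction yields $h_1(X,u)\le\bigl(8+C\Gamma^4\log^2(m/\delta)/\bar\mu^4\bigr)\cdot Q$ with $Q:=\tfrac1m\sum_i\mathbbm{1}(|\langle\hvf,\vx_i\rangle|\ge\bar\mu)\langle\vx_i,u\rangle^2/\langle\hvf,\vx_i\rangle^2$, so it remains to show $Q\le C/(\bar\mu\norm{\hvf})$ with high probability, which is the crux.

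Set $g_i=\langle\hvf,\vx_i\rangle\sim\cN(0,\norm{\hvf}^2)$ and $w_i=\langle\vx_i,u\rangle\sim\cN(0,1)$, independent (as $u\perp\hvf$); a direct Gaussian integral gives $\bE[Q]=\bE[\mathbbm{1}(|g_i|\ge\bar\mu)g_i^{-2}]\le c/(\bar\mu\norm{\hvf})$, the mass near $|g_i|\approx\bar\mu$ dominating. The plan for concentration is a dyadic bucketing: indices with $|g_i|>\norm{\hvf}$ contribute $\le\norm{\hvf}^{-2}\sum_i w_i^2\le Cm/\norm{\hvf}^2\le Cm/(\bar\mu\norm{\hvf})$ by $\chi^2_m$ concentration, while indices with $|g_i|\in[\bar\mu,\norm{\hvf}]$ are split into the $r$ buckets $B_j=\{i:|g_i|\in[2^j\bar\mu,2^{j+1}\bar\mu)\}$, $0\le j\le r-1$. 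Since $\bE|B_j|\le C2^j m\bar\mu/\norm{\hvf}$ and the smallest, $\bE|B_0|\gtrsim m\bar\mu/\norm{\hvf}$, is $\gtrsim\log(r/\delta)$ precisely because $m\ge C\tfrac{\norm{\hvf}}{\bar\mu}\log(r/\delta)$, a multiplicative Chernoff bound (union over $r$ buckets) gives $|B_j|\le 2\bE|B_j|$ for all $j$; and conditioning on the $g_i$'s leaves the $w_i$ i.i.d.\ standard normal, so $\sum_{i\in B_j}w_i^2\sim\chi^2_{|B_j|}$ with $|B_j|\gtrsim\log(r/\delta)$ and thus $\le 2|B_j|$ for all $j$. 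Summing, $\sum_{j=0}^{r-1}(2^j\bar\mu)^{-2}\sum_{i\in B_j}w_i^2\le\sum_{j=0}^{r-1}4|B_j|/(2^{2j}\bar\mu^2)\le \tfrac{Cm}{\bar\mu\norm{\hvf}}\sum_j 2^{-j}\le \tfrac{Cm}{\bar\mu\norm{\hvf}}$, which together with the $|g_i|>\norm{\hvf}$ term gives $Q\le C/(\bar\mu\norm{\hvf})$ after a final union bound over all the good events.

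I expect \textbf{Part 3} to be the main obstacle: $Q$ is an average of heavy-tailed, unbounded nonnegative variables whose bulk is of size $1/(\bar\mu\norm{\hvf})$ but whose largest summands are of order $\log(m/\delta)/\bar\mu^2$, so a naive truncate-and-Bernstein argument would both lose a spurious polylog factor and fail to reproduce the clean $\log(r/\delta)$ sample requirement of the statement. The dyadic bucketing is what aligns the proof with that requirement — the assumption $m\ge C\tfrac{\norm{\hvf}}{\bar\mu}\log(r/\delta)$ is exactly what forces every bucket (and hence every within-bucket $\chi^2$ sum) to be large enough to concentrate, after which the bucket contributions collapse to a geometric series. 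The remaining bookkeeping — checking that the buckets beyond $\norm{\hvf}$ are genuinely negligible (one truncates at $|g_i|\lesssim\norm{\hvf}\sqrt{\log(m/\delta)}$, leaving $O(r+\log\log(m/\delta))$ of them, absorbed into $\polylog$ factors) and that all the good events are combined with a single union bound — is routine.
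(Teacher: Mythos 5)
Your proposal is correct and follows essentially the same route as the paper: part 1 is the conditional sub-exponential (Bernstein-type) bound for weighted sums of $1-\epsilon_i^2$ (the paper's Lemma~\ref{lem:chi_squared_conc}), part 2 is the same replacement of $\fstar$ by $\hvf$ via Lemma~\ref{lem:close_conc} followed by a pointwise bound, and part 3 reduces to bounding $\frac{1}{m}\sum_i \mathbbm{1}(|\langle\hvf,\vx_i\rangle|\ge\bar\mu)\langle\vx_i,u\rangle^2/\langle\hvf,\vx_i\rangle^2$, which you prove inline by the same dyadic-bucketing plus binomial-count plus conditional-$\chi^2$ argument that the paper delegates to Lemma~\ref{lem:conc_trunc_inv}. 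The only nitpick is that your shortcut ``$\sum_{i\in B_j} w_i^2 \le 2|B_j|$ for all $j$'' is not justified when a bucket is sparsely populated; one should use $|B_j| + C\sqrt{|B_j|\log(r/\delta)} + C\log(r/\delta)$ and absorb the additive $\log(r/\delta)$ via the sample-size assumption, exactly as in the paper's proof of Lemma~\ref{lem:conc_trunc_inv}.
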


\begin{lemma}\label{lem:N2_bound} Let $u \in \mathbb{R}^d$ be any arbitrary vector. Assume $\|\hat{\vf}\| > \bar{\mu}$. Let $r = \bigr\lceil \log(\tfrac{\|\hat{\vf}\|_2}{\bar{\mu}})\bigr\rceil$. Define $h_2(X,u) := \frac{1}{m}\sum_{i=1}^{m} \mathbbm{1}(|\langle \hat{\vf},\vx_i\rangle|>\bar{\mu})\frac{\langle f^{*},\vx_i\rangle^2 \langle \vx_i,u\rangle^2 \langle\Delta ,\vx_i\rangle^2}{\langle \hat{\vf},\vx_i\rangle^6}$
\begin{enumerate}
    \item $$\bP\left(|\langle \bar{N}_2,u\rangle|> C_1\sqrt{\tfrac{h_2(X,u)\log\tfrac{1}{\delta}}{m}} \biggr|\vx_1,\dots,\vx_m \right) \leq \delta$$
    \item Assume $m \geq C\frac{\|\hat{\vf}\|}{\bar{\mu}}\log(\tfrac{r}{\delta})$ for some large enough constant $C$. $$\bP\left(h_2(X,\hat{\vf}) > C_1 \frac{\|\Delta\|^2}{\bar{\mu}\|\hat{\vf}\|} \log(\tfrac{m}{\delta})\left(1+\tfrac{\Gamma^2\log(\tfrac{m}{\delta})}{\bar{\mu}^2}\right)\right) \leq \delta$$
    \item Let $u \perp \hat{\vf}$ and $\|u\|=1$. Assume $m \geq C\frac{\|\hat{\vf}\|}{\bar{\mu}}\log(\tfrac{r}{\delta})$ for some large enough constant $C$.
    $$\bP\left(h_2(X,u) >  C_1 \frac{\|\Delta\|^2}{\bar{\mu}^3\|\hat{\vf}\|} \log^2(\tfrac{m}{\delta})\left(1+\tfrac{\Gamma^2\log(\tfrac{m}{\delta})}{\bar{\mu}^2}\right)\right)\leq \delta$$
\end{enumerate}
\end{lemma}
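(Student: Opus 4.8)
The plan is to condition throughout on $\vx_1,\dots,\vx_m$ together with $(\hvf,\hvw,\vf)$ — the latter being functions of a disjoint data partition, hence independent of $\vx_1,\dots,\vx_m$ and of the $\epsilon_i$ — and to treat the three claims separately: Claim 1 is a one-dimensional Gaussian tail bound using only the $\epsilon_i$, whereas Claims 2 and 3 are statements purely about the design $\vx_{1:m}$. For Claim 1, conditionally on $\vx_{1:m}$ write $\langle\bar{N}_2,u\rangle = \tfrac{2}{m}\sum_{i=1}^m \epsilon_i a_i$ with $a_i = \mathbbm{1}(|\langle\hvf,\vx_i\rangle|\geq\bar\mu)\,\langle\hvf,\vx_i\rangle^{-3}\langle\fstar,\vx_i\rangle\langle\Delta,\vx_i\rangle\langle u,\vx_i\rangle$. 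Since the $\epsilon_i\sim\mathcal{N}(0,1)$ are i.i.d. and independent of everything conditioned on, $\langle\bar{N}_2,u\rangle$ is a centered Gaussian of variance $\tfrac{4}{m^2}\sum_i a_i^2 = \tfrac{4}{m}h_2(X,u)$ (the one-dimensional instance of Lemma~\ref{lem:normal-vec-norm}), and the standard Gaussian tail bound gives Claim 1 for $C_1$ a large enough absolute constant.

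\textbf{Claims 2 and 3: reduction to a heavy-tailed sum in $\langle\hvf,\vx_i\rangle$ only.} What remains is to bound $h_2(X,u)$, a sum of rational functions of Gaussians with infinite expectation. First I would fix a \emph{clean event} $\mathcal{E}$ on which $\sup_i|\langle v,\vx_i\rangle|\leq\|v\|\sqrt{2\log(2m/\delta)}$ holds simultaneously for $v$ ranging over $\fstar-\hvf$, $\Delta$, and the components of $\Delta$ and $u$ orthogonal to $\hvf$; by Lemma~\ref{lem:close_conc} this has probability at least $1-O(\delta)$. On $\mathcal{E}$, writing $\langle\fstar,\vx_i\rangle=\langle\hvf,\vx_i\rangle+\langle\fstar-\hvf,\vx_i\rangle$ and using $|\langle\hvf,\vx_i\rangle|>\bar\mu$ with $\|\fstar-\hvf\|\leq\Gamma$ gives $\langle\fstar,\vx_i\rangle^2\leq C\bigl(1+\tfrac{\Gamma^2\log(2m/\delta)}{\bar\mu^2}\bigr)\langle\hvf,\vx_i\rangle^2$ for every surviving index $i$; this removes the factor $\langle\fstar,\vx_i\rangle^2$ from the summand at the price of the prefactor $\bigl(1+\Gamma^2\log(m/\delta)/\bar\mu^2\bigr)$ appearing in the statements, and cancels two powers of $\langle\hvf,\vx_i\rangle$. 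It then suffices, for Claim 2 ($u=\hvf$), to bound $\tfrac1m\sum_i\mathbbm{1}(|\langle\hvf,\vx_i\rangle|>\bar\mu)\tfrac{\langle\Delta,\vx_i\rangle^2}{\langle\hvf,\vx_i\rangle^2}$, and for Claim 3 ($u\perp\hvf$, $\|u\|=1$), to bound $\tfrac1m\sum_i\mathbbm{1}(\cdot)\tfrac{\langle u,\vx_i\rangle^2\langle\Delta,\vx_i\rangle^2}{\langle\hvf,\vx_i\rangle^4}$.

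\textbf{Claims 2 and 3: dyadic bucketing.} Decompose $\Delta=\tfrac{\langle\Delta,\hvf\rangle}{\|\hvf\|^2}\hvf+\Delta^\perp$ (and $u=u^\perp$ already in Claim 3). The $\hvf$-parallel parts each contribute a factor $\langle\hvf,\vx_i\rangle^2/\|\hvf\|^2$ that cancels against the denominator and is bounded crudely using $\|\hvf\|\geq\bar\mu$. For each remaining perpendicular factor, the numerator is a product of Gaussians supported on $\hvf^\perp$, hence independent of $\langle\hvf,\vx_i\rangle$, and on $\mathcal{E}$ it is bounded by a deterministic $O(\|\cdot\|^2\log(m/\delta))$ (Claim 2) or $O(\|\cdot\|^2\log^2(m/\delta))$ (Claim 3). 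One is left with $\tfrac1m\sum_i\mathbbm{1}(|\langle\hvf,\vx_i\rangle|>\bar\mu)\langle\hvf,\vx_i\rangle^{-2k}$, $k\in\{1,2\}$, which I would control by dyadic bucketing: partition the indices by $|\langle\hvf,\vx_i\rangle|\in[\bar\mu 2^j,\bar\mu 2^{j+1})$ for $0\leq j<r$, plus a tail bucket $\{|\langle\hvf,\vx_i\rangle|\geq\|\hvf\|\}$. The bucket counts $N_j$ are $\mathrm{Binomial}(m,p_j)$ with $p_j$ of order $\bar\mu 2^j/\|\hvf\|$ for the small buckets; a Chernoff/Bernstein bound (cf. Lemma~\ref{lem:bin-rv-tail}) gives $N_j\leq C m p_j$ with probability $1-\delta/r$ whenever $m p_j\geq c\log(r/\delta)$, and the binding case $j=0$ needs exactly $m\geq C\tfrac{\|\hvf\|}{\bar\mu}\log(r/\delta)$, the hypothesis. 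On bucket $j$ the denominator is at least $(\bar\mu 2^j)^{2k}$, so the bucket's contribution is at most $\tfrac{N_j}{m}(\bar\mu 2^j)^{-2k}$, of order $\tfrac{\bar\mu 2^j}{\|\hvf\|}(\bar\mu 2^j)^{-2k}$; summing this geometric series over $j$ and bounding the tail bucket by a $\chi^2$ concentration inequality (contribution at most $O(1/\|\hvf\|^{2k})$) gives a total of order $\bar\mu^{-(2k-1)}\|\hvf\|^{-1}$. Substituting $k=1$ for Claim 2 and $k=2$ for Claim 3, multiplying back the deterministic $\log$ factors and the $(1+\Gamma^2\log/\bar\mu^2)$ prefactor, and taking a union bound over $\mathcal{E}$, the $r$ bucket events and the $\chi^2$ event (then rescaling $\delta$) yields the stated bounds.

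\textbf{Main obstacle.} The crux is the bucketing step: getting the Binomial concentration to hold uniformly down to the smallest bucket $j=0$ under precisely the stated sample-size threshold, handling the large-value tail bucket so that it is genuinely negligible rather than dominating, and tracking the powers of $\log(m/\delta)$ so they land at a single $\log$ in Claim 2 and $\log^2$ in Claim 3. The clean-event reduction (including the $\langle\fstar,\vx_i\rangle^2\lesssim(1+\cdots)\langle\hvf,\vx_i\rangle^2$ comparison) and the parallel/perpendicular split are routine by comparison, and Claim 1 is immediate.
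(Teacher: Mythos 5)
Your proposal is correct and follows essentially the same route as the paper: Claim 1 is the identical conditional-Gaussian variance computation, and Claims 2–3 use the same two ingredients — a clean event giving $\langle \fstar,\vx_i\rangle^2 \lesssim (1+\Gamma^2\log(m/\delta)/\bar{\mu}^2)\langle\hvf,\vx_i\rangle^2$ on surviving indices, followed by dyadic bucketing of $|\langle\hvf,\vx_i\rangle|$ with binomial concentration (the content of Lemmas~\ref{lem:partition_sets} and~\ref{lem:conc_trunc_inv}), with the $j=0$ bucket forcing exactly the hypothesis $m\gtrsim \tfrac{\|\hvf\|}{\bar{\mu}}\log(\tfrac{r}{\delta})$. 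The only deviation is in Claim 3: the paper keeps $\langle u,\vx_i\rangle^2$ inside the bucketed sum and uses the $L_1$ part of Lemma~\ref{lem:conc_trunc_inv} (conditional independence of $\langle u,\vx_i\rangle$ from the buckets plus within-bucket $\chi^2$ concentration), which yields a single $\log(m/\delta)$, whereas your sup-bounding of the $\hvf$-orthogonal Gaussian factors costs $\log^2(m/\delta)$ — still exactly the bound the lemma states, so the proof goes through.
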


\begin{lemma}\label{lem:N3_bound}
Suppose $\|\hat{\vf}\| >\bar{\mu}$. Let $r = \bigr\lceil \log(\tfrac{\|\hat{\vf}\|_2}{\bar{\mu}})\bigr\rceil$.  Assume $m \geq C\frac{\|\hat{\vf}\|}{\bar{\mu}}\log(\tfrac{r}{\delta})$,  for some large enough constant $C$.
\begin{enumerate}
    \item $$\bP\left(|\langle \bar{N}_3,\hat{\vf}\rangle| > C\frac{\|\Delta\|^2}{\bar{\mu}\|\hat{\vf}\|} \right) \leq \delta$$
    \item Let $Q$ be the projector onto the subspace perpendicular to $\hat{\vf}$. Then, we must have:
    $$\|Q\bar{N}_3\| \leq \frac{C\|\Delta\|^2}{\bar{\mu}\|\hat{\vf}\|^2} + \sqrt{\frac{C\|\Delta\|^4 \log^3(\tfrac{m}{\delta})(d+\log(\tfrac{m}{\delta}))}{m\bar{\mu}^5\|\hat{\vf}\|}}$$  
\end{enumerate}
\end{lemma}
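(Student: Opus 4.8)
The plan is to resolve $\bar{N}_3$ along $\hvf$ and along its orthogonal complement, and, in each component, to expand the quadratic $\iprod{\Delta}{\vx_i}^2$ in the part of $\vx_i$ lying along $\hvf$. Write $g_i = \iprod{\hvf}{\vx_i}/\norm{\hvf} \sim \cN(0,1)$, $a = \iprod{\Delta}{\hvf}/\norm{\hvf}$ and $b_i = \iprod{Q\Delta}{\vx_i}$, so that $\iprod{\Delta}{\vx_i} = a g_i + b_i$ with $g_i$ independent of both $b_i$ and $Q\vx_i$, and $\abs{a}, \norm{Q\Delta} \le \norm{\Delta}$. The truncation $\mathbbm{1}(\abs{\iprod{\hvf}{\vx_i}} \ge \mubar)$ becomes $\mathbbm{1}(\abs{g_i} \ge \mubar/\norm{\hvf})$, which is exactly what makes the reciprocal weights $g_i^{-1}, g_i^{-2}, g_i^{-3}$ appearing below bounded (by $\norm{\hvf}/\mubar$ and its powers). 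Throughout I would work on the ``good event'' that for all $i \in [m]$, $\abs{b_i} \le \norm{\Delta}\sqrt{2\log(2m/\delta)}$ (Lemma~\ref{lem:close_conc}) and $\norm{Q\vx_i}^2 \le d + O(\log(m/\delta))$ ($\chi^2$ concentration), which holds with probability $\ge 1 - \delta$ after a union bound.

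For the first claim, $\iprod{\bar{N}_3}{\hvf} = \frac{1}{m}\sum_i \mathbbm{1}(\abs{g_i} \ge \mubar/\norm{\hvf}) \frac{(a g_i + b_i)^2}{\norm{\hvf}^2 g_i^2}$ is a sum of nonnegative terms. Expanding the square splits it into three pieces: the $a^2 g_i^2$ piece is deterministically at most $a^2/\norm{\hvf}^2 \le \norm{\Delta}^2/(\mubar\norm{\hvf})$ (using $\mubar < \norm{\hvf}$); the cross piece $\frac{2a}{\norm{\hvf}^2}\cdot\frac{1}{m}\sum_i b_i g_i^{-1}\mathbbm{1}(\cdot)$ has conditional mean zero (oddness in $g_i$, $\bE[b_i]=0$) and concentrates around $0$ by Bernstein using the good-event bound on $b_i$ and $\abs{g_i}^{-1} \le \norm{\hvf}/\mubar$; the $b_i^2 g_i^{-2}$ piece has mean $\frac{\norm{Q\Delta}^2}{\norm{\hvf}^2}\bE[g^{-2}\mathbbm{1}(\abs{g} \ge \mubar/\norm{\hvf})] \le C\norm{\Delta}^2/(\mubar\norm{\hvf})$, using the elementary estimate $\bE[g^{-2}\mathbbm{1}(\abs{g} \ge \epsilon)] \le C/\epsilon$ for $\epsilon \le 1$, and concentrates around this mean by Bernstein. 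The range-to-mean ratio of the $b_i^2 g_i^{-2}$ terms is $O((\norm{\hvf}/\mubar)\log(m/\delta))$, so the hypothesis $m \ge C(\norm{\hvf}/\mubar)\log(r/\delta)$ (with $r = O(\log(\norm{\hvf}/\mubar))$ absorbing the union bound over dyadic shells below) makes all fluctuations lower order, and the triangle inequality yields $\abs{\iprod{\bar{N}_3}{\hvf}} \le C\norm{\Delta}^2/(\mubar\norm{\hvf})$ with probability $\ge 1 - \delta$.

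For the second claim, $Q\bar{N}_3 = \frac{1}{m}\sum_i \mathbbm{1}(\abs{g_i} \ge \mubar/\norm{\hvf}) \frac{(a g_i + b_i)^2}{\norm{\hvf}^3 g_i^3} Q\vx_i$ expands into three vector sums carrying the reciprocal weights $g_i^{-1}$ (times $a^2$), $g_i^{-2}$ (times $2a b_i$) and $g_i^{-3}$ (times $b_i^2$), each multiplied by $Q\vx_i$. The $g_i^{-1}$ and $g_i^{-3}$ sums have mean zero, since in each case the scalar reciprocal weight is odd in $g_i$ and independent of the mean-zero vector it multiplies. The $g_i^{-2}$ sum has nonzero mean $\frac{2a}{\norm{\hvf}^3}\bE[g^{-2}\mathbbm{1}(\cdot)]\,Q\Delta$, of norm $\le C\norm{\Delta}^2/(\mubar\norm{\hvf}^2)$ --- this is the deterministic term in the statement --- plus a mean-zero fluctuation. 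Each fluctuation I would control by conditioning on $\{g_i\}$ (so the summand becomes a sum of independent mean-zero vectors), applying a vector Bernstein/Freedman bound with the good-event estimates $\abs{b_i} \le \norm{\Delta}\sqrt{2\log(2m/\delta)}$, $\norm{Q\vx_i}^2 \le d + O(\log(m/\delta))$ and $\abs{g_i}^{-k} \le (\norm{\hvf}/\mubar)^k$, and then replacing $\frac{1}{m}\sum_i g_i^{-2k}\mathbbm{1}(\abs{g_i} \ge \mubar/\norm{\hvf})$ by a constant times its expectation $O((\norm{\hvf}/\mubar)^{2k-1})$ via a dyadic-shell peeling over $\abs{g_i} \in [2^j\mubar/\norm{\hvf}, 2^{j+1}\mubar/\norm{\hvf})$, $j = 0, \dots, r$ (each shell count is Binomial and concentrates because $m \ge C(\norm{\hvf}/\mubar)\log(r/\delta)$, and the smallest shell dominates). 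Tracking powers of $\mubar$ and $\norm{\hvf}$, the largest fluctuation is the one from the $b_i^2 g_i^{-3}$ sum, of order $\norm{\Delta}^2\sqrt{\log^3(m/\delta)(d + \log(m/\delta))/(m\mubar^5\norm{\hvf})}$, while the $g_i^{-1}$ and $g_i^{-2}$ fluctuations are smaller by factors $(\mubar/\norm{\hvf})^2$ and $\mubar/\norm{\hvf}$ respectively and hence absorbed; adding the deterministic term gives the claimed bound.

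The main obstacle is this last step: the reciprocal-Gaussian averages $\frac{1}{m}\sum_i g_i^{-2k}\mathbbm{1}(\abs{g_i} \ge \mubar/\norm{\hvf})$ are heavy-tailed (no moments absent truncation), and the naive per-term bound $(\norm{\hvf}/\mubar)^{2k}$ overshoots their true expectation $\Theta((\norm{\hvf}/\mubar)^{2k-1})$ by the factor $\norm{\hvf}/\mubar$, which is precisely the slack the hypothesis $m \ge C(\norm{\hvf}/\mubar)\log(r/\delta)$ must absorb; making this quantitative via dyadic peeling plus Binomial concentration on the dominant (smallest) shell, and then carefully bookkeeping the numerous powers of $\mubar$ and $\norm{\hvf}$ so that all fluctuations collapse into the single stated $\sqrt{\cdot}$ term, is the delicate part. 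This is the same heavy-tailed-moment phenomenon already handled in the proofs of Lemmas~\ref{lem:N1_bound} and~\ref{lem:N2_bound}.
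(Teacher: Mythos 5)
Your overall strategy -- resolve $\bar{N}_3$ along $\hvf$ and its orthogonal complement, expand $\iprod{\Delta}{\vx_i}=a g_i+b_i$, kill the odd-in-$g_i$ terms by independence, and control the reciprocal-Gaussian averages $\frac{1}{m}\sum_i g_i^{-2k}\mathbbm{1}(\abs{g_i}\ge \mubar/\norm{\hvf})$ by dyadic shells plus Binomial concentration -- is exactly the paper's strategy, and your part (2) is essentially sound: the paper additionally splits $Q$ into the $\Delta^{\perp}$-direction plus the projector $\bar Q$ orthogonal to both $\hvf$ and $\Delta^{\perp}$, and handles the odd-power scalar $\sum_i \mathbbm{1}\,\iprod{\Delta^{\perp}}{\vx_i}^3/\iprod{\hvf}{\vx_i}^3$ by a Rademacher sign-flip of the $Q$-component (Lemma~\ref{lem:subspace_flip}) plus Azuma, whereas you exploit that $b_i^2 Q\vx_i$ has (conditionally on $g_i$) zero mean; either way the sup-norm truncations of $b_i$ and $\norm{Q\vx_i}$ are affordable because the stated bound for $\norm{Q\bar N_3}$ explicitly carries the $\log^3(\tfrac{m}{\delta})(d+\log\tfrac{m}{\delta})$ factors.

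Part (1) is where there is a genuine gap. The statement claims an absolute-constant bound $C\norm{\Delta}^2/(\mubar\norm{\hvf})$ under only $m\ge C\tfrac{\norm{\hvf}}{\mubar}\log(\tfrac{r}{\delta})$, but your treatment of the dominant piece $\frac{1}{m}\sum_i \mathbbm{1}\,b_i^2/(\norm{\hvf}^2 g_i^2)$ -- truncate $\abs{b_i}$ at $\norm{\Delta}\sqrt{2\log(2m/\delta)}$ and apply Bernstein -- produces a linear (range) term of order $\tfrac{\norm{\Delta}^2\log(m/\delta)\log(1/\delta)}{\mubar^2 m}$; at the boundary $m\asymp \tfrac{\norm{\hvf}}{\mubar}\log(\tfrac{r}{\delta})$ this equals $\tfrac{\norm{\Delta}^2}{\mubar\norm{\hvf}}\cdot\tfrac{\log(m/\delta)\log(1/\delta)}{\log(r/\delta)}$, which is \emph{not} $O\bigl(\tfrac{\norm{\Delta}^2}{\mubar\norm{\hvf}}\bigr)$ (note $r\approx\log(\norm{\hvf}/\mubar)$ is tiny compared to $\log(m/\delta)$). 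So the sentence ``the hypothesis $m\ge C(\norm{\hvf}/\mubar)\log(r/\delta)$ makes all fluctuations lower order'' is precisely where the argument fails: the sup-norm truncation of $b_i$ costs a $\log(m/\delta)$ factor that part (1), unlike part (2), has no room for. The fix is the paper's route: note that $\frac{1}{m}\sum_i \mathbbm{1}\,\tfrac{\iprod{Q\Delta}{\vx_i}^2}{\iprod{\hvf}{\vx_i}^2}=\norm{Q\Delta}^2\, L_1\bigl(X,\tfrac{Q\Delta}{\norm{Q\Delta}},1\bigr)$ and invoke Lemma~\ref{lem:conc_trunc_inv} directly, whose shell-by-shell $\chi^2$ concentration (each shell contributes $H_k+C\sqrt{H_k}\log(\tfrac{r}{\delta})$) never pays a per-sample sup-norm factor and yields the constant bound under exactly the stated hypothesis; the cross term you carry can be avoided altogether via $(a g_i+b_i)^2\le 2a^2 g_i^2+2 b_i^2$, which is how the paper reduces part (1) to that single lemma.
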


While $H(\vf)$ need not be a PSD matrix almost surely, we show that whenever $\Gamma$ is small enough, $H(\vf)$ is a PSD matrix with high probability as shown below.

\begin{lemma}\label{lem:psd_bounds}
Suppose  with $\bar{\mu} > C\Gamma \sqrt{\log \tfrac{m}{\delta}} $ for some large enough universal constant $C$. 
With probability at-least $1-\delta$, we must have:
$$ \frac{1}{m}\sum_{i=1}^{m}\frac{\mathbbm{1}(|\langle\hat{\vf},\vx_i\rangle|\geq \bar{\mu})}{\langle \hat{\vf},\vx_i\rangle^4} |\langle \hat{\vf},\vx_i \rangle|^2 \vx_i \vx_i^{\intercal} \preceq H(f) \preceq \frac{4}{m}\sum_{i=1}^{m}\frac{\mathbbm{1}(|\langle\hat{\vf},\vx_i\rangle|\geq \bar{\mu})}{\langle \hat{\vf},\vx_i\rangle^4} |\langle \hat{\vf},\vx_i \rangle|^2 \vx_i \vx_i^{\intercal}$$
\end{lemma}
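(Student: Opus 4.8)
The plan is to reduce the operator sandwich to a pointwise comparison of the scalar weights multiplying the rank-one matrices, and then to control the only random quantity involved via a single maximal Gaussian tail bound. Recall that $H(\vf) = \frac{1}{m}\sum_{i=1}^{m}\gamma(\vx_i)\langle\vf+\fstar,\vx_i\rangle\langle\hat{\vf},\vx_i\rangle\vx_i\vx_i^{\intercal}$ with $\gamma(\vx_i)=\mathbbm{1}(|\langle\hat{\vf},\vx_i\rangle|\ge\bar{\mu})/\langle\hat{\vf},\vx_i\rangle^4$, and write $a_i := \langle\hat{\vf},\vx_i\rangle$. First I would decompose $\vf+\fstar = 2\hat{\vf} + (\vf-\hat{\vf}) + (\fstar-\hat{\vf})$, so that the scalar coefficient of $\gamma(\vx_i)\vx_i\vx_i^{\intercal}$ in $H(\vf)$ is
\[
c_i := \langle\vf+\fstar,\vx_i\rangle\,a_i = 2a_i^2 + a_i\big(\langle\vf-\hat{\vf},\vx_i\rangle + \langle\fstar-\hat{\vf},\vx_i\rangle\big).
\]
Since each $\gamma(\vx_i)\vx_i\vx_i^{\intercal}$ is PSD and the two comparison matrices in the statement are precisely $\frac{1}{m}\sum_i\gamma(\vx_i)a_i^2\vx_i\vx_i^{\intercal}$ and four times it, it suffices to establish the scalar inequality $a_i^2 \le c_i \le 4a_i^2$ for every index in the active set $\{i : \gamma(\vx_i)\ne0\} = \{i : |a_i|\ge\bar{\mu}\}$; for indices outside this set all three matrices contribute nothing.

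Next I would invoke Lemma~\ref{lem:close_conc} with the (deterministic, conditionally on the earlier data batches) vectors $v = \vf-\hat{\vf}$ and $v=\fstar-\hat{\vf}$, both of norm at most $\Gamma$ by definition of $\Gamma$, together with a union bound over these two vectors, to obtain that with probability at least $1-\delta$ one has $\sup_{i}|\langle\vf-\hat{\vf},\vx_i\rangle| \le \Gamma\sqrt{2\log(4m/\delta)}$ and $\sup_i |\langle\fstar-\hat{\vf},\vx_i\rangle| \le \Gamma\sqrt{2\log(4m/\delta)}$. On this event, for any $i$ with $|a_i|\ge\bar{\mu}$,
\[
\big| a_i(\langle\vf-\hat{\vf},\vx_i\rangle + \langle\fstar-\hat{\vf},\vx_i\rangle) \big| \le |a_i|\cdot 2\Gamma\sqrt{2\log(4m/\delta)} \le a_i^2,
\]
where the last inequality uses $|a_i| \ge \bar{\mu} \ge C\Gamma\sqrt{\log(m/\delta)}$ with $C$ a large enough absolute constant dominating the numerical factor $2\sqrt{2}$ and the harmless ratio $\log(4m/\delta)/\log(m/\delta)$ (bounded for $\delta\le\frac12$ and $m\ge2$). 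Hence $c_i \in [2a_i^2 - a_i^2,\, 2a_i^2 + a_i^2] = [a_i^2, 3a_i^2] \subseteq [a_i^2, 4a_i^2]$; in particular $c_i \ge 0$. Multiplying the scalar bound $a_i^2 \le c_i \le 4a_i^2$ by the PSD matrix $\gamma(\vx_i)\vx_i\vx_i^{\intercal}$, summing over $i$, and dividing by $m$ then yields the asserted two-sided bound on $H(\vf)$ with probability at least $1-\delta$.

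I do not expect a genuine obstacle here: the core is a deterministic pointwise comparison of quadratic-form weights composed with a standard maximal-deviation estimate for Gaussian linear images. The only place requiring a little care is constant bookkeeping — fixing the absolute constant $C$ in the hypothesis $\bar{\mu} > C\Gamma\sqrt{\log(m/\delta)}$ so that $C\sqrt{\log(m/\delta)} \ge 2\sqrt{2}\,\sqrt{\log(4m/\delta)}$ uniformly over the admissible $m$ and $\delta$ — and observing explicitly that the indicator already present in $\gamma(\vx_i)$ is exactly what confines the analysis to the regime $|a_i|\ge\bar{\mu}$ in which the perturbation $\langle\vf-\hat{\vf},\vx_i\rangle + \langle\fstar-\hat{\vf},\vx_i\rangle$ is negligible against $2a_i$.
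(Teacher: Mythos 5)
Your proposal is correct and follows essentially the same route as the paper: decompose $\vf+\fstar$ around $2\hat{\vf}$, use the maximal Gaussian bound (Lemma~\ref{lem:close_conc}) to control the perturbation term uniformly over $i$, and use $|\langle\hat{\vf},\vx_i\rangle|\geq\bar{\mu}\gtrsim\Gamma\sqrt{\log(m/\delta)}$ on the active set to absorb it, yielding the two-sided PSD sandwich. The only cosmetic difference is that you apply the concentration bound separately to $\vf-\hat{\vf}$ and $\fstar-\hat{\vf}$ with a union bound, whereas the paper applies it once to $\vf+\fstar-2\hat{\vf}$ (of norm at most $2\Gamma$); this does not change the argument.
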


\begin{proof}
Only in this proof, we take $\gamma(\vx_i) := \frac{\mathbbm{1}(|\langle\hat{\vf},\vx_i\rangle|\geq \bar{\mu})}{\langle \hat{\vf},\vx_i\rangle^4} $. 
\begin{equation}
    H(\vf)= \frac{2}{m}\sum_{i=1}^{m}\gamma(\vx_i)  \langle \hat{\vf},\vx_i \rangle^2 \vx_i \vx_i^{\intercal} + \frac{1}{m}\sum_{i=1}^{m}\gamma(\vx_i) \langle \vf+ \fstar -2\hat{\vf},\vx_i\rangle \langle \hat{\vf},\vx_i \rangle \vx_i \vx_i^{\intercal}
\end{equation}

By an application of Lemma~\ref{lem:close_conc}, we conclude that with probability at-least $1-\delta$, we must have: $\sup_{i \in [m]}|\langle \vf + \fstar - \hat{\vf},\vx_i\rangle| \leq C_0\Gamma\sqrt{\log \tfrac{m}{\delta}}$ for some universal constant $C_0$. Therefore, by the definition of $H(\vf)$, we conclude that under this event:
\begin{align}
    H(\vf) &\preceq \frac{2}{m}\sum_{i=1}^{m}\gamma(\vx_i)  \langle \hat{\vf},\vx_i \rangle^2 \vx_i \vx_i^{\intercal} + \frac{1}{m}\sum_{i=1}^{m}C_0\gamma(\vx_i)\Gamma \sqrt{\log\tfrac{m}{\delta}}   |\langle\hat{\vf},\vx_i \rangle| \vx_i \vx_i^{\intercal} \nonumber \\
    &= \frac{2}{m}\sum_{i=1}^{m}\gamma(\vx_i)  |\langle \hat{\vf},\vx_i \rangle|\left(|\langle\hat{\vf},\vx_i \rangle| + C_1\Gamma \sqrt{\log(\tfrac{m}{\delta})}\right) \vx_i \vx_i^{\intercal} \nonumber \\
    &\preceq \frac{4}{m}\sum_{i=1}^{m}\gamma(\vx_i)  |\langle \hat{\vf},\vx_i \rangle|^2 \vx_i \vx_i^{\intercal}
\end{align}

In the third step, we have used the fact that $\gamma(\vx_i) \neq 0$ iff $|\langle\hat{\vf},\vx_i\rangle| \geq \bar{\mu}$ therefore, whenever $\gamma(\vx_i)\neq 0$, by the assumption in the statement of the lemma, we must have: $|\langle\hat{\vf},\vx_i \rangle| + C_1\Gamma \sqrt{\log(\tfrac{m}{\delta})} \leq 2|\langle\hat{\vf},\vx_i \rangle|$. The lower bounds follow in a similar way. The lower bound can be shown in a similar way by considering $ |\langle \hat{\vf},\vx_i\rangle| - C_1\Gamma \sqrt{\log(\tfrac{m}{\delta})} \geq\tfrac{1}{2} |\langle \hat{\vf},\vx_i\rangle| $. 
\end{proof}

The following result states some useful bounds for $H(\vf)$. We refer to Section~\ref{subsec:irritate_proof} for the proof of this lemma. 

\begin{lemma}\label{lem:irritating_bounds}
Assume $\bar{\mu} < \|\hat{\vf}\|$. Let $r := \bigr\lceil \log(\tfrac{\|\hat{\vf}\|_2}{\bar{\mu}})\bigr\rceil$, $\bar{\mu} > C\Gamma \sqrt{\log \tfrac{m}{\delta}} $ and $m \geq C\frac{\|\hat{\vf}\|}{\bar{\mu}}\log(\tfrac{r}{\delta})$ for some large enough universal constant $C$. Then, the following bounds hold:

\begin{enumerate}
    \item 

$$\bP\left( \tfrac{1}{2} \leq \langle \hat{\vf},H(\vf)\hat{\vf}\rangle \leq 8\right) \geq 1-2\exp(-c_0 m )$$

\item Let $u \perp \hat{\vf}$ be such that $\|u\| = 1$ 

$$ \bP\left(|\langle u,H(\vf)\hat{\vf}\rangle| \leq \frac{C\Gamma}{\bar{\mu}\|\hat{\vf}\|} + \sqrt{\frac{C\log(\tfrac{1}{\delta})}{m\bar{\mu}\|\hat{\vf}\|}} \right) \geq 1-\delta$$

\item Let $\hat{\vf}^{\perp} \in \mathsf{span}(\vf+\fstar-2\hat{\vf},\hat{\vf})$ and $\hat{\vf}^{\perp}\perp \hat{\vf}$, $\|\hat{\vf}^{\perp}\|=1$. Suppose $u \perp \hat{\vf}^{\perp}$ and $u\perp \hat{\vf}$.

$$ \bP\left(|\langle u,H(\vf)\hat{\vf}\rangle| \leq \Gamma\sqrt{\frac{C\log(\tfrac{1}{\delta})}{m\bar{\mu}^3\|\hat{\vf}\|}}   + \sqrt{\frac{C\log(1/\delta)}{m\bar{\mu}\|\hat{\vf}\|}} \right) \geq 1-\delta$$

\item

$$ \bP\left(\frac{c_0}{\|\hat{\vf}\|\bar{\mu}} \leq \langle u,H(\vf) u \rangle  \leq \frac{c_1}{\|\hat{\vf}\|\bar{\mu}}\right) \geq 1- \delta$$
\item Let $Q$ be the projector to the sup-space perpendicular to $\hat{\vf}$. With probability at-least $1-\delta$, we must have:
  $$\|QH(\vf) \hat{\vf}\|^2 \leq \frac{Cd\log(d/\delta)}{m\bar{\mu}\|\hat{\vf}\|} + \frac{Cd\Gamma^2\log^2(\tfrac{d}{\delta})}{m\bar{\mu}^3\|\hat{\vf}\|} + \frac{C\Gamma^2}{\bar{\mu}^2\|\hat{\vf}\|^2}  $$
  \item Let $Q$ be the projector to the sub-space perpendicular to $\hat{\vf}$ and let $u \perp \hat{\vf}$ such that $\|u\|=1$. With probability at-least $1-\delta$:

$$\| QH(\vf) u\| \leq  \frac{C}{\|\hat{\vf}\|\bar{\mu}} + \sqrt{\frac{Cd\log^2(\tfrac{4md}{\delta})}{m\bar{\mu}^3\|\hat{\vf}\|}} + \sqrt{\frac{Cd\Gamma^2 \log(\tfrac{4d}{\delta})\log^3(\tfrac{8m}{\delta})}{m\bar{\mu}^5\|\hat{\vf}\|}}$$

\end{enumerate}
\end{lemma}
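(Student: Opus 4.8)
The plan is to split $H(\vf)$ into a dominant piece that depends on $\vf,\fstar$ only through $\hvf$ and a genuinely small perturbation, and then reduce everything to one-dimensional Gaussian computations via a change of basis adapted to $\ve:=\hvf/\norm{\hvf}$. I would start from the identity $H(\vf)=2\vA+\vB$ already written out in the proof of Lemma~\ref{lem:psd_bounds}, with
\[
\vA:=\frac1m\sum_{i=1}^{m}\frac{\mathbbm{1}(\abs{\dotp{\hvf}{\vx_i}}\ge\mubar)}{\dotp{\hvf}{\vx_i}^2}\,\vx_i\vx_i^{\intercal},\qquad \vB:=\frac1m\sum_{i=1}^{m}\frac{\mathbbm{1}(\abs{\dotp{\hvf}{\vx_i}}\ge\mubar)}{\dotp{\hvf}{\vx_i}^3}\,\dotp{\vz}{\vx_i}\,\vx_i\vx_i^{\intercal},
\]
where $\vz:=\vf+\fstar-2\hvf$ has $\norm{\vz}\le2\Gamma$. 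Writing $g_i:=\dotp{\ve}{\vx_i}\sim\cN(0,1)$, $\vx_i=g_i\ve+\vx_i^{\perp}$ with $\vx_i^{\perp}\sim\cN(0,\vI-\ve\ve^{\intercal})$ independent of $g_i$, and $a:=\mubar/\norm{\hvf}\in(0,1)$, the threshold becomes $\mathbbm{1}(\abs{g_i}\ge a)$ and every scalar form we need ($\dotp{u}{\vA\hvf}$, $\dotp{u}{\vA u}$, $\norm{Q\vA\hvf}$, $\norm{Q\vA u}$, and the $\vB$ analogues, with $Q:=\vI-\ve\ve^{\intercal}$) becomes a weighted average $\frac1m\sum_i\frac{\mathbbm{1}(\abs{g_i}\ge a)}{g_i^{2k}}\xi_i$ with $k\in\{1,2\}$ and $\xi_i$ a polynomial of degree at most $2$ in $\vx_i^{\perp}$, independent of $g_i$.

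\emph{The main term $\vA$.} The engine is a short list of scalar identities for $g\sim\cN(0,1)$, $a\in(0,1)$: $\bE[\mathbbm{1}(\abs g\ge a)]=\Theta(1)$, $\bE[\mathbbm{1}(\abs g\ge a)/g^2]=\Theta(1/a)$, $\bE[\mathbbm{1}(\abs g\ge a)/g^4]=\Theta(1/a^3)$, with all odd powers having expectation zero by symmetry. To upgrade these to high-probability control of the weighted averages I would bucket the indices by dyadic scale $\{i:2^{j}a\le\abs{g_i}<2^{j+1}a\}$ for $j$ from $0$ up to $O(r)$, $r=\lceil\log(\norm{\hvf}/\mubar)\rceil$, plus a final bucket $\abs{g_i}\ge1$; in a given bucket the weights are comparable and the bucket size is $\mathrm{Binomial}(m,\Theta(2^{j}a))$, so a Chernoff/Bernstein estimate per bucket and a union over the $O(r)$ buckets give a two-sided bound, the binding bucket being $j=0$, which concentrates precisely when $m=\Omega\bigl((\norm{\hvf}/\mubar)\log(r/\delta)\bigr)$ — the stated hypothesis. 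Conditioning on $(g_i)$ and using Gaussian/$\chi^2$ concentration for the $\vx_i^{\perp}$-polynomials then yields $\dotp{\hvf}{\vA\hvf}=\Theta(1)$; $\abs{\dotp{u}{\vA\hvf}}=O\bigl(\sqrt{\log(1/\delta)/(m\norm{\hvf}\mubar)}\bigr)$ (conditionally Gaussian, variance of order $1/(m\norm{\hvf}\mubar)$); $\dotp{u}{\vA u}=\Theta\bigl(1/(\norm{\hvf}\mubar)\bigr)$; $\norm{Q\vA\hvf}^2=O\bigl(d\log(d/\delta)/(m\norm{\hvf}\mubar)\bigr)$; and $\norm{Q\vA u}=O\bigl(1/(\norm{\hvf}\mubar)+\sqrt{d\log(1/\delta)/(m\norm{\hvf}\mubar^3)}\bigr)$ for unit $u\perp\hvf$. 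Combined with the PSD sandwich $\vA\preceq H(\vf)\preceq4\vA$ from Lemma~\ref{lem:psd_bounds}, this already settles bounds~(1) and~(4) and supplies the $1/(m\norm{\hvf}\mubar)$-type and $d/(m\norm{\hvf}\mubar)$-type terms in~(2),~(5),~(6).

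\emph{The perturbation $\vB$.} By Lemma~\ref{lem:close_conc}, $\sup_i\abs{\dotp{\vz}{\vx_i}}\le C\Gamma\sqrt{\log(m/\delta)}$ with probability $\ge1-\delta$, and the hypothesis $\mubar>C\Gamma\sqrt{\log(m/\delta)}$ makes this at most $\mubar$; consequently $\abs{\dotp{\vz}{\vx_i}}/\abs{\dotp{\hvf}{\vx_i}}=O(\Gamma\sqrt{\log(m/\delta)}/\mubar)$ on the support of the indicator, so the "aligned" forms $\dotp{\hvf}{\vB\hvf}$, $\dotp{u}{\vB u}$, $\norm{Q\vB u}$ are each at most of that order times the corresponding $\vA$-estimate, hence either lower order or absorbed into the $\vA$-term. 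For the cross forms $\dotp{u}{\vB\hvf}$, $\norm{Q\vB\hvf}$ with $u\perp\hvf$ I would split $\vz=z_{\parallel}\ve+\vz_{\perp}$: the $z_{\parallel}$-part is a mean-zero (in $\vx_i^{\perp}$) sum of order $\Gamma\sqrt{\log/(m\norm{\hvf}^3\mubar)}$, negligible; the $\vz_{\perp}$-part has conditional mean proportional to $\vz_{\perp}/(\norm{\hvf}\mubar)$ — giving the deterministic terms $C\Gamma/(\mubar\norm{\hvf})$ in~(2) and $C\Gamma^2/(\mubar^2\norm{\hvf}^2)$ in~(5) — together with a residual fluctuation whose conditional second moment is of order $d\Gamma^2/(m\norm{\hvf}\mubar^3)$ (via $\sum_i g_i^{-4}=\Theta(m/a^3)$ and a trace over the $(d-1)$-dimensional subspace), i.e.\ of size $\Gamma\sqrt{d\log(1/\delta)/(m\norm{\hvf}\mubar^3)}$, which is the remaining term of~(2),~(3),~(5),~(6). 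The point of~(3) is that $\hvf^{\perp}$ is by definition the unit vector spanning $\mathrm{span}(\vz,\hvf)\ominus\hvf$, so $\vz_{\perp}\parallel\hvf^{\perp}$; when $u\perp\hvf^{\perp}$ the conditional mean $\propto\dotp{\vz_{\perp}}{u}=0$ vanishes, killing the deterministic term and leaving only the small fluctuation.

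\emph{Assembly and the hard part.} Collecting the $\vA$- and $\vB$-estimates by the triangle inequality (and $\norm{x+y}^2\le2\norm{x}^2+2\norm{y}^2$ where needed) and taking a union bound over the finitely many events above gives all six bounds. I expect the main obstacle to be exactly the concentration of the heavy-tailed weighted averages $\frac1m\sum_i\frac{\mathbbm{1}(\abs{g_i}\ge a)}{g_i^{2k}}\xi_i$: the summands have mean of order $a^{-(2k-1)}$ but can be as large as $a^{-2k}$ times a polylog, so generic concentration inequalities are far too lossy, and it is the dyadic-bucketing argument — the hard-threshold analogue of the heavy-tail / covering analysis behind Lemma~\ref{lem:reweighted-design-conc} — that makes the error terms land on the claimed rates and that forces the precise lower bound on $m$. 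The secondary bookkeeping difficulty is tracking which $\vB$-contributions are deterministic versus fluctuating (the $z_{\parallel}$-versus-$\vz_{\perp}$ split and the role of $\hvf^{\perp}$), since that is what separates bounds~(2) and~(3).
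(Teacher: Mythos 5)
Your plan is essentially the paper's own proof: the split of $H(\vf)$ into the $\hvf$-only main term plus the $\vz=\vf+\fstar-2\hvf$ perturbation, the dyadic-bucketing control of truncated inverse moments of the form $\tfrac1m\sum_i\mathbbm{1}(|\langle\hvf,\vx_i\rangle|\ge\mubar)\,\langle\vx_i,u\rangle^2/\langle\hvf,\vx_i\rangle^{2l}$ (this is exactly the paper's Lemma~\ref{lem:conc_trunc_inv}, and it is what forces the stated lower bound on $m$), conditional Gaussianity of the components orthogonal to $\hvf$, the observation that $\vz_{\perp}\parallel\hvf^{\perp}$ which is precisely what separates items 2 and 3, and a basis expansion with a union bound for items 5--6, so the argument goes through as you describe. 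Two cosmetic remarks only: for the single-direction bounds (2)--(3) your fluctuation term should carry no $\sqrt d$ (it is conditionally one-dimensional; $d$ enters only through the trace in (5)--(6)), and in (6) the cross term has $\langle\hvf,\vx_i\rangle^{3}$ in the denominator, so your own conditional-variance computation yields the stated $\mubar^{5}$ rather than $\mubar^{3}$ scaling --- the paper controls that trilinear term via Rademacher symmetrization (its Lemma~\ref{lem:subspace_flip}) plus Azuma, whereas you invoke vanishing odd Gaussian moments and then concentrate the mean-zero vector sum, which is an acceptable substitute.
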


\subsection{Proof of Theorem~\ref{thm:phase_r_mult}}

We are now ready to prove Theorem~\ref{thm:phase_r_mult}. Under the assumption of the theorem, $(\vx_i^{(t)},y_i^{(t)})$ are all i.i.d. Therefore, the iterate $\vf_t$ is independent of $(\vx_i^{(t)},y_i^{(t)})$ for $i \in [m]$. 

Now, consider the dynamics given in Algorithm~\ref{alg:phase_retrieve}. Using the noise-contraction decomposition (Equation~\eqref{eq:noise_contr}), we conclude:
$\vf_{t+1}-\fstar = (I-DH_t(\vf_t))(\vf_t-\fstar) + DN_t$.

Now, write $\vf_t = a_t \frac{\hat{\vf}}{\|\hat{\vf}\|} + b_t u_t$ where $u_t \perp \hat{\vf}$. Similarly, write $N_t = \bar{a}_t \frac{\hat{\vf}}{\|\hat{\vf}\|} + \bar{b}_t \bar{u}_t$ with $\bar{u}_t \perp \hat{\vf}$

Recall that $\hat{\vf} \in \mathsf{span}(P)$ and $u_t,\bar{u}_t \in \mathsf{span}(Q)$, where $P = \tfrac{\hvf \hvf^T}{\|\hvf\|^2}$ and $Q = I - P$. Now, let us track the evolution of $a_t,b_t$
Now, 
\begin{align}
    P(\vf_{t+1}-\fstar) &= a_{t+1}\frac{\hat{\vf}}{\|\hat{\vf}\|} = P(I-\vD H_t(\vf_t))(\vf_t - \fstar) + P\vD N_t \nonumber \\
    &= P(I-\alpha_0 H_t)(\vf_t - \fstar) + \alpha_0 PN_t \nonumber \\
    &= P(I-\alpha_0 H_t(\vf_t)) P (\vf_t - \fstar) + P(I-\alpha_0 H_t(\vf_t)) Q (\vf_t - \fstar)+  \alpha_0 \bar{a}_t \frac{\hat{\vf}}{\|\hat{\vf}\|}\nonumber \\
    &= a_t P(I-\alpha_0 H_t(\vf_t)) P \frac{\hat{\vf}}{\|\hat{\vf}\|} + b_t P(I-\alpha_0 H_t(\vf_t)) Qu_t+  \alpha_0 \bar{a}_t \frac{\hat{\vf}}{\|\hat{\vf}\|}\nonumber \\
    &= a_t P(I-\alpha_0 H_t(\vf_t)) P \frac{\hat{\vf}}{\|\hat{\vf}\|} - \alpha_0b_t P H_t(\vf_t) Qu_t+  \alpha_0 \bar{a}_t \frac{\hat{\vf}}{\|\hat{\vf}\|} \label{eq:proj_evol_1}
\end{align}

In the second line we have used the fact that $P\vD = \alpha_0 P$ by the definition of $\vD$. In the third line, we have used the fact that $P + Q = \vI $. In the fourth line, we have used the fact that $PQ = 0$. Similarly, interchanging $P$ and $Q$, we get

\begin{equation}\label{eq:proj_evol_2}
    Q(\vf_{t+1}-\fstar) = b_t Q(I-\alpha_1 H_t(\vf_t)) Q u_t -\alpha_1 a_t Q H_t(\vf_t) \frac{\hat{\vf}}{\|\hat{\vf}\|}+ \alpha_1 \bar{b}_t \bar{u}_t
\end{equation}

Now, define $\Gamma_t := \max(\|\fstar-\hat{\vf}\|,\|\vf_t - \hat{\vf}\|)$.

We state the following lemmas which are proved in Sections~\ref{subsec:P_contr_proof} and~\ref{subsec:Q_contr_proof}. 
\begin{lemma}\label{lem:P_contraction}
Assume $\bar{\mu} < \|\hat{\vf}\|$. Let $r := \bigr\lceil \log(\tfrac{\|\hat{\vf}\|_2}{\bar{\mu}})\bigr\rceil$, $\bar{\mu} > \cdist \Gamma_t \log \tfrac{m}{\delta} $, $m \geq \cparam \frac{\|\hat{\vf}\|}{\bar{\mu}}\log(\tfrac{r}{\delta})$, $\cparam\|\Delta\|\log(\tfrac{m}{\delta}) \leq \bar{\mu}$ for some large enough constants $\cparam,\cdist$, $\alpha_0 \leq c_1 \|\hat{\vf}\|^2$ for some small enough constant $c_1$. Then, conditioned on $\vf_t$ being such that the above conditions hold, we have with probability at-least $1-\delta$:
\begin{enumerate}
    \item For some $\ccontr$ which does not depend on $\cparam,\cdist$, we must have: 
    $$\biggr\|P(I-\alpha_0 H_t(\vf_t))P\frac{\hat{\vf}}{\|\hat{\vf}\|}\biggr\|^2 \leq \left(1- \tfrac{\ccontr\alpha_0 }{\|\hat{\vf}\|^2}\right)$$
    \item For some $C$ which does not depend on $\cparam,\cdist$
    
    $$\|PH_t(\vf_t)Q u_t\| \leq \frac{C\Gamma_t}{\bar{\mu}\|\hat{\vf}\|^2} + \sqrt{\frac{C\log(\tfrac{1}{\delta})}{m\bar{\mu}\|\hat{\vf}\|^3}}  $$
    
    \item For some $C$ which does not depend on $\cparam,\cdist$
    \begin{align}
       |\bar{a}_t| &\leq  C\left[\frac{\log(\tfrac{1}{\delta})}{\|\hat{\vf}\|\sqrt{m}} + \frac{\|\Delta\|^2}{\bar{\mu}\|\hat{\vf}\|^2}\right] 
    \end{align}
\end{enumerate}

\end{lemma}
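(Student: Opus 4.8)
The plan is to treat the three assertions separately; each reduces, via an elementary computation exploiting that $P=\hvf\hvf^{\top}/\|\hvf\|^2$ is a rank-one projector, to one of the structural bounds already in hand --- Lemma~\ref{lem:irritating_bounds} for the contraction matrix $H_t(\vf_t)$ and Lemmas~\ref{lem:N1_bound}--\ref{lem:N3_bound} for the noise vector $N_t$. The first step is to set up the conditioning: under the hypotheses of Theorem~\ref{thm:phase_r_mult} the batch $(\vx^{(t)}_i,y^{(t)}_i)_{i\in[m]}$ is independent of the iterate $\vf_t$, so conditionally on $\vf_t$ the unit vector $u_t$ (with $\vf_t=a_t\tfrac{\hvf}{\|\hvf\|}+b_t u_t$, $u_t\perp\hvf$) and the scalar $\Gamma_t=\max(\|\fstar-\hvf\|,\|\vf_t-\hvf\|)$ are fixed, while $H_t(\vf_t)$ and $N_t$ are assembled from fresh samples of the model of Section~\ref{sec:setup}. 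Hence I may invoke all the cited lemmas with $\vf\leftarrow\vf_t$, $\Gamma\leftarrow\Gamma_t$, $\Delta\leftarrow\wstar-\hvw$. I would also record that taking $\cparam,\cdist$ large enough not only forces the (weaker) hypotheses of those lemmas, but also drives the error ratios $\tfrac{\Gamma_t}{\bar\mu}\log\tfrac{m}{\delta}$ and $\tfrac{\|\Delta\|}{\bar\mu}\log\tfrac{m}{\delta}$ below a small absolute constant, which is exactly what kills the $\Gamma_t^4/\bar\mu^4$ and $\Gamma_t^2/\bar\mu^2$ factors in the $h_1,h_2$ bounds of Lemmas~\ref{lem:N1_bound}--\ref{lem:N2_bound}; the final bounds then follow by a union bound over $O(1)$ favorable events.

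For Claim~(1): since $P\tfrac{\hvf}{\|\hvf\|}=\tfrac{\hvf}{\|\hvf\|}$ and $PH_t(\vf_t)\tfrac{\hvf}{\|\hvf\|}=\tfrac{\hvf}{\|\hvf\|}\cdot\tfrac{\langle\hvf,H_t(\vf_t)\hvf\rangle}{\|\hvf\|^2}$, the vector inside the norm is $\tfrac{\hvf}{\|\hvf\|}\bigl(1-\alpha_0\langle\hvf,H_t(\vf_t)\hvf\rangle/\|\hvf\|^2\bigr)$, so the left-hand side equals $\bigl(1-\alpha_0\langle\hvf,H_t(\vf_t)\hvf\rangle/\|\hvf\|^2\bigr)^2$. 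I would then use Lemma~\ref{lem:irritating_bounds}(1) to get $\tfrac12\le\langle\hvf,H_t(\vf_t)\hvf\rangle\le8$ with high probability, choose $c_1\le\tfrac18$ so that $s:=\alpha_0\langle\hvf,H_t(\vf_t)\hvf\rangle/\|\hvf\|^2\in[0,1]$, and use $(1-s)^2\le1-s$ together with $s\ge\tfrac{\alpha_0}{2\|\hvf\|^2}$ to conclude with $\ccontr=\tfrac12$, plainly independent of $\cparam,\cdist$. For Claim~(2): since $u_t\perp\hvf$ gives $Qu_t=u_t$ and $H_t(\vf_t)$ is symmetric, $PH_t(\vf_t)Qu_t=\tfrac{\hvf}{\|\hvf\|^2}\langle u_t,H_t(\vf_t)\hvf\rangle$, whence $\|PH_t(\vf_t)Qu_t\|=\tfrac1{\|\hvf\|}|\langle u_t,H_t(\vf_t)\hvf\rangle|$; plugging the unit vector $u_t\perp\hvf$ into Lemma~\ref{lem:irritating_bounds}(2) and dividing by $\|\hvf\|$ yields exactly the stated bound.

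Claim~(3) is where I expect the real work, since it requires juggling all three noise pieces and reproducing the stated form. I would write $\bar a_t=\tfrac1{\|\hvf\|}\bigl(\langle\bar N_1,\hvf\rangle+\langle\bar N_2,\hvf\rangle-\langle\bar N_3,\hvf\rangle\bigr)$ from the decomposition \eqref{eq:noise_decomp}, then bound the pieces as follows. First, $|\langle\bar N_1,\hvf\rangle|$ via Lemma~\ref{lem:N1_bound}(1) with $u=\hvf$, $t=\log\tfrac1\delta$ and Lemma~\ref{lem:N1_bound}(2), using $\Gamma_t^4\log^2(m/\delta)/\bar\mu^4\le1$, giving $O(\log(1/\delta)/\sqrt m)$. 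Second, $|\langle\bar N_3,\hvf\rangle|$ directly by Lemma~\ref{lem:N3_bound}(1), giving $O(\|\Delta\|^2/(\bar\mu\|\hvf\|))$. Third, $|\langle\bar N_2,\hvf\rangle|$ by Lemma~\ref{lem:N2_bound}(1) with $u=\hvf$ and Lemma~\ref{lem:N2_bound}(2), using $\Gamma_t^2\log(m/\delta)/\bar\mu^2\le1$, giving $O\bigl(\sqrt{\|\Delta\|^2\log(m/\delta)\log(1/\delta)/(m\bar\mu\|\hvf\|)}\bigr)$, which by AM--GM ($\sqrt{ab}\le\tfrac12(a+b)$ with $a=1/m$, $b=\|\Delta\|^2/(\bar\mu\|\hvf\|)$) is $O\bigl(\log(m/\delta)(\tfrac1m+\tfrac{\|\Delta\|^2}{\bar\mu\|\hvf\|})\bigr)$; the constraint $m\ge\cparam\tfrac{\|\hvf\|}{\bar\mu}\log\tfrac r\delta$ then lets me convert $1/m$ into $1/\sqrt m$ and absorb the residual logarithmic factor. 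Dividing by $\|\hvf\|$ and union-bounding gives $|\bar a_t|\le C\bigl[\tfrac{\log(1/\delta)}{\|\hvf\|\sqrt m}+\tfrac{\|\Delta\|^2}{\bar\mu\|\hvf\|^2}\bigr]$, with $C$ (possibly carrying polylogarithmic factors in $m,1/\delta$ from the $\bar N_2$ term, but not in $\cparam,\cdist$) traceable to the absolute constants of the invoked lemmas. The main obstacle is precisely this last piece of bookkeeping for $\bar N_2$: one must be careful that the $\log(m/\delta)$ loss is reabsorbed into the $1/\sqrt m$ term rather than left as a stray factor, which is where the lower bounds on $m$ and $\bar\mu/\|\Delta\|$ get used.
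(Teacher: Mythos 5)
Your route is the same as the paper's: item (1) via the sandwich $\tfrac12\le\langle\hvf,H_t(\vf_t)\hvf\rangle\le 8$ from Lemma~\ref{lem:irritating_bounds}(1) plus the small step size (your rank-one observation makes the paper's inequality an exact identity, which is fine), item (2) by reducing $\|PH_t(\vf_t)Qu_t\|$ to $|\langle u_t,H_t(\vf_t)\hvf\rangle|/\|\hvf\|$ and quoting Lemma~\ref{lem:irritating_bounds}(2), and item (3) by splitting $\bar{a}_t$ along the decomposition \eqref{eq:noise_decomp} and invoking Lemmas~\ref{lem:N1_bound}--\ref{lem:N3_bound}. The one place where your write-up does not deliver the stated claim is the $\bar{N}_2$ term: the AM--GM split with $a=\tfrac1m$, $b=\tfrac{\|\Delta\|^2}{\bar{\mu}\|\hvf\|}$ leaves a stray factor $\sqrt{\log(\tfrac{m}{\delta})\log(\tfrac1\delta)}$ multiplying $\tfrac{\|\Delta\|^2}{\bar{\mu}\|\hvf\|^2}$, and that factor cannot be absorbed into that term; conceding that $C$ may carry $\polylog(m,\tfrac1\delta)$ is weaker than the lemma, whose constant must be absolute because it feeds directly into the noise vector $\beta$ in the proof of Theorem~\ref{thm:phase_r_mult}. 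The correct fix --- which is what the paper does, and which your closing sentence gestures at without carrying out --- is to use the hypothesis $\cparam\|\Delta\|\log(\tfrac{m}{\delta})\le\bar{\mu}$ directly: from Lemma~\ref{lem:N2_bound}(1)--(2),
\begin{align*}
\frac{|\langle\bar{N}_2,\hvf\rangle|}{\|\hvf\|}\;\le\; C\,\frac{\|\Delta\|\sqrt{\log(\tfrac{m}{\delta})\log(\tfrac1\delta)}}{\sqrt{m\,\bar{\mu}\,\|\hvf\|}\,\|\hvf\|}\;\le\;\frac{C}{\cparam}\sqrt{\frac{\bar{\mu}}{\|\hvf\|}}\,\frac{\sqrt{\log(\tfrac1\delta)}}{\sqrt{m}\,\|\hvf\|}\;\le\;\frac{C\log(\tfrac1\delta)}{\|\hvf\|\sqrt{m}},
\end{align*}
so the entire $\bar{N}_2$ contribution is absorbed into the first term with an absolute constant. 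With that substitution your argument matches the paper's proof.
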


\begin{lemma}\label{lem:Q_contraction}
Assume $\bar{\mu} < \|\hat{\vf}\|$ and let $r := \bigr\lceil \log(\tfrac{\|\hat{\vf}\|_2}{\bar{\mu}})\bigr\rceil$, $\bar{\mu} > \cdist\Gamma_t \log \tfrac{m}{\delta} $, $$m \geq \cparam \max\left(\frac{\|\hat{\vf}\|}{\bar{\mu}}\log(\tfrac{r}{\delta}),\frac{\|\hat{\vf}\|d\log^4(\tfrac{m}{\delta})}{\bar{\mu}}, \frac{\|\hat{\vf}\|^2\log^4(\tfrac{m}{\delta})}{\bar{\mu}^2}\right)\,,$$ $\cparam\|\Delta\|\log(\tfrac{m}{\delta}) \leq \bar{\mu}$ for some large enough universal constants $\cparam,\cdist$, $\alpha_1\leq c_1 \|\hat{\vf}\|\bar{\mu}$ for some small enough constant $c_1$. Then, conditioned on $\vf_t$ being such that the above conditions hold, we have with probability at-least $1-\delta$:

\begin{enumerate}
    \item For some $\ccontr$ which does not depend on $\cparam,\cdist$, we must have:   
    $$\|Q(I-\alpha_1H_t(\vf_t))Qu_t\|^2 \leq \left(1-\tfrac{\ccontr\alpha_1}{\|\hat{\vf}\|\bar{\mu}}\right)$$
    \item For some $C$ which does not depend on $\cparam,\cdist$
    $$\biggr\|QH_t(\vf_t)P\frac{\hat{\vf}}{\|\hat{\vf}\|}\biggr\|^2 \leq \frac{Cd\log(\tfrac{d}{\delta})}{m\bar{\mu}\|\hat{\vf}\|^3}  + \frac{C\Gamma^2_t}{\bar{\mu}^2\|\hat{\vf}\|^4} $$
    \item For some $C$ which does not depend on $\cparam,\cdist$
    $$|\bar{b}_t| \leq C\log(\tfrac{d}{\delta})\sqrt{\frac{d}{m\bar{\mu}\|\hvf\|}} +  \frac{C\|\Delta\|^2}{\bar{\mu}\|\hat{\vf}\|^2}$$
\end{enumerate}

\end{lemma}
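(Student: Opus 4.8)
The statement is the $Q$-projection ($\hvf^{\perp}$-direction) analogue of Lemma~\ref{lem:P_contraction}, which handled the $\hvf$-direction, so the plan is to prove its three parts separately, reusing the noise--contraction decomposition $\cG_t(\vf_t)=H_t(\vf_t)(\vf_t-\fstar)+N_t$ of~\eqref{eq:noise_contr}, the PSD sandwich for $H$ (Lemma~\ref{lem:psd_bounds}), the refined bounds on $H(\vf)$ (Lemma~\ref{lem:irritating_bounds}), and the noise bounds (Lemmas~\ref{lem:N1_bound}--\ref{lem:N3_bound}), in each case with $\Gamma$ instantiated as $\Gamma_t=\max(\|\fstar-\hvf\|,\|\vf_t-\hvf\|)$. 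Since $\vf_t$ is a function of the first $t$ batches, it is independent of the batch $(\vx^{(t)}_i,y^{(t)}_i)_{i\in[m]}$ used at step $t$, so all those conditional tail statements apply verbatim; the finitely many high-probability events will be intersected at the end by a union bound after rescaling $\delta$ by an absolute constant. Write $H_t:=H_t(\vf_t)$, and in Part~3 below drop the batch superscript, writing $\vx_i$ for $\vx^{(t)}_i$.

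For Part~1, since $u_t\perp\hvf$ with $\|u_t\|=1$ we have $Qu_t=u_t$, hence $Q(I-\alpha_1H_t)Qu_t=u_t-\alpha_1QH_tu_t$ and $\|Q(I-\alpha_1H_t)Qu_t\|^2=1-2\alpha_1u_t^{\intercal}H_tu_t+\alpha_1^2\|QH_tu_t\|^2$. I would lower bound $u_t^{\intercal}H_tu_t\ge c_0/(\|\hvf\|\bar\mu)$ using Lemma~\ref{lem:irritating_bounds}(4) and upper bound $\|QH_tu_t\|^2\le C'/(\|\hvf\|^2\bar\mu^2)$ using Lemma~\ref{lem:irritating_bounds}(6) — its two square-root terms become $O(1/(\|\hvf\|\bar\mu))$ once one invokes $m\ge\cparam\|\hvf\|d\log^4(m/\delta)/\bar\mu$ and $\bar\mu>\cdist\Gamma_t\log(m/\delta)$ — and then pick $\alpha_1\le c_1\|\hvf\|\bar\mu$ with $c_1$ small enough that $c_1C'\le c_0$, giving the claim with $\ccontr=c_0$. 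Part~2 is then immediate: since $P\hvf=\hvf$, $QH_tP(\hvf/\|\hvf\|)=\|\hvf\|^{-1}QH_t\hvf$, so the claim is Lemma~\ref{lem:irritating_bounds}(5) divided by $\|\hvf\|^2$, with its middle term $Cd\Gamma_t^2\log^2(d/\delta)/(m\bar\mu^3\|\hvf\|^3)$ absorbed into the first via $\Gamma_t^2\log^2(m/\delta)\le\bar\mu^2/\cdist^2$.

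Part~3 is where the actual work lies. Since $\bar b_t\bar u_t=QN_t$ with $\bar u_t\perp\hvf$, $\|\bar u_t\|=1$, we get $|\bar b_t|=\|QN_t\|\le\|Q\bar N_1\|+\|Q\bar N_2\|+\|Q\bar N_3\|$ via~\eqref{eq:noise_decomp}. The term $\|Q\bar N_3\|$ is handled directly by Lemma~\ref{lem:N3_bound}(2): its first summand is the claimed $C\|\Delta\|^2/(\bar\mu\|\hvf\|^2)$ and its second is $O(\sqrt{d/(m\bar\mu\|\hvf\|)})$ once $\|\Delta\|\log(m/\delta)\le\bar\mu/\cparam$ is used. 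For $\|Q\bar N_1\|$ and $\|Q\bar N_2\|$ I would fix an orthonormal basis $\mathbf{q}_1,\dots,\mathbf{q}_{d-1}$ of $\mathrm{range}(Q)=\hvf^{\perp}$, so $\|Q\bar N_a\|^2=\sum_j\langle\bar N_a,\mathbf{q}_j\rangle^2$, and apply the conditional tail of Lemma~\ref{lem:N1_bound}(1) (resp.\ Lemma~\ref{lem:N2_bound}(1)) to each $\mathbf{q}_j$ with slack $\log(d/\delta)$ followed by a union bound over $j$; this reduces the task to bounding $\sum_j h_1(X,\mathbf{q}_j)=\tfrac{1}{m}\sum_i\mathbbm{1}(|\langle\hvf,\vx_i\rangle|\ge\bar\mu)\langle\fstar,\vx_i\rangle^4\|Q\vx_i\|^2/\langle\hvf,\vx_i\rangle^6$ and the corresponding sum with the $h_2$ integrand. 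On $\{|\langle\hvf,\vx_i\rangle|\ge\bar\mu\}$ one has $|\langle\fstar-\hvf,\vx_i\rangle|\le C\Gamma_t\sqrt{\log(m/\delta)}<\bar\mu\le|\langle\hvf,\vx_i\rangle|$ (Lemma~\ref{lem:close_conc} and the assumption on $\bar\mu$), so $\langle\fstar,\vx_i\rangle^4\le16\langle\hvf,\vx_i\rangle^4$ and $\langle\fstar,\vx_i\rangle^2\le4\langle\hvf,\vx_i\rangle^2$; moreover $\|Q\vx_i\|^2\sim\chi^2_{d-1}$ is independent of $\langle\hvf,\vx_i\rangle$, so $\max_{i\in[m]}\|Q\vx_i\|^2\le Cd$ and $\max_{i\in[m]}\langle\Delta,\vx_i\rangle^2\le C\|\Delta\|^2\log(m/\delta)$ with high probability. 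Everything then reduces to the heavy-tailed sums $\sum_i\mathbbm{1}(|\langle\hvf,\vx_i\rangle|\ge\bar\mu)/\langle\hvf,\vx_i\rangle^{2p}$, which I would control by a dyadic bucketing $B_k=\{i:\langle\hvf,\vx_i\rangle^2\in[2^k\bar\mu^2,2^{k+1}\bar\mu^2)\}$, $k\le O(\log(\|\hvf\|/\bar\mu)+\log(m/\delta))$ — the same device behind Lemma~\ref{lem:N1_bound}(3): $|B_k|$ is $\mathrm{Binomial}(m,\Theta(2^{k/2}\bar\mu/\|\hvf\|))$, a Chernoff bound whose additive slack is absorbed by $m\ge\cparam(\|\hvf\|/\bar\mu)\log(r/\delta)$ bounds it, and summing the geometric series gives $\sum_i\mathbbm{1}(\cdot)/\langle\hvf,\vx_i\rangle^2\le Cm/(\|\hvf\|\bar\mu)$ and $\sum_i\mathbbm{1}(\cdot)/\langle\hvf,\vx_i\rangle^4\le Cm/(\|\hvf\|\bar\mu^3)$. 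Tracing constants back yields $\|Q\bar N_1\|\le C\log(d/\delta)\sqrt{d/(m\bar\mu\|\hvf\|)}$ and $\|Q\bar N_2\|\le C\|\Delta\|\sqrt{d\log(d/\delta)\log(m/\delta)/(m\bar\mu^3\|\hvf\|)}$, the latter dominated by the former because $\|\Delta\|\log(m/\delta)\le\bar\mu/\cparam$; summing the three pieces gives the stated bound on $|\bar b_t|$.

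The main obstacle is this noise bound: a naive $\varepsilon$-net over the unit sphere of $\hvf^{\perp}$, combined with the conditional tail bounds, would cost a factor $d$ instead of $\sqrt d$, which is too lossy for the final $\Otilde(\|\fstar\|^2 d^2/n^2)$ rate. Decomposing $\|Q\bar N_a\|^2$ along an orthonormal basis makes the dimension enter only through $\sum_j\langle\vx_i,\mathbf{q}_j\rangle^2=\|Q\vx_i\|^2$, which concentrates sharply at $d-1$, and the residual difficulty — bounding sums of $1/\langle\hvf,\vx_i\rangle^{2p}$, whose expectation is infinite — is exactly the kind of heavy-tailed design quantity handled by the dyadic bucketing argument that recurs throughout the paper.
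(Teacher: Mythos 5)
Your proposal is correct and follows essentially the same route as the paper: the identical expansion and step-size argument via Lemma~\ref{lem:irritating_bounds} (items 4--6) for Parts 1--2, and the same decomposition $|\bar b_t|\le\|Q\bar N_1\|+\|Q\bar N_2\|+\|Q\bar N_3\|$ with Lemmas~\ref{lem:N1_bound}--\ref{lem:N3_bound} for Part 3. The only difference is that you spell out the direction-aggregation (orthonormal basis of $\mathrm{range}(Q)$ plus union bound, with the heavy-tailed sums controlled by the dyadic bucketing of Lemma~\ref{lem:conc_trunc_inv}) that the paper's terse ``consulting Lemmata'' step leaves implicit, which is a faithful filling-in rather than a different argument.
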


Under the assumptions of Lemma~\ref{lem:P_contraction} and Lemma~\ref{lem:Q_contraction}, we apply the triangle inequality to Equation~\eqref{eq:proj_evol_1} and use the bounds in Lemma~\ref{lem:P_contraction} via the union bound. Given any $c_{ab} > 0$, we can take the constants $\cparam,\cdist$ relating $\Delta,\Gamma_t,m$ and $\bar{\mu}$ in Lemmata~\ref{lem:P_contraction} and~\ref{lem:Q_contraction} to be large enough, such that with probability $1-
\delta$:

$$|a_{t+1}| \leq |a_t|\sqrt{\left(1-\tfrac{\ccontr\alpha_0}{\|\hat{\vf}\|^2}\right)} + \frac{\alpha_0}{\|\hat{\vf}\|^2} c_{ab}|b_t| + C\alpha_0\left[\frac{\log(\tfrac{1}{\delta})}{\|\hat{\vf}\|\sqrt{m}} + \frac{\|\Delta\|^2}{\bar{\mu}\|\hat{\vf}\|^2}\right] $$

Similarly, we have with probability $1-\delta$:
$$|b_{t+1}| \leq |b_t|\sqrt{\left(1-\tfrac{\ccontr\alpha_1}{\bar{\mu}\|\hat{\vf}\|}\right)} +  \alpha_1 \frac{c_{ab}}{\|\hat{\vf}\|^2} |a_t| + C \alpha_1 \left[\log(\tfrac{d}{\delta})\sqrt{\frac{d}{m\bar{\mu}\|f\|}} +  \frac{\|\Delta\|^2}{\bar{\mu}\|\hat{\vf}\|^2}\right]$$

Now define:

$\kappa_t := 
\begin{bmatrix}
|a_t| \\ |b_t|
\end{bmatrix} \in \mathbb{R}^2$, $\beta := 
\begin{bmatrix}
C\alpha_0\left[\frac{\log(\tfrac{1}{\delta})}{\|\hat{\vf}\|\sqrt{m}} + \frac{\|\Delta\|^2}{\bar{\mu}\|\hat{\vf}\|^2}\right]
\\ C\alpha_1\left[\log(\tfrac{d}{\delta})\sqrt{\frac{d}{m\bar{\mu}\|\hat{\vf}\|}} +  \frac{\|\Delta\|^2}{\bar{\mu}\|\hat{\vf}\|^2}\right]
\end{bmatrix} \in \mathbb{R}^2$, $\Theta := 
\begin{bmatrix}
\sqrt{1-\tfrac{\alpha_0\ccontr}{\|\hat{\vf}\|^2}} & \frac{\alpha_0c_{ab}}{\|\hat{\vf}\|^2}\\
 \alpha_1 \frac{c_{ab}}{\|\hat{\vf}\|^2} &  \sqrt{1-\tfrac{\ccontr\alpha_1}{\|\hat{\vf}\|\bar{\mu}}}
\end{bmatrix}
 \in \mathbb{R}^{2\times 2}$

From the recursion for $|b_t|$ and $|a_t|$ above, we conclude that with probability $1-\delta$, we have the following evolution equation for $\kappa$ when the assumptions above are satisfied. 
\begin{equation}\label{eq:state_evolution}
\kappa_{t+1} \leq \Theta\kappa_t + \beta
\end{equation}
Where the vector inequalities are interpreted to be coordinate-wise. Notice that $\Gamma_t \leq  \|\vf_t -\fstar\| + \|\fstar -\hat{\vf}\|$ and since the initial condition $\vf_0 = \hat{\vf}$, we must have $\Gamma_t \leq \|\vf_t - \fstar\| + \Gamma_0$ and $\|\vf_t - \fstar\| = \|\kappa_t\|$. Therefore, we conclude $\Gamma_t \leq \|\kappa_t\| + \Gamma_0$.

Now define the event $\mathcal{C}_{t_0}$ to be the event that the Equation~\eqref{eq:state_evolution} is satisfied for $ 0\leq t\leq t_0-1$. From the discussions above, we must have: 
\begin{equation}\label{eq:prob_evol}
\bP\left(\mathcal{C}_{t+1}\biggr|\mathcal{C}_t, \cdist\Gamma_{t+1}\log(\tfrac{m}{\delta}) \leq \bar{\mu}\right) \geq 1-\delta
\end{equation}

Below, we will show that $\mathcal{C}_{t+1}$ has a high probability conditioned on $\mathcal{C}_t$ by showing that under the event $\mathcal{C}_t$, $\Gamma_{t+1}$ is small. 

Conditioned on $\mathcal{C}_t$, we unfurl the recursion in Equation~\eqref{eq:state_evolution} the following holds almost surely:
\begin{equation}\label{eq:unfurl}
\kappa_{t+1} \leq \Theta^{t+1}\kappa_0 + \sum_{s=0}^{t}\Theta^{t-s}\beta
\end{equation}

Take $\alpha_0 = c_0\|\hat{\vf}\|^2$ and $\alpha_1 = c_0 \bar{\mu}\|\hat{\vf}\|$ for some small enough constant $c_0$ as in the statement of the theorem (this can be done independently of $\cparam$ and $\cdist$ as per Lemmas~\ref{lem:P_contraction}
and~\ref{lem:Q_contraction}). We then pick $\cparam$ and $\cdist$ large enough to make $c_{ab}$ small enough to ensure $ 0 \preceq \Theta \preceq (1-\gamma)\vI$ for some $\gamma \in (0,1)$ for some constant $\gamma$ which does not depend on $\cparam,\cdist$ (by using diagonal dominance for instance).  Thus, the following follows from Equation~\eqref{eq:unfurl}:

\begin{equation}
\label{eq:state_conv}
\|\kappa_{t+1}\| \leq \|\Theta\|_{\mathsf{op}}^{t+1} \|\kappa_0\| + \frac{\|\beta\|}{1-\|\Theta\|_{\mathsf{op}}}
= \|\Theta\|_{\mathsf{op}}^{t+1} \Gamma_0 + \frac{\|\beta\|}{1-\|\Theta\|_{\mathsf{op}}}
\end{equation}

Using the fact that $\Gamma_t \leq \|\kappa_t\|+\Gamma_0$, we conclude that conditioned on $\mathcal{C}_t$, the following holds almost surely:

$$\Gamma_{t+1} \leq \left(1+\|\Theta\|_{\mathsf{op}}^{t+1} \right)\Gamma_0 + \frac{\|\beta\|}{1-\|\Theta\|_{\mathsf{op}}} $$

Therefore, conditioned on $\mathcal{C}_{t}$, we must have almost surely:

\begin{align}\Gamma_{t+1} &\leq 2\Gamma_0 + \frac{\|\beta\|}{\gamma} \leq 2\Gamma_0 + C\left[\frac{\|\hat{\vf}\|\log(\tfrac{1}{\delta})}{\sqrt{m}} + \frac{\|\Delta\|^2}{\bar{\mu}}\right]+ C\left[\log(\tfrac{d}{\delta})\sqrt{\frac{d\bar{\mu}\|\hat{\vf}\|}{m}} +  \frac{\|\Delta\|^2}{\|\hat{\vf}\|}\right]\nonumber \\
&\leq 2\Gamma_0 + C\left[\frac{\|\hat{\vf}\|\log(\tfrac{1}{\delta})}{\sqrt{m}} + \frac{\|\Delta\|^2}{\bar{\mu}} + \log(\tfrac{d}{\delta})\sqrt{\frac{d\bar{\mu}\|\hat{\vf}\|}{m}} \right] \nonumber \\
&\leq 2\Gamma_0 + \frac{C\bar{\mu}}{\sqrt{\cparam} \log(\tfrac{m}{\delta})} \leq \frac{C\bar{\mu}}{\sqrt{\cparam} \log(\tfrac{m}{\delta})}
\end{align}
In the second line we have used the fact that $\bar{\mu} <\|\hat{\vf}\|$. Note that $\Gamma_0 = \|\hvf -\fstar\|$ since $\vf_0 = \hvf$. 

Taking $\cdist = c \sqrt{\cparam}$ for some universal constant $c$, we check that we can make $\cdist$ as large as we wish by making $\cparam$ large enough. From the above discussion, we conclude that conditioned on $\mathcal{C}_t$, we must have almost surely:
$$\cdist\Gamma_{t+1} \log(\tfrac{m}{\delta}) \leq \bar{\mu}$$

Combining this with Equation~\eqref{eq:prob_evol} and unrolling the recursion, we conclude:
$\bP(\mathcal{C}_{K}) \geq 1-\delta K$. Combining this with Equation~\eqref{eq:state_conv}, with probability at-least $1-K\delta$, after time $K$, we must have:

\begin{equation}
\|\vf_{K}-\fstar\| \leq \exp(-\gamma K)\|\hat{\vf}-\fstar\| + C\left[\frac{\|\hat{\vf}\|\log(\tfrac{1}{\delta})}{\sqrt{m}} + \frac{\|\Delta\|^2}{\bar{\mu}} + \log(\tfrac{d}{\delta})\sqrt{\frac{d\bar{\mu}\|\hat{\vf}\|}{m}} \right] 
\end{equation}

Here, we have used the fact that $\|\kappa_t\| = \|\vf_t-\fstar\|$. 
\section{Proof of Theorem \ref{thm:yinyang-full-proof}}
\label{proof:yinyang-full-proof}
The proof structure is similar to that of Theorem \ref{thm:yinyin-mult-regression-bound}. For ease of exposition, assume $n = mK$ such that $K = \lceil \log_2(n) \rceil \Theta(\log(n))$ and $m \geq d\polylog(d/\delta)$. In the entire proof, we will use these facts freely.  Furthermore, denote $e_{\hvw_k} = \norm{\hvw_k - \wstar}^2$ and $e_{\hvf_k} = \norm{\hvf_k - \fstar}^2$. Since $\hvw_0$ is the OLS estimate computed on $m$ data points and $\hvf_0$ is the spectral method estimate computed on $m$ data points using $\hvw_0$, it follows that, with probability at least $1 - \nicefrac{2\delta}{2(K+1)}$, $e_{\hvw_0} =  \Otilde(\nicefrac{\norm{\fstar}^2 d}{m})$ and $e_{\hvf_0} = \Otilde(\nicefrac{\norm{\fstar}^2 d}{m})$.  Let $\lambda_1 = \Thetatilde(\nicefrac{\norm{\hvf_0}^2 d}{m})$. Then, from Theorem \ref{thm:wls-error-bound}, it follows that, with probability at least $1 - \nicefrac{3 \delta}{2(K+1)}$
\begin{align*}
    e_{\hvw_1} &\leq \Otilde\left(\norm{\hvf_0}^2\frac{1}{m} + \norm{\hvf_0} \frac{d \sqrt{\lambda_1}}{m} \right) \\
    &\leq \Otilde\left(\norm{\fstar}^2 \left(1 + \nicefrac{d}{m}\right)\left(\nicefrac{1}{m} + \left(\nicefrac{d}{m}\right)^{1.5}\right)\right)
\end{align*}
Now, running noisy phase retrieval on $m$ data points with $K_p = \Theta(\log(m))$ steps, using $\hvf_0, \hvw_1$ as estimates and $\mubar_1 = \Thetatilde(\norm{\hvf_0}\sqrt{\nicefrac{d}{m}})$, we conclude from Theorem \ref{thm:phase_r_mult} that the following holds with probability at least $1 - \nicefrac{2 \delta}{(K+1)}$
\begin{align*}
    e_{\hvf_1} &\leq \Otilde\left(\frac{e_{\hvf_0}}{m^2} + \frac{\norm{\hvf_0}^2}{m} + \frac{e^{2}_{\hvw_1}}{\mubar_1^2} + \frac{d \norm{\hvf_0} \mubar_1}{m} \right) \\
    &\leq \Otilde\left(\norm{\fstar}^2 \frac{d}{m^3} + \norm{\hvf_0}^2 \left(\nicefrac{1}{m} + (\nicefrac{d}{m})^{1.5}\right) + \left(\norm{\fstar}^2 \left(\nicefrac{1}{m} + (\nicefrac{d}{m})^{1.5}\right)\right)^2 \left(\norm{\hvf_0}^2 \frac{d}{m}\right)^{-1} \right) \\
    &\leq \Otilde\left(\frac{\norm{\fstar}^2}{m} + \norm{\fstar}^2 \left(2 + \nicefrac{d}{m}\right)\left(\nicefrac{1}{m} + \left(\nicefrac{d}{m}\right)^{1.5}\right)\right) \leq \Otilde\left(\norm{\fstar}^2 \left(\nicefrac{1}{m} + \left(\nicefrac{d}{m}\right)^{1.5}\right)\right)
\end{align*}
where we use the fact that $m \geq d \polylog(d)$. We shall now complete the proof via an inductive argument. To this end, define $S_k = \sum_{j=0}^{k} \nicefrac{1}{2^j}$ Clearly, $1 \leq S_k \leq 2$ and $S_{k+1} = 1 + \nicefrac{S_k}{2}$.  Define the event $E_k(L)$ for some $L > 1$ which does not depend on $k$ and every $1\leq l\leq k$

\begin{enumerate}
    \item $ \frac{1}{2}\|\fstar\|^2 \leq \|\hvf_l\|^2 \leq 2\|\fstar\|^2$
    \item $e_{\hvf_l} \leq L\|\fstar\|^2 \left( \max(\tfrac{l}{m},\tfrac{ld^2}{m^2}) + (\tfrac{d}{m})^{S_l}\right)$
\end{enumerate}

Similarly, for the some $L_1 > 1$ independent of $k$, we define $F_k(L)$ as the event such that for every $1\leq l \leq k$
\begin{enumerate}
    \item $e_{\hvw_l} \leq L_1\|\fstar\|^2 \left( \max(\tfrac{l}{m},\tfrac{ld^2}{m^2}) + (\tfrac{d}{m})^{S_l}\right)$
\end{enumerate}

For some $\polylog()$ independent of $k$, let
$$\mubar_{k} = \norm{\hvf_k}\sqrt{\max(L_1,L)\max(\tfrac{k}{m},\tfrac{(k)d^2}{m^2}) + \left(\nicefrac{d}{m}\right)^{S_k}}\polylog(\tfrac{nd}{\delta})$$ 

$$\lambda_{k} =  \norm{\hvf_k}^2 \left(\max(\tfrac{k}{m},\tfrac{(k)d^2}{m^2}) + \left(\nicefrac{d}{m}\right)^{S_k}\right)\polylog(\tfrac{nd}{\delta})L$$


Since $\hvw_{k+1} = \hvw_{\hvf_k, \lambda_{k}}$, it follows from Theorem \ref{thm:wls-error-bound} that when conditioned on $E_k$ the following holds with probability at least $1 - \frac{\delta}{2(K+1)}$, for $\polylog()$ which does not depend on $k$:
\begin{align}
    e_{\hvw_{k+1}} &\leq \left(\frac{\norm{\hvf_k}^2}{m} + \frac{d \norm{\hvf_k} \sqrt{\lambda_k}}{m}\right)\polylog(\tfrac{nd}{\delta}) \nonumber \\
    &\leq \left(\frac{\norm{\hvf_k}^2}{m} +  \norm{\hvf_k}^2\max\left(\frac{\sqrt{k}d}{m^{3/2}},\frac{\sqrt{k}d^2}{m^2}\right) +\norm{\hvf_k}^2\left(\frac{d  }{m}\right)^{1+\tfrac{S_k}{2}}\right)\polylog(\tfrac{nd}{\delta}) \sqrt{L}\nonumber \\
    &\leq \left(\norm{\fstar}^2\max\left(\frac{(k+1)}{m},\frac{(k+1)d^2}{m^2}\right) +\norm{\fstar}^2\left(\frac{d  }{m}\right)^{1+\tfrac{S_k}{2}}\right)\sqrt{L}\polylog(\tfrac{nd}{\delta}) \label{eq:w_star_recursion}
\end{align}

In the second step, we have used the definition of $\lambda_{k+1}$ and the fact that $\sqrt{x+y} \leq \sqrt{x} + \sqrt{y}$. In the third step, we use the fact that $\frac{d}{m^{3/2}} \leq \max(\frac{1}{m},\frac{d^2}{m^2})$ since it is their geometric mean. We also use the fact under the event $E_k(L)$, we have $\|\hvf_k\|\leq 2 \|\fstar\|$. Therefore for some fixed $\polylog$ independent of $k$:

 \begin{equation}\label{eq:f_recursion}
 L_1 \geq \sqrt{L}\polylog(nd/\delta) \implies \mathbb{P}(F_{k+1}(L_1)|E_k(L)) \geq 1-\frac{\delta}{2(K+1)}\,.\end{equation}

 Conditioned on $E_k\cap F_{k+1}$, running noisy phase retrieval on $m$ data points with $K_p = \Theta(\log(m))$ steps, using $\hvf_k, \hvw_{k+1}$ inputs, we conclude from Theorem \ref{thm:phase_r_mult} that the following holds with probability at least $1 - \nicefrac{ \delta}{2(K+1)}$
\begin{align*}
    &e_{\hvf_{k+1}} \leq \left(\frac{e_{\hvf_k}}{m^2} + \frac{\norm{\hvf_k}^2}{m} + \frac{e^{2}_{\hvw_{k+1}}}{\mubar_k^2} + \frac{d \norm{\hvf_k} \mubar_k}{m} \right)\polylog(\tfrac{nd}{\delta}) 
    \nonumber \\
&\leq \left(\frac{\|\fstar\|^2}{m} + e_{\hvw_{k+1}} + \frac{d\|\fstar\|\bar{\mu}_k}{m}\right) \polylog(\tfrac{nd}{\delta})\nonumber \\
&\leq \left(\norm{\fstar}^2\max\left(\frac{(k+1)}{m},\frac{(k+1)d^2}{m^2}\right) +\norm{\fstar}^2\left(\frac{d  }{m}\right)^{1+\tfrac{S_k}{2}}\right)(L_1+\sqrt{\max(L,L_1)})\polylog(\tfrac{nd}{\delta})) 
\end{align*}

In the second step, we have used the fact that $\|\hvf_k\| \leq 2 \|\fstar\|$ and the fact that $\bar{\mu}_k \geq e_{\hvw_k}$ as per the requirement of Theorem~\ref{thm:phase_r_mult} (this can be verified by the choice of parameters and the events $E_k,F_{k+1}$).  The last step follows from a similar calculation as in Equation~\eqref{eq:w_star_recursion} and substituting the bound on $e_{\hvw_{k+1}}$ implied by the event $F_{k+1}(L_1)$. This allows us to conclude:

\begin{equation} \label{eq:E_recursion}
L \geq (L_1+\sqrt{\max(L,L_1)})\polylog(\tfrac{nd}{\delta})) \implies
\mathbb{P}(E_{k+1}(L)|E_k(L)\cap F_{k+1}(L_1)) \geq 1-\frac{\delta}{2(K+1)}
\end{equation}

We note that whenever $L$ is a large enough, fixed poly-log factor $\polylog(\frac{nd}{\delta})$, both the conditions in Equations~\eqref{eq:f_recursion} and~\eqref{eq:E_recursion} are satisfied. Then, from Equations~\eqref{eq:f_recursion} and~\eqref{eq:E_recursion}, along with the union bound,

$$\mathbb{P}(E_{K+1}(L)\cap F_{K+1}(L_1)) \geq 1-\delta$$

Whenever $K = \lceil\log_2 n\rceil$, we conclude that $2\geq S_l \geq 2 - 1/n$. Thus, from the definition of $E_{K+1}$ and $F_{K+1}$ we conclude that with probability at-least $1-\delta$, both these inequalities hold:

$$\|\hvw_{K+1}-\wstar\|^2 \leq \|\fstar\|^2 \left( \tfrac{1}{m} + (\tfrac{d}{m})^{2}\right) \polylog(nd/\delta)$$

$$\|\hvf_{K+1}-\fstar\|^2 \leq \|\fstar\|^2 \left( \tfrac{1}{m} + (\tfrac{d}{m})^{2}\right) \polylog(nd/\delta)$$

\section{Proofs of Technical Lemmas}
\label{sec:tech_lem_proof}

The following lemmas will allow us to bound obtain a high-probability bound on certain random variables which we will encounter later. We give their proofs in Section~\ref{sec:concentration_lemmas}. 

\begin{lemma}\label{lem:partition_sets}
Consider i.i.d random vectors $\xi_1,\dots,\xi_n \in \mathbb{R}^k$ a sequence of sets $A_1,\dots,A_r \subseteq \mathbb{R}^k$ such that $A_i\cap A_j = \emptyset$ whenever $i\neq j$. Define $H_i = \sum_{j=1}^{n} \mathbbm{1}(\xi_j \in A_i)$. Let $p_i = \mathbb{P}(\xi_1 \in A_i)$ and $\mathcal{H}_i = \{j: \xi_j \in A_i\}$. Then, for some positive constants $c_0,c_1$:
$$\mathbb{P}(H_i > 2np_i) \leq \exp(-c_0 np_i)$$

$$\mathbb{P}(H_i < \tfrac{np_i}{2}) \leq \exp(-c_1 np_i)$$
\end{lemma}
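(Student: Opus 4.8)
The plan is to observe that the claim reduces immediately to classical binomial concentration once we identify the distribution of $H_i$. Fix an index $i$. Since $\xi_1, \dots, \xi_n$ are i.i.d., the indicators $\mathbbm{1}(\xi_1 \in A_i), \dots, \mathbbm{1}(\xi_n \in A_i)$ are i.i.d.\ Bernoulli random variables, each equal to $1$ with probability $p_i = \bP(\xi_1 \in A_i)$. Hence $H_i = \sum_{j=1}^{n} \mathbbm{1}(\xi_j \in A_i)$ is a $\textrm{Binomial}(n, p_i)$ random variable with mean $\bE[H_i] = np_i$. Note that the disjointness of $A_1, \dots, A_r$ is not needed for this particular statement (it would only be relevant for simultaneous control of several $H_i$); we may treat each index $i$ in isolation.

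First I would handle the lower tail by invoking Lemma~\ref{lem:bin-rv-tail} directly with parameters $(n, p_i)$: it gives $\bP\{H_i \geq np_i/2\} \geq 1 - e^{-np_i/8}$, equivalently $\bP\{H_i < np_i/2\} \leq e^{-np_i/8}$, which is the second inequality with $c_1 = 1/8$. For the upper tail I would apply the standard multiplicative Chernoff bound for binomials, $\bP\{H_i \geq (1+\eta)\,np_i\} \leq \exp\!\big(-\eta^2 np_i/(2+\eta)\big)$, with $\eta = 1$, yielding $\bP\{H_i \geq 2np_i\} \leq \exp(-np_i/3)$, which gives the first inequality with $c_0 = 1/3$. (If one prefers not to cite an external Chernoff bound, the upper tail can be derived the same way Lemma~\ref{lem:bin-rv-tail} is, via the one-sided Bernstein inequality applied to the centered, bounded, non-negative summands $\mathbbm{1}(\xi_j \in A_i) - p_i$.)

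There is essentially no obstacle; the only bookkeeping point is that the tail bounds above are phrased for the events $\{H_i \geq 2np_i\}$ and $\{H_i \geq np_i/2\}$, whereas the lemma states them for $\{H_i > 2np_i\}$ and $\{H_i < np_i/2\}$, but since $\{H_i > 2np_i\} \subseteq \{H_i \geq 2np_i\}$ and $\{H_i < np_i/2\} \subseteq \{H_i \leq np_i/2\}$ the stated inequalities follow a fortiori. This completes the argument.
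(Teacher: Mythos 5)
Your proposal is correct and follows essentially the same route as the paper, which simply notes that $H_i$ is binomial and invokes Bernstein/Chernoff-type concentration for binomial random variables; your extra care about strict versus non-strict inequalities and the explicit constants is fine but not needed.
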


The proof of Lemma~\ref{lem:partition_sets} follows from an application of Bernstein's inequality for binomial random variables (see \cite{boucheron2013concentration}).

\begin{lemma}\label{lem:chi_squared_conc}
Suppose $ z_i = 1-\epsilon_i^2$, where $\epsilon_i \sim \cN(0,1)$. Let $v_1,\dots,v_m \in \bR$ be fixed reals. Let $\|v\| := \sqrt{\sum_{i=1}^m v_i^2}$. Then, for some positive constants $C_0,C_1$: $$\bP(|\sum_{i=1}^m z_i v_i| > t\|v\|) \leq C_0\exp(-C_1t) \,.$$ 
\end{lemma}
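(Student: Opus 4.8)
The plan is to observe that $z_i = 1 - \epsilon_i^2$ is a centered, sub-exponential random variable (an affine image of a $\chi^2_1$ variable), so that $\sum_{i=1}^m z_i v_i$ is a weighted sum of i.i.d.\ centered sub-exponentials, and then run a Bernstein-type Chernoff argument. The one subtlety is that the tail is measured relative to $\|v\| = \sqrt{\sum_i v_i^2}$ rather than relative to $\max_i|v_i|$, and it is precisely this normalization that converts the usual mixed $\min(t^2,t)$ Bernstein exponent into the clean linear exponent $-C_1 t$; note the linear form is unavoidable, since already for $m=1$ one has $\bP(|z_1 v_1|>t\|v\|)=\bP(|1-\epsilon_1^2|>t)\asymp e^{-t/2}$.

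First I would compute the moment generating function of $z_i$. Using $\bE[e^{s\epsilon_i^2}]=(1-2s)^{-1/2}$ for $s<1/2$, we get for $\lambda>-1/2$
\[
\bE[e^{\lambda z_i}] = e^{\lambda}(1+2\lambda)^{-1/2}, \qquad \log\bE[e^{\lambda z_i}] = \lambda - \tfrac12\log(1+2\lambda) = \lambda^2 - \tfrac43\lambda^3 + \cdots,
\]
so there is a universal $\lambda_0\in(0,1/2)$ with $\log\bE[e^{\lambda z_i}]\le 2\lambda^2$ for all $|\lambda|\le\lambda_0$. By independence, whenever $|\lambda|\le \lambda_0/\max_i|v_i|$ we have $|\lambda v_i|\le\lambda_0$ for every $i$, hence
\[
\bE\!\left[\exp\!\Big(\lambda\sum_{i=1}^m z_i v_i\Big)\right] = \prod_{i=1}^m \bE[e^{\lambda v_i z_i}] \le \exp\!\Big(2\lambda^2\sum_{i=1}^m v_i^2\Big) = \exp\big(2\lambda^2\|v\|^2\big).
\]

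Next I would apply Chernoff with $s=t\|v\|$: for $\lambda\in(0,\lambda_0/\max_i|v_i|]$,
\[
\bP\Big(\sum_{i=1}^m z_i v_i > t\|v\|\Big) \le \exp\big(2\lambda^2\|v\|^2 - \lambda t\|v\|\big).
\]
Write $\rho:=\|v\|/\max_i|v_i|\ge 1$. The unconstrained optimum is $\lambda^\star=t/(4\|v\|)$, which is admissible iff $t\le 4\lambda_0\rho$; in that case one gets the bound $\exp(-t^2/8)$, and for $t$ above a universal constant this is at most $\exp(-C_1 t)$. If $t>4\lambda_0\rho$ one clips to $\lambda=\lambda_0/\max_i|v_i|$, and the exponent $2\lambda_0^2\rho^2-\lambda_0 t\rho$ is then at most $-\tfrac12\lambda_0 t\rho\le-\tfrac12\lambda_0 t$ since $\rho\ge1$. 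In either regime $\bP(\sum z_i v_i>t\|v\|)\le\exp(-C_1 t)$ for $t$ larger than a universal constant; for small $t$ the claim holds trivially since $C_0 e^{-C_1 t}\ge 1$ once $C_0$ is large enough. Repeating with $-z_i$ and taking a union bound yields the two-sided statement. The main ``obstacle'' is just bookkeeping: one must check that the heavy-tailed (linear) regime is entered only for $t$ above a universal threshold, which hinges on $\|v\|\ge\max_i|v_i|$; equivalently, one can invoke a packaged Bernstein inequality for weighted sums of independent sub-exponentials (with sub-exponential norm a universal constant) and perform the same short case analysis on $t$.
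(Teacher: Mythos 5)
Your proposal is correct and follows essentially the same route as the paper: bound the MGF of the centered sub-exponential $z_i$ on a bounded interval $|\lambda|\le \lambda_0$, multiply over $i$ under the constraint $|\lambda|\le \lambda_0/\max_i|v_i|$, and apply Chernoff, with the key observation that $\|v\|\ge\max_i|v_i|$ yields the purely linear exponent in $t$. The only difference is presentational — the paper simply fixes $\lambda=c/\|v\|$ and absorbs the resulting constant $e^{c_0c^2}$ into $C_0$, whereas you optimize $\lambda$ over the two regimes; both are fine.
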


\begin{lemma}\label{lem:conc_trunc_inv}
Assume that $\|\hat{\vf}\| > \bar{\mu}$ and define $ r = \lceil \log_2(\tfrac{\|\hat{\vf}\|}{\bar{\mu}})\rceil$. Assume that $m \geq C\frac{\|\hat{\vf}\|}{\bar{\mu}}\log(\tfrac{r}{\delta})$. for some large enough constant $C$.  Let $l \geq 1$ be a constant. Let $u \perp \hat{\vf}$ and $\|u\| = 1$. Consider the quantities:
$$L_1(X,u,l) := \frac{1}{m}\sum_{i=1}^{m} \mathbbm{1}(|\langle \hat{\vf},\vx_i\rangle|>\bar{\mu})\frac{ \langle \vx_i,u\rangle^2}{\langle \hat{\vf},\vx_i\rangle^{2l}}$$

$$L_2(X,l) := \frac{1}{m}\sum_{i=1}^{m} \mathbbm{1}(|\langle \hat{\vf},\vx_i\rangle|>\bar{\mu})\frac{1}{\langle \hat{\vf},\vx_i\rangle^{2l}}$$
Then we have the following concentration bounds (making the dependence on $l$ implicit):
\begin{enumerate}
    \item $\bP\left(L_1(X,u) > \frac{C}{\bar{\mu}^{2l-1}\|\hat{\vf}\|}\right)\leq \delta $
    \item $\bP\left(L_1(X,u) < \frac{C_1 2^{-2l}}{\bar{\mu}^{2l-1}\|\hat{\vf}\|}\right)\leq \delta$
    \item $\bP\left(L_2(X) > \frac{C}{\bar{\mu}^{2l-1}\|\hat{\vf}\|}\right)\leq \delta $
    \item $\bP\left(L_2(X) < \frac{C_1 2^{-2l}}{\bar{\mu}^{2l-1}\|\hat{\vf}\|}\right)\leq \delta$
\end{enumerate}
\end{lemma}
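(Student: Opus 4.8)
The plan is a dyadic bucketing argument on the magnitude of $\langle\hat{\vf},\vx_i\rangle$, in the spirit of the bucketing used in the lower bound. Since in the relevant batch the samples $(\vx_i)_{i\in[m]}$ are independent of $\hat{\vf}$ (and of $u$), we condition on $\hat{\vf},u$ and treat them as fixed. Set $g_i:=\langle\hat{\vf},\vx_i\rangle/\|\hat{\vf}\|$ and $h_i:=\langle u,\vx_i\rangle$; because $u\perp\hat{\vf}$ and $\|u\|=1$, the pairs $(g_i,h_i)$ are i.i.d.\ with $g_i,h_i\sim\cN(0,1)$ and, crucially, $g_i$ \emph{independent} of $h_i$. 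Writing $\theta:=\bar{\mu}/\|\hat{\vf}\|\in(0,1)$, we have $L_1(X,u,l)=\|\hat{\vf}\|^{-2l}\cdot\frac1m\sum_i\mathbbm{1}(|g_i|>\theta)\,h_i^2/g_i^{2l}$ and $L_2(X,l)=\|\hat{\vf}\|^{-2l}\cdot\frac1m\sum_i\mathbbm{1}(|g_i|>\theta)/g_i^{2l}$, so it suffices to show that each empirical average concentrates around $\Theta(\theta^{1-2l})$ (with an $l$-dependent constant), the matching factor being $\theta^{1-2l}\|\hat{\vf}\|^{-2l}=\bar{\mu}^{1-2l}\|\hat{\vf}\|^{-1}$.

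Partition the indices with $|g_i|>\theta$ into $B_k:=\{i:|g_i|\in[2^k\theta,2^{k+1}\theta)\}$ for $0\le k\le K-1$ and $B_K:=\{i:|g_i|\ge 2^K\theta\}$, with $K:=\bigr\lceil\log_2(1/\theta)\bigr\rceil=r$, so that $2^K\theta\in[1,2)$. A direct Gaussian density estimate gives $p_k:=\bP(g_1\in B_k)=\Theta(2^k\theta)$ for $k<K$ and $p_K=\Theta(1)$; in particular $p_k\ge p_0=\Theta(\theta)$. Apply Lemma~\ref{lem:partition_sets} to the $B_k$ and union-bound over the $r+1$ buckets: the hypothesis $m\ge C\tfrac{\|\hat{\vf}\|}{\bar{\mu}}\log(r/\delta)=C\theta^{-1}\log(r/\delta)$ forces $mp_k\ge mp_0\gtrsim\log(r/\delta)$, so for $C$ large enough, with probability at least $1-\delta/2$ every bucket satisfies $n_k:=|B_k|\in[\tfrac12 mp_k,\,2mp_k]$. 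On this event, for $L_2$ bound $g_i^{-2l}\le(2^k\theta)^{-2l}$ on $B_k$ (and $g_i^{-2l}\le1$ on $B_K$), so $\frac1m\sum_i\mathbbm{1}(|g_i|>\theta)g_i^{-2l}\le\sum_{k=0}^{K-1}2p_k(2^k\theta)^{-2l}+2p_K\le O(\theta^{1-2l})\sum_{k\ge0}2^{k(1-2l)}+O(1)=O(\theta^{1-2l})$, the geometric series converging since $l\ge1$ (so $1-2l\le-1$) and $\theta^{1-2l}\ge1$; the lower bound comes from $B_0$ alone: $\frac1m\sum_i\cdots\ge\tfrac{n_0}{m}(2\theta)^{-2l}\ge\tfrac12 p_0(2\theta)^{-2l}=\Omega(2^{-2l}\theta^{1-2l})$. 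Multiplying by $\|\hat{\vf}\|^{-2l}$ yields parts 3 and 4, with the $2^{-2l}$ constant emerging from the $B_0$ estimate.

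For $L_1$ we must additionally handle the $h_i^2$ weights. Since $\{i\in B_k\}$ depends only on $(g_i)$ and $(h_i)$ is independent of $(g_i)$, conditionally on the whole partition $\{B_k\}$ each $\sum_{i\in B_k}h_i^2$ is a $\chi^2_{n_k}$ variable. Because $n_k\ge\tfrac12 mp_k\gtrsim\log(r/\delta)$ on the good event above, $\chi^2$ concentration (both tails) gives, with probability $1-\delta/(2(r+2))$ per bucket, $\tfrac12 n_k\le\sum_{i\in B_k}h_i^2\le 2n_k$; union-bounding over buckets and intersecting with the previous event (total failure probability $\le\delta$), the computation of the previous paragraph goes through verbatim with an extra $\Theta(1)$ factor from $\tfrac12 n_k\le\sum_{i\in B_k}h_i^2\le 2n_k$, giving parts 1 and 2. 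The main obstacle is precisely this $L_1$ step: the weights $h_i^2/g_i^{2l}$ are heavy-tailed and unbounded, and a naive uniform truncation of each $h_i$ would cost an unwanted $\log m$ factor in the bound; the fix is to exploit the conditional $\chi^2$ structure of the \emph{aggregated} sum $\sum_{i\in B_k}h_i^2$, so that the concentration penalty is absorbed into $n_k$ rather than multiplying each term. Secondary care is needed in pinning down $p_k=\Theta(2^k\theta)$ (including the boundary bucket $B_K$ where $2^K\theta\in[1,2)$) and in tracking the explicit $2^{-2l}$ factor through the $B_0$ lower bound.
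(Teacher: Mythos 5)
Your proposal is correct and follows essentially the same route as the paper's proof: dyadic bucketing of $|\langle\hat{\vf},\vx_i\rangle|$ combined with Lemma~\ref{lem:partition_sets} for the bucket counts, conditional $\chi^2$-type concentration of $\sum_{i\in B_k}\langle u,\vx_i\rangle^2$ per bucket (valid because $u\perp\hat{\vf}$ makes these independent of the bucket structure), a union bound over the $O(r)$ buckets, a geometric-series summation for the upper bounds, and the bottom bucket $B_0$ for the lower bounds with the $2^{-2l}$ factor. One cosmetic slip: $p_k\ge p_0$ need not hold for every $k$ (e.g.\ when $\bar{\mu}/\|\hat{\vf}\|$ is close to $1$), but since $p_k=\Theta(2^k\bar{\mu}/\|\hat{\vf}\|)\ge c\,\bar{\mu}/\|\hat{\vf}\|$ for a universal $c$, the required bound $mp_k\gtrsim\log(r/\delta)$ still follows from the sample-size assumption, so the argument is unaffected.
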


We consider symmetrization in the following sense. Draw $\zeta_1,\dots,\zeta_m$ i.i.d rademacher random variables (i.e, uniformly distributed over $\{-1,1\}$) and independent of $\vx_1,\dots,\vx_m$. Given any fixed projection matrix $R$, define $\vx_i^{\prime} = \zeta_i R \vx_i + (\vI-R)\vx_i$.
\begin{lemma}\label{lem:subspace_flip}
    $(X^{\prime}_1,\dots,X^{\prime}_m)$ are jointly distributed as i.i.d $\cN(0,\vI)$.
\end{lemma}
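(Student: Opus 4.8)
The plan is to show that the joint moment generating function of $(\vx_1',\dots,\vx_m')$ equals $\exp\bigl(\tfrac{1}{2}\sum_{i=1}^m\|t_i\|^2\bigr)$, which is the MGF of $m$ independent $\cN(0,\vI)$ vectors and hence pins down the law. Throughout I will use that a projection matrix $R$ is symmetric and idempotent ($R^\top=R$, $R^2=R$), so that $\vI-R$ is also an orthogonal projection, $R(\vI-R)=0$, and the ranges of $R$ and $\vI-R$ are mutually orthogonal.

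First I would reduce to the one-sample marginal. Writing $g(\vx,\zeta):=\zeta R\vx+(\vI-R)\vx$, we have $\vx_i'=g(\vx_i,\zeta_i)$, a fixed measurable function applied separately to each pair $(\vx_i,\zeta_i)$. Since the $\vx_i$ are i.i.d., the $\zeta_i$ are i.i.d., and the two families are mutually independent, the pairs $(\vx_i,\zeta_i)$ are i.i.d.\ across $i$, and therefore so are the $\vx_i'$. Thus it suffices to prove that a single $\vx_i'$ is distributed as $\cN(0,\vI)$.

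Next I would compute the one-sample MGF by conditioning on $\zeta_i$. By symmetry of $R$ and $\vI-R$, $\langle t,\vx_i'\rangle=\langle \zeta_i Rt+(\vI-R)t,\,\vx_i\rangle$, which, given $\zeta_i$, is a centered Gaussian (as $\vx_i\sim\cN(0,\vI)$) with variance $\|\zeta_i Rt+(\vI-R)t\|^2$. Expanding this norm, the cross term is $2\zeta_i\langle Rt,(\vI-R)t\rangle=2\zeta_i\, t^\top R(\vI-R)t=0$, and $\|\zeta_i Rt\|^2=\zeta_i^2\|Rt\|^2=\|Rt\|^2$, so the variance equals $\|Rt\|^2+\|(\vI-R)t\|^2=\|t\|^2$, independent of the sign $\zeta_i$. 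Hence $\bE[e^{\langle t,\vx_i'\rangle}\mid\zeta_i]=e^{\|t\|^2/2}$ almost surely, so $\bE[e^{\langle t,\vx_i'\rangle}]=e^{\|t\|^2/2}$ for all $t\in\bR^d$, i.e.\ $\vx_i'\sim\cN(0,\vI)$. Combined with the i.i.d.\ reduction, the joint MGF factorizes as $\prod_{i=1}^m e^{\|t_i\|^2/2}$, which is exactly that of $m$ i.i.d.\ $\cN(0,\vI)$ vectors, and the lemma follows.

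There is no substantive obstacle in this argument; the only point needing care is that ``projection matrix'' must be read as an \emph{orthogonal} projection (symmetric idempotent) — this is precisely what forces the cross term $\langle Rt,(\vI-R)t\rangle$ to vanish and the variance to collapse to $\|t\|^2$, and for an oblique projection the statement would fail. An alternative, slightly more probabilistic route I could take is to decompose $\vx_i=R\vx_i+(\vI-R)\vx_i$ into two jointly Gaussian, uncorrelated (hence independent) pieces, observe that multiplying the symmetric random vector $R\vx_i$ by the independent Rademacher $\zeta_i$ leaves both its law and its independence from $(\vI-R)\vx_i$ intact, and reassemble to obtain $\vx_i'\stackrel{d}{=}\vx_i$; independence across $i$ then follows exactly as in the reduction step above.
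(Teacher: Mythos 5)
Your proof is correct. The paper in fact states Lemma~\ref{lem:subspace_flip} without any proof, treating it as immediate, so there is nothing to compare against: both of your routes (the MGF computation with the cross term $t^\top R(\vI-R)t=0$, and the cleaner decomposition $\vx_i = R\vx_i + (\vI-R)\vx_i$ into independent Gaussian pieces with $\zeta_i R\vx_i \stackrel{d}{=} R\vx_i$) are standard and valid ways to fill this in, and your reduction to a single sample via the i.i.d.\ pairs $(\vx_i,\zeta_i)$ is exactly the right way to get joint independence. Your caveat that $R$ must be an orthogonal (symmetric, idempotent) projection is well taken and consistent with the paper's usage, where $R$ is always a projector onto a subspace (e.g.\ $R=Q$, the projector orthogonal to $\hvf$).
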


\subsection{Proof of Lemma~\ref{lem:N1_bound}}
\begin{proof}[Proof of Lemma~\ref{lem:N1_bound}]
\begin{enumerate}
    \item Consider $\langle\bar{N}_1,u\rangle = \frac{1}{m}\sum_{i=1}^{m}\gamma(\vx_i)\langle \fstar,\vx_i\rangle^2 \langle \hat{\vf},\vx_i\rangle \langle u,\vx_i\rangle (1-\epsilon_i^2) $. Invoking Lemma~\ref{lem:chi_squared_conc}, with $v_i = \gamma(\vx_i)\langle \fstar,\vx_i\rangle^2 \langle \hat{\vf},\vx_i\rangle \langle u,\vx_i\rangle (1-\epsilon_i^2) $ we  conclude the result. 

    \item Let $u \in \mathbb{R}^d$ be arbitrary.
    By Lemma~\ref{lem:close_conc}, we have with probability at-least $1-\delta$, for every $i \in [m]$, $|\langle \fstar,\vx_i\rangle| \leq \Gamma \sqrt{\log \tfrac{m}{\delta}} + |\langle\hat{\vf},\vx_i\rangle|$. Therefore, we conclude that for some constant $C > 0$, $$\mathbbm{1}(|\langle \hat{\vf},\vx_i\rangle|>\bar{\mu})\frac{\langle \fstar,\vx_i\rangle^4 \langle \vx_i,u\rangle^2}{\langle \hat{\vf},\vx_i\rangle^6} \leq C\mathbbm{1}(|\langle \hat{\vf},\vx_i\rangle|>\bar{\mu})\frac{\langle \vx_i,u\rangle^2}{\langle \hat{\vf},\vx_i\rangle^2}\left(1+\tfrac{\Gamma^4\log^2\tfrac{m}{\delta}}{\bar{\mu}^4}\right)$$
    Therefore, we conclude that with probability at-least $1-\delta$, we must have:
    \begin{equation}\label{eq:variance_proxy_1}
    h_1(X,u) \leq \frac{C}{m}\sum_{i=1}^{m} \mathbbm{1}(|\langle \hat{\vf},\vx_i\rangle|>\bar{\mu})\frac{ \langle \vx_i,u\rangle^2}{\langle \hat{\vf},\vx_i\rangle^2}\left(1+\tfrac{\Gamma^4\log^2\tfrac{m}{\delta}}{\bar{\mu}^4}\right)
    \end{equation}
    Taking $u = \hat{\vf}$, we conclude the result.
\item We begin with Equation~\eqref{eq:variance_proxy_1}, where we replace $\delta$ with $\tfrac{\delta}{2}$. Therefore, we conclude that with probability at-least $1-\tfrac{\delta}{2}$, we have:

    \begin{equation}\label{eq:variance_proxy_2}
    h_1(X,u) \leq \frac{C}{m}\sum_{i=1}^{m} \mathbbm{1}(|\langle \hat{\vf},\vx_i\rangle|>\bar{\mu})\frac{ \langle \vx_i,u\rangle^2}{\langle \hat{\vf},\vx_i\rangle^2}\left(1+\tfrac{\Gamma^4\log^2\tfrac{2m}{\delta}}{\bar{\mu}^4}\right)
    \end{equation}
    
Combining Lemma~\ref{lem:conc_trunc_inv} with Equation~\eqref{eq:variance_proxy_2} using the union bound,
  we conclude that with probability at-least $1-\delta$:

\begin{align}
    h_1(X,u)\leq \tfrac{C}{\bar{\mu}\|\hat{\vf}\|} \left(1+\tfrac{\Gamma^4\log^2\tfrac{2m}{\delta}}{\bar{\mu}^4}\right)
\end{align}

\end{enumerate}
\end{proof}

\subsection{Proof of Lemma~\ref{lem:N2_bound}}
\begin{proof}[Proof of Lemma~\ref{lem:N2_bound}]
\begin{enumerate}
    \item Consider $\langle\bar{N}_2,u\rangle = \frac{2}{m} \sum_{i=1}^{m}\gamma(\vx_i)\epsilon_i\langle \fstar,\vx_i\rangle \langle\Delta,\vx_i\rangle \langle \vx_i,u\rangle \langle \hat{\vf},\vx_i \rangle$. Conditioned on $\vx_1,\dots,\vx_m$, this is a one dimensional Gaussian random variable with mean $0$ and variance $\frac{4h_2(X,u)}{m}$. The concentration inequality follows from Gaussian concentration. 
    \item This proof is similar to the proof of Lemma~\ref{lem:N1_bound}. By similar considerations as in Equation~\eqref{eq:variance_proxy_1}, we conclude that with probability at-least $1-\delta/4$:
    \begin{equation}\label{eq:variance_proxy_2_1}
        h_2(X,u) \leq \frac{C}{m}\sum_{i=1}^{m}\mathbbm{1}(|\langle\hat{\vf},\vx_i\rangle|>\bar{\mu}) \frac{\langle \vx_i,u\rangle^2\langle \vx_i,\Delta\rangle^2}{\langle\hat{\vf},\vx_i\rangle^4}\left(1+\tfrac{\Gamma^2\log\tfrac{4m}{\delta}}{\bar{\mu}^2}\right)
    \end{equation}
    Now applying Lemma~\ref{lem:close_conc}, we have $\sup_{i\in [m]}|\langle\Delta,\vx_i\rangle|^2 \leq \|\Delta\|^2\log(\tfrac{4m}{\delta})$ with probability at-least $1-\tfrac{\delta}{4}$. Combining this with Equation~\eqref{eq:variance_proxy_2_1} via the union bound, we conclude that with probability at-least $1-\frac{\delta}{2}$, we have:
    \begin{equation}\label{eq:variance_proxy_2_2}
        h_2(X,u) \leq \frac{C}{m}\sum_{i=1}^{m}\mathbbm{1}(|\langle\hat{\vf},\vx_i\rangle|>\bar{\mu}) \frac{\langle \vx_i,u\rangle^2\|\Delta\|^2\log(\tfrac{4m}{\delta})}{\langle\hat{\vf},\vx_i\rangle^4}\left(1+\tfrac{\Gamma^2\log\tfrac{4m}{\delta}}{\bar{\mu}^2}\right)
    \end{equation}
    Setting $u = \hat{\vf}$, we have with probability at-least $1-\delta/2$:
    \begin{equation}\label{eq:variance_proxy_2_3_0}
        h_2(X,\hat{\vf}) \leq \|\Delta\|^2\log(\tfrac{4m}{\delta})\left(1+\tfrac{\Gamma^2\log\tfrac{4m}{\delta}}{\bar{\mu}^2}\right)\frac{C}{m}\sum_{i=1}^{m}\frac{\mathbbm{1}(|\langle\hat{\vf},\vx_i\rangle| >\bar{\mu})}{\langle\hat{\vf},\vx_i\rangle^2}
    \end{equation}
    Bounding the quantity $\frac{1}{m}\sum_{i=1}^{m}\frac{\mathbbm{1}(|\langle\hat{\vf},\vx_i\rangle| >\bar{\mu})}{\langle\hat{\vf},\vx_i\rangle^2}$ using Lemma~\ref{lem:conc_trunc_inv}.  Therefore, with probability at-least $1-\delta/2$:
    
    \begin{equation}
    \frac{1}{m}\sum_{i=1}^{m}\frac{\mathbbm{1}(|\langle\hat{\vf},\vx_i\rangle|>\bar{\mu})}{\langle\hat{\vf},\vx_i\rangle^2} \leq \frac{C}{\bar{\mu}\|\hat{\vf}\|}
    \end{equation}
    
    Combining this with Equation~\eqref{eq:variance_proxy_2_3_0} via the union bound, we conclude that with probability at-least $1-\delta$:
    \begin{equation}\label{eq:variance_proxy_2_3}
        h_2(X,\hat{\vf}) \leq C\frac{\|\Delta\|^2\log(\tfrac{4m}{\delta})}{\bar{\mu}\|\hat{\vf}\|}\left(1+\tfrac{\Gamma^2\log\tfrac{4m}{\delta}}{\bar{\mu}^2}\right)
    \end{equation}
    
    \item Now, assume that $u\perp \hat{\vf}$ and $\|u\|=1$. We begin with Equation~\eqref{eq:variance_proxy_2_2}. Using the union bound, we combine this with the high probability bound for the quantity $\sum_{i=1}^{m}\mathbbm{1}(|\langle\hat{\vf},\vx_i\rangle|>\bar{\mu}) \frac{\langle \vx_i,u\rangle^2}{\langle\hat{\vf},\vx_i\rangle^4}$ from Lemma~\ref{lem:conc_trunc_inv}. Thus, we conclude that with probability at-least $1-\delta$, we have:

    $$h_2(X,u) \leq C \frac{\|\Delta\|^2}{\bar{\mu}^3\|\hat{\vf}\|} \log(\tfrac{4m}{\delta})\left(1+\tfrac{\Gamma^2\log(\tfrac{4m}{\delta})}{\bar{\mu}^2}\right)$$
\end{enumerate}
\end{proof}

\subsection{Proof of Lemma~\ref{lem:N3_bound}}
\begin{proof}[Proof of Lemma~\ref{lem:N3_bound}]
\begin{enumerate}
    \item By definition, we note that: $$\langle\bar{N}_3,\hat{\vf}\rangle = \frac{1}{m} \sum_{i=1}^{m}\frac{\mathbbm{1}(|\langle\hat{\vf},\vx_i\rangle|>\bar{\mu})}{\langle \hat{\vf},\vx_i \rangle^2} \langle\Delta,\vx_i\rangle^2  $$
    Writing $\Delta = a \hat{\vf} + \Delta^{\perp}$ with $\Delta^{\perp}\perp f$, we note that:
\begin{align}
    \langle\bar{N}_3,\hat{\vf}\rangle &\leq \frac{1}{m} \sum_{i=1}^{m}\mathbbm{1}(|\langle\hat{\vf},\vx_i\rangle|>\bar{\mu})\left[a^2+\frac{\langle\Delta^{\perp},\vx_i\rangle^2}{\langle \hat{\vf},\vx_i \rangle^2}  \right]
\end{align}

Applying Lemma~\ref{lem:conc_trunc_inv}, using the fact that $\bar{\mu} \leq \|\hat{\vf}\|$ and $\max(a^2\|\hat{\vf}\|^2,\|\Delta^{\perp}\|^2) \leq \|\Delta\|^2$, we conclude the result. 
    \item 
    First, we will write $\Delta = a \hat{\vf} + \Delta^{\perp}$ where $\Delta^{\perp} \perp \hat{\vf}$. $\langle \Delta,\vx_i\rangle^2 = a^2 \langle \hat{\vf},\vx_i\rangle^2 + 2a \langle \hat{\vf},\vx_i \rangle \langle \Delta^{\perp},\vx_i\rangle + \langle \Delta^{\perp},\vx_i\rangle^2$.
    
Therefore, we can write:
\begin{align}
         &Q \bar{N}_3 = \nonumber \\
         &\frac{1}{m}\sum_{i=1}^{m}\mathbbm{1}(|\langle\hat{\vf},\vx_i\rangle|>\bar{\mu})\left[\frac{a^2 Q\vx_i}{\langle \hat{\vf},\vx_i\rangle} + \frac{2a Q\vx_i  \langle \Delta^{\perp},\vx_i\rangle}{\langle \hat{\vf},\vx_i\rangle^{2}} + \frac{ Q\vx_i \langle \Delta^{\perp},\vx_i\rangle^2}{\langle \hat{\vf},\vx_i\rangle^{3}}\right] \label{eq:ortho_decomp_N3}
\end{align}

Now consider the first term in Equation~\eqref{eq:ortho_decomp_N3}. When conditioned on $\langle \hat{\vf},\vx_i\rangle$ for $i\in [m]$, $Q\vx_i$ are distributed as i.i.d. standard Gaussian in the space orthogonal to $\hat{\vf}$. Therefore, by Gaussian concentration, we must have with probability at-least $1-\delta/6$:

$$ \biggr\|\frac{1}{m}\sum_{i=1}^{m}\mathbbm{1}(|\langle\hat{\vf},\vx_i\rangle|>\bar{\mu})\left[\frac{a^2 Q\vx_i}{\langle \hat{\vf},\vx_i\rangle}\right]\bigg\|^2 \leq  \frac{C}{m^2}\sum_{i=1}^{m}\mathbbm{1}(|\langle\hat{\vf},\vx_i\rangle|>\bar{\mu})\frac{a^4\left(d+\log(6/\delta)\right)}{\langle\hat{\vf},\vx_i\rangle^2}$$

Using Lemma~\ref{lem:conc_trunc_inv} with the equation above via the union bound, we conclude that with probability at-least $1-\delta/3$, we have:

\begin{equation}\label{eq:first_t_bound}
    \biggr\|\frac{1}{m}\sum_{i=1}^{m}\mathbbm{1}(|\langle\hat{\vf},\vx_i\rangle|>\bar{\mu})\left[\frac{a^2 Q\vx_i}{\langle \hat{\vf},\vx_i\rangle}\right]\bigg\| \leq \sqrt{\frac{C}{m}\frac{a^4\left(d+\log(6/\delta)\right)}{\bar{\mu}\|\hat{\vf}\|}}
\end{equation}
Only in this proof, we define $\bar{Q} = Q - \frac{\Delta^{\perp}(\Delta^{\perp})^{\intercal}}{\|\Delta^{\perp}\|^2}$. Now consider the second term in Equation~\eqref{eq:ortho_decomp_N3}:
\begin{align}
&\frac{1}{m}\sum_{i=1}^{m}\mathbbm{1}(|\langle\hat{\vf},\vx_i\rangle|>\bar{\mu})\left[ \frac{2a Q\vx_i  \langle \Delta^{\perp},\vx_i\rangle}{\langle \hat{\vf},\vx_i\rangle^{2}} \right] \nonumber \\&= \frac{1}{m}\sum_{i=1}^{m}\mathbbm{1}(|\langle\hat{\vf},\vx_i\rangle|>\bar{\mu})\left[ \frac{2a \bar{Q}\vx_i  \langle \Delta^{\perp},\vx_i\rangle}{\langle \hat{\vf},\vx_i\rangle^{2}} + \frac{2a \langle\Delta^{\perp},\vx_i\rangle^2 \Delta^{\perp}  }{\langle \hat{\vf},\vx_i\rangle^{2}\|\Delta^{\perp}\|^2} \right]
\label{eq:second_quantity}
\end{align}

The first term in Equation~\eqref{eq:second_quantity} can be bounded using Gaussian concentration and Lemma~\ref{lem:conc_trunc_inv} just like in Equation~\eqref{eq:first_t_bound}. With probability at-least $1-\delta/12$, we have:

\begin{align}
    \biggr\|\frac{1}{m}\sum_{i=1}^{m}\mathbbm{1}(|\langle\hat{\vf},\vx_i\rangle|>\bar{\mu})\left[ \frac{2a \bar{Q}\vx_i  \langle \Delta^{\perp},\vx_i\rangle}{\langle \hat{\vf},\vx_i\rangle^{2}}\right] \biggr\| \leq \sqrt{\frac{C}{m}\frac{a^2\|\Delta^{\perp}\|^2\left(d+\log(\tfrac{12}{\delta})\right)}{\bar{\mu}^3\|\hat{\vf}\|}} \label{lem:second_t_bound_prelim}
\end{align}

The second term in Equation~\eqref{eq:second_quantity} can be directly bounded using Lemma~\ref{lem:conc_trunc_inv}. Therefore, combining these results using the triangle inequality in Equation~\eqref{eq:second_quantity}, we conclude that with probability at-least $1-\delta/6$, we must have:

\begin{align}
    &\biggr\|\frac{1}{m}\sum_{i=1}^{m}\mathbbm{1}(|\langle\hat{\vf},\vx_i\rangle|>\bar{\mu})\left[ \frac{2a Q\vx_i  \langle \Delta^{\perp},\vx_i\rangle}{\langle \hat{\vf},\vx_i\rangle^{2}} \right]\biggr\| \nonumber \\
    &\leq \sqrt{\frac{C}{m}\frac{a^2\|\Delta^{\perp}\|^2\left(d+\log(\tfrac{12}{\delta})\right)}{\bar{\mu}^3\|\hat{\vf}\|}} + \frac{C|a|\|\Delta^{\perp}\|}{\bar{\mu}\|\hat{\vf}\|}
\label{eq:second_t_inequality}
\end{align}

Now consider the third term in Equation~\eqref{eq:ortho_decomp_N3}.

\begin{align}
      &\frac{1}{m}\sum_{i=1}^{m}\mathbbm{1}(|\langle\hat{\vf},\vx_i\rangle|>\bar{\mu})\left[\frac{ Q\vx_i \langle \Delta^{\perp},\vx_i\rangle^2}{\langle \hat{\vf},\vx_i\rangle^{3}}\right] \nonumber \\
      &=   \frac{1}{m}\sum_{i=1}^{m}\mathbbm{1}(|\langle\hat{\vf},\vx_i\rangle|>\bar{\mu})\left[\frac{ \bar{Q}\vx_i \langle \Delta^{\perp},\vx_i\rangle^2}{\langle \hat{\vf},\vx_i\rangle^{3}}+\frac{  \langle \Delta^{\perp},\vx_i\rangle^3\Delta^{\perp}}{\langle \hat{\vf},\vx_i\rangle^{3} \|\Delta^{\perp}\|^2}\right] \label{eq:third_quantity}
\end{align}

In Equation~\eqref{eq:third_quantity}, we can bound the first term using Gaussian concentration again and then conclude that with probability at-least $1-\delta/12$, we must have:

\begin{equation}
    \biggr\|\frac{1}{m}\sum_{i=1}^{m}\mathbbm{1}(|\langle\hat{\vf},\vx_i\rangle|>\bar{\mu})\frac{ \bar{Q}\vx_i \langle \Delta^{\perp},\vx_i\rangle^2}{\langle \hat{\vf},\vx_i\rangle^{3}}\biggr\| \leq \sqrt{\frac{C\|\Delta^{\perp}\|^4\log^3(\frac{12m}{\delta})d}{m\bar{\mu}^5\|\hat{\vf}\|}}
\end{equation}
Now consider the second term in Equation~\eqref{eq:third_quantity}. 
In Lemma~\ref{lem:subspace_flip}, we take the projector $R = Q$. Then, we conclude that for i.i.d. rademacher variables $\zeta_1,\dots,\zeta_m$ independent of everything else:
\begin{align}
    &\frac{1}{m}\sum_{i=1}^{m}\mathbbm{1}(|\langle\hat{\vf},\vx_i\rangle|>\bar{\mu})\left[\frac{  \langle \Delta^{\perp},\vx_i\rangle^3\Delta^{\perp}}{\langle \hat{\vf},\vx_i\rangle^{3}\|\Delta^{\perp}\|^2}\right] \nonumber \\
    &\stackrel{d}{=} \frac{1}{m}\sum_{i=1}^{m}\mathbbm{1}(|\langle\hat{\vf},\vx_i\rangle|>\bar{\mu})\zeta_i\left[\frac{  \langle \Delta^{\perp},\vx_i\rangle^3\Delta^{\perp}}{\langle \hat{\vf},\vx_i\rangle^{3}\|\Delta^{\perp}\|^2}\right]
\end{align}
Where $\stackrel{d}{=}$ denotes equality in distribution. By Azuma-Hoeffding inequality for rademacher random variables, we have with probability at-least $1-\delta/6$:

\begin{align}
    &\biggr\|\frac{1}{m}\sum_{i=1}^{m}\mathbbm{1}(|\langle\hat{\vf},\vx_i\rangle|>\bar{\mu})\zeta_i\left[\frac{  \langle \Delta^{\perp},\vx_i\rangle^3\Delta^{\perp}}{\langle \hat{\vf},\vx_i\rangle^{3}\|\Delta^{\perp}\|^2}\right]\biggr\| \nonumber \\&\leq \sqrt{\frac{C\log\tfrac{12}{\delta}}{m^2}\sum_{i=1}^{m} \mathbbm{1}(|\langle\hat{\vf},\vx_i\rangle|>\bar{\mu})\left[\frac{  \langle \Delta^{\perp},\vx_i\rangle^6}{\langle \hat{\vf},\vx_i\rangle^{6}\|\Delta^{\perp}\|^2}\right]} \leq \sqrt{\frac{C\|\Delta^{\perp}\|^4 \log^4(\tfrac{12m}{\delta})}{m\bar{\mu}^5\|\hat{\vf}\|}}
\end{align}
In the last step, we have used the high probability bound on $\sup_i |\langle  \vx_i, \Delta^{\perp}\rangle|$. Therefore, combining these inequalities, we conclude:

\begin{align}
      &\biggr\|\frac{1}{m}\sum_{i=1}^{m}\mathbbm{1}(|\langle\hat{\vf},\vx_i\rangle|>\bar{\mu})\left[\frac{ Q\vx_i \langle \Delta^{\perp},\vx_i\rangle^2}{\langle \hat{\vf},\vx_i\rangle^{3}}\right] \biggr\|\nonumber \\
      &= \sqrt{\frac{C\|\Delta^{\perp}\|^4 \log^3(\tfrac{12m}{\delta})(d+\log(\tfrac{12m}{\delta}))}{m\bar{\mu}^5\|\hat{\vf}\|}} \label{eq:third_t_inequality}
\end{align}

Combining Equations~\eqref{eq:first_t_bound},~\eqref{eq:second_t_inequality} and~\eqref{eq:third_t_inequality} with Equation~\eqref{eq:ortho_decomp_N3}, we conclude:
\begin{align}
    \|QN_3\| &\leq  \sqrt{\frac{C}{m}\frac{a^4\left(d+\log(6/\delta)\right)}{\bar{\mu}\|\hat{\vf}\|}} + \sqrt{\frac{C}{m}\frac{a^2\|\Delta^{\perp}\|^2\left(d+\log(\tfrac{12}{\delta})\right)}{\bar{\mu}^3\|\hat{\vf}\|}} \nonumber \\ &\quad + \frac{C|a|\|\Delta^{\perp}\|}{\bar{\mu}\|\hat{\vf}\|} + \sqrt{\frac{C\|\Delta^{\perp}\|^4 \log^3(\tfrac{12m}{\delta})(d+\log(\tfrac{12m}{\delta}))}{m\bar{\mu}^5\|\hat{\vf}\|}} \nonumber\\
    &\leq   \frac{C\|\Delta\|^2}{\bar{\mu}\|\hat{\vf}\|^2} + \sqrt{\frac{C\|\Delta\|^4 \log^3(\tfrac{12m}{\delta})(d+\log(\tfrac{12m}{\delta}))}{m\bar{\mu}^5\|\hat{\vf}\|}}
\end{align}

The last step follows by the fact that $|a| \leq \frac{\|\Delta\|}{\|\hat{\vf}\|}$ and $\|\Delta^{\perp}\|\leq \|\Delta\|$ and the fact that $\bar{\mu} \leq \|\hat{\vf}\|$ 
\end{enumerate}
\end{proof}

\subsection{Proof of Lemma~\ref{lem:irritating_bounds}}
\label{subsec:irritate_proof}
\begin{proof}[Proof of Lemma~\ref{lem:irritating_bounds}]
\begin{enumerate}
    \item From Lemma~\ref{lem:psd_bounds}, we conclude that with probability at-least $1-\tfrac{\delta}{2}$, we must have: 
    \begin{align} \frac{1}{m}\sum_{i=1}^{m} \mathbbm{1}(|\langle\hat{\vf},\vx_i\rangle| > \bar{\mu}) \leq \langle \hat{\vf},H(\vf) \hat{\vf}\rangle \leq \frac{4}{m}\sum_{i=1}^{m} \mathbbm{1}(|\langle\hat{\vf},\vx_i\rangle| > \bar{\mu})
    \end{align}
    
    Now, note that $\sum_{i=1}^{m} \mathbbm{1}(|\langle\hat{\vf},\vx_i\rangle| > \bar{\mu})$ is distributed as $\mathsf{Bin}(m,\mathbb{P}(|\langle\hat{\vf},\vx_i\rangle| > \bar{\mu}))$. Applying Lemma~\ref{lem:bin-rv-tail}, we conclude that with probability at-least $1-\exp(-cm\mathbb{P}(|\langle\hat{\vf},\vx_i\rangle| > \bar{\mu}))$, we must have:
    
    \begin{align} \tfrac{\mathbb{P}(|\langle\hat{\vf},\vx_i\rangle| > \bar{\mu})}{2}\leq \langle \hat{\vf},H(\vf) \hat{\vf}\rangle \leq 8\mathbb{P}(|\langle\hat{\vf},\vx_i\rangle| > \bar{\mu})
    \end{align}
    
    Now, using the fact that $ \tfrac{1}{4} \leq \mathbb{P}(|\langle\hat{\vf},\vx_i\rangle| > \bar{\mu}) \leq 1$ (since $\bar{\mu} < \|\hat{\vf}\|$), we conclude the result.

    \item Only in this proof, suppose that $\vf+\fstar - 2\hat{\vf} = a \frac{\hat{\vf}}{\|\hat{\vf}\|} + b \hat{\vf}^{\perp} $ such that $\|\hat{\vf}^{\perp}\| = 1$ and $\hat{\vf}^{\perp} \perp \hat{\vf}$.
    
    Note that \begin{align}
    &\langle u, H(\vf) \hat{\vf}\rangle =  \frac{1}{m}\sum_{i=1}^{m}\mathbbm{1}(|\langle\hat{\vf},\vx_i\rangle|\geq\bar{\mu}) \frac{\langle \vf+ \fstar,\vx_i\rangle \langle  \vx_i,u\rangle}{\langle \hat{\vf},\vx_i\rangle^2} \nonumber \\
    &= \frac{1}{m}\sum_{i=1}^{m}\mathbbm{1}(|\langle\hat{\vf},\vx_i\rangle|\geq\bar{\mu}) \left[\frac{\langle \vf+ \fstar-2\hat{\vf},\vx_i\rangle \langle  \vx_i,u\rangle}{\langle \hat{\vf},\vx_i\rangle^2} + 2\frac{ \langle  \vx_i,u\rangle}{\langle \hat{\vf},\vx_i\rangle}\right] \nonumber \\
    &= \frac{1}{m}\sum_{i=1}^{m}\mathbbm{1}(|\langle\hat{\vf},\vx_i\rangle|\geq\bar{\mu}) \left[b\frac{\langle \hat{\vf}^{\perp},\vx_i\rangle \langle  \vx_i,u\rangle}{\langle \hat{\vf},\vx_i\rangle^2} + \left(2+\tfrac{a}{\|\hat{\vf}\|}\right)\frac{ \langle  \vx_i,u\rangle}{\langle \hat{\vf},\vx_i\rangle}\right]
    \end{align}
    
     Now, conditioned on $\langle\hat{\vf},\vx_i\rangle$ for $i \in [m]$, we must have $\langle u,\vx_i\rangle$ for $i\in [m]$ to be i.i.d. standard Gaussians and the term $\frac{1}{m}\sum_{i=1}^{m}\mathbbm{1}(|\langle\hat{\vf},\vx_i\rangle|\geq\bar{\mu})\frac{ \langle  \vx_i,u\rangle}{\langle \hat{\vf},\vx_i\rangle}$ is a gaussian with variance $\frac{L_2(X)}{m}$ ( $L_2$ as defined in Lemma~\ref{lem:conc_trunc_inv} ). Therefore, applying Gaussian concentration and Lemma~\ref{lem:conc_trunc_inv}, we conclude
     that with probability at-least $1-\delta/2$, we must have:
     \begin{equation}
    \biggr|\frac{1}{m}\sum_{i=1}^{m}\mathbbm{1}(|\langle\hat{\vf},\vx_i\rangle|\geq\bar{\mu})\frac{ \langle  \vx_i,u\rangle}{\langle \hat{\vf},\vx_i\rangle}\biggr| \leq \sqrt{\frac{C\log(4/\delta)}{m\bar{\mu}\|\hat{\vf}\|}}
    \end{equation}
    
    Applying Cauchy Schwarz inequality and Lemma~\ref{lem:conc_trunc_inv}, we conclude that with probability $1-\frac{\delta}{2}$ (with $L_1$ as defined in Lemma~\ref{lem:conc_trunc_inv}): 
    \begin{align} 
        \biggr|\frac{1}{m}\sum_{i=1}^{m}\mathbbm{1}(|\langle\hat{\vf},\vx_i\rangle|\geq\bar{\mu}) \frac{\langle \hat{\vf}^{\perp},\vx_i\rangle \langle  \vx_i,u\rangle}{\langle \hat{\vf},\vx_i\rangle^2}\biggr| &\leq \sqrt{L_1(X,u)L_1(X,\hat{\vf}^{\perp})} \nonumber\\
        &\leq \frac{C}{\bar{\mu}\|\hat{\vf}\|} \label{eq:overlap_1}
    \end{align}
    The result follows by combining the above equations with the union bound along with the fact that $|a|, |b| \leq 2\Gamma $

    \item Now let $u \perp \hat{\vf}^{\perp}$ along with the condition $u\perp \hat{\vf}$. We will now find a finer bound than Equation~\eqref{eq:overlap_1}. Notice that conditioned on the random variables $\langle u,\vx_i\rangle,\langle \vx_i,\hat{\vf}\rangle$ for $i \in [m]$, the random variables $\langle \hat{\vf}^{\perp},\vx_i\rangle$ are i.i.d. standard Gaussians. Therefore, under this conditioning, we have $\frac{1}{m}\sum_{i=1}^{m}\mathbbm{1}(|\langle\hat{\vf},\vx_i\rangle|\geq\bar{\mu}) \frac{\langle \hat{\vf}^{\perp},\vx_i\rangle \langle  \vx_i,u\rangle}{\langle \hat{\vf},\vx_i\rangle^2}$ is a zero mean Gaussian with variance $\frac{L_1(X,u,2)}{m}$ (as defined in Lemma~\ref{lem:conc_trunc_inv})
    
    We conclude that with probability at-least $1-\delta$, we have:
$$\biggr|\frac{1}{m}\sum_{i=1}^{m}\mathbbm{1}(|\langle\hat{\vf},\vx_i\rangle|\geq\bar{\mu}) \frac{\langle \hat{\vf}^{\perp},\vx_i\rangle \langle  \vx_i,u\rangle}{\langle \hat{\vf},\vx_i\rangle^2}\biggr| \leq \sqrt{\frac{C\log(\tfrac{1}{\delta})}{m\bar{\mu}^3\|\hat{\vf}\|}}    $$

Rest of the proof follows similar to item 2.

    \item By similar observations as in item 1, we apply Lemma~\ref{lem:psd_bounds} to conclude that with probability at-least $1-\tfrac{\delta}{2}$:
    \begin{align} \frac{1}{m}\sum_{i=1}^{m} \mathbbm{1}(|\langle\hat{\vf},\vx_i\rangle| > \bar{\mu})\frac{\langle \vx_i,u\rangle^2}{\langle \vx_i,\hat{\vf}\rangle^2} \leq \langle u,H(\vf) u\rangle \leq \frac{4}{m}\sum_{i=1}^{m} \mathbbm{1}(|\langle\hat{\vf},\vx_i\rangle| > \bar{\mu})\frac{\langle \vx_i,u\rangle^2}{\langle \vx_i,\hat{\vf}\rangle^2}
    \end{align}
    The result follows by an application of Lemma~\ref{lem:conc_trunc_inv} along with the union bound.

    \item Only in this proof we consider an orthonormal basis $v_1,\dots,v_d$ for $\mathbb{R}^d$ such that $v_1 = \frac{\hat{\vf}}{\|\hvf\|}$ and $v_2 = \hat{\vf}^{\perp}$
    \begin{align}\label{eq:ell_2}
        \|QH(\vf)\hat{\vf}\|^2  = \sum_{i=2}^{d}\langle v_i,H(\vf)\hat{\vf}\rangle^2
    \end{align}
   
    From item 2, we conclude that with probability at-least $1-\delta/d$:
    $$\langle v_2,H(\vf)\hat{\vf}\rangle^2 \leq \frac{C\Gamma^2}{\bar{\mu}^2\|\hat{\vf}\|^2} + \left(1+\tfrac{\Gamma}{\|\hat{\vf}\|}\right)^2\frac{C\log(4d/\delta)}{m\bar{\mu}\|\hat{\vf}\|} $$
    
    From item 3, we conclude that for $i\geq 3$, with probability at-least $1-\delta/d$, we must have:
    $$\langle v_i,H(\vf)\hat{\vf}\rangle^2 \leq \frac{C\Gamma^2\log(\tfrac{4d}{\delta})}{m\bar{\mu}^3\|\hat{\vf}\|} + \left(1+\tfrac{\Gamma}{\|\hat{\vf}\|}\right)^2\frac{C\log(4d/\delta)}{m\bar{\mu}\|\hat{\vf}\|}$$
    Using these in Equation~\eqref{eq:ell_2} with the union bound, we conclude with probability at least $1-\delta$, we have:
    $$\langle \hat{\vf},H(\vf)^2 \hat{\vf}\rangle \leq \frac{C_1}{\|\hat{\vf}\|^2} + \left(1+\tfrac{\Gamma}{\|\hat{\vf}\|}\right)^2\frac{Cd\log(4d/\delta)}{m\bar{\mu}\|\hat{\vf}\|} + \frac{Cd\Gamma^2\log^2(\tfrac{4m}{\delta})}{m\bar{\mu}^3\|\hat{\vf}\|} + \frac{C\Gamma^2}{\bar{\mu}^2\|\hat{\vf}\|^2}  $$

    \item       Consider $QH_tu$. Similar to item 2, take $\vf+\fstar - 2\hat{\vf} = a \frac{\hat{\vf}}{\|\hat{\vf}\|} + b \hat{\vf}^{\perp} $:
    \begin{align}
        &QH(f)u = \frac{1}{m}\sum_{i=1}^{m}\frac{\mathbbm{1}(|\langle\hat{\vf},\vx_i\rangle|\geq \bar{\mu})}{\langle \vx_i,\hat{\vf}\rangle^3}\langle u,\vx_i\rangle Q\vx_i \langle \vf+ \fstar,\vx_i\rangle\nonumber \\
        &= \frac{2}{m}\sum_{i=1}^{m}\frac{\mathbbm{1}(|\langle\hat{\vf},\vx_i\rangle|\geq \bar{\mu})}{\langle \vx_i,\hat{\vf}\rangle^2}\langle u,\vx_i\rangle Q\vx_i \nonumber\\
        &\quad + \frac{1}{m}\sum_{i=1}^{m}\frac{\mathbbm{1}(|\langle\hat{\vf},\vx_i\rangle|\geq \bar{\mu})}{\langle \vx_i,\hat{\vf}\rangle^3}\langle u,\vx_i\rangle \langle f+ \fstar - 2\hat{\vf},\vx_i\rangle Q\vx_i \nonumber \\
        &= \frac{1}{m}\sum_{i=1}^{m}(2+\tfrac{a}{\|\hat{\vf}\|})\frac{\mathbbm{1}(|\langle\hat{\vf},\vx_i\rangle|\geq \bar{\mu})}{\langle \vx_i,\hat{\vf}\rangle^2}\langle u,\vx_i\rangle Q\vx_i \nonumber\\
        &\quad + \frac{b}{m}\sum_{i=1}^{m}\frac{\mathbbm{1}(|\langle\hat{\vf},\vx_i\rangle|\geq \bar{\mu})}{\langle \vx_i,\hat{\vf}\rangle^3}\langle u,\vx_i\rangle \langle \hat{\vf}^{\perp},\vx_i\rangle Q\vx_i  \label{eq:decomp_1}
    \end{align}
    
    In Lemma~\ref{lem:subspace_flip}, let $R$ be the projector onto the orthogonal space to $\hat{\vf}$. Then for $\zeta_1,\dots,\zeta_m$ i.i.d. rademacher variables independent of everything else, we have:
    
    \begin{align}
        &\frac{1}{m}\sum_{i=1}^{m}\frac{\mathbbm{1}(|\langle\hat{\vf},\vx_i\rangle|\geq \bar{\mu})}{\langle \vx_i,\hat{\vf}\rangle^3}\langle u,\vx_i\rangle \langle \hat{\vf}^{\perp},\vx_i\rangle Q\vx_i \nonumber \\ &\stackrel{d}{=} 
        \frac{1}{m}\sum_{i=1}^{m}\zeta_i\frac{\mathbbm{1}(|\langle\hat{\vf},\vx_i\rangle|\geq \bar{\mu})}{\langle \vx_i,\hat{\vf}\rangle^3}\langle u,\vx_i\rangle \langle \hat{\vf}^{\perp},\vx_i\rangle Q\vx_i \label{eq:dist_eq}
    \end{align}

Where $\stackrel{d}{=}$ denotes equality in distribution. By rademacher concentration, we have that with probability at-least $1-\delta/2$, we have:

\begin{align}
    &\biggr\|\frac{1}{m}\sum_{i=1}^{m}\zeta_i\frac{\mathbbm{1}(|\langle\hat{\vf},\vx_i\rangle|\geq \bar{\mu})}{\langle \vx_i,\hat{\vf}\rangle^3}\langle u,\vx_i\rangle \langle \hat{\vf}^{\perp},\vx_i\rangle Q\vx_i\biggr\|^2 \nonumber \\
    &\leq \frac{C\log(\tfrac{2d}{\delta})}{m^2}\sum_{i=1}^{m}\frac{\mathbbm{1}(|\langle\hat{\vf},\vx_i\rangle|\geq \bar{\mu})}{\langle \vx_i,\hat{\vf}\rangle^6}\langle u,\vx_i\rangle^2 \langle \hat{\vf}^{\perp},\vx_i\rangle^2 \|Q\vx_i\|^2
\end{align}

With probability at-least $1-\frac{3\delta}{8}$, we have:
$$\sup_{i \in [m]} \langle u,\vx_i\rangle^2 \langle \hat{\vf}^{\perp},\vx_i\rangle^2 \|Q\vx_i\|^2 \leq C\log(\tfrac{8m}{\delta})^3d$$ Using this, along with an application of Lemma~\ref{lem:conc_trunc_inv}, we conclude that with probability at-least $1-\delta/2$:

\begin{align}
    &\biggr\|\frac{1}{m}\sum_{i=1}^{m}\zeta_i\frac{\mathbbm{1}(|\langle\hat{\vf},\vx_i\rangle|\geq \bar{\mu})}{\langle \vx_i,\hat{\vf}\rangle^3}\langle u,\vx_i\rangle \langle \hat{\vf}^{\perp},\vx_i\rangle Q\vx_i\biggr\|^2 \nonumber \\
    &\leq \frac{Cd \log(\tfrac{4d}{\delta})\log^3(\tfrac{8m}{\delta})}{m\bar{\mu}^5\|\hat{\vf}\|}\label{eq:sym_conc_1}
\end{align}

Now consider $v_1,\dots,v_{d-1}$ to be any orthonormal basis for the span of $Q$ such that $v_1 = u$.
\begin{align}
    &\|\frac{1}{m}\sum_{i=1}^{m}\frac{\mathbbm{1}(|\langle\hat{\vf},\vx_i\rangle|\geq \bar{\mu})}{\langle \vx_i,\hat{\vf}\rangle^2}\langle u,\vx_i\rangle Q\vx_i\|^2 \nonumber \\
    &= \biggr\|\frac{1}{m}\sum_{i=1}^{m}\frac{\mathbbm{1}(|\langle\hat{\vf},\vx_i\rangle|\geq \bar{\mu})}{\langle \vx_i,\hat{\vf}\rangle^2}\langle u,\vx_i\rangle^2\biggr\|^2 
    \nonumber \\ &\quad + \sum_{j=2}^{d-1}\biggr\|\frac{1}{m}\sum_{i=1}^{m}\frac{\mathbbm{1}(|\langle\hat{\vf},\vx_i\rangle|\geq \bar{\mu})}{\langle \vx_i,\hat{\vf}\rangle^2}\langle u,\vx_i\rangle \langle v_j,\vx_i\rangle\biggr\|^2  \nonumber \\
    &\leq \frac{C}{\bar{\mu}^2\|\hat{\vf}\|^2} + \sum_{j=2}^{d-1}\biggr\|\frac{1}{m}\sum_{i=1}^{m}\frac{\mathbbm{1}(|\langle\hat{\vf},\vx_i\rangle|\geq \bar{\mu})}{\langle \vx_i,\hat{\vf}\rangle^2}\langle u,\vx_i\rangle \langle v_j,\vx_i\rangle\biggr\|^2
\end{align}

We have used Lemma~\ref{lem:conc_trunc_inv} in the second step. The second term is exactly same as that used in item 3 (by replacing $\hat{\vf}^{\perp}$ with $v_j$). Therefore, with probability at-least $1-\delta$, we must have:
\begin{align}
    \biggr\|\frac{1}{m}\sum_{i=1}^{m}\frac{\mathbbm{1}(|\langle\hat{\vf},\vx_i\rangle|\geq \bar{\mu})}{\langle \vx_i,\hat{\vf}\rangle^2}\langle u,\vx_i\rangle Q\vx_i\biggr\|^2 \leq \frac{C}{\bar{\mu}^2\|\hat{\vf}\|^2}+ \frac{Cd\log^2(\tfrac{4md}{\delta})}{m\bar{\mu}^3\|\hat{\vf}\|} \label{eq:sec_term}
\end{align}

Therefore, using Equations~\eqref{eq:dist_eq}~\eqref{eq:sym_conc_1}, we bound the first term in Equation~\eqref{eq:decomp_1}. We bound the second term with Equation~\eqref{eq:sec_term}. Combining these two using the union bound, we conclude:

$$\|QH(f)u\| \leq \left(1+\tfrac{\Gamma}{\|\hat{\vf}\|}\right)\frac{C}{\|\hat{\vf}\|\bar{\mu}} + \sqrt{\frac{Cd\log^2(\tfrac{4md}{\delta})}{m\bar{\mu}^3\|\hat{\vf}\|}} + \sqrt{\frac{Cd\Gamma^2 \log(\tfrac{4d}{\delta})\log^3(\tfrac{8m}{\delta})}{m\bar{\mu}^5\|\hat{\vf}\|}}$$
\end{enumerate}

\end{proof}

\subsection{Proof of Lemma~\ref{lem:P_contraction}}
\label{subsec:P_contr_proof}
\begin{proof}[Proof of Lemma~\ref{lem:P_contraction}]
\begin{enumerate}
    \item Consider the following sequence of inequalities:
\begin{align}
    \biggr\|P(I-\alpha_0 H_t(\vf_t))P\frac{\hat{\vf}}{\|\hat{\vf}\|}\biggr\|^2 &= \frac{1}{\|\hat{\vf}\|^2}\langle P(I-\alpha_0H_t(\vf_t))\hat{\vf},P(I-\alpha_0H_t(\vf_t))\hat{\vf} \rangle \nonumber \\
    &= 1 - 2\frac{\alpha_0}{\|\hat{\vf}\|^2}\langle\hat{\vf},H_t(\vf_t)\hat{\vf}\rangle + \frac{\alpha_0^2}{\|\hat{\vf}\|^2} \hat{\vf}^{\intercal}H_t P^2H_t(\vf_t) \hat{\vf} \nonumber \\
    &\leq 1 - 2\frac{\alpha_0}{\|\hat{\vf}\|^2}\langle\hat{\vf},H_t(\vf_t)\hat{\vf}\rangle + \frac{\alpha_0^2}{\|\hat{\vf}\|^4} \langle\hat{\vf}^{\intercal},H_t(\vf_t) \hat{\vf}\rangle^2 \label{eq:contract_1}
\end{align}
In the last step, we use the fact that $P = \frac{\hat{\vf}\hat{\vf}^{\intercal}}{\|\hat{\vf}\|^2}$.
By  Lemma~\ref{lem:irritating_bounds} items 1 and 5, we must have with probability at-least $1-\delta$, the following hold:
$C_0 \leq \hat{\vf}^{\intercal}H_t(\vf_t) \hat{\vf} \leq C_1$.
 By the assumption on the step size and Equation~\eqref{eq:contract_1}, we conclude:
\begin{equation}
    \biggr\|P(I-\alpha_0 H_t(\vf_t))P\frac{\hat{\vf}}{\|\hat{\vf}\|}\biggr\|^2 \leq \left(1- \tfrac{c\alpha_0 }{\|\hat{\vf}\|^2}\right)
\end{equation}

\item Note that $\|PH_t(\vf_t)Qu_t\| = \bigr|\langle\frac{\hat{\vf}}{\|\hat{\vf}\|},H_t(\vf_t)u_t\rangle\bigr|$. From item 2 of Lemma~\ref{lem:irritating_bounds}, we conclude that with probability at-least $1-\delta$:

\begin{align}
    |\langle\tfrac{\hat{\vf}}{\|\hat{\vf}\|},H_t(\vf_t)u_t\rangle| &\leq \frac{C\Gamma_t}{\bar{\mu}\|\hat{\vf}\|^2} + \left(1+\tfrac{\Gamma_t}{\|\hat{\vf}\|}\right)\sqrt{\frac{C\log(1/\delta)}{m\bar{\mu}\|\hat{\vf}\|^3}} \nonumber \\
    &\leq C\left[\frac{\Gamma_t}{\bar{\mu}\|\hat{\vf}\|^2} + \sqrt{\frac{\log(1/\delta)}{m\bar{\mu}\|\hat{\vf}\|^3}}\right]
\end{align}
In the last step, we have used the assumption to bound $\Gamma_t \leq \bar{\mu} < \|\hat{\vf}\|$. 
\item We decompose the noise $N_t$ into $N_{t,1},N_{t,2}$ and $N_{t,3}$ as in Equation~\eqref{eq:noise_decomp} to give us $N_t = N_{t,1}+N_{t,2}+N_{t,3}$. Then, by definition of $\bar{a}_t$, and the bounds in Lemmata~\ref{lem:N1_bound},~\ref{lem:N2_bound} and ~\ref{lem:N3_bound} we must have: 
\begin{align}
    |\bar{a}_t| &\leq \frac{1}{\|\hat{\vf}\|}|\langle N_{t,1} , \hat{\vf}\rangle| + \frac{1}{\|\hat{\vf}\|}|\langle N_{t,2} , \hat{\vf}\rangle|+\frac{1}{\|\hat{\vf}\|}|\langle N_{t,3} , \hat{\vf}\rangle| \nonumber \\
    &\leq \frac{C\log(\tfrac{1}{\delta})}{\|\hat{\vf}\|}\sqrt{\tfrac{1}{m}\left[1+\tfrac{\Gamma_t^4}{\bar{\mu}^4}\log^2 \tfrac{m}{\delta}\right]} + C\sqrt{\frac{\|\Delta\|^2\log^2(\tfrac{2m}{\delta})}{m\bar{\mu}^2\|\hat{\vf}\|^2}\left(1+\tfrac{\Gamma^2_t\log\tfrac{2m}{\delta}}{\bar{\mu}^2}\right)} \nonumber \\
    &\quad + C\frac{\|\Delta\|^2}{\bar{\mu}\|\hat{\vf}\|^2}  \nonumber \\
 &\leq C\left[\frac{\log(\tfrac{1}{\delta})}{\|\hat{\vf}\|\sqrt{m}} + \frac{\|\Delta\|^2}{\bar{\mu}\|\hat{\vf}\|^2}\right]
\end{align}
In the last step we have used the bound $C\Gamma_t \leq \bar{\mu}\log(\tfrac{m}{\delta})$ and $C\|\Delta\|\log(\tfrac{m}{\delta}) \leq \bar{\mu}$. 
\end{enumerate}

\end{proof}

\subsection{Proof of Lemma~\ref{lem:Q_contraction}}
\label{subsec:Q_contr_proof}

\begin{proof}[Proof of Lemma~\ref{lem:Q_contraction}]
\begin{enumerate}
    \item Notice that $\|Q(I-\alpha_1H_t(\vf_t))Qu_t\|^2 = 1 - 2\alpha_1 \langle u_t,H_t(\vf_t) u_t \rangle + \alpha_1^{2} \|QH_t(\vf_t)u_t\|^2$
    
    From item 2 of Lemma~\ref{lem:irritating_bounds}, we have with probability at-least $1-\tfrac{\delta}{2}$:
    $\frac{C_0}{\|\hat{\vf}\|\bar{\mu}} \leq \langle u_t, H_t(\vf_t)u_t\rangle \leq \frac{C_1}{\|\hat{\vf}\|\bar{\mu}}$. By item 6 of Lemma~\ref{lem:irritating_bounds}, we have with probability at-least $1-\tfrac{\delta}{2}$:
    
    \begin{align}
        \|QH_t(\vf_t)u_t\|^2 &\leq \left(1+\tfrac{\Gamma_t}{\|\hat{\vf}\|}\right)^2\frac{C}{\|\hat{\vf}\|^2\bar{\mu}^2} + \frac{Cd\log^2(\tfrac{4md}{\delta})}{m\bar{\mu}^3\|\hat{\vf}\|} + \frac{Cd\Gamma^2_t \log(\tfrac{4d}{\delta})\log^3(\tfrac{8m}{\delta})}{m\bar{\mu}^5\|\hat{\vf}\|} \nonumber \\
        &\leq \frac{C}{\|\hat{\vf}\|^2\bar{\mu}^2}
    \end{align}
    In the second step, we have used the bounds on $m$ and $\Gamma_t$. 
    Now, picking $\alpha_1 \leq c \bar{\mu}\|\hat{\vf}\|$ for small enough $c$, we conclude the result. 
    \item Consider $\frac{1}{\|\hat{\vf}\|^2}\|QH_t(\vf_t)\hat{\vf}\|^2$.
    By item 5 of Lemma~\ref{lem:irritating_bounds}, we conclude the result and using the bounds on the parameters in the assumptions.  
    \item $$ |\bar{b}_t| \leq \|QN_{t,1}\| + \|QN_{t,2}\| + \|QN_{t,3}\| $$
    
    Consulting Lemmata~\ref{lem:N1_bound},~\ref{lem:N2_bound} and~\ref{lem:N3_bound}, we conclude $\|QN_{t,1}\| \leq C\log(\tfrac{d}{\delta})\sqrt{\frac{d}{m\bar{\mu}\|f\|}}$. $\|QN_{t,2}\| \leq C\sqrt{\tfrac{d\log(\tfrac{1}{\delta})}{m\bar{\mu}\|\bar{f}\|}}$. $\|QN_{t,3}\| \leq \frac{C\|\Delta\|^2}{\bar{\mu}\|\hat{\vf}\|^2} + \sqrt{\frac{C d}{m\bar{\mu}\|\hat{\vf}\|}} $, from which we conclude the result. 
\end{enumerate}
\end{proof}

\section{Proof of Concentration Lemmas}
\label{sec:concentration_lemmas}
\subsection{Proof of Lemma~\ref{lem:chi_squared_conc}}
\begin{proof}[Proof of Lemma~\ref{lem:chi_squared_conc}]
From \cite[Section 5.2.4]{vershynin2010introduction}, we conclude that $\mathbb{E}\exp(\lambda z_i) \leq \exp(c_0\lambda^2 )$ for every  $\lambda \in [-c,c]$ for some $c>0$. Therefore, whenever $|\lambda| \leq \frac{c}{\max_i|v_i|}$ we have:
$$\mathbb{E}\exp(\lambda\sum_{i=1}^{n} v_i z_i) \leq \exp(c_0\lambda^2 \sum_{i=1}^n v_i^2)$$

The Chernoff bound with $\lambda =  \frac{c}{\|v\|}$ implies:

$$\bP\left(|\sum_{i=1}^m z_i v_i| > t\right) \leq C_0\exp(-C_1\frac{t}{\|v\|})$$
\end{proof}

\subsection{Proof of Lemma~\ref{lem:conc_trunc_inv}}
\label{sec:conc_trunc_inv}

\begin{proof}[Proof of Lemma~\ref{lem:conc_trunc_inv}]
\begin{enumerate}
    \item   Note that $\langle X_i,\hat{\vf}\rangle \sim \mathcal{N}(0,\|\hat{\vf}\|^2)$. For integers $r-1 \geq i\geq 0$, define sets $A_i = \{x\in \mathbb{R}: 2^{i}\bar{\mu}\leq |x| < 2^{i+1}\bar{\mu}\}$ and $A_r = \{x \in \mathbb{R}: |x|\geq 2^{r}\bar{\mu}\}$. In Lemma~\ref{lem:partition_sets}, we take $\xi_1,\dots,\xi_n$ to be $\langle X_1,\hat{\vf}\rangle, \dots, \langle X_m,\hat{\vf}\rangle$. Taking $H_i,\mathcal{H}_i$ to be the frequency as defined in Lemma~\ref{lem:partition_sets}, with $ r = \lceil \log_2(\tfrac{\|\hat{\vf}\|}{\bar{\mu}})\rceil$ we conclude that the following holds almost surely:
    \begin{align}
        &\frac{1}{m}\sum_{i=1}^{m} \mathbbm{1}(|\langle \hat{\vf},X_i\rangle|>\bar{\mu})\frac{ \langle X_i,u\rangle^2}{\langle \hat{\vf},X_i\rangle^{2l}}
        \leq \frac{1}{m}\sum_{i=0}^{r} \frac{ 2^{-2li}}{\bar{\mu}^{2l}}\sum_{j\in \mathcal{H}_i} \langle X_j,u\rangle^2  \label{eq:counting_bins}
    \end{align}

Note that since $u \perp \hat{\vf}$, we must have $\langle X_j,u\rangle$ to be independent of $\mathcal{H}_i$ and is distributed as $\cN(0,1)$.  By Lemma~\ref{lem:chi_squared_conc}, we conclude:
\begin{equation}
    \bP\left(\sum_{j\in \mathcal{H}_i}\langle X_j,u\rangle^2 > H_i + C_0 \sqrt{H_i}t\right) \leq \exp(-t)
\end{equation}
Therefore, with probability at-least $1-\tfrac{\delta}{2}$, we must have for every $i = 0,\dots,r$:
\begin{equation}\label{eq:sq_within_bins}
\sum_{j\in \mathcal{H}_i}\langle X_j,u\rangle^2 \leq  H_i + C_0 \sqrt{H_i}\log(\tfrac{2r}{\delta})\end{equation}

Therefore, combining Equations~\eqref{eq:counting_bins} and ~\eqref{eq:sq_within_bins}, we conclude that with probability at-least $1-\tfrac{\delta}{2}$, we have:

\begin{align}
    L_1(X)\leq \frac{1}{m}\sum_{i=0}^{r} \frac{ 2^{-2li}}{\bar{\mu}^{2l}}\left(H_i +  C_0\sqrt{H_i}\log(\tfrac{2r}{\delta})\right)\label{eq:variance_proxy_3}
\end{align}

Let $p_i$ be the probabilities as defined in Lemma~\ref{lem:partition_sets}. Then, we must have for $i=0,\dots,r$,  $ c 2^{i}\frac{\bar{\mu}}{\|\hat{\vf}\|} \leq p_i \leq C 2^{i}\frac{\bar{\mu}}{\|\hat{\vf}\|}$. Applying Lemma~\ref{lem:partition_sets}, we conclude that with probability at-least $1-\frac{\delta}{2}$, we must have $ i = 0,\dots,r$: 
\begin{equation}\label{eq:count_inequality}
H_i \leq C_1\max\left(\log(\tfrac{2r}{\delta}),\frac{2^{i}\bar{\mu}m}{\|\hat{\vf}\|}\right) = \frac{C_12^{i}\bar{\mu}m}{\|\hat{\vf}\|}
\end{equation}
Here, we have used the assumption that $m \geq 
C\frac{\|\hat{\vf}\|}{\bar{\mu}}\log^2(\tfrac{2r}{\delta})$. Under the same assumption and the same event in Equation~\eqref{eq:count_inequality}, we conclude that $\sqrt{H_i}\log(\tfrac{2r}{\delta}) + H_i \leq \frac{C2^{i}\bar{\mu}m}{\|\hat{\vf}\|}$ for every $i = 0,\dots,r$. Combining this with Equation~\eqref{eq:variance_proxy_3} using the union bound, we conclude that with probability at-least $1-\delta$, we must have: 

$$L_1(X) \leq \frac{C}{\bar{\mu}^{2l-1}\|\hat{\vf}\|}$$

\item The lower bounds proceed in a similar fashion as the upper bounds in item 1. In Equation~\eqref{eq:counting_bins}, we use $2^{-2l(i+1)}$ instead to get a lower bound. In Equation~\eqref{eq:sq_within_bins}, we use the high probability lower bound of $H_i - C_0 \sqrt{H}_i\log(\tfrac{2r}{\delta})$ instead of the upper bound. A similar argument as in the proof of item 1 using the high probability lower bounds for $H_i$ bounds from Lemma~\ref{lem:partition_sets}, instead of the upper bounds allows us to conclude the result.

\item This proof proceeds similar to the proof of item 1, but replacing $H_i + C_0\sqrt{H_i}\log(\tfrac{2r}{\delta})$ with just $H_i$. 

\item This proof proceeds similar to the proof of item 2, but replacing $H_i - C_0\sqrt{H_i}\log(\tfrac{2r}{\delta})$ with just $H_i$. 

\end{enumerate}

\end{proof}

\section{Proof of the Lower Bound}
\label{sec:lb_proof}
\subsection{Technical Lemmas}
For any $x \in \bR$, we write $x^{+} = \max\{x, 0\}$
\begin{lemma}[Lower Bounding the Expected Risk by a Coupling]
\label{lem:coupling-lecam}
Let $\theta_1, \theta_2 \in \Theta \subset \bR$ be two identically distributed continuous random variables. Let $y = g(\theta, \epsilon)$ represent a model parameterized by $\theta$, where $g$ is some measurable function and $\epsilon$ is some random variable drawn independently of $\theta$, such that $\Law(y | \theta)$ has a density with respect to the Lebesgue measure for any $\theta \in \Theta$. Furthermore, let $\htheta(y)$ be an estimator of $\theta$. Then, for any arbitrary coupling $\Pi(\theta_1, \theta_2)$ of $\theta_1$ and $\theta_2$
\begin{align*}
    \bE_{\theta, y}\left[(\htheta(y) - \theta)^2\right] \geq \tfrac{1}{4}\bE_{\theta_1, \theta_2 \sim \Pi(\theta_1, \theta_2)} \left[\left(\theta_1 - \theta_2\right)^2\left(1 - \sqrt{2 \KL{\Law(y \ | \ \theta_1)}{\Law(y \ | \ \theta_2)}}\right) \right]
\end{align*}
\end{lemma}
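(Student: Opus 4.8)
The plan is to follow the classical Le Cam two-point recipe, but carried out \emph{conditionally} on the coupling $\Pi$. Write $\mu$ for the common law of $\theta_1$ and $\theta_2$; the $\theta$ in the left-hand side is understood to be drawn from $\mu$. First I would record the identity
$\bE_{\theta, y}[(\htheta(y) - \theta)^2] = \tfrac12 \bE_{(\theta_1,\theta_2)\sim\Pi}\big[\bE_{y\mid\theta_1}[(\htheta(y)-\theta_1)^2] + \bE_{y\mid\theta_2}[(\htheta(y)-\theta_2)^2]\big]$.
This holds because the left side equals $\bE_{\theta_1}\bE_{y\mid\theta_1}[(\htheta(y)-\theta_1)^2]$ and also $\bE_{\theta_2}\bE_{y\mid\theta_2}[(\htheta(y)-\theta_2)^2]$ (both coordinates of $\Pi$ have marginal $\mu$), hence equals their average; Tonelli justifies all interchanges, since the integrand is nonnegative and $\Law(y\mid\theta)$ has a Lebesgue density for every $\theta$.

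Next I would establish the pointwise two-point bound. Fix a realization $(\theta_1,\theta_2)$ and let $p_j$ be the density of $\Law(y\mid\theta_j)$. Lower bounding each of the two risks by integrating only against $\min(p_1,p_2)$ and applying the elementary inequality $(a-c)^2+(b-c)^2\ge \tfrac12(a-b)^2$ with $a=\theta_1$, $b=\theta_2$, $c=\htheta(y)$ gives
$\bE_{y\mid\theta_1}[(\htheta-\theta_1)^2]+\bE_{y\mid\theta_2}[(\htheta-\theta_2)^2] \ge \tfrac{(\theta_1-\theta_2)^2}{2}\int\min(p_1,p_2) = \tfrac{(\theta_1-\theta_2)^2}{2}\big(1-\TV(\Law(y\mid\theta_1),\Law(y\mid\theta_2))\big)$,
using the standard identity $\int\min(p_1,p_2) = 1-\TV(p_1,p_2)$. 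Substituting this into the identity from the first step yields the intermediate lower bound $\tfrac14\,\bE_{(\theta_1,\theta_2)\sim\Pi}\big[(\theta_1-\theta_2)^2\big(1-\TV(\Law(y\mid\theta_1),\Law(y\mid\theta_2))\big)\big]$.

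Finally I would convert the total variation term to KL. Pinsker's inequality gives $\TV(\Law(y\mid\theta_1),\Law(y\mid\theta_2)) \le \sqrt{\tfrac12\KL{\Law(y\mid\theta_1)}{\Law(y\mid\theta_2)}} \le \sqrt{2\KL{\Law(y\mid\theta_1)}{\Law(y\mid\theta_2)}}$, so $1-\TV \ge 1-\sqrt{2\KL{\Law(y\mid\theta_1)}{\Law(y\mid\theta_2)}}$ holds for every realization; multiplying by $(\theta_1-\theta_2)^2\ge 0$ and taking $\bE_\Pi$ gives the claim. The inequality is not broken on the event where $\sqrt{2\KL{\cdot}{\cdot}}>1$, since there the replacement right-hand side only becomes smaller (while $1-\TV\ge 0$). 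I do not anticipate a serious obstacle; the one point needing care is precisely that the coupling makes the pair of hypotheses \emph{random}, so the two-point argument must be done pointwise in $(\theta_1,\theta_2)$ and only then averaged, together with the sign bookkeeping when the KL term is large. Measure-theoretic hygiene (existence of densities, applicability of Tonelli, joint measurability of $\htheta$) is routine under the stated hypotheses.
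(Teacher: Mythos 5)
Your proposal is correct and follows essentially the same route as the paper's own proof: average the risk over the coupling using the identical marginals, apply the pointwise two-point bound via $\int \min(p_1,p_2) = 1 - \TV$ together with $(a-c)^2+(b-c)^2 \ge \tfrac12(a-b)^2$, and finish with Pinsker's inequality. Your extra remark on the sign bookkeeping when the KL term exceeds one is a careful touch but does not change the argument.
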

\begin{proof}
Since $\theta_1$ and $\theta_2$ are identically distributed,
\begin{align*}
    \bE_{\theta, y}\left[(\htheta(y) - \theta)^2\right] &= \frac{1}{2} \bE_{\theta_1, y}\left[\left(\htheta(y) - \theta_1\right)^2\right] + \frac{1}{2} \bE_{\theta_2, y}\left[\left(\htheta(y) - \theta_2\right)^2\right] \\
    &= \frac{1}{2} \bE_{\theta_1, \theta_2 \sim \Pi(\theta_1, \theta_2)} \left[ \bE_{y \ | \theta_1}\left[\left(\htheta(y) - \theta_1\right)^2\right] + \bE_{y \ | \theta_2}\left[\left(\htheta(y) - \theta_2\right)^2\right] \right]
\end{align*}
Let $p(y \ | \ \theta)$ denote the density of $\Law( y \ | \ \theta)$ for any $\theta \in \Theta$. It follows that,
\begin{align*}
    \bE_{\theta, y}\left[(\htheta(y) - \theta)^2\right] &= \frac{1}{2} \bE_{\theta_1, \theta_2 \sim \Pi(\theta_1, \theta_2)} \left[ \int \left(\htheta(y) - \theta_1\right)^2 p(y \ | \ \theta_1) \dd y + \int \left(\htheta(y) - \theta_2\right)^2 p(y \ | \ \theta_2) \dd y  \right]  \\
    &\geq \frac{1}{2} \bE_{\theta_1, \theta_2 \sim \Pi(\theta_1, \theta_2)} \left[ \int \left(\left(\htheta(y) - \theta_1\right)^2 + \left(\htheta(y) - \theta_2\right)^2\right) \min\{p(y \ | \ \theta_1), p(y \ | \ \theta_2) \} \dd y \right] \\
    &\geq \frac{1}{4}  \bE_{\theta_1, \theta_2 \sim \Pi(\theta_1, \theta_2)} \left[ (\theta_1 - \theta_2)^2 \int   \min\{p(y \ | \ \theta_1), p(y \ | \ \theta_2) \} \dd y \right] \\
    &\geq \frac{1}{4}  \bE_{\theta_1, \theta_2 \sim \Pi(\theta_1, \theta_2)} \left[ (\theta_1 - \theta_2)^2 \left( 1 - \TV\left( \Law(y \ | \ \theta_1), \Law(y \ | \ \theta_2)\right) \right)  \right] \\
    &\geq \frac{1}{4}  \bE_{\theta_1, \theta_2 \sim \Pi(\theta_1, \theta_2)} \left[(\theta_1 - \theta_2)^2 \left( 1 - \sqrt{2\KL {\Law(y \ | \ \theta_1)}{ \Law(y \ | \ \theta_2)}} \right)^{+} \right]
\end{align*}
where the second inequality follows from the identity $a^2 + b^2 \geq \nicefrac{(a-b)^2}{2}$, the third inequality follows from the fact that $\TV\left(P, Q\right) = 1 - \int \min \left\{ \frac{\dd P}{\dd \lambda}, \frac{\dd Q}{\dd \lambda} \right\} \dd \lambda$ for any two probability measures $P$ and $Q$ that are absolutely continuous with respect to some base measure $\lambda$, and the last is an application of Pinsker's inequality.
\end{proof}

\subsection{Proof of the Lower Bound}
The proof of our lower bound for the heteroscedastic regression problem involves fixing a noise model $\vf$ and analyzing the statistical indistinguishability between two instances of the heteroscedastic regression model parameterized by the regressors $\vw$ and $\vw + \vv$ respectively,  and sharing a common noise model $\vf$. To this end, we use $P_{\vw, \vf}$ to denote the heteroscedastic regression model parameterized by $\vw$ and  $\vf$, i.e., $P_{\vw, \vf}$ is a probability distribution supported on $\bR^d \times \bR$ such that $(\vx, y) \sim P_{\vw, \vf}$ implies $\vx \sim \cN(0, \vI)$ and $y \ | \  \vx \ \sim \cN(\dotp{\vw}{\vx}, {\dotp{\vf}{\vx}}^2)$. We first consider the simpler case when $\vv$ is parallel to $\vf$. This allows us to obtain a loose lower bound of $\Omega\left(\nicefrac{\norm{\vf}^2}{n}\right)$ via direct application of LeCam's two point method. To this end, 

\begin{lemma}[Lower Bound for $\vv$ parallel to $\vf$]
\label{lem:lower-bound-coarse}
Consider a fixed $\vf \in \bR^d$ and let $\vv$ be a vector parallel to $\vf$ such that $\norm{\vv} = \Delta$. Then, for any estimator $\hvw$ which estimates $\wstar$ with inputs $(\vx_i,y_i)_{i\in [n]}$: 
\begin{align*}
\inf_{\hvw} \sup_{\vw \in \bR^d} \bE_{(\vx_i, y_i)_{i \in [n]} \ \iidsim \ P_{\vw, \vf}}\left[\norm{\hvw - \vw}^2\right] \geq \Omega(\nicefrac{\norm{\vf}^2}{n})
\end{align*}
\end{lemma}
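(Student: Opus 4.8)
The plan is to establish this via LeCam's two-point method with a carefully chosen pair of regressors. Fix any reference regressor $\vw_0$ (say $\vw_0 = 0$) and set $\vw_1 := \vw_0$ and $\vw_2 := \vw_0 + \vv$, where $\vv := \tfrac{\Delta}{\norm{\vf}}\vf$ is parallel to $\vf$ with $\norm{\vv} = \Delta$, and the separation $\Delta := \norm{\vf}/\sqrt{n}$ will be pinned down once the divergence is computed. Since $\vw_1, \vw_2 \in \bR^d$, it suffices to show that $\inf_{\hvw}\max_{j\in\{1,2\}}\bE_{(\vx_i,y_i)\iidsim P_{\vw_j,\vf}}[\norm{\hvw - \vw_j}^2] = \Omega(\norm{\vf}^2/n)$.

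The crucial observation is that the per-sample KL divergence between $P_{\vw_1,\vf}$ and $P_{\vw_2,\vf}$ is \emph{exactly a constant}, independent of the covariate. Indeed, under $P_{\vw_j,\vf}$ a sample has $\vx \sim \cN(0,\vI)$ (the same marginal for $j=1,2$) and $y \mid \vx \sim \cN(\dotp{\vw_j}{\vx}, \dotp{\vf}{\vx}^2)$, so $\KL{P_{\vw_1,\vf}}{P_{\vw_2,\vf}} = \bE_{\vx \sim \cN(0,\vI)}\bigl[\KL{\cN(\dotp{\vw_1}{\vx}, \dotp{\vf}{\vx}^2)}{\cN(\dotp{\vw_2}{\vx}, \dotp{\vf}{\vx}^2)}\bigr]$. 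For two univariate Gaussians sharing a variance $\sigma^2$, the KL divergence equals $(\mu_1 - \mu_2)^2/(2\sigma^2)$; here $\mu_1 - \mu_2 = \dotp{\vw_1 - \vw_2}{\vx} = -\dotp{\vv}{\vx}$ and $\sigma^2 = \dotp{\vf}{\vx}^2$, and because $\vv \parallel \vf$ we have $\dotp{\vv}{\vx}^2 = \tfrac{\Delta^2}{\norm{\vf}^2}\dotp{\vf}{\vx}^2$, so the ratio collapses to $\tfrac{\Delta^2}{2\norm{\vf}^2}$ for every $\vx$ with $\dotp{\vf}{\vx} \neq 0$ (and on the null event $\dotp{\vf}{\vx} = 0$ the two conditionals coincide, since then $\dotp{\vv}{\vx} = 0$ as well). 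Tensorizing over the $n$ i.i.d. samples gives $\KL{P_{\vw_1,\vf}^{\otimes n}}{P_{\vw_2,\vf}^{\otimes n}} = \tfrac{n\Delta^2}{2\norm{\vf}^2}$. This cancellation — the mean shift scales \emph{exactly} like the noise standard deviation, precisely because $\vv$ is parallel to $\vf$ — is the whole point of restricting $\vv$ to this direction, and is what keeps the KL finite; it is also the reason this coarse term is easy relative to the perpendicular directions treated later.

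With the divergence in hand the rest is routine. Choosing $\Delta = \norm{\vf}/\sqrt{n}$ makes $\KL{P_{\vw_1,\vf}^{\otimes n}}{P_{\vw_2,\vf}^{\otimes n}} = \tfrac12$, whence Pinsker's inequality gives $\TV(P_{\vw_1,\vf}^{\otimes n}, P_{\vw_2,\vf}^{\otimes n}) \leq \tfrac12$. Applying the standard two-point testing reduction (pass to the test $\psi = \argmin_{j}\norm{\hvw - \vw_j}$; on the event $\{\psi \neq j\}$ the triangle inequality forces $\norm{\hvw - \vw_j} \geq \norm{\vw_1 - \vw_2}/2 = \Delta/2$) yields $\inf_{\hvw}\max_{j}\bE_{P_{\vw_j,\vf}^{\otimes n}}[\norm{\hvw - \vw_j}^2] \geq \tfrac{\Delta^2}{8}\bigl(1 - \TV\bigr) \geq \tfrac{\Delta^2}{16} = \tfrac{\norm{\vf}^2}{16n}$. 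Since the supremum over $\vw \in \bR^d$ dominates the maximum over $\{\vw_1, \vw_2\}$, this gives $\inf_{\hvw}\sup_{\vw}\bE[\norm{\hvw - \vw}^2] = \Omega(\norm{\vf}^2/n)$. I do not anticipate a genuine obstacle here: the only points requiring care are handling the measure-zero degeneracy $\dotp{\vf}{\vx} = 0$ in the KL computation and invoking the two-point bound with explicit constants; the substantive content is the exact-constant KL identity above.
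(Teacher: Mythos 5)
Your proof is correct and follows essentially the same route as the paper: a two-point LeCam argument with the two regressors separated along the direction of $\vf$, exploiting the exact cancellation $\dotp{\vv}{\vx}^2/\dotp{\vf}{\vx}^2 = \Delta^2/\norm{\vf}^2$ to get a finite, covariate-independent KL, then choosing $\Delta = \norm{\vf}/\sqrt{n}$. The only cosmetic difference is that you close the argument via Pinsker's inequality while the paper invokes the exponential (Bretagnolle--Huber-type) form of the two-point bound, which changes nothing beyond constants.
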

\begin{proof}
Consider any two instances of the heteroscedastic regression problem with a common noise model $\hvf$ and regressors $\vw$ and $\vw + \vv$ respectively. From a direct computation, it follows that,
\begin{align*}
    \KL{\Law\left(\vx, \vy | \vw, \vf \right)}{\Law\left(\vx, \vy | \vw + \vv, \vf \right)} &= \bE_{\vx \sim \cN(0, \vI)}\left[ \KL{\cN(\dotp{\vw}{\vx}, {\dotp{\vf}{\vx}}^2)}{\cN(\dotp{\vw + \vv}{\vx}, {\dotp{\vf}{\vx}}^2)} \right] \\
    &= \bE_{\vx \sim \cN(0, \vI)}\left[ \left(\frac{\dotp{\vv}{\vx}}{\dotp{\vf}{\vx}}\right)^2 \right] = \frac{\Delta^2}{\norm{\vf}^2}
\end{align*}
From LeCam's two point method, we obtain, the following for any $\Delta \geq 0$
\begin{align*}
\inf_{\hvw} \sup_{\vw \in \bR^d} \bE_{(\vx_i, y_i)_{i \in [n]} \ \iidsim \ P_{\vw, \vf}}\left[\norm{\hvw - \vw}^2\right] \geq \Delta^2 e^{-\frac{n \Delta^2}{\norm{\vf}^2}} 
\end{align*}
Setting $\Delta = \nicefrac{\norm{\hvf}}{\sqrt{n}}$, we obtain,
\begin{align*}
\inf_{\hvw} \sup_{\vw \in \bR^d} \bE_{(\vx_i, y_i)_{i \in [n]} \ \iidsim \ P_{\vw, \vf}}\left[\norm{\hvw - \vw}^2\right] \geq \Omega(\nicefrac{\norm{\vf}^2}{n})
\end{align*}
\end{proof}
We now present a finer lower bound of $\Omega\left(\nicefrac{d^2}{n^2}\right)$ by considering the case when $\vv$ is perpendicular to $\vf$. This case encapsulates the key technical challenge of our lower bound analysis. 
\begin{lemma}[Finer Lower Bound]
\label{lem:lower-bound-fine}
Assume $n\geq d$ and $d\geq 2$. For fixed $\vf \in \bR^d$ and any arbitrary estimator $\hvw$, 
\begin{align*}
\inf_{\hvw} \sup_{\vw \in \bR^d} \bE_{(\vx_i, y_i)_{i \in [n]} \ \iidsim \ P_{\vw, \vf}}\left[\norm{\hvw - \vw}^2\right] \geq \tilde{\Omega}(\nicefrac{\norm{\vf}^2 d^2}{n^2})
\end{align*} 

Here $\tilde{\Omega}$ hides a factor of $\left(\frac{1}{(\log \log n)^{3/2} + (\log \log n) \log d + \frac{(\log \log n)^2 (\log d)^2}{\sqrt{d}}}\right)^2$
\end{lemma}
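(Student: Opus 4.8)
The part of the bound coming from perturbations parallel to $\fstar$ is already $\Omega(\norm{\fstar}^2/n)$ (Lemma~\ref{lem:lower-bound-coarse}); the task here is the complementary $\Omegatilde(\norm{\fstar}^2 d^2/n^2)$ term, which is the hard one. Without loss of generality take $\fstar=\norm{\fstar}\ve_d$ and fix a radius $\alpha>0$ to be optimized at the end. First, I lower bound the minimax risk by the Bayes risk under the prior $\wstar\sim S_\alpha(\ve_{1:d-1})$, the uniform distribution on the radius-$\alpha$ sphere inside $\fstar^\perp$, and then condition on the design $\vx_{1:n}$ (legitimate because the marginal law of $\vx_{1:n}$ is $\cN(0,\vI)^{\otimes n}$ irrespective of $\wstar$). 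Conditionally on the design, two heteroscedastic models with regressors $\vw,\vw'$ and common noise $\fstar$ differ only through the means of independent Gaussians, so $\KL{P_{\vw',\fstar}\mid\vx_{1:n}}{P_{\vw,\fstar}\mid\vx_{1:n}}=\tfrac12(\vw'-\vw)^\top\vH(\vw'-\vw)$ with $\vH=\sum_{j=1}^n \vx_j\vx_j^\top/\langle\fstar,\vx_j\rangle^2$, a finite quantity despite $\bE[\vH]=\infty$.

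Second, I carry out a \emph{data-dependent change of basis}. Since $S_\alpha(\ve_{1:d-1})$ is rotationally invariant inside $\fstar^\perp$, it is equal in law to $S_\alpha(\vvu_{1:d-1})$ for \emph{any} orthonormal basis $\vvu_{1:d-1}$ of $\fstar^\perp$, in particular one chosen measurably from $\vx_{1:n}$. Combining this with a coupling that pairs $\wstar$ with $\wstar+\alpha\vvu_i$ on the sphere (using the sphere's symmetry so the two are identically distributed) and applying the coupling form of Le Cam's method (Lemma~\ref{lem:coupling-lecam}) in the $\vvu$-frame, coordinate by coordinate, yields the Assouad-type inequality \eqref{eqn:fixed-design-test-lower-bound}: the conditional Bayes risk is at least $\tfrac{\alpha^2}{d}\sum_{i=1}^{d-1}\bE_{\wstar}\big[1-\sqrt{\KL{P_{\wstar+\alpha\vvu_i,\fstar}\mid\vx_{1:n}}{P_{\wstar,\fstar}\mid\vx_{1:n}}}\big]$, where the $i$-th divergence equals $\tfrac{\alpha^2}{2}\vvu_i^\top\vH\vvu_i$.

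Third, and this is the crux, I choose the basis via bucketing. Partition the samples into dyadic buckets $B_k=\{j:\langle\fstar,\vx_j\rangle^2\in[2^k\gamma,2^{k+1}\gamma)\}$ with $\gamma=\Theta(\norm{\fstar}^2/n^2)$ and $k\le K=O(\log(nd))$, set $\vH_k=\sum_{j\in B_k}\vx_j\vx_j^\top$, and build $\vvu_{1:d-1}$ by Gram--Schmidt on the nonzero-eigenvalue eigenvectors of $\vH_0,\vH_1,\dots$ in that order. Then a basis vector $\vvu_i$ produced while processing bucket $k$ is orthogonal to every $\vx_j$ in the earlier buckets, i.e.\ exactly the samples with the smallest $\langle\fstar,\vx_j\rangle^2$ that drive the heavy tail, so those terms disappear from $\vvu_i^\top\vH\vvu_i$. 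Using binomial concentration for $|B_k|\approx n\sqrt{2^k\gamma}/\norm{\fstar}$ (Lemma~\ref{lem:partition_sets}, Lemma~\ref{lem:bin-rv-tail}), operator-norm bounds for the Wishart-type matrices $\vH_k$, and $\chi^2$-concentration for the cross terms $\langle\vvu_i,\vx_j\rangle$ when $j$ sits in a bucket processed after $\vvu_i$ was fixed (so $\vvu_i$ and $\vx_j$ are conditionally independent given bucket memberships), one shows each summand in \eqref{eqn:fixed-design-test-lower-bound} is at least $1-\tfrac{\alpha n}{d\norm{\fstar}}\polylog(nd)$. Concavity of $\sqrt{\cdot}$ is essential: the few $\vvu_i$ drawn from the small-$\langle\fstar,\vx\rangle$ buckets have huge KL but their square roots are controlled by the bucket count, while the many later $\vvu_i$ have small KL. Summing, dividing by $d$, and optimizing $\alpha=\Theta(\norm{\fstar}d/(n\,\polylog(nd)))$ gives the stated rate, with the $\log d$ and $\log\log n$ losses entering through the bucket-size and operator-norm estimates.

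The main obstacle is precisely that third step: bounding quadratic forms $\vvu^\top\vH\vvu$ of a matrix with no finite mean. A fixed basis is useless — for deterministic $\vvu\perp\fstar$ one has $\bE[\vvu^\top\vH\vvu]=\infty$ and cannot preclude that all $d-1$ KL terms are simultaneously $\Omega(1)$, giving a vacuous bound. The content is in designing the data-dependent basis so that the dangerous samples are annihilated direction by direction, and then pushing through the chain of concentration inequalities (bucket sizes, $\|\vH_k\|$, and $\sum_{j\in B_{k'}}\langle\vvu_i,\vx_j\rangle^2$, with careful bookkeeping of which $\vvu_i$ depends on which $\vx_j$) while keeping every $\polylog$ loss explicit, since those determine the final $\Omegatilde$ factor. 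A secondary, still delicate, point is making the change of basis rigorous: one must verify that conditioning on $\vx_{1:n}$, re-expressing the spherical prior in the data-dependent frame $\vvu_{1:d-1}(\vx_{1:n})$, and interchanging the prior and data expectations involve no circularity — which is exactly what the refined Assouad/coupling argument secures.
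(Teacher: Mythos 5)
You have reproduced essentially the same argument as the paper: lower bound the minimax risk by the Bayes risk under the uniform spherical prior on $\fstar^{\perp}$, condition on the design, pass to a data-dependent orthonormal basis built by dyadic bucketing of $\langle\fstar,\vx_j\rangle^{2}$ (with $\gamma\asymp 1/n^{2}$) and Gram--Schmidt on eigenvectors of the bucket matrices, control bucket sizes, operator norms, and the cross quadratic forms by binomial/$\chi^{2}$ concentration, and exploit concavity of the square root before tuning $\alpha\asymp d/n$ up to the stated polyloglog losses. The only caveat is presentational: the identically distributed coupling is realized in the paper by flipping the sign of the $i$-th coordinate of $\wstar$ (a difference $2\delta_i\vvu_i$ with random $\delta_i$, later handled via the third moment $\bE|\delta_i|^{3}\lesssim\alpha^{3}d^{-3/2}$), not by the additive shift $\wstar\mapsto\wstar+\alpha\vvu_i$, which would leave the sphere; this does not change the method.
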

\begin{proof}
Let $\ve_1, \dots, \ve_d$ denote an orthonormal basis of $\bR^d$ such that $\ve_d = \nicefrac{\vf}{\norm{\vf}}$ . Furthermore, let $S_{\alpha}(\ve_{1:d-1})$ denote uniform distribution over the sphere of radius $\alpha$ on the subspace spanned by $\ve_1, \dots, \ve_{d-1}$ centered around the origin. It follows that, for any estimator $\hvw$,
\begin{align*}
    \sup_{\vw \in \bR^d} \bE_{(\vx_i, y_i)_{i \in [n]} \ \iidsim \ P_{\vw, \vf}}\left[\norm{\hvw - \vw}^2\right] &\geq \bE_{\vw \sim S_{\alpha}(\ve_{1:d-1})}\left[ \bE_{(\vx_i, y_i)_{i \in [n]} \ \iidsim \ P_{\vw, \vf}}\left[\norm{\hvw - \vw}^2\right]\right] \\
    &= \bE_{\vw \sim S_{\alpha}(\ve_{1:d-1})}\left[ \bE_{(\vx_i)_{i \in [n]} \ \iidsim \ \cN(0, \vI)}\left[\bE_{\vy_i \sim \cN(\dotp{\vw}{\vx_i}, {\dotp{\vf}{\vx_i}}^2), i \in [n]}\left[\norm{\hvw - \vw}^2\right]\right]\right]
\end{align*}
For ease of notation, we shall denote $\vx_{1:n}$ to be the shorthand for $(\vx_i)_{i \in [n]} \ \iidsim \ \cN(0, \vI)$. Similarly, we shall  write $y_i \sim \cN(\dotp{\vw}{\vx_i}, {\dotp{\vf}{\vx_i}}^2), i \in [n]$ as $y_{1:n} | \vx_{1:n}$.  Thus,
\begin{align*}
    \sup_{\vw \in \bR^d} \bE_{(\vx_i, y_i)_{i \in [n]} \ \iidsim \ P_{\vw, \vf}}\left[\norm{\hvw - \vw}^2\right] &\geq \bE_{\vw \sim S_{\alpha}(\ve_{1:d-1})}\left[ \bE_{\vx_{1:n}}\left[\bE_{y_{1:n} | \vx_{1:n}}\left[\norm{\hvw - \vw}^2\right]\right]\right] \\
    &= \bE_{\vx_{1:n}} \left[ \bE_{\vw \sim S_{\alpha}(\ve_{1:d-1})}\left[\bE_{y_{1:n} | \vx_{1:n}}\left[\norm{\hvw - \vw}^2\right]\right]\right]
\end{align*}
We now proceed by performing a \emph{data-dependent change of co-ordinates}. In particular, we let $\vvu_1, \dots, \vvu_d$ be an orthonormal basis of $\bR^d$ such that $\vvu_d = \ve_d = \nicefrac{\vf}{\norm{\vf}}$ and $\vvu_1, \dots, \vvu_{d-1}$ are measurable functions of $\vx_1, \dots, \vx_n$, to be specified later. We note that, $\textrm{Span}(\ve_1, \dots, \ve_{d-1}) = \textrm{Span}(\vvu_1, \dots, \vvu_{d-1})$, and hence, $S_{\alpha}(\ve_{1:d-1})$ is equal to $S_{\alpha}(\vvu_{1:d-1})$ as they represent the uniform spherical distribution on the same subspace. Hence,
\begin{align}
\label{eqn:lower-bound-risk-coupling}
    \sup_{\vw \in \bR^d} \bE_{(\vx_i, y_i)_{i \in [n]} \ \iidsim \ P_{\vw, \vf}}\left[\norm{\hvw - \vw}^2\right] &\geq \bE_{\vx_{1:n}} \left[ \bE_{\vw \sim S_{\alpha}(\vvu_{1:d-1})}\left[\bE_{y_{1:n} | \vx_{1:n}}\left[\norm{\hvw - \vw}^2\right]\right]\right] \nonumber \\
    &\geq \bE_{\vx_{1:n}} \left[ \bE_{\vw \sim S_{\alpha}(\vvu_{1:d-1})}\left[\bE_{y_{1:n} | \vx_{1:n}}\left[\sum_{i=1}^{d} {\dotp{\hvw - \vw}{\vvu_i}}^2\right]\right]\right]
\end{align}
We now construct a family of \emph{co-ordinate flip couplings}, i.e., for every $i \in [d]$, let $(\vw, \vwc{i})$ be a coupling such that
\begin{align*}
    \dotp{\vwc{i}}{\vvu_j} &= \begin{cases}
        \dotp{\vw}{\vvu_j}, & \textrm{if}\ j \neq i \\
        -\dotp{\vw}{\vvu_j}, & \textrm{otherwise}
        \end{cases} 
\end{align*}
By the symmetry of the uniform spherical distribution, both $\vw$ and $\vwc{i}$ are distributed identically as $S_{\alpha}(\vvu_{1:d-1})$. Applying Lemma \ref{lem:coupling-lecam} to each summand in 
\eqref{eqn:lower-bound-risk-coupling}, we conclude that,
\begin{align*}
    4R^{\hvw}_{d,n} \geq \bE_{\vx_{1:n}} \left[  \sum_{i=1}^{d-1} \bE_{\vw, \vwc{i}}\left[(\vw - \vwc{i})^2 \left(1 - \sqrt{2 \KL{\Law\left(y_{1:n} \ | \vx_{1:n}, \vw, \vf\right)}{\Law\left(y_{1:n} \ | \vx_{1:n}, \vwc{i}, \vf \right)}}\right)^{+}\right]\right]
\end{align*}
where $R^{\hvw}_{d,n}$ denotes the worst-case risk of the estimator $\hvw$ defined as follows:
\begin{align*}
    \sup_{\vw \in \bR^d} \bE_{(\vx_i, y_i)_{i \in [n]} \ \iidsim \ P_{\vw, \vf}}\left[\norm{\hvw - \vw}^2\right]
\end{align*}
Furthermore, since $\Law\left(y_{1:n} \ | \ \vx_{1:n}, \vw, \vf\right) = \prod_{j=1}^{n} \cN(\vy_i \ | \ \dotp{\vw}{\vx_i}, {\dotp{\vf}{\vx_i}}^2)$, it follows that,
\begin{align*}
    \KL{\Law\left(y_{1:n} \ | \vx_{1:n}, \vw, \vf\right)}{\Law\left(y_{1:n} \ | \vx_{1:n}, \vwc{i}, \vf \right)} &= \frac{{\dotp{\vw - \vwc{i}}{\vvu_i}}^2}{\norm{\vf}^2} \sum_{j=1}^{n} \frac{{\dotp{\vvu_i}{\vx_j}}^2}{{\dotp{\vvu_d}{\vx_j}}^2},
\end{align*}
where we use the fact that $\vf = \norm{\vf}\vvu_d$. We also note that $\dotp{\vw}{\vvu_i} - \dotp{\vwc{i}}{\vvu_i} = 2\dotp{\vw}{\vvu_i}$ almost surely. Furthermore, for ease of notation, denote $\delta_i = \dotp{\vw}{\vvu_i}$. It follows that, 
\begin{align}
    R^{\hvw}_{d,n} \geq \bE_{\vx_{1:n}} \left[ \bE_{\vw \sim S_{\alpha}(\vvu_{1:d-1})} \left[ \sum_{i=1}^{d-1} \delta^2_i \left(1 - \sqrt{4 \frac{\delta^2_i}{\norm{\vf}^2} \sum_{j=1}^{n} \left(\frac{\dotp{\vvu_i}{\vx_j}}{\dotp{\vvu_d}{\vx_j}}\right)^2}\right)^{+} \right] \right] \label{eq:lb_master}
\end{align}

Note that $u_d$ is deterministic and independent of $\vx_{1:n}$. We divide the indices $[n]$ in to the following buckets given $1 \geq \gamma > 0$ and $k\geq 0$. 

$$B_k = \{j \in [n]: \dotp{\vvu_d}{\vx_j}^2 \in [2^{k}\gamma,2^{k+1}\gamma)\}$$
Let $K = \inf \{k : \sum_{l=0}^{k}|B_l|\geq d-1\}$ and $\kmax$ be any integer. Define $\mathcal{E}$ to be the event that the following hold simultaneously:
\begin{enumerate}
    \item $\inf_{j} \dotp{\vvu_d}{\vx_j}^2\geq \gamma$
    \item $\sup_j \dotp{\vvu_d}{\vx_j}^2 \leq \gamma 2^{\kmax}$
    \item $K < \min(\kmax, \bar{C} +2\log_2 d) $
    \item $|B_k| \leq C_B n\sqrt{\gamma}2^{k/2}\log(\kmax) \quad \forall \quad k \leq \kmax$ for some constant $C_B$. 
\end{enumerate}

We pick $\gamma = \frac{c_0}{n^2}$ and $\kmax = \max(C_1 \log n,C_2 + \log_2 d)$ for small enough $c_0$ and large enough constants $C_1,C_2$. Using the fact that the Gaussian density is bounded above by a constant and that $\langle \vvu_d,\vx_j\rangle$ are i.i.d standard Gaussians we conclude that with probability at-least $9/10$, we must have $\bP(\inf_{n}|\dotp{\vvu_d}{\vx_j}|^2\geq \gamma) \geq 9/10$. By an application of Lemma~\ref{lem:close_conc} we conclude that $\bP(\sup_j\langle\vvu_d,\vx_j\rangle^2 \leq \gamma 2^{\kmax}) \geq 9/10$. Furthermore, note that $|B_k| \sim \mathsf{Ber}(n,p_k)$ where $p_k \leq C_2 \sqrt{\gamma}2^{k/2}$. Lemma~\ref{lem:partition_sets} ensures that $\bP(K < \min(\kmax,\bar{C}+2\log_2 d) ) \geq 9/10$ since $\bP(|B_{\min(\kmax,\bar{C}+2\log_2 d)}| \geq d ) \geq \frac{9}{10}$. By an application of Bernstein's inequality, we conclude that $\bP(|B_k| \leq C_B n \sqrt{\gamma}2^{k/2}\log(\kmax)) \geq 1-\frac{1}{10 (\kmax+1)}$ by taking $C_B$ to be a large enough universal constant.

Applying union bound on the above events, we conclude that $\bP(\mathcal{E}) \geq \frac{6}{10}$.  Under the event $\mathcal{E}$, the following holds almost surely
\begin{equation}\label{eq:as_ineq}
    \sum_{j=1}^{n} \left(\frac{\dotp{\vvu_i}{\vx_j}}{\dotp{\vvu_d}{\vx_j}}\right)^2 \leq \sum_{k = 0}^{\kmax} \sum_{j\in B_k}2^{-k} \frac{\dotp{\vvu_i}{\vx_j}^2}{\gamma}
\end{equation}

Let $Q$ be the projector onto the space orthogonal to $\vvu_d$. Then, arguing by the properties of Gaussians, $(Q\vx_j)_{j \in [n]}$ is independent of $(B_k)_{k\geq 0}$.  Let $\tvx_j := Q \vx_j$. We must have have $\dotp{\vvu_i}{\vx_j} = \dotp{\vvu_i}{\tvx_j}$ almost surely whenever $i \neq d$. Therefore, using this in Equation~\eqref{eq:as_ineq}, we conclude that under the event $\mathcal{E}$:
\begin{align}
        \sum_{j=1}^{n} \left(\frac{\dotp{\vvu_i}{\vx_j}}{\dotp{\vvu_d}{\vx_j}}\right)^2 &\leq \sum_{k=0}^{\kmax} \sum_{j\in B_k}2^{-k} \frac{\dotp{\vvu_i}{\tvx_j}^2}{\gamma}
        \nonumber \\
        &= \sum_{k= 0}^{\kmax}\frac{2^{-k}} {\gamma} \vvu_i^{\intercal}H_k \vvu_i \label{eq:binning}
\end{align}

Where $H_k = \sum_{j\in B_k}\tvx_j\tvx_j^{\intercal}$ and we interpret $H_k$ as an operator over $\mathsf{span}(Q)$. Notice that due to the properties of Gaussians, $H_k$ has rank $|B_k|$ almost surely. We are now ready to pick the vectors $\vvu_1,\dots,\vvu_{d-1}$. We define, only in this proof, $N_k := \sum_{l=0}^{k}|B_l|$.

Whenever $ d-1 \geq |B_{0}| > 0$, let
$\vvu_1,\dots,\vvu_{|B_0|}$ be the eigenvectors $H_0$ with non-zero eigenvalue. If $|B_0| > d-1$, then choose $\vvu_1,\dots,\vvu_{d-1}$ to be any arbitrary non-zero eigenvectors of $H_0$.

Similarly, for $K-1\geq k \geq 1$, whenever $|B_k|>0$, we consider $\vvu^{(k)}_1,\dots,\vvu^{(k)}_{|B_k|}$ to be the non-zero eigenvectors of $H_k$. We define $\vvu_{1+N_{k-1}},\dots,\vvu_{N_k}$ to be $\vvu^{(k)}_1,\dots,\vvu^{(k)}_{|B_k|}$, after Gram-Schmidt Orthonormalization with respect to $\vvu_1,\dots,\vvu_{N_k}$ (we skip Orthonormalization step in the event $N_{k-1} = 0$). Note that such vectors exist almost surely since the rank of $\sum_{l=0}^{k}H_l$ is $N_k$ almost surely for every $k < K $. If $N_K = d-1$, we do the same procedure as for $1 \leq k \leq K-1$ as described above. If $ N_K > d-1 $, we take $\vvu^{(K)}_{1},\dots,\vvu^{(K)}_{d-1-N_{K-1}}$ to be any non-zero eigenvectors of $H_K$. We ortho-normalize them as above with respect to $\vvu_1,\dots,\vvu_{N_{K-1}}$ to obtain $\vvu_{N_{K-1} +1},\dots,\vvu_{d-1}$.

Notice that whenever $ 0 \leq k < K$, almost surely 
$\vvu_i^{\intercal}H_k\vvu_i = 0$ whenever $i > N_k$ because we pick such $\vvu_i$ to be orthogonal to $\mathsf{span}(H_k)$. We also note that whenever $i \leq N_{k-1}$, $\vvu_i$ is independent of $H_k$ when conditioned on $(B_l)_{l\geq 0}$. We collect these observations in the following claim:

\begin{claim}
\begin{enumerate}

\item
For $ k < K$ and $ d-1 \geq i > N_k$, $$\vvu_i^{\intercal}H_k \vvu_i = 0 \,.$$
\item For $k \leq K$ and $\min(d-1,N_k) \geq i > N_{k-1}$,  
$$\vvu_i^{\intercal}H_k \vvu_i \leq \|H_k\|\,.$$

\item When conditioned on $(B_l)_{l}$ and $\mathcal{E}$, when either
\begin{enumerate}
    \item  $k \leq K$ and $ 0<i \leq N_{k-1}$
    \item $k > K$ and $ 1\leq i \leq d-1$
\end{enumerate} 
Then, with probability at-least $1-\delta$:

$$ \vvu_i^{\intercal} H_k \vvu_i \leq |B_k| + C\sqrt{|B_k|}\log(\tfrac{1}{\delta})\,.$$

\item When conditioned on $(B_l)_{l}$ and $\mathcal{E}$, with probability at-least $1-\delta$, we must have:
$$\|H_k\| \leq C(|B_k| + d + \log(\tfrac{1}{\delta}))$$
\end{enumerate}
\end{claim}
\begin{proof}
  Items 1 and 2 are easy to show based on the prior discussion. Note that $(\tilde{\vx}_j)_j$ are independent of $(B_l)_l$ and $\mathcal{E}$. Item 3 follows from  Lemma~\ref{lem:chi_squared_conc} after noting that conditioned on $\mathcal{E}$ and $(B_l)_l$, we must have $\vvu_i^{\intercal} H_k \vvu_i$ is a sum of $|B_k|$ i.i.d. $\chi^2$ random variables (since $\vvu_i$ is independent of $H_k$). Item 4 follows from \cite[Theorem 5.29]{vershynin2010introduction}.
\end{proof}

In the discussion below, we condition on $(B_l)_l$ and $\mathcal{E}$. The probabilities are all conditional probabilities. We pick $\kmax > K$ to be a positive integer. By union bound and the claim above, the have with inequalities all hold simultaneously with probability at-least $1-\delta$: 
\begin{enumerate}
    \item For $0 \leq i \leq \min(N_0,d-1)$, $$\vvu_i^{\intercal}H_0\vvu_i \leq C(|B_0| + d + \log(\tfrac{\kmax d}{\delta})) \,.$$ 
    \item For $0 < k\leq K$, $N_{k-1} \leq i \leq \min(N_k,d-1)$, $l< k$ (if such a $k$ exists)
        $$\vvu_i^{\intercal}H_l\vvu_i = 0\,.$$
    \item  For $0 < k\leq K$, $N_{k-1} \leq i \leq \min(N_k,d-1)$ (if such a $k$ exists)
        $$\vvu_i^{\intercal}H_k\vvu_i = C(|B_k| + d + \log(\tfrac{\kmax d}{\delta})) \,.$$   
    \item $ 0< k \leq K$ and $ 0< i \leq N_{k-1}$ (whenever such $k,i$ exist)
    $$ \vvu_i^{\intercal} H_k \vvu_i \leq |B_k| + C\sqrt{|B_k|}\log(\tfrac{\kmax d}{\delta})\,.$$
    \item $ \kmax > k > K$ and $ 1\leq i \leq d-1$

        $$ \vvu_i^{\intercal} H_k \vvu_i \leq |B_k| + C\sqrt{|B_k|}\log(\tfrac{\kmax d}{\delta})\,.$$
\end{enumerate}

Therefore, from the above relationships, we conclude that when conditioned on $\mathcal{E}$, with probability at-least $1-\delta$, the following inequalities hold:

\begin{enumerate}
    \item $1\leq i \leq \min(N_0,d-1)$ whenever such an $i$ exists:
            \begin{align}&\sum_{k= 0}^{\kmax}\frac{2^{-k}}{\gamma}\vvu_i^{\intercal}H_k \vvu_i \lesssim \frac{1}{\gamma}\left[|B_0| + d + \log(\tfrac{\kmax d}{\delta}) + \sum_{k=1}^{\kmax} 2^{-k}(|B_k| + \sqrt{|B_k|}\log(\tfrac{\kmax d}{\delta}))\right]\nonumber \\
            &\lesssim \frac{1}{\gamma}\left[\sqrt{\gamma}n\log(\kmax) + d + \log(\tfrac{\kmax d}{\delta}) + \sum_{k=1}^{\kmax} 2^{-k}(\sqrt{\gamma}n 2^{k/2} \log(\kmax) + \log^2(\tfrac{\kmax d}{\delta}))\right] \nonumber \\
            &\lesssim \frac{n\log(\kmax)}{\sqrt{\gamma}} + \frac{d}{\gamma} + \frac{\log^2(\tfrac{\kmax d}{\delta})}{\gamma} \label{eq:bin_bound_1}
            \end{align}

    In the second step, we have used the bounds on $|B_k|$ in the definition of the event $\mathcal{E}$. 
    \item $N_{k-1} < i \leq \min(N_k,d-1)$, $K\geq k > 0$ whenever such $k,i$ exist:
            \begin{align}
            &\sum_{l= 0}^{\kmax}\frac{2^{-l}}{\gamma}\vvu_i^{\intercal}H_l \vvu_i \lesssim \frac{1}{\gamma}\left[2^{-k}\left(|B_k| + d + \log(\tfrac{\kmax d}{\delta})\right) + \sum_{l= k+1}^{\kmax} 2^{-l}(|B_l| + \sqrt{|B_l|}\log(\tfrac{\kmax d}{\delta}))\right]\nonumber\\
            &\lesssim 2^{-k/2}\frac{n\log(\kmax)}{\sqrt{\gamma}} + 2^{-k}\frac{(d+\log^2(\tfrac{\kmax d}{\delta}))}{\gamma} \label{eq:bin_bound_2}
            \end{align}
\end{enumerate}
Set $\delta = 1/6$. Let us call the event in the which the inequalities given in Equation~\eqref{eq:binning},~\eqref{eq:bin_bound_1} and~\eqref{eq:bin_bound_2} hold as $\mathcal{F}$. Now, $\bP(\mathcal{F}) \geq \bP(\mathcal{F}\cap \mathcal{E}) = \bP(\mathcal{F}|\mathcal{E})\bP(\mathcal{E}) \geq 5/6 \times 6/10 = 1/2$. Notice that the event $\mathcal{F}$ is measurable with respect to the sigma algebra $\sigma(\vx_{1:n})$. 

In Equation~\eqref{eq:lb_master}, consider the following term when $\vx_{1:n}$ is fixed and satisfies the event $\mathcal{F}$:
\begin{align}&\bE_{\vw \sim S_{\alpha}(\vvu_{1:d-1})}\left[ \sum_{i=1}^{d-1} \delta^2_i \left(1 - \sqrt{4 \frac{\delta^2_i}{\norm{\vf}^2} \sum_{j=1}^{n} \left(\frac{\dotp{\vvu_i}{\vx_j}}{\dotp{\vvu_d}{\vx_j}}\right)^2}\right)^{+} \right]\nonumber \\
&\geq \bE_{\vw \sim S_{\alpha}(\vvu_{1:d-1})}\left[ \sum_{i=1}^{d-1} \delta^2_i \left(1 - \sqrt{4 \frac{\delta^2_i}{\norm{\vf}^2} \sum_{j=1}^{n} \left(\frac{\dotp{\vvu_i}{\vx_j}}{\dotp{\vvu_d}{\vx_j}}\right)^2}\right) \right]\nonumber \\
&= \bE_{\vw \sim S_{\alpha}(\vvu_{1:d-1})}\left[ \sum_{i=1}^{d-1} \delta^2_i - \sum_{i=1}^{d-1}|\delta_i|^3\sqrt{\frac{4}{\norm{\vf}^2} \sum_{j=1}^{n} \left(\frac{\dotp{\vvu_i}{\vx_j}}{\dotp{\vvu_d}{\vx_j}}\right)^2} \right] \nonumber\\
&\geq \left[ \alpha^2 - C_3\frac{\alpha^3}{(d)^{\tfrac{3}{2}}}\sum_{i=1}^{d-1}\sqrt{\frac{4}{\norm{\vf}^2} \sum_{j=1}^{n} \left(\frac{\dotp{\vvu_i}{\vx_j}}{\dotp{\vvu_d}{\vx_j}}\right)^2} \right] \label{eq:inner_lb_1}
\end{align}
In the last step, we have used the fact that $\sum_{i=1}^{d-1}\delta_i^2 = \alpha^2$ almost surely and that $\mathbb{E}|\delta_i|^3 \leq \frac{C_3 \alpha^3}{d^{\tfrac{3}{2}}}$ for some universal constant $C_3$ (can be shown easily, see bounds in \cite{brennan2020phase}). Since Equation~\eqref{eq:binning},~\eqref{eq:bin_bound_1} and~\eqref{eq:bin_bound_2}  hold under the event $\mathcal{F}$ by definition, we proceed to bound:

\begin{align}
    &\sum_{i=1}^{d-1}\sqrt{\frac{4}{\norm{\vf}^2} \sum_{j=1}^{n} \left(\frac{\dotp{\vvu_i}{\vx_j}}{\dotp{\vvu_d}{\vx_j}}\right)^2} \leq \sum_{i=1}^{d-1}\sqrt{\frac{4}{\norm{\vf}^2}\sum_{k= 0}^{\kmax}\frac{2^{-k}} {\gamma} \vvu_i^{\intercal}H_k \vvu_i }\nonumber \\
 &\lesssim \min(N_0,d-1)\sqrt{\left[\frac{n\log(\kmax)}{\sqrt{\gamma}}  + \frac{d}{\gamma} + \frac{\log^2(\kmax d)}{\gamma}\right] } \nonumber \\
&\quad + \sum_{k=1}^{K}(\min(N_k,d-1) - N_{k-1}) \sqrt{\left[2^{-k/2}\frac{n\log(\kmax)}{\sqrt{\gamma}} + 2^{-k}\frac{(d+\log^2(\kmax d))}{\gamma}\right]} \nonumber \\
&\lesssim |B_0|\sqrt{\left[\frac{n\log(\kmax)}{\sqrt{\gamma}}  + \frac{d}{\gamma} + \frac{\log^2(\kmax d)}{\gamma}\right] } \nonumber \\
&\quad + \sum_{k=1}^{K}|B_k| \sqrt{\left[2^{-k/2}\frac{n\log(\kmax)}{\sqrt{\gamma}} + 2^{-k}\frac{(d+\log^2(\kmax d))}{\gamma}\right]} \nonumber \\
&\lesssim
n\sqrt{\gamma}\log(\kmax)\sqrt{\left[\frac{n\log(\kmax)}{\sqrt{\gamma}}  + \frac{d}{\gamma} + \frac{\log^2(\kmax d)}{\gamma}\right] } \nonumber \\
&\quad + \sum_{k=1}^{K}2^{k/2}n\sqrt{\gamma}\log(\kmax) \sqrt{\left[2^{-k/2}\frac{n\log(\kmax)}{\sqrt{\gamma}} + 2^{-k}\frac{(d+\log^2(\kmax d))}{\gamma}\right]}
\end{align}

Now, using the fact that $\gamma = \frac{c_0}{n^2}$ in the equation above, we conclude:
\begin{align}
    &\sum_{i=1}^{d-1}\sqrt{\frac{4}{\norm{\vf}^2} \sum_{j=1}^{n} \left(\frac{\dotp{\vvu_i}{\vx_j}}{\dotp{\vvu_d}{\vx_j}}\right)^2} \nonumber \\
&\lesssim n (\log(\kmax))^{\tfrac{3}{2}}2^{\tfrac{K}{4}} + \log(\kmax) (K+1)n \sqrt{d+\log^2(\kmax d)} \nonumber \\
&\lesssim n (\log(\kmax))^{\tfrac{3}{2}}\sqrt{d} + n \sqrt{d+\log^2(\kmax d)} \log(\kmax)\log(d)
\end{align}
In the last step, we have used the fact that $2^{K/4} \lesssim \sqrt{d}$. Plugging the bound above into Equation~\eqref{eq:inner_lb_1}, we conclude that whenever $\vx_{1:n}$ satisfies event $\mathcal{F}$, we must have:

\begin{align}
    &\bE_{\vw \sim S_{\alpha}(\vvu_{1:d-1})}\left[ \sum_{i=1}^{d-1} \delta^2_i \left(1 - \sqrt{4 \frac{\delta^2_i}{\norm{\vf}^2} \sum_{j=1}^{n} \left(\frac{\dotp{\vvu_i}{\vx_j}}{\dotp{\vvu_d}{\vx_j}}\right)^2}\right)^{+} \right] \nonumber \\
&\geq  \alpha^2 - C_4\frac{\alpha^3}{(d)^{\tfrac{3}{2}}}\left[ n\sqrt{d}(\log \kmax )^{\tfrac{3}{2}} + n\sqrt{d}\log (\kmax) \log (d) + n(\log(\kmax)\log(d))^2 \right]
\end{align}

Now, we pick $\alpha = \frac{cd}{n}\left[\frac{1}{(\log \log n)^{3/2} + (\log \log n) \log d + \frac{(\log \log n)^2 (\log d)^2}{\sqrt{d}}}\right]$ for some small enough constant $c$. Then, we conclude that under the event $\mathcal{F}$:

\begin{align}
    &\bE_{\vw \sim S_{\alpha}(\vvu_{1:d-1})}\left[ \sum_{i=1}^{d-1} \delta^2_i \left(1 - \sqrt{4 \frac{\delta^2_i}{\norm{\vf}^2} \sum_{j=1}^{n} \left(\frac{\dotp{\vvu_i}{\vx_j}}{\dotp{\vvu_d}{\vx_j}}\right)^2}\right)^{+} \right] \geq \frac{\alpha^2}{2} \label{eq:almost_final_lb}
\end{align}

Going back to Equation~\eqref{eq:lb_master}, we conclude:

\begin{align}
    &R^{\hvw}_{d,n} \geq  \bE_{\vx_{1:n}} \left[ \bE_{\vw \sim S_{\alpha}(\vvu_{1:d-1})} \left[ \sum_{i=1}^{d-1} \delta^2_i \left(1 - \sqrt{4 \frac{\delta^2_i}{\norm{\vf}^2} \sum_{j=1}^{n} \left(\frac{\dotp{\vvu_i}{\vx_j}}{\dotp{\vvu_d}{\vx_j}}\right)^2}\right)^{+} \right] \right] \nonumber \\
    &\geq  \bE_{\vx_{1:n}} \mathbbm{1}(\mathcal{F})\left[ \bE_{\vw \sim S_{\alpha}(\vvu_{1:d-1})} \left[ \sum_{i=1}^{d-1} \delta^2_i \left(1 - \sqrt{4 \frac{\delta^2_i}{\norm{\vf}^2} \sum_{j=1}^{n} \left(\frac{\dotp{\vvu_i}{\vx_j}}{\dotp{\vvu_d}{\vx_j}}\right)^2}\right)^{+} \right] \right] \nonumber \\
    &\geq \bE_{\vx_{1:n}} \mathbbm{1}(\mathcal{F})\frac{\alpha^2}{2} = \frac{1}{2}\alpha^2\bP(\mathcal{F}) \geq \nicefrac{\alpha^2}{4}
\end{align}

In the second step, we have used Equation~\eqref{eq:lb_master}. Using our choice of $\alpha$, we conclude the result.

\end{proof}
Equipped with Lemmas \ref{lem:lower-bound-coarse} and \ref{lem:lower-bound-fine}, we finally complete the proof of the lower bound as follows:

\subsection{Proof of Theorem~\ref{thm:heteroscedastic-minimax-lb}}
\begin{proof}[Proof of Theorem~\ref{thm:heteroscedastic-minimax-lb}]
Consider a fixed $\vf \in \bR^d$. By definition, for any estimator $\vw$, 
\begin{align*}
    \inf_{\hvw} \sup_{P_{\wstar, \fstar} \in \cP} \bE_{(\vx_i, y_i)_{i \in [n]} \ \iidsim \ P_{\wstar, \fstar}} \left[ \norm{\hvw - \wstar}^2 \right] \geq \inf_{\hvw} \sup_{\wstar \in \bR^d} \bE_{(\vx_i, y_i)_{i \in [n]} \ \iidsim \ P_{\wstar, \vf}} \left[ \norm{\hvw - \wstar}^2 \right]
\end{align*}
From Lemma \ref{lem:lower-bound-coarse} and \ref{lem:lower-bound-fine}, it follows that, 
\begin{align*}
    \inf_{\hvw} \sup_{\wstar \in \bR^d} \bE_{(\vx_i, y_i)_{i \in [n]} \ \iidsim \ P_{\wstar, \vf}} \left[ \norm{\hvw - \wstar}^2 \right] &\geq \Omega\left( \max\left\{ \nicefrac{\norm{\vf}^2}{n}, \nicefrac{\norm{\vf}^2d^2}{n^2} \right\} \right) \\
    &\geq \Omega\left(\nicefrac{\norm{\vf}^2}{n} + \nicefrac{\norm{\vf}^2d^2}{n^2}\right)
\end{align*}
Hence, we obtain the desired lower bound as follows,
\begin{align*}
    \inf_{\hvw} \sup_{P_{\wstar, \fstar} \in \cP} \bE_{(\vx_i, y_i)_{i \in [n]} \ \iidsim \ P_{\vw, \vf}} \left[ \norm{\hvw - \wstar}^2 \right] \geq \Omega\left(\nicefrac{\norm{\vf}^2}{n} + \nicefrac{\norm{\vf}^2 d^2}{n^2} \right) 
\end{align*}
\end{proof}
\end{document}